\documentclass[preprint,12pt]{elsarticle}
\usepackage{multirow} 
\usepackage{diagbox}
\usepackage{graphicx} 
\usepackage[reqno]{amsmath}%
\usepackage{amssymb}%
\usepackage{bm}
\usepackage[table]{xcolor}%
\usepackage{subcaption}
\allowdisplaybreaks

\usepackage{amsthm}
\usepackage{extarrows}
\usepackage{arydshln}

\usepackage{xcolor}%
\usepackage{mleftright}%
\usepackage{xspace}%
\usepackage{graphicx}
\usepackage{subcaption} 
\usepackage{hyperref}%

\usepackage{algorithm}
\usepackage{algpseudocode}
\usepackage{makecell}
\usepackage{tabularx}
\usepackage{float}
\usepackage{placeins}

\theoremstyle{plain}%
\newtheorem{theorem}{Theorem}[section]

\newtheorem{lemma}[theorem]{Lemma}

\newtheorem{corollary}[theorem]{Corollary}

\newtheorem{proposition}[theorem]{Proposition}
\newtheorem{prop}[theorem]{Proposition}

\theoremstyle{plain}%
%
\newtheorem{definition}[theorem]{Definition}

\newtheorem{example}[theorem]{Example}

%



\numberwithin{figure}{section}%
\numberwithin{table}{section}%
\numberwithin{equation}{section}%

\newcommand{\indep}{\mathop{\perp\!\!\!\!\perp}}
\newcommand{\notindep}{\mathop{\not \perp\!\!\!\!\perp}}

\makeatletter

\makeatother

\usepackage[inline]{enumitem}

\newlist{compactenumA}{enumerate}{5}%
\setlist[compactenumA]{topsep=0pt,itemsep=-1ex,partopsep=1ex,parsep=1ex,%
   label=(\Alph*)}%

\newlist{compactenuma}{enumerate}{5}%
\setlist[compactenuma]{topsep=0pt,itemsep=-1ex,partopsep=1ex,parsep=1ex,%
   label=(\alph*)}%

\newlist{compactenumI}{enumerate}{5}%
\setlist[compactenumI]{topsep=0pt,itemsep=-1ex,partopsep=1ex,parsep=1ex,%
   label=(\Roman*)}%

\newlist{compactenumi}{enumerate}{5}%
\setlist[compactenumi]{topsep=0pt,itemsep=-1ex,partopsep=1ex,parsep=1ex,%
   label=(\roman*)}%

\newlist{compactitem}{itemize}{5}%
\setlist[compactitem]{topsep=0pt,itemsep=-1ex,partopsep=1ex,parsep=1ex,%
   label=\ensuremath{\bullet}}%

\usepackage{tikz}
\usetikzlibrary{shapes,decorations}
\usetikzlibrary{shapes,arrows}
\usetikzlibrary{shapes.geometric, arrows}
\usetikzlibrary{arrows.meta}


\usepackage{booktabs}
\usepackage{pifont}
\usepackage{color}



\begin{document}

\begin{frontmatter}



\title{Causal Discovery for Linear DAGs with Dependent Latent Variables via Higher-order Cumulants}


\author[1]{Ming Cai} 
\author[1]{Penggang Gao}
\author[1,2]{Hisayuki Hara}

\affiliation[1]{organization={Graduate School of Informatics, Kyoto University},
            addressline={Yoshida Konoe-cho}, 
            city={Kyoto},
            postcode={606-8501}, 
            country={Japan}}
\affiliation[2]{organization={Institute for Liberal Arts and Sciences, Kyoto University},
            addressline={Yoshida Nihonmatsu-cho}, 
            city={Kyoto},
            postcode={606-8501}, 
            country={Japan}}

\begin{abstract}
This paper addresses the problem of estimating causal directed acyclic graphs in linear non-Gaussian acyclic models with latent confounders (LvLiNGAM). Existing methods assume mutually independent latent confounders or cannot properly handle models with causal relationships among observed variables.

We propose a novel algorithm that identifies causal DAGs in LvLiNGAM, allowing causal structures among latent variables, among observed variables, and between the two. The proposed method leverages higher-order cumulants of observed data to identify the causal structure. Extensive simulations and experiments with real-world data demonstrate the validity and practical utility of the proposed algorithm. 
\end{abstract}

\begin{keyword}
canonical model \sep causal discovery \sep cumulants \sep DAG \sep latent confounder \sep Triad constraints
\end{keyword}

\end{frontmatter}



\section{Introduction}
\label{sec: introduction}
Estimating causal directed acyclic graphs (DAGs) in the presence of latent confounders has been a major challenge in causal analysis. Conventional causal discovery methods, such as the Peter–Clark (PC) algorithm \cite{Spirtes}, Greedy Equivalence Search (GES) \cite{Chickering2003}, and the Linear Non-Gaussian Acyclic Model (LiNGAM) \cite{Shimizu2006, Shimizu2011}, focus solely on the causal model without latent confounders. 

Fast Causal Inference (FCI) \cite{Spirtes} extends the PC algorithm to handle latent variables, recovering a partial ancestral graph (PAG) under the faithfulness assumption. However, FCI is computationally intensive and, moreover, often fails to determine the causal directions. Really Fast Causal Inference (RFCI) \cite{colombo2012learning} trades off some independence tests for speed, at the loss of estimating accuracy. Greedy Fast Causal Inference (GFCI) \cite{Ogarrio2016ahybrid} hybridizes GES and FCI but inherits the limitation of FCI.

The assumption of linearity and non-Gaussian disturbances in the causal model enables the identification of causal structures beyond the PAG. The linear non-Gaussian acyclic model with latent confounders (LvLiNGAM) is an extension of LiNGAM that incorporates latent confounders. Hoyer et al. \cite{hoyer2008estimation} demonstrated that LvLiNGAM can be transformed into a canonical model in which all latent variables are mutually independent and causally precede the observed variables. They proposed estimating the canonical models using overcomplete ICA \cite{lewicki1998learning}, assuming that the number of latent variables is known. Overcomplete ICA can identify the causal DAG only up to permutations and scaling of the variables. Thus, substantial computational effort is required to identify the true causal DAG from the many candidate models. Another limitation of overcomplete ICA is its tendency to converge to local optima. Salehkaleybar et al. \cite{salehkaleybar2020learning} improved the algorithm by reducing the candidate models. 

Other methods for estimating LvLiNGAM, based on linear regression analysis and independence testing, have also been developed.\citep{Entner2011, tashiro2014parcelingam, Maeda2020, maeda2022rcd}. Furthermore, Multiple Latent Confounders LiNGAM (ML-CLiNGAM) \cite{Chen2022Causal} and FRITL \cite{chen2021fritl} initially identify the causal skeleton using a constraint-based method, and then estimate the causal directions of the undirected edges in the skeleton using linear regression and independence tests. While these methods can identify structures among observed variables that are not confounded by latent variables, they cannot necessarily determine the causal direction between two variables confounded by latent variables.

More recently, methods using higher-order cumulants have led to new developments in the identification of canonical LvLiNGAMs. 
Cai et al. \cite{cai2023causal} assume that each latent variable has at least three observed children, and that there exists a subset of these children that are not connected by any other observed or latent variables. Then, cumulants are employed to identify one-latent-component structures and latent influences are recursively removed to recover the underlying causal relationships. Chen et al. \cite{chen2024identification} show that if two observed variables share one latent confounder, the causal direction between them can be identified by leveraging higher-order cumulants. Schkoda et al. \cite{schkoda2024causal} introduced ReLVLiNGAM, a recursive approach that leverages higher-order cumulants to estimate canonical LvLiNGAM with multiple latent parents. One strength of ReLVLiNGAM is that it does not require prior knowledge of the number of latent variables. 

The methods reviewed so far are estimation methods for the canonical LvLiNGAM. A few methods, however, have been proposed to estimate the causal DAG of LvLiNGAM when latent variables exhibit causal relationships. A variable is said to be pure if it is conditionally independent of other observed variables given its latent parents; otherwise, it is called impure.
Silva et al. \cite{Silva2006learning} showed that the latent DAG is identifiable under the assumption that each latent variable has at least three pure children, by employing tetrad conditions on the covariance of the observed variables. Cai et al. \cite{cai2019triad} proposed a two-phase algorithm, LSTC (learning the structure of latent variables based on Triad Constraints), to identify the causal DAG where each latent variable has at least two children, all of which are pure, and each observed variable has a single latent parent. Xie et al. \cite{xie2020generalized} generalized LSTC and defined the linear non-Gaussian latent variable model (LiNGLaM), where observed variables may have multiple latent parents but no causal edges among them, and proved its identifiability. In \cite{cai2019triad} and \cite{xie2020generalized}, causal clusters are defined as follows:
\begin{definition}[Causal cluster \cite{cai2019triad, xie2020generalized}]
\label{def: cluster}
A set of observed variables that share the same latent parents is called a causal cluster.
\end{definition}
Their methods consist of two main steps: identifying causal clusters and then recovering the causal order of latent variables.
LSTC and the algorithm for LiNGLaM estimate clusters of observed variables by leveraging the Triad constraints or the generalized independence noise (GIN) conditions. It is also possible to define clusters in the same manner as Definition \ref{def: cluster} for models where causal edges exist among observed variables. However, when impure observed variables exist, their method might fail to identify the clusters, resulting in an incorrect estimation of both the number of latent variables and latent DAGs.
Several recent studies have shown that LvLiNGAM remains identifiable even when some observed variables are impure \cite{Xie2023Causal, Xie2024GIN, jin2024structural}. However, these methods still rely on the existence of at least some pure observed variables in each cluster.

\subsection{Contributions}
\label{sec: contributions}
In this paper, we relax the pure observed children assumption of Cai et al. \cite{cai2019triad} and investigate the identifiability of the causal DAG for an extended model that allows causal structures both among latent variables and among observed variables. 
Using higher-order cumulants of the observed data, we show the identifiability of the causal DAG of a class of LvLiNGAM and propose a practical algorithm for estimating the class. The proposed method first estimates clusters using the approaches of \cite{cai2019triad, xie2020generalized}. When causal edges exist among observed variables, the clusters estimated by using Triad constraints or GIN conditions may be over-segmented compared to the true clusters.  
The proposed method leverages higher-order cumulants of observed variables to refine these clusters, estimates causal edges within clusters, determines the causal order among latent variables, and finally estimates the exact causal structure among latent variables. 

In summary, our main contributions are as follows: 
\begin{enumerate}
\setlength{\itemsep}{0pt}
\setlength{\parsep}{0pt}
\setlength{\parskip}{0pt}
\item Demonstrate identifiability of causal DAGs in a class of LvLiNGAM, allowing causal relationships among latent and observed variables.
\item Extend the causal cluster estimation methods of \cite{cai2019triad} and \cite{xie2020generalized} to handle cases where directed edges exist among observed variables within clusters.
\item Propose a top-down algorithm using higher-order cumulants to infer the causal order of latent variables.
\item Develop a bottom-up recursive procedure to reconstruct the latent causal DAG from latent causal orders.
\end{enumerate}

The rest of this paper is organized as follows. Section \ref{sec: problem definition} defines the class of LvLiNGAM considered in this study. In Section \ref{sec: problem definition}, we also summarize some basic facts on higher-order cumulants. Section \ref{sec: proposed method} describes the proposed method in detail. Section \ref{sec: simulations} presents numerical simulations to demonstrate the effectiveness of the proposed method.
Section \ref{sec: real world} evaluates the usefulness of the proposed method by applying it to the Political Democracy dataset \cite{bollen1989structural}. Finally, Section \ref{sec: conclusion} concludes the paper. All proofs of theorems, corollaries, and lemmas in the main text are provided in the Appendices. 

\section{Preliminaries}
\label{sec: problem definition}
\subsection{LvLiNGAM}
\label{sec: LvLiNGAM}
Let $\bm{X} = (X_{1}, \dots, X_{p})^{\top}$ and $\bm{L} = (L_{1}, \dots, L_{q})^{\top}$ be vectors of observed and latent variables, respectively. In this paper, we identify these vectors with the corresponding set of variables. 
Define $\bm{V} = \bm{X} \cup \bm{L} = \{V_1,\ldots,V_{p+q}\}$. Let $\mathcal{G} = (\bm{V}, E)$ be a causal DAG. $V_i \to V_j$ denotes a directed edge from $V_i$ to $V_j$. $\mathrm{Anc}(V_i)$, $\mathrm{Pa}(V_i)$, and $\mathrm{Ch}(V_i)$ are the sets of ancestors, parents, and children of $V_i$, respectively. We use $V_i \prec V_j$ to indicate that $V_i$ precedes $V_j$ in a causal order. 

The LvLiNGAM considered in this paper is formulated as
\begin{align}
    \label{model: lvLiNGAM General}
        \left[\begin{array}{c}
            \bm{L} \\
            \bm{X}
        \end{array}\right] 
        = 
        \left[\begin{array}{cc}
            \bm{A}       & \bm{0}  \\
            \bm{\Lambda} & \bm{B}
        \end{array}\right]
        \left[\begin{array}{c}
            \bm{L} \\
            \bm{X}
        \end{array}\right]
        +
        \left[\begin{array}{c}
            \bm{\epsilon} \\
            \bm{e}
        \end{array}\right],   
\end{align}
where $\bm{A}=\{a_{ji}\}$, $\bm{B}=\{b_{ji}\}$, and $\bm{\Lambda}=\{\lambda_{ji}\}$ are matrices of causal coefficients, while $\bm{\epsilon}$ and $\bm{e}$ denote vectors of independent non-Gaussian disturbances associated with $\bm{L}$ and $\bm{X}$, respectively.
Let $a_{ji}$, $\lambda_{ji}$, and $b_{ji}$ be the causal coefficients from $L_i$ to $L_j$, from $L_i$ to $X_j$, and from $X_i$ to $X_j$, respectively. Due to the arbitrariness of the scale of latent variables, we may, without loss of generality, set one of the coefficients $\lambda_{ji}$ to $1$ for some $X_j \in \mathrm{Ch}(L_i)$. Hereafter, such a normalization will often be used.

$\bm{A}$ and $\bm{B}$ can be transformed into lower triangular matrices by row and column permutations. We assume that the elements of $\bm{\epsilon}$ and $\bm{e}$ are mutually independent and follow non-Gaussian continuous distributions. Let $\mathcal{M}_{\mathcal{G}}$ denote the LvLiNGAM defined by $\mathcal{G}$. As shown in (\ref{model: lvLiNGAM General}), we assume in this paper that all observed variables are not ancestors of any latent variables. 

Consider the following reduced form of (\ref{model: lvLiNGAM General}),  
\begin{align*}
        \left[\begin{array}{c}
            \bm{L} \\
            \bm{X}
        \end{array}\right] 
        &=
        \left[\begin{array}{cc}
            (\bm{I}_q-\bm{A})^{-1} & \bm{0}\\
            (\bm{I}_p-\bm{B})^{-1}\bm{\Lambda}(\bm{I}_q-\bm{A})^{-1} & (\bm{I}_p-\bm{B})^{-1}
        \end{array}\right]
        \left[\begin{array}{c}
            \bm{\epsilon} \\
            \bm{e}
        \end{array}\right]. 
\end{align*}
Let $\alpha^{ll}_{ji}$, $\alpha^{ol}_{ji}$, and $\alpha^{oo}_{ji}$ represent the total effects from $L_i$ to $L_j$, $L_i$ to $X_j$, and $X_i$ to $X_j$, respectively. Thus, $(\bm{I}_q-\bm{A})^{-1} = \{\alpha^{ll}_{ji}\}$, $(\bm{I}_p-\bm{B})^{-1}\bm{\Lambda}(\bm{I}_q-\bm{A})^{-1} = \{\alpha^{ol}_{ji}\}$, and $(\bm{I}_p-\bm{B})^{-1} = \{\alpha^{oo}_{ji}\}$. The total effect from $V_i$ to $V_j$ is denoted by $\alpha_{ji}$, with the superscript omitted. 

$\bm{M}:=\left[(\bm{I}_p-\bm{B})^{-1}\bm{\Lambda}(\bm{I}_q-\bm{A})^{-1}, (\bm{I}_p-\bm{B})^{-1}\right]$ is called a mixing matrix of the model (\ref{model: lvLiNGAM General}). Denote $\bm{u} =(\bm{\epsilon}^\top, \bm{e}^\top)^\top$. Then, $\bm{X}$ is written as
\begin{align}
\label{model: mixing matrix}
\bm{X} = \bm{M}\bm{u},
\end{align}
which conforms to the formulation of the overcomplete ICA problem \cite{lewicki1998learning, eriksson2004identifiability, hoyer2008estimation}. $\bm{M}$ is said to be irreducible if every pair of columns is linearly independent. $\mathcal{G}$ is said to be minimal if and only if $\bm{M}$ is irreducible. If $\mathcal{G}$ is not minimal, some latent variables can be absorbed into other latent variables, resulting in a minimal graph \cite{salehkaleybar2020learning}. $\mathcal{M}_\mathcal{G}$ is called the canonical model when $\bm{A} = \bm{0}$ and $\bm{M}$ is irreducible. Hoyer et al. \cite{hoyer2008estimation} showed that any LvLiNGAM can be transformed into an observationally equivalent canonical model. For example, the LvLiNGAM defined by the DAG in Figure~\ref{fig: canonical models and other} (a) is the canonical model of the LvLiNGAM defined by the DAG in Figure~\ref{fig: canonical models and other} (b). Hoyer et al. \cite{hoyer2008estimation} also demonstrated that, when the number of latent variables is known, the canonical model can be identified up to observational equivalent models using overcomplete ICA. 

\begin{figure}[b]
    \centering
    \begin{subfigure}[t]{0.45\textwidth}
        \centering
        \begin{tikzpicture}[scale=0.7]
            \node[draw, rectangle] (L_1) at (-2, 2) {$L_{1}$};
            \node[draw, rectangle] (L_2) at (0, 2) {$L_{2}$};
            \node[draw, rectangle] (L_3) at (2, 2) {$L_{3}$};
            \node[draw, circle] (X_1) at (-2, 0) {$X_{1}$};
            \node[draw, circle] (X_2) at (0, 0) {$X_{2}$};
            \node[draw, circle] (X_3) at (2, 0) {$X_{3}$};
            \node[draw, circle] (X_4) at (-2, 4) {$X_{4}$};
            \node[draw, circle] (X_5) at (0, 4) {$X_{5}$};
            \node[draw, circle] (X_6) at (2, 4) {$X_{6}$};
            \draw[thick, ->] (L_1) -- (X_1);
            \draw[thick, ->] (L_1) -- (X_2);
            \draw[thick, ->] (L_1) -- (X_3);
            \draw[thick, ->] (L_1) -- (X_4);
            \draw[thick, ->] (L_1) -- (X_5);
            \draw[thick, ->] (L_1) -- (X_6);
            \draw[thick, ->] (L_2) -- (X_2);
            \draw[thick, ->] (L_2) -- (X_3);
            \draw[thick, ->] (L_2) -- (X_5);
            \draw[thick, ->] (L_2) -- (X_6);
            \draw[thick, ->] (L_3) -- (X_3);
            \draw[thick, ->] (L_3) -- (X_6);
        \end{tikzpicture}
        \caption{An example of canonical LvLiNGAM}
    \end{subfigure}
    \hfill
    \begin{subfigure}[t]{0.45\textwidth}
        \centering
        \begin{tikzpicture}[scale=0.7]
            \node[draw, rectangle] (L_1) at (-2, 2) {$L_{1}$};
            \node[draw, rectangle] (L_2) at (0, 2) {$L_{2}$};
            \node[draw, rectangle] (L_3) at (2, 2) {$L_{3}$};
            \node[draw, circle] (X_1) at (-2, 0) {$X_{1}$};
            \node[draw, circle] (X_4) at (-2, 4) {$X_{4}$};
            \node[draw, circle] (X_2) at (0, 0) {$X_{2}$};
            \node[draw, circle] (X_5) at (0, 4) {$X_{5}$};
            \node[draw, circle] (X_3) at (2, 0) {$X_{3}$};
            \node[draw, circle] (X_6) at (2, 4) {$X_{6}$};
            \draw[thick, ->] (L_1) -- (L_2);
            \draw[thick, ->] (L_2) -- (L_3);
            \draw[thick, ->] (L_1) -- (X_1);
            \draw[thick, ->] (L_1) -- (X_4);
            
            \draw[thick, ->] (L_2) -- (X_2);
            \draw[thick, ->] (L_2) -- (X_5);
            \draw[thick, ->] (L_3) -- (X_3);
            \draw[thick, ->] (L_3) -- (X_6);
        \end{tikzpicture}
        \caption{An LvLiNGAM that can be identified by \cite{cai2019triad, xie2020generalized}}
    \end{subfigure}
    \caption{Examples of LvLiNGAMs}
    \label{fig: canonical models and other}
\end{figure}
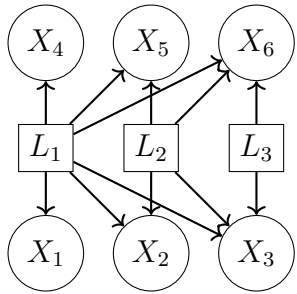
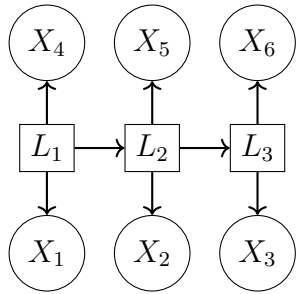

Salehkaleybar et al. \cite{salehkaleybar2020learning} showed that, even when $\bm{A} \ne \bm{0}$, the irreducibility of $\bm{M}$ is a necessary and sufficient condition for the identifiability of the number of latent variables. However, they did not provide an algorithm for estimating the number of latent variables. Schkoda et al. \cite{schkoda2024causal} proposed ReLVLiNGAM to estimate the canonical model with generic coefficients even when the number of latent variables is unknown. However, the canonical model derived from an LvLiNGAM with $\bm{A} \ne \bm{0}$ lies in a measure-zero subset of the parameter space, which prevents ReLVLiNGAM from accurately identifying the number of latent confounders between two observed variables in such cases. For example, ReLVLiNGAM may not identify the canonical model in Figure~\ref{fig: canonical models and other} (a) from data generated by the LvLiNGAM in Figure~\ref{fig: canonical models and other} (b). 

Cai et al. \cite{cai2019triad} and Xie et al. \cite{xie2020generalized} demonstrated that within LvLiNGAMs where all the observed children of latent variables are pure, there exists a class, such as the models shown in Figure~\ref{fig: canonical models and other} (b), in which the causal order among latent variables is identifiable. They proposed algorithms for estimating the causal order. However, the complete causal structure cannot be identified solely from the causal order, and their algorithm cannot be generalized to cases where causal edges exist among observed variables or where latent variables do not have sufficient pure children. 

In this paper, we introduce the following class of models, which generalizes the class of models in Cai et al.~\cite{cai2019triad} by allowing causal edges among the observed variables, and consider the problem of identifying the causal order among observed variables within each cluster as well as the causal structure among the latent variables. 

{\renewcommand{\theenumi}{A\arabic{enumi}} 
\renewcommand{\labelenumi}{\textbf{\theenumi}.}
\makeatletter\renewcommand{\p@enumi}{}\makeatother 
\begin{enumerate}
\setlength{\itemsep}{0pt}
\setlength{\parsep}{0pt}
\setlength{\parskip}{0pt}
  \item\label{A1} Each observed variable has only one latent parent. 
  \item\label{A2} Each latent variable has at least two children, at least one of which is observed. 
  \item\label{A3} There are no direct causal paths between causal clusters. 
  \item\label{A4} The model satisfies the faithfulness assumption.
  \item\label{A5} The higher-order cumulant of each component of the disturbance $\bm{u}$ is nonzero.
\end{enumerate}}

In Section \ref{sec: proposed method}, we demonstrate that the causal structure of latent variables and the causal order of observed variables for the LvLiNGAM that satisfies Assumption A1-A5 are identifiable, and we provide an algorithm for estimating the causal DAG for this class. The proposed method enables the identification not only of the causal order among latent variables but also of their complete causal structure. 

Under Assumption A1, every observed variable is assumed to have one latent parent.
However, even if there exist observed variables without latent parents, the estimation problem can sometimes be reduced to a model satisfying Assumption A1 by applying ParceLiNGAM~\cite{tashiro2014parcelingam} or repetitive causal discovery (RCD) \cite{Maeda2020, maeda2022rcd} as a preprocessing step of the proposed method. Details are provided in Appendix~\ref{sec: reduce models}. 
\subsection{Cumulants}
\label{sec: cumulant}
The proposed method leverages higher-order cumulants of observed data to identify the causal structure among latent variables. In this subsection, we summarize some facts on higher-order cumulants. 
First, we introduce the definition of a higher-order cumulant. 
\begin{definition}[Cumulants \cite{Brillinger}]
    Let $i_1,\ldots,i_k \in \{1,\ldots,p\}$. 
    The $k$-th order cumulant of random vector $(X_{i_1},\ldots,X_{i_k})$ is 
    \begin{align*}
    c_{i_1,\ldots,i_k}^{(k)} &=
    \mathrm{cum}^{(k)}(X_{i_1},\ldots,X_{i_k})\\
    &= \sum_{(I_1,\ldots,I_h)} (-1)^{h-1}(h-1)!
    E\left[
    \prod_{j \in I_1}X_j    
    \right]
    \cdots
    E\left[
    \prod_{j \in I_h}X_j    
    \right],
    \end{align*}
    where the sum is taken over all partitions $(I_1,\ldots,I_h)$ of $(i_1,\ldots,i_k)$. 
\end{definition}
If $i_1 = \cdots = i_k=i$, we write $\mathrm{cum}^{(k)}(X_i)$ to denote $\mathrm{cum}^{(k)}(X_{i},\ldots,X_{i})$. 
The $k$-th order cumulants of the observed variables of LvLiNGAM satisfy
\begin{align*}
    c^{(k)}_{i_{1}, i_{2}, \dots, i_{k}} &= 
    \mathrm{cum}^{(k)}(X_{i_1},\ldots,X_{i_k})\\
    &=\sum_{j = 1}^{q}\alpha^{ol}_{i_{1}j}\cdots\alpha^{ol}_{i_{k}j}\mathrm{cum}^{(k)}(\epsilon_{j})+
    \sum_{j = 1}^{p}\alpha^{oo}_{i_{1}j}\cdots\alpha^{oo}_{i_{k}j}\mathrm{cum}^{(k)}(e_{j})
\end{align*}

We consider an LvLiNGAM in which all variables except $X_i$ and $X_j$ are regarded as latent variables. We refer to the canonical model that is observationally equivalent to this model as the canonical model over $X_i$ and $X_j$. Let $\mathrm{Conf}(X_i, X_j) = \{L^\prime_{1}, L^\prime_{2}, \cdots, L^\prime_{\ell}\}$ be the set of latent confounders in the canonical model over $X_{i}$ and $X_{j}$, where all $L_{h}^{\prime} \in \mathrm{Conf}(X_i, X_j)$ are mutually independent. Without loss of generality, we assume that $X_{j} \notin \mathrm{Anc}(X_{i})$. 
Then, $X_i$ and $X_j$ are expressed as 
\begin{align}
\label{eq: two variables}
X_{i} &= \sum^{\ell}_{h = 1} {\alpha^{\prime}_{ih}}L^{\prime}_{h} + v_{i}, \quad X_{j} = \sum^{\ell}_{h = 1} {\alpha^{\prime}_{jh}}L^{\prime}_{h} + \alpha^{oo}_{ji}v_{i} + v_{j},  
\end{align}
where $v_{i}$ and $v_{j}$ are disturbances, and ${\alpha^{\prime}_{ih}}$ and ${\alpha^{\prime}_{jh}}$ are total effects from $L_{h}^{\prime}$ to $X_{i}$ and $X_{j}$, respectively, in the canonical model over them. We note that the model (\ref{eq: two variables}) is a canonical model with generic parameters, and that $\ell$ is equal to the number of confounders in the original model $\mathcal{M}_{\mathcal{G}}$. 

Schkoda et al.~\cite{schkoda2024causal} proposed an algorithm for estimating the canonical model with generic parameters by leveraging higher-order cumulants. Several of their theorems concerning higher-order cumulants are also applicable to the canonical model over $X_i$ and $X_j$. They define a $\left(\sum_{i=0}^{k_{2}-k_{1}+1} i\right) \times k_{1}$ matrix $A^{(k_{1}, k_{2})}_{(X_{i} \to X_{j})}$ as follows: 

\begin{align}
    A^{(k_{1}, k_{2})}_{(X_{i} \to X_{j})} =
    \left[
    \begin{array}{cccc}
       c^{(k_{1})}_{i, i, \dots, i}  & c^{(k_{1})}_{i, i, \dots, j} & \dots & c^{(k_{1})}_{i, i, \dots, j}\\
       c^{(k_{1}+1)}_{i, i, i, \dots, i}  & c^{(k_{1}+1)}_{i, i, i, \dots, j} & \dots & c^{(k_{1}+1)}_{i, i, j, \dots, j}\\
       c^{(k_{1}+1)}_{j, i, i, \dots, i}  & c^{(k_{1}+1)}_{j, i, i, \dots, j} & \dots & c^{(k_{1}+1)}_{j, i, j, \dots, j}\\
       \vdots & \vdots & \ddots & \vdots \\
       c^{(k_{2})}_{i, \dots, i, i, i, \dots, i, i}  & c^{(k_{2})}_{i, \dots, i, i, i, \dots, i, j} & \dots & c^{(k_{2})}_{i, \dots, i, i, j, \dots, j, j}\\
       \vdots & \vdots & \ddots & \vdots \\
       c^{(k_{2})}_{j, \dots, j, i, i, \dots, i, i}  & c^{(k_{2})}_{j, \dots, j, i, i, \dots, i, j} & \dots & c^{(k_{2})}_{j, \dots, j, i, j, \dots, j, j}
    \end{array}
    \right],
\end{align}
where $k_{1} < k_{2}$. $A^{(k_{1}, k_{2})}_{(X_{j} \to X_{i})}$ is defined similarly by swapping the indices $i$ and $j$ in $A^{(k_{1}, k_{2})}_{(X_{i} \to X_{j})}$. Proposition~\ref{thm: latent number} enables the identification of $\ell$ in (\ref{eq: two variables}) and the causal order between $X_i$ and $X_j$.  
\begin{prop}[Theorem 3 in \cite{schkoda2024causal}]
\label{thm: latent number}
    For two observed variables $X_{i}$ and $X_{j}$ where $X_{j} \notin Anc(X_{i})$. Let $m := \min(\sum^{k_{2}-k_{1}+1}_{i=1}i, k_{1})$.
    Then, 
    \begin{enumerate}
    \setlength{\itemsep}{0pt}
    \setlength{\parsep}{0pt}
    \setlength{\parskip}{0pt}
        \item $A^{(k_{1}, k_{2})}_{(X_i \to X_j)}$ generically has rank $min(\ell+1, m)$.
        \item If $\alpha^{oo}_{ji} \ne 0$, $A^{(k_{1}, k_{2})}_{(X_{i} \to X_{j})}$ generically has rank $\min(\ell+2, m)$.
        \item If $\alpha^{oo}_{ji} = 0$, $A^{(k_{1}, k_{2})}_{(X_{i} \to X_{j})}$ generically has rank $\min(\ell+1, m)$.
    \end{enumerate}
\end{prop}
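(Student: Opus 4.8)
The plan is to reduce the statement to the two-variable canonical model (\ref{eq: two variables}) and to exploit the multilinearity of cumulants under a linear mixing of independent sources (this is essentially Theorem~3 of \cite{schkoda2024causal}, but a direct argument runs as follows). Set $\bm{u}=(L'_1,\dots,L'_\ell,v_i,v_j)^\top$ and let $\bm{M}$ be the $2\times(\ell+2)$ matrix whose columns are $(\alpha'_{ih},\alpha'_{jh})^\top$ for $h=1,\dots,\ell$ together with $(1,\alpha^{oo}_{ji})^\top$ and $(0,1)^\top$, so that $(X_i,X_j)^\top=\bm{M}\bm{u}$ with the components of $\bm{u}$ mutually independent; each confounder column has both entries nonzero, and the source cumulants $\kappa^{(k)}_s:=\mathrm{cum}^{(k)}(u_s)$ are nonzero (Assumption A5). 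Then every entry of the matrices in the statement has the form
\[
c^{(k)}_{i^{a}j^{b}}=\sum_{s=1}^{\ell+2}M_{1s}^{a}M_{2s}^{b}\,\kappa^{(k)}_s,\qquad a+b=k.
\]
The first step is to write the factor $M_{1s}^{\,k-b-c}M_{2s}^{\,b+c}$ appearing in the entry located in the order-$k$ row indexed by $b$ and in column $c$ as $\bigl(M_{1s}^{\,k-k_1-b}M_{2s}^{\,b}\bigr)\bigl(M_{1s}^{\,k_1-c}M_{2s}^{\,c}\bigr)$, which exhibits each matrix $A^{(k_1,k_2)}_{(\cdot)}$ as a product $\widetilde{\bm{R}}\,\bm{C}$, where $\bm{C}$ is the $(\ell+2)\times k_1$ matrix with $C_{s,c}=M_{1s}^{\,k_1-c}M_{2s}^{\,c}$ and $\widetilde{\bm{R}}$ absorbs the remaining factors and the cumulants $\kappa^{(k)}_s$.

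The upper bound is then structural and holds for all parameter values. Because the list of indices of every entry — a fixed prefix followed by a free suffix — always contains at least one copy of the ``cause'' index, the source that is private to the ``effect'' variable, i.e. the one whose $\bm{M}$-column is proportional to $(0,1)^\top$ in the relevant ordering, contributes $0^{(\ge1)}(\cdots)=0$ to every entry; equivalently its row of $\bm{C}$ vanishes and it drops out of the product. For the matrix built with $X_i$ as cause this annihilated source is $v_j$; for the reverse matrix it is $v_i$, but only when $\alpha^{oo}_{ji}=0$, so that for $\alpha^{oo}_{ji}\neq0$ no source drops out and the matrix factors through all $\ell+2$ sources. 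Since the matrix has shape $\bigl(\sum_{i=1}^{k_2-k_1+1}i\bigr)\times k_1$, this gives $\mathrm{rank}\le\min(\ell+1,m)$ whenever one source is annihilated and $\mathrm{rank}\le\min(\ell+2,m)$ otherwise, matching the three claimed values.

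For the matching lower bound I would argue genericity through the same factorization restricted to the set $S$ of surviving sources, $A^{(k_1,k_2)}_{(\cdot)}=\widetilde{\bm{R}}|_S\,\bm{C}|_S$. Each row of $\bm{C}|_S$ is a scaled Vandermonde row with node $M_{2s}/M_{1s}$, and these nodes are pairwise distinct for generic coefficients, so $\bm{C}|_S$ has rank $\min(|S|,k_1)$; a similar generalized-Vandermonde argument, now also using the $\kappa^{(k)}_s$ as free parameters across the orders $k_1,\dots,k_2$, gives $\mathrm{rank}(\widetilde{\bm{R}}|_S)=\min\bigl(|S|,\sum_{i=1}^{k_2-k_1+1}i\bigr)$, and chaining these through the product yields $\mathrm{rank}(A^{(k_1,k_2)}_{(\cdot)})=\min(|S|,m)$ generically. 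Concretely, this amounts to showing that a suitable $\min(|S|,m)\times\min(|S|,m)$ minor of $A^{(k_1,k_2)}_{(\cdot)}$, which is a polynomial in the entries of $\bm{M}$ and in the $\kappa^{(k)}_s$, is not identically zero, e.g. by specializing to one explicit non-degenerate instance. I expect this to be the main obstacle: since $\widetilde{\bm{R}}$ and $\bm{C}$ share the entries of $\bm{M}$, one must rule out conspiratorial cancellations in the Cauchy--Binet expansion of the minor and keep careful track of the combinatorics of the index pattern; the reduction and the upper bound are, by comparison, routine.
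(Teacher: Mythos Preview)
The paper does not prove this proposition; it is quoted verbatim as Theorem~3 of \cite{schkoda2024causal} and used throughout as a black box, so there is no in-paper argument to compare your proposal against.

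That said, your outline is the right one. The factorization $A^{(k_1,k_2)}_{(\cdot)}=\widetilde{\bm{R}}\,\bm{C}$ via the splitting $M_{1s}^{\,k-b-c}M_{2s}^{\,b+c}=(M_{1s}^{\,k-k_1-b}M_{2s}^{\,b})(M_{1s}^{\,k_1-c}M_{2s}^{\,c})$ is correct and gives the structural upper bounds immediately, and you have correctly read parts~2 and~3 as concerning the reverse matrix $A^{(k_1,k_2)}_{(X_j\to X_i)}$: as printed, the subscript $(X_i\to X_j)$ in parts~2 and~3 is a typo, since under that reading part~2 would flatly contradict part~1. The Vandermonde argument for $\bm{C}|_S$ with nodes $M_{2s}/M_{1s}$ is sound once $M_{1s}\neq 0$ for $s\in S$. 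For the remaining lower bound on $\widetilde{\bm{R}}|_S$, the obstacle you flag is real but milder than you suggest: because the source cumulants $\kappa^{(k)}_s$ enter only through $\widetilde{\bm{R}}$ and are algebraically independent of the mixing coefficients, one may first fix a generic $\bm{M}$ (making $\bm{C}|_S$ full rank) and then vary the $\kappa^{(k)}_s$ across orders $k$ to exhibit a single full-rank instance of $\widetilde{\bm{R}}|_S$; this decouples the two factors and removes the need to track cancellations in the Cauchy--Binet expansion.
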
 

Define $A^{(\ell)}_{(X_{i} \to X_{j})}$ as $A^{(k_1, k_2)}_{(X_{i} \to X_{j})}$ for the case where $k_1 = \ell + 2$ and $k_2$ is the smallest possible choice, and let $\tilde{A}^{(\ell)}_{(X_{i} \to X_{j})}$ be the matrix obtained by adding the row vector $(1, \alpha, \dots, \alpha^{\ell+1})$ as the first row of $A^{(\ell)}_{(X_{i} \to X_{j})}$. 
\begin{prop}[Theorem 4 in \cite{schkoda2024causal}]
\label{thm: estimate b}
    Consider the determinant of an $(\ell+2)\times (\ell+2)$ minor of $\tilde{A}^{(\ell)}_{(X_{i} \to X_{j})}$ that contains the first row and treat it as a polynomial in $\alpha$. 
    Then, the roots of this polynomial are $\alpha^{oo}_{ji}, \alpha^{ol}_{j1}, \cdots, \alpha^{ol}_{j\ell}$.
\end{prop}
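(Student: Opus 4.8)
The plan is to build on Proposition~\ref{thm: latent number} and analyse the column space of $A^{(\ell)}_{(X_i \to X_j)}$ explicitly in terms of the canonical representation (\ref{eq: two variables}). Recall that in the canonical model over $X_i$ and $X_j$ we have $X_i = \sum_{h=1}^\ell \alpha'_{ih} L'_h + v_i$ and $X_j = \sum_{h=1}^\ell \alpha'_{jh} L'_h + \alpha^{oo}_{ji} v_i + v_j$, with all of $L'_1,\ldots,L'_\ell,v_i,v_j$ mutually independent. Using multilinearity of cumulants, any entry $c^{(k)}_{\,j,\ldots,j,i,\ldots,i}$ with $r$ indices equal to $j$ and $k-r$ equal to $i$ expands as a sum over the $\ell+2$ independent sources: the source $L'_h$ contributes $(\alpha'_{jh})^r (\alpha'_{ih})^{k-r}\,\mathrm{cum}^{(k)}(L'_h)$, the source $v_i$ contributes $(\alpha^{oo}_{ji})^r \cdot 1^{\,k-r}\,\mathrm{cum}^{(k)}(v_i)$, and the source $v_j$ contributes $1^r\cdot 0^{\,k-r}\,\mathrm{cum}^{(k)}(v_j)$, which vanishes unless $r=k$. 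Consequently, if we index the rows of $A^{(\ell)}_{(X_i\to X_j)}$ by pairs $(k,r)$ with $k$ running from $k_1=\ell+2$ upward and $r$ the number of $j$-indices, each row is a linear combination — with source-dependent but column-independent weights $\mathrm{cum}^{(k)}(\cdot)$ — of the "geometric" column patterns indexed by the ratios $\alpha'_{jh}/\alpha'_{ih}$ for $h=1,\ldots,\ell$, together with the pattern indexed by $\alpha^{oo}_{ji}$. This is exactly the structure that makes the rank statement of Proposition~\ref{thm: latent number}(2) hold generically: the $\ell+2$ distinct ratios $\alpha^{oo}_{ji},\,\alpha'_{j1}/\alpha'_{i1},\ldots,\alpha'_{j\ell}/\alpha'_{i\ell}$ behave like nodes of a Vandermonde system.

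The key observation is then the following. Augmenting $A^{(\ell)}_{(X_i\to X_j)}$ with the first row $(1,\alpha,\ldots,\alpha^{\ell+1})$ to form $\tilde A^{(\ell)}_{(X_i\to X_j)}$, consider any $(\ell+2)\times(\ell+2)$ minor containing this first row; call its determinant $D(\alpha)$. I claim $D(\alpha)$ vanishes whenever $\alpha$ equals one of the ratios $\alpha^{oo}_{ji},\,\alpha'_{j1}/\alpha'_{i1},\ldots,\alpha'_{j\ell}/\alpha'_{i\ell}$ — but the paper states the roots are $\alpha^{oo}_{ji},\alpha^{ol}_{j1},\ldots,\alpha^{ol}_{j\ell}$, so I must reconcile the normalisation: choosing the scaling of each $L'_h$ so that $\alpha'_{ih}=1$ (always possible since $L'_h$ is a latent source feeding $X_i$ in the canonical model over $X_i,X_j$), the ratio $\alpha'_{jh}/\alpha'_{ih}$ becomes exactly $\alpha'_{jh}$, which under this normalisation is the total effect from $L'_h$ to $X_j$, i.e.\ one of the $\alpha^{ol}_{jh}$. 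To prove the vanishing: fix $\alpha = \alpha'_{jh_0}$ for some $h_0$ (the case $\alpha=\alpha^{oo}_{ji}$ is identical with $v_i$ playing the role of $L'_{h_0}$). Then the first row $(1,\alpha,\ldots,\alpha^{\ell+1})$ is precisely the column-pattern vector associated to source $L'_{h_0}$, evaluated at the $\ell+2$ columns of the minor. By the expansion above, every other row of the minor is a fixed linear combination of these same $\ell+2$ source-pattern vectors (over all $\ell+2$ sources including $L'_{h_0}$). Hence all $\ell+2$ rows of the minor lie in the $(\ell+2)$-dimensional span of source patterns, but more importantly the first row equals one particular source pattern, and — here is the crux — the remaining $\ell+1$ rows, coming from $A^{(\ell)}_{(X_i\to X_j)}$ with $k\ge \ell+2$, span at most the $(\ell+2)$-dimensional source space; when we include the first row we have $\ell+2$ vectors all inside that space, so generically that is not yet a contradiction. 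The genuine argument must instead exploit that $A^{(\ell)}_{(X_i\to X_j)}$ has rank only $\ell+1$ when $\alpha^{oo}_{ji}=0$, or rank $\ell+2$ when $\alpha^{oo}_{ji}\neq 0$ but with a \emph{known} structure — so I would argue that the row space of $A^{(\ell)}_{(X_i\to X_j)}$ equals the span of the $\ell+1$ (or $\ell+2$) source patterns for $L'_1,\ldots,L'_\ell$ (and $v_i$), and then $D(\alpha)=0$ iff the first-row vector $(1,\alpha,\ldots,\alpha^{\ell+1})$ lies in that row space, which happens precisely when $\alpha$ coincides with one of the source ratios, since distinct ratios give linearly independent geometric vectors (Vandermonde).

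Concretely, the steps I would carry out are: (i) write the cumulant expansion of each entry of $\tilde A^{(\ell)}_{(X_i\to X_j)}$ using multilinearity, as above, and normalise $\alpha'_{ih}=1$; (ii) observe that the $(k,r)$-entry is $\sum_{h=1}^\ell \mathrm{cum}^{(k)}(L'_h)(\alpha'_{jh})^r + \mathrm{cum}^{(k)}(v_i)(\alpha^{oo}_{ji})^r + [r=k]\,\mathrm{cum}^{(k)}(v_j)$, so that the matrix factors (up to the $v_j$ correction terms, which I would argue do not enter the relevant minors or can be absorbed) as $A^{(\ell)}_{(X_i\to X_j)} = C \cdot W$, where $W$ is a generalised Vandermonde matrix in the nodes $\{\alpha^{ol}_{j1},\ldots,\alpha^{ol}_{j\ell},\alpha^{oo}_{ji}\}$ and $C$ collects the cumulant weights $\mathrm{cum}^{(k)}(\cdot)$; (iii) note $C$ has full column rank $\ell+2$ generically by Assumption~A5; (iv) conclude that a minor determinant $D(\alpha)$ of the augmented matrix, expanded along its first row, is — up to a nonzero generic factor coming from $C$ — proportional to $\prod_{t}(\alpha - \nu_t)$ where $\nu_t$ ranges over the subset of nodes selected by that minor, hence over $\{\alpha^{oo}_{ji},\alpha^{ol}_{j1},\ldots,\alpha^{ol}_{j\ell}\}$ when the full row-index set is used; (v) identify the degree — the minor is $(\ell+2)\times(\ell+2)$, the first row has degree $\ell+1$ in $\alpha$, and the remaining rows are $\alpha$-free, so $D(\alpha)$ has degree exactly $\ell+1$, matching the $\ell+2$ claimed roots only if $D$ is not quite the full story — so I would be careful to state that the $\ell+2$ values $\alpha^{oo}_{ji},\alpha^{ol}_{j1},\ldots,\alpha^{ol}_{j\ell}$ are exactly the roots counted with the convention that one of them is forced by the construction (equivalently, $D(\alpha)/(\text{leading coeff})$ is the characteristic-type polynomial of the node set). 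The main obstacle I anticipate is precisely this bookkeeping of the $v_j$ contribution (the $[r=k]$ terms, which break the clean rank-$(\ell+2)$ Vandermonde factorisation) and the reconciliation of polynomial degree with the number of claimed roots; handling it cleanly will likely require choosing $k_2$ minimal exactly so that the "extra" $v_j$ rows are confined to positions that do not pollute the minors used, and invoking the genericity (Assumption~A5) to rule out accidental cancellations. Once the factorisation $A^{(\ell)}_{(X_i\to X_j)} = C W$ is established modulo these corrections, the root identification follows from standard Vandermonde determinant expansion.
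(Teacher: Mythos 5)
First, a point of reference: the paper does not prove this statement at all --- it is imported verbatim as Theorem~4 of Schkoda et al.\ \cite{schkoda2024causal}, so there is no in-paper proof to compare against. That said, your core argument is the standard and correct one: expand each cumulant entry of $A^{(\ell)}_{(X_i\to X_j)}$ by multilinearity over the independent sources of the canonical model (\ref{eq: two variables}), obtain the factorization $A^{(\ell)}_{(X_i\to X_j)} = C\,W$ with $W$ the $(\ell+1)\times(\ell+2)$ Vandermonde matrix in the nodes $\alpha^{oo}_{ji},\alpha^{ol}_{j1},\dots,\alpha^{ol}_{j\ell}$ (after normalizing $\alpha'_{ih}=1$, which is indeed the convention implicit in Proposition~\ref{lem: estimate e cumulant}), observe that the row space of $A^{(\ell)}_{(X_i\to X_j)}$ is contained in the span of the $\ell+1$ geometric vectors $(1,\nu,\dots,\nu^{\ell+1})$, and conclude that the minor determinant $D(\alpha)$ vanishes exactly when $(1,\alpha,\dots,\alpha^{\ell+1})$ lies in that span, i.e.\ when $\alpha$ equals one of the nodes; genericity of the cumulant weights in $C$ (Assumption A5) gives both that the selected $\ell+1$ rows span the full row space and that no other $\alpha$ is a root.

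The two ``obstacles'' you flag as unresolved are, however, both phantom, and leaving them open is the only real defect of the write-up. (i) The $v_j$ correction terms $[r=k]\,\mathrm{cum}^{(k)}(v_j)$ are identically zero throughout $A^{(\ell)}_{(X_i\to X_j)}$: by construction every index tuple in this matrix contains at least one index equal to $i$ (the length-$k_1$ suffix carries at most $k_1-1$ copies of $j$), and $v_j$ has coefficient zero in $X_i$, so its contribution to every entry vanishes. The clean rank-$(\ell+1)$ Vandermonde factorization therefore holds exactly, with nothing to absorb and no delicate choice of $k_2$ beyond supplying enough rows. (These terms are what raise the rank to $\ell+2$ in the reversed matrix $A^{(k_1,k_2)}_{(X_j\to X_i)}$, which is the content of Proposition~\ref{thm: latent number}, but they never enter the matrix used here.) (ii) Your degree worry rests on a miscount: the claimed roots $\alpha^{oo}_{ji},\alpha^{ol}_{j1},\dots,\alpha^{ol}_{j\ell}$ number $\ell+1$, not $\ell+2$, which matches exactly the degree $\ell+1$ of $D(\alpha)$ obtained by cofactor expansion along the first row, whose leading coefficient is a generically nonzero $(\ell+1)\times(\ell+1)$ minor of $A^{(\ell)}_{(X_i\to X_j)}$. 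With these two points settled, your argument closes without further work.
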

Proposition \ref{thm: estimate b} enables the identification of $\alpha^{oo}_{ji}, \alpha^{ol}_{j1}, \cdots, \alpha^{ol}_{j\ell}$ up to permutation. The following proposition plays a crucial role in this paper in identifying both the number of latent variables and the true clusters.
\begin{prop}[Lemma 5 in \cite{schkoda2024causal}]
\label{lem: estimate e cumulant}
    For two observed variables $X_{i}$ and $X_{j}$, $\alpha^{oo}_{ji}, \alpha^{ol}_{j1}, \cdots, \alpha^{ol}_{j\ell}$ are the roots of the polynomial in Proposition \ref{thm: estimate b}. 
    Then 
    \begin{align}
    \label{eq: system}
    \left[
        \begin{array}{cccc}
            1 & 1 & \dots & 1 \\ 
            \alpha^{oo}_{ji} & \alpha^{ol}_{j1} & \dots & \alpha^{ol}_{j\ell} \\
            \vdots & \vdots & \ddots & \vdots \\
            (\alpha^{oo}_{ji})^{k-1} & (\alpha^{ol}_{j1})^{k-1} & \dots & (\alpha_{j\ell}^{ol})^{k-1}
        \end{array}
    \right]
    \left[
        \begin{array}{c}
             \mathrm{cum}^{(k)}({v}_{i}) \\ 
            \mathrm{cum}^{(k)}(L^\prime_{1}) \\
            \vdots \\
            \mathrm{cum}^{(k)}(L^\prime_{\ell})\\
        \end{array}
    \right]
    =\left[
        \begin{array}{c}
            c^{(k)}_{i, i, \dots, i} \\ 
            c^{(k)}_{i, i, \dots, j}\\
            \vdots \\
            c^{(k)}_{i, j, \dots, j}
        \end{array}
    \right]
\end{align}
(\ref{eq: system}) is generically uniquely solvable if $k \geq \ell+1$.
\end{prop}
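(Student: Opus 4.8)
The plan is to split the claim into two parts: the system (\ref{eq: system}) is always \emph{consistent}, and for generic parameters its solution is \emph{unique} once $k\ge\ell+1$. Consistency is structural — the coefficient matrix and the right-hand side are tied together by an exact cumulant identity, so the vector of true cumulants is automatically a solution. Uniqueness then reduces to showing that the coefficient matrix, which is a rectangular Vandermonde matrix, has full column rank; this happens precisely when its nodes are pairwise distinct, and distinctness is a generic condition.

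For consistency, I would begin from the canonical representation (\ref{eq: two variables}) of $X_i$ and $X_j$ over the $\ell$ mutually independent confounders $L'_1,\dots,L'_\ell$. Rescaling each $L'_h$ so that its total effect on $X_i$ equals $1$ (a harmless latent-scale normalization, valid because a confounder of $X_i$ has nonzero effect on $X_i$) gives $X_i=\sum_{h=1}^{\ell}L'_h+v_i$ and $X_j=\sum_{h=1}^{\ell}\alpha^{ol}_{jh}L'_h+\alpha^{oo}_{ji}v_i+v_j$, with $v_i,v_j,L'_1,\dots,L'_\ell$ mutually independent. By multilinearity of cumulants and the vanishing of joint cumulants of independent variables, the contribution of a source to the mixed cumulant of $X_i$ (taken $k-r$ times) and $X_j$ (taken $r$ times) equals the product of its coefficients in $X_i$ and $X_j$ raised to the corresponding multiplicities, times its $k$-th cumulant. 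Thus $v_j$ contributes $0^{\,k-r}1^{\,r}\,\mathrm{cum}^{(k)}(v_j)$, which vanishes for every $r\le k-1$; $v_i$ contributes $(\alpha^{oo}_{ji})^{r}\,\mathrm{cum}^{(k)}(v_i)$; and $L'_h$ contributes $(\alpha^{ol}_{jh})^{r}\,\mathrm{cum}^{(k)}(L'_h)$. Summing over sources yields
\[
c^{(k)}_{i,\dots,i,j,\dots,j}=(\alpha^{oo}_{ji})^{r}\,\mathrm{cum}^{(k)}(v_i)+\sum_{h=1}^{\ell}(\alpha^{ol}_{jh})^{r}\,\mathrm{cum}^{(k)}(L'_h),\qquad r=0,1,\dots,k-1,
\]
where the left-hand index string contains $k-r$ copies of $i$ and $r$ copies of $j$. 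These are exactly the $k$ equations of (\ref{eq: system}), so $\big(\mathrm{cum}^{(k)}(v_i),\mathrm{cum}^{(k)}(L'_1),\dots,\mathrm{cum}^{(k)}(L'_\ell)\big)^{\top}$ is a solution.

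For uniqueness, observe that the coefficient matrix of (\ref{eq: system}) is the $k\times(\ell+1)$ Vandermonde matrix with nodes $\alpha^{oo}_{ji},\alpha^{ol}_{j1},\dots,\alpha^{ol}_{j\ell}$ — the roots identified in Proposition~\ref{thm: estimate b}. Since $k\ge\ell+1$, its first $\ell+1$ rows form a square Vandermonde matrix, nonsingular iff these $\ell+1$ nodes are pairwise distinct; then the matrix has full column rank $\ell+1$, and since the system is consistent the solution is unique. It remains to check that distinctness of the nodes is generic: each node is a nonconstant rational function of the LvLiNGAM coefficients, so every coincidence locus $\{\alpha^{oo}_{ji}=\alpha^{ol}_{jh}\}$ or $\{\alpha^{ol}_{jh}=\alpha^{ol}_{jh'}\}$ is a proper algebraic — hence measure-zero — subset of the parameter space, provided it is not the entire space; exhibiting one parameter choice where the two effects differ rules that out, and the finite union of these loci is the bad set.

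The step I expect to be the main obstacle is exactly this genericity check: one must verify that no structural feature of the model forces two of $\alpha^{oo}_{ji},\alpha^{ol}_{j1},\dots,\alpha^{ol}_{j\ell}$ to coincide identically. This is a little delicate because the confounders $L'_h$ of the canonical model over $X_i$ and $X_j$ are derived objects, and their total effects to $X_j$ are rational — not polynomial — in the original $\bm{A},\bm{B},\bm{\Lambda}$; one should check that the relevant denominators are generically nonzero and the resulting numerators generically pairwise distinct (and distinct from $\alpha^{oo}_{ji}$, which depends on $\bm{B}$ alone). The remaining ingredients — the cumulant bookkeeping above and the classical fact that a Vandermonde matrix with distinct nodes is invertible — are routine.
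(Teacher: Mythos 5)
This proposition is imported verbatim from Schkoda et al.\ (Lemma~5 of \cite{schkoda2024causal}); the paper states it without proof, so there is no in-paper argument to compare against. Your derivation is correct and is the standard one: multilinearity of cumulants plus independence of $v_i, v_j, L'_1,\dots,L'_\ell$ in the canonical model gives the $k$ identities $c^{(k)}_{i,\dots,i,j,\dots,j}=(\alpha^{oo}_{ji})^{r}\mathrm{cum}^{(k)}(v_i)+\sum_h(\alpha^{ol}_{jh})^{r}\mathrm{cum}^{(k)}(L'_h)$ (the $v_j$ term dying because each row retains at least one copy of $X_i$), which shows consistency, and uniqueness follows from full column rank of the $k\times(\ell+1)$ Vandermonde matrix once the $\ell+1$ nodes are pairwise distinct, a generic condition. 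Your normalization $\alpha'_{ih}=1$ is the right reading of the statement (it is also why the paper's later applications, e.g.\ Lemma~A.8, recover $\mathrm{cum}^{(k)}(\alpha^{ll}_{ic}L_c)$ rather than $\mathrm{cum}^{(k)}(L_c)$), and your flagged genericity check — that no coincidence locus among $\alpha^{oo}_{ji},\alpha^{ol}_{j1},\dots,\alpha^{ol}_{j\ell}$ is all of parameter space — is routine to discharge and correctly identified as the only nontrivial loose end.
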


In the following, let $c^{(k)}_{(X_{i} \to X_{j})}(L^{\prime}_h)$, where $h=1,\ldots,\ell$, denote the solution of $\mathrm{cum}^{(k)}(L^\prime_{h})$ in (\ref{eq: system}). 
\section{Proposed Method}
\label{sec: proposed method}
In this section, we propose a three-stage algorithm for identifying LvLiNGAM that satisfy Assumptions A1–A5. In the first stage, leveraging Cai et al.~\cite{cai2019triad}’s Triad constraints and Proposition \ref{thm: latent number}, the method estimates over-segmented causal clusters and assigns a latent parent to each cluster. In this stage, the ancestral relationships among observed variables are also estimated. In the second stage, Proposition \ref{thm: estimate b} is employed to identify latent sources recursively and, as a result, the causal order among the latent variables is estimated. When multiple latent variables are found to have identical cumulants, their corresponding clusters are merged, enabling the identification of the true clusters. In general, even if the causal order among latent variables can be estimated, the causal structure among them cannot be determined. The final stage identifies the exact causal structure among latent variables in a bottom-up manner.
\subsection{Stage I: Estimating Over-segmented Clusters}
\label{sec: partial cluster}
First, we introduce the Triad constraint proposed by Cai et al. \cite{cai2019triad}, which also serves as a key component of our method in this stage. 
\begin{definition}[Triad constraint \cite{cai2019triad}]
    Let $X_{i}$, $X_{j}$, and $X_{k}$ be observed variables in the LvLiNGAM and assume that $\mathrm{Cov}(X_j,X_k) \ne 0$. Define Triad statistic $e_{(X_{i}, X_{j}\mid X_{k})}$ by 
    \begin{align}
        e_{(X_{i}, X_{j}\mid X_{k})} := X_{i} - \frac{\mathrm{Cov}(X_{i}, X_{k})}{\mathrm{Cov}(X_{j}, X_{k})}X_{j}.
    \end{align}
    If $e_{(X_{i}, X_{j}\mid X_{k})} \indep X_{k}$, we say that $\{X_{i}, X_{j}\}$ and $X_{k}$ satisfy the Triad constraint. 
\end{definition}
The following propositions are also provided by Cai et al.~\cite{cai2019triad}. 
\begin{prop}[\cite{cai2019triad}]
\label{thm: minimal cluster}
    Assume that all observed variables are pure, and $X_{i}$ and $X_{j}$ are dependent. 
    If $\{X_{i}, X_{j}\}$ and all $X_{k} \in \bm{X} \setminus \{X_{i}, X_{j}\}$ satisfy the Triad constraint, then $X_{i} \text{ and } X_{j}$ form a cluster.
\end{prop}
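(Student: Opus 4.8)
The plan is to prove the contrapositive: if $X_i$ and $X_j$ are dependent but do \emph{not} form a cluster, then there is an $X_k \in \bm{X} \setminus \{X_i, X_j\}$ with $\mathrm{Cov}(X_j, X_k) \ne 0$ for which the Triad constraint fails. Since every observed variable is pure with a single latent parent (by \ref{A1}) and there are no edges among observed variables, each observed variable is $X_m = \lambda_m L_{p(m)} + e_m$ with $\lambda_m \ne 0$, where $L_{p(m)}$ is its latent parent; substituting the reduced form $L_{p(m)} = \sum_n \alpha^{ll}_{p(m),n}\epsilon_n$ of the latent sub-DAG, each $X_m$ becomes a linear form in the mutually independent, non-Gaussian disturbances $\{\epsilon_n\} \cup \{e_m\}$ (non-Gaussianity is ensured by \ref{A5}). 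First I would record that, because $\mathrm{Cov}(X_m, X_{m'}) = \lambda_m\lambda_{m'}\mathrm{Cov}(L_{p(m)}, L_{p(m')})$ for $m \ne m'$, the multiplier $t := \mathrm{Cov}(X_i, X_k)/\mathrm{Cov}(X_j, X_k)$ satisfies $t = \lambda_i \mathrm{Cov}(L_{p(i)}, L_{p(k)}) / \bigl(\lambda_j \mathrm{Cov}(L_{p(j)}, L_{p(k)})\bigr)$, whence
\begin{align*}
e_{(X_i, X_j \mid X_k)} &= \lambda_i L_{p(i)} - t\lambda_j L_{p(j)} + (e_i - t e_j) \\
&= \sum_n \bigl(\lambda_i \alpha^{ll}_{p(i),n} - t\lambda_j \alpha^{ll}_{p(j),n}\bigr)\epsilon_n + e_i - t e_j .
\end{align*}
Applying the Darmois--Skitovich theorem to $e_{(X_i, X_j \mid X_k)}$ and $X_k = \lambda_k L_{p(k)} + e_k$ --- whose only possibly common atoms are the $\epsilon_n$, since $e_i, e_j, e_k$ are distinct --- the Triad constraint $e_{(X_i, X_j \mid X_k)} \indep X_k$ holds if and only if $\lambda_i \alpha^{ll}_{p(i),n} = t\lambda_j \alpha^{ll}_{p(j),n}$ for every index $n$ with $\alpha^{ll}_{p(k),n} \ne 0$.

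Next I would pick the witness $X_k$. Assume $X_i, X_j$ are dependent but $L_{p(i)} \ne L_{p(j)}$ --- which, under \ref{A1}, is exactly what ``$X_i$ and $X_j$ do not form a cluster'' means. In a DAG at most one of $L_{p(i)}, L_{p(j)}$ is an ancestor of the other, so choose $L^\ast \in \{L_{p(i)}, L_{p(j)}\}$ that is \emph{not} an ancestor of the other, with $X^\ast$ the corresponding member of $\{X_i, X_j\}$; using the hypothesis that each latent variable has at least two observed children (the setting of \cite{cai2019triad}; under \ref{A2} alone one instead descends to an observed variable below $L^\ast$), take $X_k$ to be an observed child of $L^\ast$ other than $X^\ast$, so that $X_k \notin \{X_i, X_j\}$. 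Then $\mathrm{Cov}(X_j, X_k)$ and $\mathrm{Cov}(X_i, X_k)$ are nonzero: if $L^\ast = L_{p(j)}$ they equal $\lambda_j\lambda_k\mathrm{Var}(L_{p(j)})$ and $\lambda_i\lambda_k\mathrm{Cov}(L_{p(i)}, L_{p(j)})$, and if $L^\ast = L_{p(i)}$ they equal $\lambda_j\lambda_k\mathrm{Cov}(L_{p(i)}, L_{p(j)})$ and $\lambda_i\lambda_k\mathrm{Var}(L_{p(i)})$, all nonzero because $X_i \notindep X_j$ together with faithfulness (\ref{A4}) forbids a vanishing confounder covariance; hence $t$ is well defined and nonzero. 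Letting $n^\ast$ denote the index of $L^\ast$ itself, we have $\alpha^{ll}_{p(k),n^\ast} = \alpha^{ll}_{n^\ast,n^\ast} = 1 \ne 0$, and evaluating the criterion of the previous paragraph at $n = n^\ast$ gives $\lambda_i \alpha^{ll}_{p(i),n^\ast} = t\lambda_j \alpha^{ll}_{p(j),n^\ast}$; here exactly one of $\alpha^{ll}_{p(i),n^\ast}, \alpha^{ll}_{p(j),n^\ast}$ equals $1$ (the one whose first index is $n^\ast$) while the other equals $0$, since $L^\ast$ is not an ancestor of the remaining latent and the two latents are distinct, so the corresponding total effect is an empty sum over directed paths. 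Therefore the criterion forces $\lambda_i = 0$ or $t\lambda_j = 0$, both impossible, so the Triad constraint fails for this $X_k$, completing the contrapositive. As a consistency check, when $L_{p(i)} = L_{p(j)}$ one finds $t = \lambda_i/\lambda_j$ and $e_{(X_i, X_j \mid X_k)} = e_i - (\lambda_i/\lambda_j)e_j$, which is independent of every $X_k$ with $k \ne i, j$.

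The step I expect to be the main obstacle is not the Darmois--Skitovich reduction but making the choice of $X_k$ and the attendant nonvanishing assertions fully rigorous across the possible latent configurations (one of $L_{p(i)}, L_{p(j)}$ an ancestor of the other versus the two sharing a common latent ancestor): one must check in each case that the coefficient of the shared disturbance in $e_{(X_i, X_j \mid X_k)}$ really is nonzero and that every covariance feeding into $t$ is nonzero so that the Triad statistic is even defined --- this is precisely where faithfulness (\ref{A4}) is used, and it is also where the descent-to-a-descendant argument under the weaker \ref{A2} would be carried out separately. A secondary point to state carefully is the meaning of ``form a cluster'': with single latent parents it is equivalent to $L_{p(i)}$ and $L_{p(j)}$ being the same latent variable (or, allowing non-minimal models, being proportional), and the argument must be arranged to deliver exactly this dichotomy.
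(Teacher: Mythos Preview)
Your proof is correct and matches the paper's approach. The paper does not prove Proposition~\ref{thm: minimal cluster} directly (it is cited from \cite{cai2019triad}), but its proof of the more general Theorem~\ref{thm: clusters in proposed method} in the appendix, Case~1 (pure variables), follows exactly your contrapositive strategy: assuming $L_{p(j)} \notin \mathrm{Anc}(L_{p(i)})$ without loss of generality, it picks $X_k$ to be another observed child of $L_{p(j)}$ and observes that both $e_{(X_i, X_j \mid X_k)}$ and $X_k$ carry a nonzero multiple of the disturbance $\epsilon_j$, invoking Darmois--Skitovich to conclude dependence; your coefficient criterion at $n = n^\ast$ is a clean way to package that same computation, and your remark about descending below $L^\ast$ under the weaker Assumption~\ref{A2} corresponds precisely to the paper's Case~1-2.
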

\begin{prop}[\cite{cai2019triad}]
\label{thm: merging cluster}
    Let $\hat{C}_1$ and $\hat{C}_2$ be two clusters estimated by using Triad constraints.  
    If $\hat{C}_{1}$ and $\hat{C}_{2}$ satisfy $\hat{C}_{1} \cap \hat{C}_{2} \neq \emptyset$, $\hat{C}_{1} \cup \hat{C}_{2}$ also forms a cluster. 
\end{prop}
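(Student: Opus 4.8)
The plan is to show that $\hat{C}_1 \cup \hat{C}_2$ is a set of observed variables that all share the same latent parents, which by Definition~\ref{def: cluster} is precisely what it means to be a causal cluster. Structurally the argument is a transitivity argument leaning on Assumption~\ref{A1}, and the real work lies in setting it up rather than in any computation.

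First I would reduce to the situation in which $\hat{C}_1$ and $\hat{C}_2$ are already known to be genuine causal clusters. In the base case each of them is a two-element set $\{X_a, X_b\}$ that the estimation procedure declares a cluster exactly because $X_a$ and $X_b$ are dependent and $\{X_a, X_b\}$ satisfies the Triad constraint with every other observed variable; Proposition~\ref{thm: minimal cluster} then guarantees that $\{X_a, X_b\}$ is a true causal cluster. If $\hat{C}_1$ or $\hat{C}_2$ was instead produced by an earlier merge, the same conclusion follows by induction on the number of merges, with the present proposition playing the role of the inductive step. Hence I may assume that both $\hat{C}_1$ and $\hat{C}_2$ are true causal clusters.

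The core step is then short. Fix any $X_m \in \hat{C}_1 \cap \hat{C}_2$, which exists by hypothesis. By Assumption~\ref{A1}, $X_m$ has a unique latent parent; call it $L$. Since $\hat{C}_1$ is a causal cluster containing $X_m$, every variable in $\hat{C}_1$ has the same latent parents as $X_m$, namely $\{L\}$; the identical statement holds for every variable in $\hat{C}_2$. Consequently every variable in $\hat{C}_1 \cup \hat{C}_2$ has $L$ as its unique latent parent, so the union shares the same latent parents and is therefore a causal cluster.

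I do not expect a deep obstacle; the points that need care are of a bookkeeping nature. The first is justifying that ``estimated by using Triad constraints'' really certifies a genuine cluster --- this is exactly what Proposition~\ref{thm: minimal cluster} supplies, but that proposition itself rests on the purity of the observed variables and on the faithfulness assumption (Assumption~\ref{A4}), which must be invoked so that the dependence and independence relations probed by the Triad test mirror the underlying graph. The second, and the one I would be most careful to flag, is that the argument works only because of Assumption~\ref{A1}: it is the single-latent-parent property that forces the shared variable $X_m$ to pin down one common $L$ for both clusters. If an observed variable could have two latent parents, two overlapping detected clusters could in principle lie under different latent parents, and their union would fail to be a cluster; so the statement genuinely belongs to the model class defined by Assumptions~\ref{A1}--\ref{A5}.
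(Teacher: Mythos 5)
The paper does not prove this proposition itself --- it is imported verbatim from Cai et al.\ \cite{cai2019triad} (and later asserted, again without proof, to extend to the impure setting of Section~\ref{sec: partial cluster}) --- so there is no in-paper argument to compare against. Judged on its own, your proof is correct: reduce to the case where $\hat{C}_1$ and $\hat{C}_2$ are genuine causal clusters (Proposition~\ref{thm: minimal cluster} for the base case, induction over merges otherwise), then observe that a shared variable $X_m$ forces both clusters to have the latent-parent set of $X_m$, so the union is again a set of variables with identical latent parents, i.e.\ a cluster by Definition~\ref{def: cluster}. A pleasant side effect of your argument is that the transitivity step uses nothing about purity, so replacing Proposition~\ref{thm: minimal cluster} by Theorem~\ref{thm: clusters in proposed method} in the base case immediately gives the extension the paper claims for its own model class.

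One remark in your last paragraph is off, though it does not damage the proof. You claim the argument ``works only because of Assumption~\ref{A1}'' and that with two latent parents the union ``could fail to be a cluster.'' That is not so: Definition~\ref{def: cluster} defines a cluster by equality of the \emph{set} of latent parents, and your transitivity step --- both clusters must carry $\mathrm{Pa}(X_m)\cap\bm{L}$, whatever its cardinality --- goes through unchanged. What Assumption~\ref{A1} (or purity) actually buys is the \emph{detection} step, i.e.\ that the Triad constraint certifies genuine clusters in the first place; once you have reduced to true clusters, the merging step is assumption-free.
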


When all observed variables are pure, as in the model shown in Fig. \ref{fig: canonical models and other} (b), the correct clusters can be identified in two steps: first, apply Proposition \ref{thm: minimal cluster} to find pairs of variables in the same cluster; then, merge them using Proposition \ref{thm: merging cluster}. However, when impure observed variables are present, the clusters obtained using this method become over-segmented relative to the true clusters.

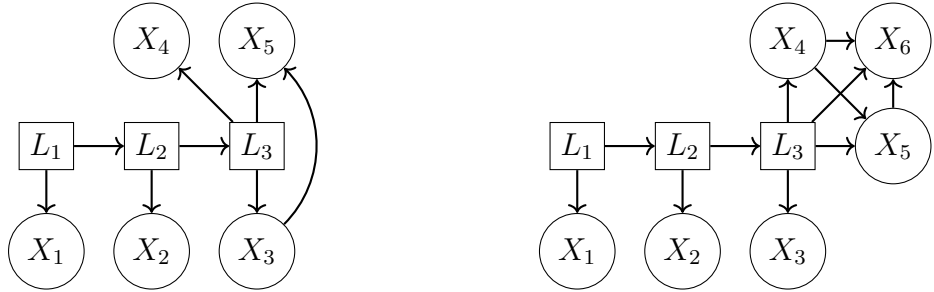
\begin{figure}[t]
    \centering
    \begin{subfigure}[t]{0.45\textwidth}
        \centering
        \begin{tikzpicture}[scale=0.7]
            \node[draw, rectangle] (L_1) at (-2, 2) {$L_{1}$};
            \node[draw, rectangle] (L_2) at (0, 2) {$L_{2}$};
            \node[draw, rectangle] (L_3) at (2, 2) {$L_{3}$};
            \node[draw, circle] (X_1) at (-2, 0) {$X_{1}$};
            \node[draw, circle] (X_2) at (0, 0) {$X_{2}$};
            
            \node[draw, circle] (X_3) at (2, 0) {$X_{3}$};
            
            \node[draw, circle] (X_4) at (0, 4) {$X_{4}$};
            \node[draw, circle] (X_5) at (2, 4) {$X_{5}$};
            \draw[thick, ->] (L_1) -- (L_2);
            \draw[thick, ->] (L_2) -- (L_3);
            \draw[thick, ->] (L_1) -- (X_1);
            \draw[thick, ->] (L_2) -- (X_2);
            \draw[thick, ->] (L_3) -- (X_3);
            \draw[thick, ->] (L_3) -- (X_4);
            \draw[thick, ->] (L_3) -- (X_5);
            \draw[thick, ->] (X_3) to [out=45, in=-45] (X_5);
        \end{tikzpicture}
        \caption{An example of LvLiNGAM with impure children (1)}
    \end{subfigure}
    \hfill
    \begin{subfigure}[t]{0.45\textwidth}
        \centering
        \begin{tikzpicture}[scale=0.7]
            \node[draw, rectangle] (L_1) at (-2, 2) {$L_{1}$};
            \node[draw, rectangle] (L_2) at (0, 2) {$L_{2}$};
            \node[draw, rectangle] (L_3) at (2, 2) {$L_{3}$};
            \node[draw, circle] (X_1) at (-2, 0) {$X_{1}$};
            \node[draw, circle] (X_3) at (2, 0) {$X_{3}$};
            \node[draw, circle] (X_2) at (0, 0) {$X_{2}$};
            \node[draw, circle] (X_5) at (4, 2) {$X_{5}$};
            \node[draw, circle] (X_4) at (2, 4) {$X_{4}$};
            \node[draw, circle] (X_6) at (4, 4) {$X_{6}$};
            \draw[thick, ->] (L_1) -- (L_2);
            \draw[thick, ->] (L_2) -- (L_3);
            \draw[thick, ->] (L_1) -- (X_1);
            \draw[thick, ->] (L_3) -- (X_4);
            \draw[thick, ->] (L_2) -- (X_2);
            \draw[thick, ->] (L_3) -- (X_3);
            \draw[thick, ->] (L_3) -- (X_5);
            \draw[thick, ->] (L_3) -- (X_6);
            \draw[thick, ->] (X_4) -- (X_6);
            \draw[thick, ->] (X_5) -- (X_6);
            \draw[thick, ->] (X_4) to (X_5);
        \end{tikzpicture}
        \caption{An example of LvLiNGAM with impure children (2)}
    \end{subfigure}
    \caption{Two examples of LvLiNGAM with impure children}
    \label{fig: impure children}
\end{figure}

The correct clustering for the model in Figure~\ref{fig: impure children}~(a) is $\{X_1\}$, $\{X_2\}$, $\{X_3, X_4, X_5\}$, and the correct clustering for the model in Figure~\ref{fig: impure children}~(b) is $\{X_1\}$, $\{X_2\}$, $\{X_3, X_4, X_5, X_6\}$. However, the above method incorrectly partitions the variables into $\{X_1\}$, $\{X_2\}$, $\{X_3,X_5\}$, $\{X_4\}$ for (a), and $\{X_1\}$, $\{X_2\}$, $\{X_3\}$, $\{X_4\}$, $\{X_5\}, \{X_6\}$ for (b), respectively. 
As in Figure~\ref{fig: impure children}~(b), when three or more variables in the same cluster form a complete graph, no pair of these observed variables satisfies the Triad constraint. 

However, even for models in which there exist causal edges among observed variables within the same cluster, it can be shown that a pair of variables satisfying the Triad constraint is a sufficient condition for them to belong to the same cluster. 
\begin{theorem}
\label{thm: clusters in proposed method}
    Assume the model satisfies Assumptions A1-A4. 
    If two dependent observed variables $X_i$ and $X_j$ satisfy the Triad constraint for all $X_k \in \bm{X} \setminus \{X_i, X_j\}$, they belong to the same cluster. 
\end{theorem}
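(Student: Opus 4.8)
The plan is to prove the contrapositive: suppose $X_i$ and $X_j$ are dependent but lie in different clusters, and produce an $X_k$ for which the Triad constraint fails. Write $L_a$ and $L_b$ for the latent parents of $X_i$ and $X_j$ (unique by Assumption~A1); lying in different clusters means $L_a\ne L_b$. Work with the mixing representation $\bm{X}=\bm{M}\bm{u}$, where $\bm{u}=(\bm{\epsilon}^{\top},\bm{e}^{\top})^{\top}$ collects the mutually independent disturbances, which are non-Gaussian because Assumption~A5 forces a nonzero higher-order cumulant. For an observed variable $Y$ let $\mathrm{src}(Y)$ be the set of coordinates of $\bm{u}$ entering $Y$ with nonzero coefficient. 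The first step is a structural description of $\mathrm{src}(Y)$: using Assumptions~A1 and~A3 and the fact that observed variables are never ancestors of latent ones, every observed ancestor of $Y$ lies in the same cluster as $Y$, so $\mathrm{src}(Y)$ consists of the $\epsilon_d$ for $L_d$ equal to or an ancestor of $Y$'s latent parent, together with the $e_n$ for the in-cluster observed ancestors $X_n$ of $Y$; faithfulness (Assumption~A4) guarantees that all these coefficients are genuinely nonzero. In particular $\epsilon_a\in\mathrm{src}(X_i)$, and since Assumption~A1 makes the clusters partition $\bm{X}$, the $e$-type coordinates appearing in $X_i$ and in $X_j$ are disjoint.

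The second step turns the Triad constraint into a condition on sources. For $X_k$ with $\mathrm{Cov}(X_j,X_k)\ne0$, both $e_{(X_i,X_j\mid X_k)}=X_i-rX_j$ (with $r=\mathrm{Cov}(X_i,X_k)/\mathrm{Cov}(X_j,X_k)$) and $X_k$ are linear forms in the independent non-Gaussian $u_s$; by the Darmois--Skitovitch theorem, their independence forces the coefficient of $u_s$ in $X_i-rX_j$ to vanish for every $s\in\mathrm{src}(X_k)$. Consequently, if $\{X_i,X_j\}$ and $X_k$ satisfy the Triad constraint, then every source of $X_k$ that enters $X_i$ also enters $X_j$; that is, $\mathrm{src}(X_k)\cap\mathrm{src}(X_i)\subseteq S$, where $S:=\mathrm{src}(X_i)\cap\mathrm{src}(X_j)$.

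Now I produce the witness. From the structural description, $\epsilon_a\in S$ iff $L_a$ is an ancestor of $L_b$, and $\epsilon_b\in S$ iff $L_b$ is an ancestor of $L_a$; acyclicity of the latent DAG rules out both, so after interchanging the names of $X_i$ and $X_j$ if necessary I may assume $L_a$ is not an ancestor of $L_b$, hence $\epsilon_a\notin S$. Since $X_i\notindep X_j$ they share a source, and by the disjointness of their $e$-parts this source is some $\epsilon_d$ with $L_d$ equal to or an ancestor of both $L_a$ and $L_b$; as $L_a$ is not an ancestor of $L_b$, necessarily $L_d\ne L_a$, so $L_d$ is a proper ancestor of $L_a$. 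Let $W$ be the set of observed variables whose latent parent is $L_a$ or a descendant of $L_a$. Using Assumption~A2 I would show $W\not\subseteq\{X_i,X_j\}$: if $L_a$ has two observed children, the one different from $X_i$ already lies in $W\setminus\{X_i,X_j\}$; otherwise $L_a$ has a latent child $L'$, whose cluster is a nonempty subset of $W$ not containing $X_i$, and if that cluster were contained in $\{X_j\}$ we would get $L'=L_b$, making $L_a$ an ancestor of $L_b$, a contradiction. Picking any $X_k\in W\setminus\{X_i,X_j\}$, both $\epsilon_a$ and $\epsilon_d$ lie in $\mathrm{src}(X_k)$ because $L_a$ and $L_d$ are ancestors of (or equal to) $X_k$'s latent parent; and $\epsilon_d\in\mathrm{src}(X_j)\cap\mathrm{src}(X_k)$, so $\mathrm{Cov}(X_j,X_k)\ne0$ by faithfulness and the Triad constraint is actually imposed on $X_k$. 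The second step then gives $\epsilon_a\in\mathrm{src}(X_k)\cap\mathrm{src}(X_i)\subseteq S$, contradicting $\epsilon_a\notin S$.

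I expect the crux to be establishing $W\not\subseteq\{X_i,X_j\}$, i.e.\ that a witness always exists: this is precisely where Assumption~A2 (together with acyclicity) is indispensable, and the singleton-cluster case needs the descent through a latent child plus the argument that this descent cannot land exactly on $X_j$'s cluster. A secondary point requiring care is the structural description of $\mathrm{src}(\cdot)$, which relies on Assumptions~A1 and~A3 to confine the observed part of each variable to its own cluster and on faithfulness A4 to keep all the relevant total effects nonzero.
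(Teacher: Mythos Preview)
Your proof is correct and takes a genuinely different route from the paper's. Both argue by contrapositive and invoke Darmois--Skitovitch, but the paper proceeds by explicit case analysis: it distinguishes whether $X_i,X_j$ are pure or impure, and in each case exhibits a concrete witness $X_k$ (a second observed child of $L_j$, or an observed child of a latent child of $L_j$, or an observed neighbor of $X_i$ within its cluster) and computes the Triad statistic directly to locate a shared non-Gaussian component ($\epsilon_j$, $e_i$, or $e_k$, depending on the case). You instead abstract to the source description $\mathrm{src}(\cdot)$, derive once and for all the clean constraint $\mathrm{src}(X_k)\cap\mathrm{src}(X_i)\subseteq S$ from Darmois--Skitovitch, and then produce a single witness drawn from $W$ whose obstruction is always the latent disturbance $\epsilon_a$; the purity of $X_i,X_j$ never enters. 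What your approach buys is a unified argument that makes transparent exactly where Assumption~A2 is used (to guarantee $W\not\subseteq\{X_i,X_j\}$) and that avoids the four-way case split; what the paper's approach buys is concreteness---no auxiliary abstraction is needed, and the computations also display the Triad statistic explicitly. Notably, in the impure case your witness and the paper's witness are different objects (you use a variable in $W$ and the source $\epsilon_a$; the paper uses the in-cluster neighbor and an $e$-type source), so the two proofs are not merely reorganizations of one another.
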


Under Assumption A3, the presence of ancestral relationships between two observed variables implies that they belong to the same cluster. Proposition~\ref{thm: latent number} allows us to determine ancestral relationships between two observed variables. 
Using Proposition~\ref{thm: latent number}, it is possible to identify $X_3 \in \mathrm{Anc}(X_5)$ in the model of Figure~\ref{fig: impure children}(a) and $X_4 \in \mathrm{Anc}(X_5)$, $X_4 \in \mathrm{Anc}(X_6)$, and $X_5 \in \mathrm{Anc}(X_6)$ in the model of Figure~\ref{fig: impure children}(b). 

Moreover, it follows that Proposition~\ref{thm: merging cluster} also holds for the models considered in this paper. By applying it, the model in Figure~\ref{fig: impure children}(a) is clustered into $\{X_1\}$, $\{X_2\}$, $\{X_3,X_5\}$, $\{X_4\}$, while the model in Figure~\ref{fig: impure children}(b) is clustered into $\{X_1\}$, $\{X_2\}$, $\{X_3\}$, and $\{X_4, X_5, X_6\}$. 

Even when Theorem \ref{thm: clusters in proposed method} and Proposition \ref{thm: minimal cluster} are applied, the resulting clusters are generally over-segmented. To obtain the correct clusters, it is necessary to merge some of them. The correct clustering is obtained in the subsequent stage.

The algorithm for Stage I is presented in Algorithm \ref{alg: proposed cluster}. 

\begin{algorithm}[t] 
    \caption{ Estimating over-segmented clusters} 
	\label{alg: proposed cluster} 
    \footnotesize
	\begin{algorithmic}[1]
		\Require $\bm{X}=(X_1,\ldots,X_p)^\top$
		\Ensure Estimated clusters $\hat{\mathcal{C}}$ and $\mathcal{A}_{O} = \{\mathrm{Anc}(X_{i}) \mid X_{i} \in \bm{X}\}$
        \State{Initialize $\hat{\mathcal{C}} \gets \{\{X_{1}\}, \dots, \{X_{p}\}\}$, $\mathrm{Anc}(X_i) \gets \emptyset$ for $i=1,\ldots,p$}
        \ForAll{pairs $(X_{i}, X_{j})$}
            \If{$X_{i}, X_{j}$ satisfy Theorem~\ref{thm: minimal cluster} \textbf{or} have an ancestral relationship by Proposition~\ref{thm: latent number}}
                \State Merge $\{X_{i}\}$ and $\{X_{j}\}$
                \State{Update $\hat{\mathcal{C}}$ and $\mathcal{A}_{O}$}
            \EndIf
        \EndFor
        \State{Merge clusters in $\hat{\mathcal{C}}$ and update $\hat{\mathcal{C}}$ by applying Proposition~\ref{thm: merging cluster}}
        \State \Return $\hat{\mathcal{C}}$, $\mathcal{A}_{O}$
	\end{algorithmic} 
\end{algorithm}
\subsection{Stage II: Identifying the Causal Order among Latent Variables}
\label{sec: causal order}
In this section, we provide an algorithm for estimating the correct clusters and the causal order among latent variables. 
Suppose that, as a result of applying Algorithm \ref{alg: proposed cluster}, $K$ clusters $\hat{\mathcal{C}} = \{\hat{C}_1, \dots, \hat{C}_{K}\}$ are estimated. Associate a latent variable $L_i$ with each cluster $\hat{C}_i$ for $i=1,\ldots,K$, and define $\hat{\bm{L}} = \{L_1, \ldots, L_K\}$. As stated in the previous section, $K \ge q$. When $K > q$, some clusters must be merged to recover the true clustering.

$\bm{X}$ can be partitioned into maximal subsets of mutually dependent variables. Each observed variable in these subsets has a corresponding latent parent. If the causal order of the latent parents within each subset is determined, then the causal order of the entire latent variable set $\hat{\bm{L}}$ is uniquely determined. Henceforth, we assume, without loss of generality, that $\bm{X}$ itself forms one such maximal subset.
\subsubsection{Determining the Source Latent Variable}
\label{subsec: latent source 1}
Since we assume that $\bm{X}$ consists of mutually dependent variables, $\mathcal{G}$ contains only one source node among the latent variables. Theorem \ref{thm: proposed cluster root} provides the necessary and sufficient condition for a latent variable to be a source node. 
\begin{theorem}
\label{thm: proposed cluster root}
Let $X_i$ denote the observed variable with the highest causal order among $\hat{C}_i$. Then, $L_{i}$ is generically a latent source in $\mathcal{G}$ if and only if $\mathrm{Conf}(X_{i}, X_j)$ are identical across all $X_j \in \bm{X} \setminus \{X_{i}\}$ such that $X_{i} \notindep X_{j}$ in the canonical model over $X_{i}$ and $X_j$, with their common value being $\{L_i\}$. 
\end{theorem}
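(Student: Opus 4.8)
The plan is to translate the condition into a statement about ancestral relations in $\mathcal{G}$. Write $L_i$ for the latent parent shared by the members of $\hat{C}_i$ (well defined because, by Theorem~\ref{thm: clusters in proposed method}, Proposition~\ref{thm: minimal cluster}, Proposition~\ref{thm: merging cluster} and Proposition~\ref{thm: latent number}, every estimated cluster is contained in a true cluster). Two preliminary observations, both valid under A4 and for generic coefficients, will do most of the work. First, a total effect is nonzero exactly when the corresponding directed path exists, and since every observed variable has a latent parent while all latents precede the observed variables, $X_i\notindep X_j$ (equivalently, in the canonical model over $X_i$ and $X_j$) if and only if $X_i$ and $X_j$ share a common latent ancestor in $\mathcal{G}$. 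Second --- and this is where ``$X_i=$ the highest causal order in $\hat{C}_i$'' is essential --- the ancestral-relationship merging in Algorithm~\ref{alg: proposed cluster} (via Proposition~\ref{thm: latent number}) forces every observed ancestor of $X_i$ into $\hat{C}_i$, so the highest-order $X_i$ has no observed ancestor at all; hence its ancestors are exactly $\{L_i\}\cup\mathrm{Anc}(L_i)$, all latent (and in particular $X_j\notin\mathrm{Anc}(X_i)$ for every $X_j$, so the convention built into the definition of $\mathrm{Conf}$ is automatically met). Finally, from the reduced form the confounders of the canonical model over $\{X_i,X_j\}$ are obtained, generically, from the independent sources common to $X_i$ and $X_j$ by absorbing the one carried into the private disturbance $v_i$ when $X_i\in\mathrm{Anc}(X_j)$ (and symmetrically for $v_j$) and collapsing parallel columns; in particular a confounder equals (a rescaling of) $\epsilon_{L_i}$, the disturbance attached to $L_i$, only if $\epsilon_{L_i}$ is itself a common source of $X_i$ and $X_j$, and Propositions~\ref{thm: latent number}, \ref{thm: estimate b} and~\ref{lem: estimate e cumulant} (here A5 and non-Gaussianity enter) make this identifiable from data.

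($\Rightarrow$) Suppose $L_i$ is a source of $\mathcal{G}$, so $L_i=\epsilon_{L_i}$ and $\mathrm{Anc}(L_i)=\emptyset$. By the second observation the only sources appearing in $X_i$ are $\epsilon_{L_i}$ and its private disturbance $e_{X_i}$. For any admissible $X_j$ (that is, $X_i\notindep X_j$), $e_{X_i}$ reaches $X_j$ only if $X_i\in\mathrm{Anc}(X_j)$, in which case it is absorbed into $v_i$; hence the only possible confounder is $\epsilon_{L_i}=L_i$, and it does occur, since by the first observation $X_i$ and $X_j$ share a latent ancestor, which can only be $L_i$. Thus $\mathrm{Conf}(X_i,X_j)=\{L_i\}$ for every admissible $X_j$; the sets are therefore identical, with common value $\{L_i\}$. (For the algorithmic use one also checks that the recovered values $c^{(k)}_{(X_i\to X_j)}(L'_1)$ agree across admissible $X_j$, which is immediate since each equals $\mathrm{cum}^{(k)}$ of $\epsilon_{L_i}$.)

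($\Leftarrow$) Argue by contraposition: suppose $L_i$ is not a source, so it has a latent parent $L_m\ne L_i$. By A2, $L_m$ has an observed child $X_j$; its latent parent being $L_m\ne L_i$, $X_j$ lies in a different true cluster from $X_i$, whence $X_j\notin\mathrm{Anc}(X_i)$ and $X_j\ne X_i$ by A3. Since $L_m\in\mathrm{Anc}(L_i)\subseteq\mathrm{Anc}(X_i)$ and $L_m$ is the parent of $X_j$, the two variables share the latent ancestor $L_m$, so $X_i\notindep X_j$ by the first observation and $X_j$ is admissible. Every latent ancestor common to $X_i$ and $X_j$ lies in $\{L_m\}\cup\mathrm{Anc}(L_m)$, which does not contain $L_i$; since $X_i$ has no observed ancestor, nothing is absorbed into $v_i$; hence, by the description of the confounders above, $\mathrm{Conf}(X_i,X_j)$ is non-empty but contains no copy of $\epsilon_{L_i}$, so $\mathrm{Conf}(X_i,X_j)\ne\{L_i\}$, contradicting the hypothesis. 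Therefore $L_i$ is a source.

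The step I expect to be the main obstacle is making the ``description of the confounders'' rigorous: one must verify that, for generic coefficients, passing from the reduced form of $(X_i,X_j)$ to a minimal canonical model over $\{X_i,X_j\}$ collapses exactly the structurally-forced parallel columns and no coincidental ones, absorbs exactly the structurally-forced disturbances into $v_i$ and $v_j$, and that Propositions~\ref{thm: latent number}, \ref{thm: estimate b} and~\ref{lem: estimate e cumulant} then recover precisely these confounders (so that the condition ``$\mathrm{Conf}(X_i,X_j)=\{L_i\}$'' is a genuine, checkable property). Once that is in place, the DAG bookkeeping in the two directions --- which uses only the two preliminary observations, Assumptions A2 and A3, and the fact that $X_i$ has no observed ancestor --- is routine.
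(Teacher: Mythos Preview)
Your argument is correct and follows the same overall architecture as the paper's proof: sufficiency by noting that a source $L_i$ leaves $X_i=\epsilon_{L_i}+e_{X_i}$ so the only possible confounder with any dependent $X_j$ is $L_i$, and necessity by contraposition, exhibiting a witness $X_j$ with $\mathrm{Conf}(X_i,X_j)\ne\{L_i\}$.

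The one noteworthy difference is the choice of witness in the necessity direction. The paper takes $X_j$ to be the highest-causal-order child of a latent \emph{source} $L_s$; since $L_s$ has no ancestors, $\mathrm{Conf}(L_s,L_i)=\emptyset$ automatically, and Lemma~\ref{lem: X has multiple parent} then yields $\mathrm{Conf}(X_i,X_s)=\{L_s\}$ directly --- a single confounder, and manifestly not $L_i$. You instead take $X_j$ to be any observed child of a latent \emph{parent} $L_m$ of $L_i$ and argue that the common sources of $X_i$ and $X_j$ lie in $\{\epsilon_h:L_h\in\{L_m\}\cup\mathrm{Anc}(L_m)\}$, a set not containing $\epsilon_{L_i}$, so no confounder can equal $L_i$. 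Your route is more local (it needs only that $L_i$ has some parent, not the global mutually-dependent hypothesis ensuring a unique latent source is ancestral to $L_i$), but it does not control $|\mathrm{Conf}(X_i,X_j)|$ and hence needs the extra step of arguing that no linear combination of those $\epsilon_h$'s equals $L_i$; the paper's choice sidesteps this by forcing a single, identifiable confounder. The ``obstacle'' you flag --- making the passage to the canonical model over $\{X_i,X_j\}$ rigorous --- is exactly what the paper packages into Lemmas~\ref{lem: noBCA} and~\ref{lem: X has multiple parent}, and your direct disturbance-tracking argument can be completed along the same lines.
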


Note that in Stage I, the ancestral relationships among the observed variables are determined. Hence, the causal order within each cluster can also be determined. Let $X_j$ be the observed variable with the highest causal order among $\hat{C}_j$ for $j=1,\ldots, K$ and define $\bm{X}_{\mathrm{oc}}=\{X_1,\ldots,X_K\}$. When $\vert \hat{C}_i \vert \ge 2$, let $X_{i'}$ be any element in $\hat{C}_i \setminus \{X_i\}$. Define $\mathcal{X}_i$ by 
\begin{equation}
    \label{eq: calX}
\mathcal{X}_i = \left\{
\begin{array}{ll}
    \{X_{i'}\}, & \text{if } \vert\hat{C}_i \vert \ge 2\\
    \emptyset, & \text{if } \vert\hat{C}_i \vert = 1. 
\end{array}
\right.
\end{equation}
Let $L^{(i, j)}$ denote a latent confounder of $X_i$ and $X_j$ in the canonical model over them. 

In the implementation, we verify whether the conditions of Theorem \ref{thm: proposed cluster root} are satisfied by using Corollary \ref{col: proposed cluster root}.
\begin{corollary}
\label{col: proposed cluster root}
Assume $k \ge 3$.
$L_{i}$ is generically a latent source in $\mathcal{G}$ if and only if 
one of the following two cases holds: 
\begin{enumerate}
    \setlength{\itemsep}{0pt}
    \setlength{\parsep}{0pt}
    \setlength{\parskip}{0pt}
    \item $\mathcal{X}_i=\emptyset$ and $\vert \bm{X}_{\mathrm{oc}} \setminus \{X_{i}\} \vert = 1$
    \item $\vert (\bm{X}_{\mathrm{oc}} \cup \mathcal{X}_i) \setminus \{X_{i}\} \vert \ge 2$ and the following all hold:
    \begin{enumerate}
        \setlength{\itemsep}{0pt}
        \setlength{\parsep}{0pt}
        \setlength{\parskip}{0pt}
        \item In the canonical model over $X_{i}$ and $X_{j}$, $\vert \mathrm{Conf}(X_{i}, X_{j}) \vert = 1$ for $X_j \in (\bm{X}_{\mathrm{oc}} \cup \mathcal{X}_i) \setminus \{X_i\}$ such that $X_i \notindep X_j$. 
        \item $c^{(k)}_{(X_{i} \to X_{j})}(L^{(i, j)})$ are identical for $X_j \in (\bm{X}_{\mathrm{oc}} \cup \mathcal{X}_i) \setminus \{X_i\}$. 
    \end{enumerate}
\end{enumerate}
\end{corollary}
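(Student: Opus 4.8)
The plan is to derive the corollary as a computational restatement of Theorem \ref{thm: proposed cluster root}, replacing the quantifier "$\mathrm{Conf}(X_i,X_j)$ are identical across all dependent $X_j \in \bm{X}\setminus\{X_i\}$" by a finite, checkable certificate. The first observation is that under Assumption A3 there are no direct causal paths between clusters, so for any $X_j$ lying in a cluster other than $\hat C_i$, the latent confounders of $X_i$ and $X_j$ in the canonical model over them are precisely the latent ancestors of $L_i$ together with $L_i$ itself (when $L_i$ is not a source) or just $L_i$ (when it is). Consequently $\mathrm{Conf}(X_i,X_j)=\{L_i\}$ for \emph{some} such $X_j$ forces $L_i$ to have no latent parent; and since $\bm X$ is assumed to be a single maximal dependent set, a latent variable with no latent parent is the unique source. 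So the qualitative part of Theorem \ref{thm: proposed cluster root} reduces to: $L_i$ is a source iff $\vert\mathrm{Conf}(X_i,X_j)\vert=1$ for every dependent $X_j$ outside $\hat C_i$, and the single confounder is the same latent variable across all of them.

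Next I would argue that it suffices to test this condition on the finite set $(\bm X_{\mathrm{oc}}\cup\mathcal X_i)\setminus\{X_i\}$ rather than on all of $\bm X\setminus\{X_i\}$. For the "outside" variables: if $X_j,X_{j'}$ lie in the same cluster $\hat C_j\ne\hat C_i$, then $\mathrm{Conf}(X_i,X_j)$ and $\mathrm{Conf}(X_i,X_{j'})$ coincide (they are both determined by the latent ancestry shared by $L_i$ and $L_j=L_{j'}$), so picking the highest-causal-order representative $X_j\in\bm X_{\mathrm{oc}}$ loses nothing. The role of $\mathcal X_i$ is to cover the case $\vert\hat C_i\vert\ge 2$: here one must additionally certify that $L_i$ has no latent parent \emph{through} its own cluster, and comparing $X_i$ with a second cluster-mate $X_{i'}$ does exactly that — in the canonical model over $X_i$ and $X_{i'}$, the confounder set is $\{L_i\}$ precisely when $L_i$ is a source. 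Case 1 ($\mathcal X_i=\emptyset$, i.e. $\vert\hat C_i\vert=1$, and only one other cluster) is the degenerate situation where the rank/root machinery is unnecessary: with a singleton cluster and a single other cluster, $L_i$ is automatically the source of the two-cluster structure once dependence holds.

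For the quantitative conditions 2(a) and 2(b): condition 2(a), $\vert\mathrm{Conf}(X_i,X_j)\vert=1$, is the number $\ell=0$... wait, $\ell=1$ in the notation of \eqref{eq: two variables} — this is detected by the rank test of Proposition \ref{thm: latent number}, which is exactly the algorithmic content being invoked (with $k\ge 3$ ensuring enough cumulant orders are available to realize the generic rank). Given 2(a), each pair $(X_i,X_j)$ has a unique confounder $L^{(i,j)}$, and Proposition \ref{lem: estimate e cumulant} lets us solve system \eqref{eq: system} for $\mathrm{cum}^{(k)}(L^{(i,j)})$, yielding the scalar $c^{(k)}_{(X_i\to X_j)}(L^{(i,j)})$. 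The key step is then: two latent confounders $L^{(i,j)}$ and $L^{(i,j')}$ are the same latent variable (namely $L_i$) if and only if their $k$-th cumulants agree — the forward direction is immediate, and the converse uses Assumption A5 (nonzero higher-order cumulants) together with faithfulness A4 to rule out accidental cumulant coincidences among genuinely distinct latent sources, which is where the word "generically" does its work. Matching cumulants across all $X_j\in(\bm X_{\mathrm{oc}}\cup\mathcal X_i)\setminus\{X_i\}$ then certifies the common value in Theorem \ref{thm: proposed cluster root} is a single latent variable, and since it confounds $X_i$ with variables in other clusters while $X_i$'s cluster has $L_i$ as its unique latent parent, that common value must be $\{L_i\}$.

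The main obstacle I anticipate is the converse of the cumulant-matching step — showing that equal $k$-th cumulants of the identified confounders genuinely force the confounders to be the \emph{same} latent node rather than two distinct source-like latents that happen to confound $X_i$. This needs a careful argument that, generically in the coefficient parameters and given faithfulness, the confounder of $X_i$ with any outside variable is forced to lie on every directed path from the true latent source to $L_i$, so that a single consistent cumulant value pins down a single node; handling the $\vert\hat C_i\vert\ge2$ sub-case (where the within-cluster comparison via $\mathcal X_i$ must be reconciled with the cross-cluster comparisons) is the fiddliest part. The rest is bookkeeping: translating "identical $\mathrm{Conf}$" into "rank condition plus scalar equality" via Propositions \ref{thm: latent number} and \ref{lem: estimate e cumulant}, and checking the small-case boundary in Case 1.
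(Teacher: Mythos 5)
Your high-level architecture (one direction immediate from Theorem \ref{thm: proposed cluster root}; the other by translating confounder identity into cumulant equalities and invoking genericity) matches the paper's, but the proposal contains concretely false intermediate claims that break the argument, and the step you yourself flag as "the main obstacle" is exactly where the paper's proof lives, so it cannot be deferred. First, your characterization of $\mathrm{Conf}(X_i,X_j)$ for cross-cluster pairs as "the latent ancestors of $L_i$ together with $L_i$ itself (when $L_i$ is not a source)" is wrong: by Lemma \ref{lem: X has multiple parent}, if $L_j$ is a latent descendant of $L_i$ with $\mathrm{Conf}(L_i,L_j)=\emptyset$, then $\mathrm{Conf}(X_i,X_j)=\{L_i\}$ \emph{regardless} of whether $L_i$ is a source (proportional ancestral contributions merge in the canonical model). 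Hence your inference "$\mathrm{Conf}(X_i,X_j)=\{L_i\}$ for \emph{some} such $X_j$ forces $L_i$ to have no latent parent" is false — a chain $L_1\to L_2\to L_3$ is a counterexample with $i=2$, $j=3$ — and likewise your claim that for a cluster-mate $X_{i'}$ "the confounder set is $\{L_i\}$ precisely when $L_i$ is a source" is false: it is essentially $\{L_i\}$ always.

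Because of this, the proposal never identifies the witness pair that actually detects a non-source, which is the substance of the paper's proof. When $L_i$ is not a source, Lemma \ref{lem: X has one parent} gives $c^{(k)}_{(X_i\to X_s)}(L^{(i,s)})=\mathrm{cum}^{(k)}(\alpha^{ll}_{is}\epsilon_s)$ for the observed child $X_s$ of the true source, whereas $c^{(k)}_{(X_i\to X_j)}(L^{(i,j)})=\mathrm{cum}^{(k)}(L_i)$ when $L_j$ is a latent child of $L_i$, and $c^{(k)}_{(X_i\to X_{i'})}(L^{(i,i')})=\mathrm{cum}^{(k)}(L_i)$ for a cluster-mate $X_{i'}$; these values generically differ, so condition 2(b) fails. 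The required case split — either $L_i$ has a latent child (supplying the reference value $\mathrm{cum}^{(k)}(L_i)$ from another cluster) or it has none, in which case Assumption A2 forces $\lvert\hat C_i\rvert\ge 2$ and hence $\mathcal{X}_i\ne\emptyset$ — is what makes the finite test on $(\bm{X}_{\mathrm{oc}}\cup\mathcal{X}_i)\setminus\{X_i\}$ sufficient, and it is absent from your plan. Your treatment of Case 1 via A2 and the reduction from $\bm{X}$ to cluster representatives are fine, but without the witness construction and the explicit cumulant computations of Lemma \ref{lem: X has one parent}, the "conditions $\Rightarrow$ source" direction is not established.
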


When $\mathcal{X}_i = \emptyset$ and $\lvert \bm{X}_{\mathrm{oc}} \setminus \{X_{i}\} \rvert = 1$, it is trivial by Assumption A2 that $L_i$ is a latent source. Otherwise,  for $L_i$ to be a latent source, it is necessary that $|\mathrm{Conf}(X_i, X_j)| = 1$ for all $X_j \in (\bm{X}_{\mathrm{oc}} \cup \mathcal{X}_i) \setminus \{X_i\}$. This can be verified by using Condition 1 of Proposition \ref{thm: latent number}. In addition, if $c^{(k)}_{(X_i \to X_j)}(L^{(i, j)})$ for $X_j \in (\bm{X}_{\mathrm{oc}} \cup \mathcal{X}_i) \setminus \{X_i\}$ are identical, $L_i$ can be regarded as a latent source. 

When $X_i \in \mathrm{Anc}(X_{i'})$, the equation (\ref{eq: system}) yields two distinct solutions, 
$c^{(k)}_{(X_i \to X_{i'})}(L^{(i,i')})=c^{(k)}_{(X_i \to X_{i'})}(L_i)$ and $c^{(k)}_{(X_i \to X_{i'})}(e_i)$, that are identifiable only up to a permutation of the two. 
If either of these two solutions equals $c^{(k)}_{(X_i \to X_j)}(L^{(i,j)})$ for all $X_j \in \bm{X}_{\mathrm{oc}} \setminus \{X_i\}$, 
then $L_i$ can be identified as the latent source. 

\begin{figure}[t]
    \centering
    \begin{subfigure}[t]{0.47\textwidth}
        \centering
        \begin{tikzpicture}[scale=0.7]
            \node[draw, rectangle] (L_1) at (0, 2) {$L_{1}$};
            \node[draw, circle] (X_1) at (-2, 0) {$X_{1}$};
            \node[draw, circle] (X_2) at (0, 0) {$X_{2}$};
            \node[draw, circle] (X_3) at (2, 0) {$X_{3}$};
            \draw[thick, ->] (L_1) -- (X_1);
            \draw[thick, ->] (L_1) -- (X_2);
            \draw[thick, ->] (L_1) -- (X_3);
            \draw[thick, ->] (X_2) -- (X_3);
        \end{tikzpicture}
        \caption{}
    \end{subfigure}
    \hfill
    \begin{subfigure}[t]{0.47\textwidth}
        \centering
        \begin{tikzpicture}[scale=0.7]
            \node[draw, rectangle] (L_1) at (-1, 2) {$L_{1}$};
            \node[draw, rectangle] (L_2) at (2, 2) {$L_{2}$};
            \node[draw, circle] (X_1) at (-1, 0) {$X_{1}$};
            \node[draw, circle] (X_2) at (1, 0) {$X_{2}$};
            \node[draw, circle] (X_3) at (3, 0) {$X_{3}$};
            \draw[thick, ->] (L_1) -- (X_1);
            \draw[thick, ->] (L_2) -- (X_2);
            \draw[thick, ->] (L_3) -- (X_3);
            \draw[thick, ->] (X_2) -- (X_3);
            \draw[thick, ->] (L_1) -- (L_2);
        \end{tikzpicture}
        \caption{}
    \end{subfigure}
    \caption{An example of merging clusters in Stage II}
    \label{fig: 1L4O}
\end{figure}
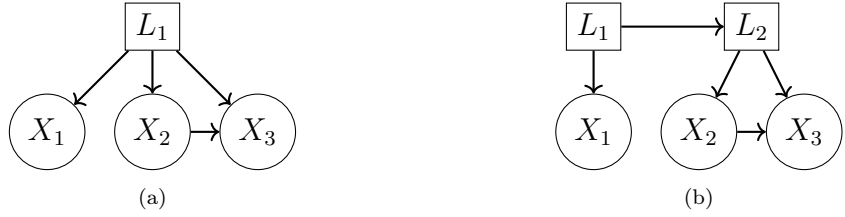

\begin{example}
    Consider the models in Figure \ref{fig: 1L4O}.
    For both models (a) and (b), the clusters estimated in Stage I are $\hat{C}_{1} = \{X_{1}\}$ and $\hat{C}_{2} = \{X_{2}, X_{3}\}$, and let $L_1$ and $L_2$ be the latent parents assigned to $\hat{C}_1$ and $\hat{C}_2$, respectively. Then, $\bm{X}_{\mathrm{oc}} = \{X_{1}, X_{2}\}$. In the model (a), we can assume $\lambda_{11}=1$ without loss of generality. Then, the model (a) is expressed as
    \begin{align*}
        X_{1} &= \epsilon_{1} + e_1, \quad X_{2} = \lambda_{21}\epsilon_{1} + e_2, \quad X_{3} = (\lambda_{21}b_{32} + \lambda_{31})\epsilon_{1} + b_{32}e_{2} + e_3.
    \end{align*}
    By Proposition \ref{lem: estimate e cumulant} and assuming $k \ge 2$, we can obtain
    \begin{align*}
        c^{(k)}_{(X_{1} \to X_{2})}(L^{(1,2)}) &= \mathrm{cum}^{(k)}(\epsilon_{1}), \\
        c^{(k)}_{(X_{2} \to X_{1})}(L^{(2,1)}) &= c^{(k)}_{(X_{2} \to X_{3})}(L^{(2,3)}) = \mathrm{cum}^{(k)}(\lambda_{21}\epsilon_{1}). 
    \end{align*}
    Since $\vert \bm{X}_{\mathrm{oc}}  \setminus \{X_{1}\} \vert = \vert \{X_{2}\} \vert = 1$ and $ c^{(k)}_{(X_{2} \to X_{1})}(L^{(2,1)}) = c^{(k)}_{(X_{2} \to X_{3})}(L^{(2,3)})$, both $L_{1}$ and $L_{2}$ are determined as latent sources. The dependence between $X_{1}$ and $X_{2}$ leads to $L_{1}$ and $L_{2}$ being regarded as a single latent source, resulting in the merging of $\hat{C}_{1}$ and $\hat{C}_{2}$.

    In the model (b), we can assume $\lambda_{11}=\lambda_{22}=1$ without loss of generality. Then, the model (b) is described as 
    \begin{align*}
        X_{1} &= \epsilon_{1} + e_1, \quad X_{2} = (a_{21}\epsilon_{1}+\epsilon_{2})+ e_2, \\ 
        X_{3} &= (b_{32} + \lambda_{31})(a_{21}\epsilon_{1}+\epsilon_{2}) + b_{32}e_{2} + e_3. 
    \end{align*}
    Then, 
    \begin{align*}
        c^{(k)}_{(X_{1} \to X_{2})}(L^{(1,2)}) &= \mathrm{cum}^{(k)}(\epsilon_{1}), \\
        c^{(k)}_{(X_{2} \to X_{1})}(L^{(2,1)}) &= \mathrm{cum}^{(k)}(a_{21}\epsilon_{1}) \ne c^{(k)}_{(X_{2} \to X_{3})}(L^{(2,3)}) = \mathrm{cum}^{(k)}(a_{21}\epsilon_{1}+\epsilon_{2}). 
    \end{align*}
    Therefore, $L_{1}$ is a latent source, while $L_{2}$ is not. 
\end{example}

As in model (a), multiple latent variables may also be identified as latent sources. In such cases, their observed children are merged into a single cluster. Once $L_i$ is established as a latent source, it implies that $L_i$ is an ancestor of the other elements in $\hat{\bm{L}}$. The procedure of Section \ref{subsec: latent source 1} is summarized in Algorithm \ref{alg: proposed causal order first}. 

\begin{algorithm}[t]
	\caption{Finding latent sources} 
	\label{alg: proposed causal order first}
    \footnotesize
	\begin{algorithmic}[1]
        \Require  Mutually dependent $\bm{X}_{\mathrm{oc}}$, $\hat{\mathcal{C}}$, and $\mathcal{A}_{L}$
        \Ensure $\bm{X}_{\mathrm{oc}}$, $\hat{\mathcal{C}}$, and a set of ancestral relationships between latent variables $\mathcal{A}_{L}$
            \State{Each cluster is assigned one latent parent and let $\hat{\bm{L}}$ be the set of latent parents}
            \State{Apply Corollary \ref{col: proposed cluster root} to find the latent sources $\bm{L}_{s}$}
            \State{Assume $L_s \in {\bm{L}}_s$ and $\hat{C}_s \in \hat{\mathcal{C}}$}
            \If{$\vert \bm{L}_{s} \vert \ge 2$}
                \State{Merge the corresponding clusters into $\hat{C}_{s}$ and update $\hat{\mathcal{C}}$ and $\bm{X}_{\mathrm{oc}}$}
                \State{Identify all latent parents in $\bm{L}_s$ with $L_s$}
            \EndIf
            \ForAll{$L_i \in \hat{\bm{L}} \setminus \bm{L}_s$}
                \State{$\mathrm{Anc}(L_i) \gets \{L_s\}$}
            \EndFor
            \State{$\bm{X}_{\mathrm{oc}} \gets \bm{X}_{\mathrm{oc}} \setminus \{X_s\}$}
            \State $\mathcal{A}_L \gets \mathcal{A}_L \cup \{\mathrm{Anc}(L_i) \mid L_i \in \hat{\bm{L}} \setminus \bm{L}_s\} \cup \{\mathrm{Anc}(L_s)=\emptyset\}$
        \\ \Return $\bm{X}_{\mathrm{oc}}$, $\hat{\mathcal{C}}$, and $\mathcal{A}_{L}$
	\end{algorithmic} 
\end{algorithm}
\subsubsection{Determining the Causal Order of Latent Variables}
\label{subsec: latent source 2}
Next, we address the identification of subsequent latent sources after finding $L_1$ in the preceding procedure. If the influence of the latent source can be removed from the observed descendant, the subsequent latent source may be identified through a procedure analogous to the one previously applied. The statistic $\tilde{e}_{(X_i, X_h)}$, defined below, serves as a key quantity for removing such influence. 

\begin{definition}
\label{def: new statistic}
    Let $X_{i}$ and $X_{h}$ be two observed variables. 
    Define $\tilde{e}_{(X_i, X_h)}$ as
    \begin{align*}
        \tilde{e}_{\left(X_{i}, X_{h}\right)} = X_{i} - 
        \rho_{\left(X_{i}, X_{h}\right)}
        X_{h}, 
    \end{align*}
    where
    \begin{align*}
        \rho_{\left(X_{i}, X_{h}\right)}=
        \left\{
        \begin{array}{lc}
           \displaystyle{\frac{\mathrm{cum}(X_{i}, X_{i}, X_{h}, X_{h})}{\mathrm{cum}(X_{i}, X_{h}, X_{h}, X_{h})}}  & X_{i} \notindep X_{h}, \\
           \ \\
           0  & X_{i} \indep X_{h}.
        \end{array}
        \right.
    \end{align*}
\end{definition}
Under Assumption 5, when $X_i \notindep X_h$, $\rho_{\left(X_{i}, X_{h}\right)}$ is shown to be generically finite and non-zero. See Lemma~\ref{lem: non-Gaussian no-zero} in the Appendix for details. Let $L_h$ be the latent source, and let $X_h$ be its observed child with the highest causal order. When there is no directed path between $X_i$ and $X_h$, $\tilde{e}_{(X_i, X_h)}$ can be regarded as $X_i$ after removing the influence of $L_h$. 
\begin{example}
\label{ex: indep}
Consider the model in Figure \ref{fig: etilde} (a). We can assume $\lambda_{22}=\lambda_{33}=1$ without loss of generality. Then, $X_1$, $X_2$ and $X_3$ are described as 
\begin{align*}
    X_1 = \epsilon_1+e_1, \quad X_2 = a_{21} \epsilon_1 + \epsilon_2 + e_2,\quad
    X_3 = a_{31}\epsilon_1 + \epsilon_3 + e_3. 
\end{align*}
We can easily show that $\rho_{(X_2,X_1)}=a_{21}$. Hence, we have
\[
\tilde{e}_{(X_2,X_1)} = - a_{21} e_1 + \epsilon_2 + e_2.
\]
It can be seen that $\tilde{e}_{(X_2,X_1)}$ does not depend on $L_1 = \epsilon_1$, and that $\tilde{e}_{(X_2,X_1)}$ and $X_3$ are mutually independent.
\end{example}
\begin{example}
\label{ex: source}
Consider the model in Figure \ref{fig: etilde} (b). We can assume that $\lambda_{11}=\lambda_{22}=\lambda_{33}=1$ without loss of generality. 
Then, the model is described as 
\begin{align*}
    X_1 &= \epsilon_1+e_1, \quad X_2 = a_{21} \epsilon_1 + \epsilon_2 + e_2,\\
    X_3 &= a_{32}a_{21}\epsilon_1 + a_{32} \epsilon_2 + \epsilon_3 + e_3,\quad
    X_4 = \lambda_{42}(a_{21} \epsilon_1 + \epsilon_2) + e_4,\\
    X_5 &= (\lambda_{53}+b_{53})(a_{32}a_{21}\epsilon_1 + a_{32} \epsilon_2 + \epsilon_3) + b_{53}e_{3} + e_5.
\end{align*}
We can easily show that $\rho_{(X_2,X_1)}=a_{21}$ and $\rho_{(X_3,X_1)}=a_{32}a_{21}$. 
Hence, we have 
\begin{align*}
    \tilde{e}_{(X_2,X_1)} &= - a_{21} e_1 + \epsilon_2 + e_2, \quad
    \tilde{e}_{(X_3,X_1)} = - a_{32}a_{21} e_1 + a_{32}\epsilon_2 + \epsilon_3 + e_3.     
\end{align*}
It can be seen that $\tilde{e}_{(X_2,X_1)}$ and $\tilde{e}_{(X_3,X_1)}$ are obtained by replacing $L_1=\epsilon_1$ with $-e_1$. The model for $(\tilde{e}_{(X_2,X_1)},X_3)$ and $(\tilde{e}_{(X_2,X_1)},X_5)$ are described by canonical models with $\mathrm{Conf}(\tilde{e}_{(X_2,X_1)},X_3)=\mathrm{Conf}(\tilde{e}_{(X_2,X_1)},X_5)=\{\epsilon_2\}$, respectively. The model for $(\tilde{e}_{(X_3,X_1)},X_2)$ and $(\tilde{e}_{(X_3,X_1)},X_5)$ are described by canonical models with $\mathrm{Conf}(\tilde{e}_{(X_3,X_1)},X_2)=\{\epsilon_2\}$ and $\mathrm{Conf}(\tilde{e}_{(X_3,X_1)},X_5)=\{a_{32} \epsilon_2 + \epsilon_3, e_{3}\}$, respectively. $X_5$ contains $\{\epsilon_1, \epsilon_2, \epsilon_3, e_3, e_5\}$, and $\tilde{e}_{(X_3,X_1)}$ contains $\{\epsilon_2, \epsilon_3, e_1, e_3\}$. Since these sets are not in an inclusion relationship, it follows from Lemma~5 of Salehkaleybar et al.~\cite{salehkaleybar2020learning} that there is no ancestral relationship between $\tilde{e}_{(X_3, X_1)}$ and $X_5$. 
\end{example}

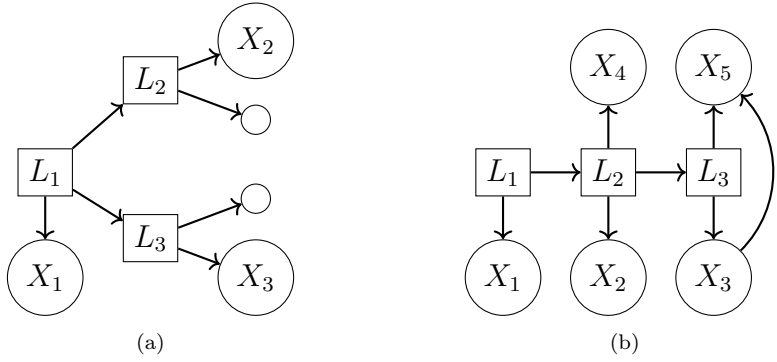
\begin{figure}[t]
    \centering
        \begin{subfigure}[t]{0.45\textwidth}
        \centering
        \begin{tikzpicture}[scale=0.7]
            \node[draw, rectangle] (L_1) at (-2, 2) {$L_{1}$};
            \node[draw, rectangle] (L_2) at (0, 3.75) {$L_{2}$};
            \node[draw, rectangle] (L_3) at (0, 0.75) {$L_{3}$};
            \node[draw, circle] (X_1) at (-2, 0) {$X_{1}$};
            \node[draw, circle] (X_4) at (2, 1.5) {};
            \node[draw, circle] (X_3) at (2, 0) {$X_{3}$};
            \node[draw, circle] (X_5) at (2, 3) {};
            \node[draw, circle] (X_2) at (2, 4.5) {$X_{2}$};
            \draw[thick, ->] (L_1) -- (L_2);
            \draw[thick, ->] (L_1) -- (L_3);
            \draw[thick, ->] (L_1) -- (X_1);
            \draw[thick, ->] (L_2) -- (X_2);
            \draw[thick, ->] (L_2) -- (X_5);
            \draw[thick, ->] (L_3) -- (X_3);
            \draw[thick, ->] (L_3) -- (X_4);
        \end{tikzpicture}
        \caption{}
    \end{subfigure}
    \begin{subfigure}[t]{0.45\textwidth}
        \centering
        \begin{tikzpicture}[scale=0.7]
            \node[draw, rectangle] (L_1) at (-2, 2) {$L_{1}$};
            \node[draw, rectangle] (L_2) at (0, 2) {$L_{2}$};
            \node[draw, rectangle] (L_3) at (2, 2) {$L_{3}$};
            \node[draw, circle] (X_1) at (-2, 0) {$X_{1}$};
            \node[draw, circle] (X_2) at (0, 0) {$X_{2}$};
            \node[draw, circle] (X_4) at (2, 0) {$X_{3}$};
            \node[draw, circle] (X_3) at (0, 4) {$X_{4}$};
            \node[draw, circle] (X_5) at (2, 4) {$X_{5}$};
            \draw[thick, ->] (L_1) -- (L_2);
            \draw[thick, ->] (L_2) -- (L_3);
            \draw[thick, ->] (L_1) -- (X_1);
            \draw[thick, ->] (L_2) -- (X_2);
            \draw[thick, ->] (L_2) -- (X_3);
            \draw[thick, ->] (L_3) -- (X_4);
            \draw[thick, ->] (L_3) -- (X_5);

            \draw[thick, ->] (X_4) to [in=-45, out=45](X_5);
        \end{tikzpicture}
        \caption{}
    \end{subfigure}
        \caption{Examples of LvLiNGAMs}
    \label{fig: etilde}
\end{figure}
It is noteworthy that $\tilde{e}_{(X_3, X_1)}$ and $X_5$ share two latent confounders, and that no ancestral relationship exists between them even though $X_3 \in \mathrm{Anc}(X_5)$ in the original graph. 

Let $L_{1}$ be the current latent source identified by the preceding procedure. Let $\mathcal{G}^{-}(\{L_1\})$ be the subgraph of $\mathcal{G}$ induced by $\bm{V} \setminus (\{L_1\} \cup \hat{C}_1)$.  
By generalizing the discussions in Examples \ref{ex: indep} and \ref{ex: source}, we obtain the following theorems. 
\begin{theorem}
\label{thm: independent e tilde}
For $X_{i}, X_{j} \in \bm{X}_{\mathrm{oc}} \setminus \{X_1\}$ and their respective latent parent $L_{i}$ and $L_{j}$, $L_{i} \indep L_{j} \mid L_1$ if and only if $\tilde{e}_{(X_{i}, X_{1})} \indep X_{j}$. 
\end{theorem}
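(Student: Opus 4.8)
The plan is to pass to the reduced (mixing) form of the model, decompose each observed variable into an $\epsilon_1$-component and an $\epsilon_1$-free remainder, verify that $\tilde{e}_{(X_i,X_1)}$ annihilates the $\epsilon_1$-component of $X_i$ exactly, and then read off both implications from the Darmois--Skitovich theorem.

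First I would set up normal forms. Since $L_1$ is the current latent source, $L_1=\epsilon_1$; since $X_1$ has the highest causal order in $\hat{C}_1$, Assumptions A1 and A3 force $L_1$ to be its only parent, so after scaling $X_1=\epsilon_1+e_1$. For any cluster $\hat{C}_m$ with latent parent $L_m$, A1 and A3 imply that latent influence reaches $\hat{C}_m$ only through $L_m$, hence every observed variable in $\hat{C}_m$ equals (total effect of $L_m$) times $L_m$ plus a linear combination of $\{e_t:X_t\in\hat{C}_m\}$. Writing $L_m=\sum_h\alpha^{ll}_{mh}\epsilon_h$ and separating $h=1$, I get (for $i$, and analogously for $j$)
\[
X_i=a_i\epsilon_1+r_i,\qquad a_i=\gamma_i\alpha^{ll}_{i1},\qquad r_i=\gamma_i\xi_i+\eta_i,
\]
where $\gamma_i\ne0$ is the total effect from $L_i$ to $X_i$ (nonzero by faithfulness, A4), $\xi_i=\sum_{h\ne1}\alpha^{ll}_{ih}\epsilon_h$ is the part of $L_i$ not passing through $\epsilon_1$ (so $L_i=\alpha^{ll}_{i1}\epsilon_1+\xi_i$ and, since $\xi_i,\xi_j$ are independent of $L_1=\epsilon_1$, one has $L_i\indep L_j\mid L_1\iff\xi_i\indep\xi_j$), and $\eta_i$ is a linear combination of $\{e_t:X_t\in\hat{C}_i\}$. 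The family $\{\epsilon_1,e_1,(\epsilon_h)_{h\ne1},(e_t)_t\}$ is mutually independent and non-Gaussian, and because $\hat{C}_1,\hat{C}_i,\hat{C}_j$ are pairwise distinct clusters (if $i=j$ both sides of the claim fail, so assume $i\ne j$), the terms $\eta_i$, $\eta_j$, $e_1$ involve disjoint noise index sets and none of $\xi_i,\xi_j,\eta_i,\eta_j$ involves $\epsilon_1$.

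Next I would pin down $\rho_{(X_i,X_1)}$. By multilinearity of cumulants together with the independence of $\epsilon_1$, $r_i$ and $e_1$, every term in the fourth-order expansions of $\mathrm{cum}(X_i,X_i,X_1,X_1)$ and $\mathrm{cum}(X_i,X_1,X_1,X_1)$ vanishes except the all-$\epsilon_1$ term, so these equal $a_i^2\,\mathrm{cum}^{(4)}(\epsilon_1)$ and $a_i\,\mathrm{cum}^{(4)}(\epsilon_1)$ respectively. By A5 (see Lemma~\ref{lem: non-Gaussian no-zero}) and A4, $\mathrm{cum}^{(4)}(\epsilon_1)\ne0$ and, when $X_i\notindep X_1$, $a_i\ne0$; hence $\rho_{(X_i,X_1)}=a_i$, while $\rho_{(X_i,X_1)}=0=a_i$ when $X_i\indep X_1$. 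In either case
\[
\tilde{e}_{(X_i,X_1)}=X_i-a_iX_1=r_i-a_ie_1=\gamma_i\xi_i+\eta_i-a_ie_1,
\]
which is free of $\epsilon_1$. Now both $\tilde{e}_{(X_i,X_1)}$ and $X_j=a_j\epsilon_1+\gamma_j\xi_j+\eta_j$ are linear forms in the mutually independent non-Gaussian variables above, and the only ones occurring with nonzero coefficient in both are the $\epsilon_h$ with $h\ne1$, carrying coefficients $\gamma_i\alpha^{ll}_{ih}$ and $\gamma_j\alpha^{ll}_{jh}$. If $L_i\indep L_j\mid L_1$, i.e.\ $\xi_i\indep\xi_j$, the Darmois--Skitovich theorem gives $\alpha^{ll}_{ih}\alpha^{ll}_{jh}=0$ for all $h\ne1$, so $\tilde{e}_{(X_i,X_1)}$ and $X_j$ depend on disjoint blocks of independent sources and are therefore independent. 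Conversely, if $\tilde{e}_{(X_i,X_1)}\indep X_j$, Darmois--Skitovich yields $\gamma_i\alpha^{ll}_{ih}\cdot\gamma_j\alpha^{ll}_{jh}=0$ for all $h\ne1$; since $\gamma_i,\gamma_j\ne0$ this forces $\alpha^{ll}_{ih}\alpha^{ll}_{jh}=0$ for all $h\ne1$, hence $\xi_i\indep\xi_j$, i.e.\ $L_i\indep L_j\mid L_1$.

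The main obstacle is the step identifying $\rho_{(X_i,X_1)}$ with the coefficient $a_i$, so that subtracting $\rho_{(X_i,X_1)}X_1$ removes the $L_1$-contribution to $X_i$ exactly rather than only partially: this needs the non-degeneracy from A5 ($\mathrm{cum}^{(4)}(\epsilon_1)\ne0$, which also makes the ratio in Definition~\ref{def: new statistic} well defined) and A4 to rule out accidental cancellations that would decouple $X_i$ and $X_1$ while $a_i\ne0$. The remaining work is careful bookkeeping of which independent noises enter each expression, plus two routine applications of the Darmois--Skitovich theorem.
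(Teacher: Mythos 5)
Your proposal is correct and follows essentially the same route as the paper's proof: decompose $X_i$, $X_j$, and $X_1$ into contributions from the independent sources, verify via fourth-order cumulants that $\rho_{(X_i,X_1)}$ equals the coefficient of $\epsilon_1$ in $X_i$ so that $\tilde{e}_{(X_i,X_1)}$ is $\epsilon_1$-free, and then apply the Darmois--Skitovich theorem in both directions (disjoint source supports for sufficiency, the contrapositive for necessity). The only differences are cosmetic -- you make the computation of $\rho_{(X_i,X_1)}$ explicit where the paper merely asserts it, and you keep a general scaling $\gamma_i$ where the paper normalizes $\lambda_{ii}=1$.
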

\begin{theorem}
\label{thm: proposed cluster root next}
Let $L_i$ denote the latent parent of $X_i \in \bm{X}_{\mathrm{oc}} \setminus \{X_1\}$. If $|\hat{C}_i| \ge 2$, let $X_{i'}$ be an element of $\hat{C}_i \setminus \{X_i\}$. $\mathcal{X}_i$ is defined in the same manner as (\ref{eq: calX}).

Then, $L_{i}$ is generically a source in $\mathcal{G}^{-}(\{L_1\})$ if and only if the following two conditions hold: 
\begin{enumerate}
    \setlength{\itemsep}{0pt}
    \setlength{\parsep}{0pt}
    \setlength{\parskip}{0pt}
    \item $\mathrm{Conf}(\tilde{e}_{(X_{i}, X_{1})}, X_{j})$ are identical for all $X_{j} \in \bm{X}_{\mathrm{oc}} \setminus \{X_{1}, X_{i}\}$ such that $\tilde{e}_{(X_{i}, X_{1})} \notindep X_{j}$, 
    with their common value being $\{\epsilon_i\}$. 
    \item If $\mathcal{X}_i \ne \emptyset$, $\mathrm{Conf}(\tilde{e}_{(X_{i}, X_{1})}, X_{i'}) \cap \mathrm{Conf}(\tilde{e}_{(X_{i}, X_{1})}, X_{j})$ are identical for all $X_{j} \in \bm{X}_{\mathrm{oc}} \setminus \{X_{1}, X_{i}\}$, 
    with their common value being $\{\epsilon_i\}$.  
\end{enumerate}
\end{theorem}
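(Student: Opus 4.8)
The plan is to reduce the statement to an explicit description of $\tilde e_{(X_i,X_1)}$ and then read the confounder sets $\mathrm{Conf}(\tilde e_{(X_i,X_1)},\cdot)$ off the generic canonical representations of the relevant pairs, exactly as in Examples~\ref{ex: indep}--\ref{ex: source}. First I would fix $\rho_{(X_i,X_1)}$. Because $X_i$ has the highest causal order in $\hat C_i$, Assumptions A1 and A3 force $X_i=\lambda_i L_i+e_i$ with $\lambda_i\ne 0$, and the within-cluster source $X_1$ of $\hat C_1$ satisfies $X_1=\epsilon_1+e_1$ (here $L_1=\epsilon_1$ since $L_1$ is a latent source). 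Writing $L_i^{\mathrm{res}}:=L_i-\alpha^{ll}_{i1}\epsilon_1$ for the part of $L_i$ with the $L_1$-influence removed, we get $X_i=\alpha^{ol}_{i1}\epsilon_1+\lambda_i L_i^{\mathrm{res}}+e_i$; since $e_1\notin X_i$ (as $X_1\notin\mathrm{Anc}(X_i)$ by A3), the variables $X_i$ and $X_1$ share only the source $\epsilon_1$, so expanding the cumulants in Definition~\ref{def: new statistic} by multilinearity collapses everything onto $\mathrm{cum}^{(4)}(\epsilon_1)$ and gives $\rho_{(X_i,X_1)}=\alpha^{ol}_{i1}$ generically (Assumption A5 gives $\mathrm{cum}^{(4)}(\epsilon_1)\ne 0$, and faithfulness gives $\alpha^{ol}_{i1}\ne 0$ when $X_i\notindep X_1$). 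Hence $\tilde e_{(X_i,X_1)}=\lambda_i L_i^{\mathrm{res}}+e_i-\alpha^{ol}_{i1}e_1$, the same representation underlying Theorem~\ref{thm: independent e tilde}. I would also record the structural reformulation: $L_i$ is a source of $\mathcal G^-(\{L_1\})$ iff $\mathrm{Pa}_{\mathcal G}(L_i)\subseteq\{L_1\}$, which — since $L_1\in\mathrm{Anc}(L_i)$ as $\bm X$ is connected — is generically equivalent to $L_i=\alpha^{ll}_{i1}\epsilon_1+\epsilon_i$, i.e.\ to $L_i^{\mathrm{res}}$ being proportional to $\epsilon_i$.

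For the ``only if'' direction, assume $L_i^{\mathrm{res}}\propto\epsilon_i$, so $\tilde e_{(X_i,X_1)}$ involves only $\{\epsilon_i,e_i,e_1\}$. For $X_j\in\bm X_{\mathrm{oc}}\setminus\{X_1,X_i\}$ we have $X_j=\lambda_j L_j+e_j$ with $L_j$ a linear combination of $\{\epsilon_k:L_k\in\mathrm{Anc}(L_j)\}$, and $e_1,e_i\notin X_j$ by A3, so the only disturbance shared by $\tilde e_{(X_i,X_1)}$ and $X_j$ is $\epsilon_i$, and it is shared exactly when $L_i\in\mathrm{Anc}(L_j)$. Viewing $(\tilde e_{(X_i,X_1)},X_j)$ as a generic canonical model of the form~(\ref{eq: two variables}) and invoking Propositions~\ref{thm: latent number}--\ref{lem: estimate e cumulant}, this is exactly Condition~1. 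For Condition~2, if $\mathcal X_i\ne\emptyset$ then $X_{i'}\in\hat C_i$ is, by A3, a linear combination of $L_i$ and of the private noises $e_k$ of observed variables of $\hat C_i$, so the disturbances common to $\tilde e_{(X_i,X_1)}$ and $X_{i'}$ lie in $\{\epsilon_i,e_i\}$; intersecting with $\mathrm{Conf}(\tilde e_{(X_i,X_1)},X_j)$ (the singleton $\{\epsilon_i\}$ for every $X_j$ appearing in Condition~1) deletes $e_i$ and certifies the common value $\{\epsilon_i\}$. Throughout, genericity of the coefficients keeps the coefficient of each surviving disturbance nonzero, and A5 together with genericity makes distinct disturbances have distinct higher-order cumulants, so that $\mathrm{Conf}$ and the solutions $c^{(k)}_{(\cdot\to\cdot)}(\cdot)$ used to test equality of confounders are read off correctly.

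For the ``if'' direction I would argue by contraposition: if $L_i$ is not a source of $\mathcal G^-(\{L_1\})$, it has a latent parent $L_m$ with $m\notin\{1,i\}$, so $\epsilon_m$ enters $L_i^{\mathrm{res}}$ with a generically nonzero coefficient and hence enters $\tilde e_{(X_i,X_1)}$. Since $L_m$ has an observed child (A2) and is not a latent source, its cluster is not absorbed into $\hat C_1$ and is represented by some $X_m\in\bm X_{\mathrm{oc}}\setminus\{X_1,X_i\}$ carrying $\epsilon_m$; thus $\tilde e_{(X_i,X_1)}\notindep X_m$, while $\epsilon_i\notin X_m$ because $L_i\notin\mathrm{Anc}(L_m)$ (the DAG has a path $L_m\to\cdots\to L_i$). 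Hence $\mathrm{Conf}(\tilde e_{(X_i,X_1)},X_m)$ contains $\epsilon_m$ but not $\epsilon_i$, so it differs from $\{\epsilon_i\}$ and Condition~1 fails; and when $\mathcal X_i\ne\emptyset$ the same $\epsilon_m$ is carried by $X_{i'}$ through $L_i$, so the intersection in Condition~2 also contains $\epsilon_m\ne\epsilon_i$, so that condition fails too. This yields the contrapositive and closes the equivalence.

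The hard part will be the first step together with the genericity bookkeeping that pervades the other two: one must show that $\rho_{(X_i,X_1)}$ genuinely equals the total effect $\alpha^{ol}_{i1}$ (finite and nonzero) rather than a parameter-dependent artefact, and then that in every canonical representation of a pair $(\tilde e_{(X_i,X_1)},\cdot)$ no confounder vanishes or accidentally coincides with another and distinct disturbances stay distinguishable by their cumulants. A secondary subtlety is the interplay with Stage~I and Algorithm~\ref{alg: proposed causal order first}: I would need to verify that the comparison variable $X_m$ of the ``if'' direction — and, when $\bm X_{\mathrm{oc}}\setminus\{X_1,X_i\}$ alone does not supply enough relevant variables, the cluster-mate $X_{i'}$ furnished through $\mathcal X_i$ in~(\ref{eq: calX}) — indeed survives in $\bm X_{\mathrm{oc}}$ after the over-segmented clusters are built and the latent sources are merged.
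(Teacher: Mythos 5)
Your proposal is correct and follows essentially the same route as the paper's proof: compute $\rho_{(X_i,X_1)}=\alpha^{ll}_{i1}$ so that $\tilde e_{(X_i,X_1)}$ replaces $\epsilon_1$ by $-\alpha^{ll}_{i1}e_1$, read the confounder sets off the explicit disturbance decompositions for the sufficiency direction (including the $\{\epsilon_i\}$ vs.\ $\{\epsilon_i,e_i\}$ case for $X_{i'}$), and prove necessity by contraposition via a witness variable whose confounder set with $\tilde e_{(X_i,X_1)}$ cannot equal $\{\epsilon_i\}$. The only (inessential) difference is your choice of witness — an observed child of a latent parent $L_m$ of $L_i$ rather than the paper's observed child of the latent source of $\mathcal G^{-}(\{L_1\})$ — both of which lie in $\bm X_{\mathrm{oc}}\setminus\{X_1,X_i\}$ and make Condition~1 fail.
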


By applying Theorem \ref{thm: independent e tilde}, we can obtain the family of maximal dependent subsets of $\bm{X}_{\mathrm{oc}} \setminus \{X_1\}$ in the conditional distribution given $L_1$. Theorem~\ref{thm: proposed cluster root next} allows us to verify whether $L_i$ is a latent source in $\mathcal{G}^{-}(\{L_{1}\})$. 

By recursively iterating such a procedure, the ancestral relationships among the latent variables can be identified. To achieve this, it is necessary to generalize $\tilde{e}_{(X_{i}, X_{1})}$ defined as in Definition \ref{def: tilde e general}. 
Let $\mathcal{G}^{-}(\{L_{1}, \dots, L_{s-1}\})$ denote the subgraph of $\mathcal{G}$ induced by $\bm{V}$ except for $\{L_{1}, \dots, L_{s-1}\}$ and their observed children
and $L_{1}, \dots, L_{s-1}$ be latent sources in 
\[
\mathcal{G},\mathcal{G}^{-}(\{L_{1}\}), \mathcal{G}^{-}(\{L_{1}, L_{2}\}), \dots, \mathcal{G}^{-}(\{L_{1}, \dots, L_{s-2}\}),
\] 
respectively. 
Then, $\{L_{1}, \dots, L_{s-1}\}$ has a causal order $L_{1} \prec \dots \prec L_{s-1}$.

\begin{definition}
    \label{def: tilde e general}
    For $i \ge s$, $\tilde{e}_{(X_i,\tilde{\bm{e}}_s)}$ is defined as follows.
    \begin{align*}
        \tilde{e}_{(X_{i}, \tilde{\bm{e}}_s)} = 
        \left\{ 
        \begin{array}{ll}
            X_{i}  & s = 1, \\
            X_{i} - \sum_{h=1}^{s-1}\rho_{(X_{i}, \tilde{e}_{(X_{h}, \tilde{\bm{e}}_{h})})}\tilde{e}_{(X_{h}, \tilde{\bm{e}}_{h})}  & s > 1, 
        \end{array}
        \right.  
    \end{align*}
    where $\tilde{\bm{e}}_s = (\tilde{e}_{(X_1,\tilde{\bm{e}}_1)},\ldots,\tilde{e}_{(X_{s-1},\tilde{\bm{e}}_{s-1})})$. 
\end{definition}
$\tilde{e}_{(X_i,\tilde{\bm{e}}_s)}$ can be regarded as a statistic with the information of $L_1,\ldots, L_{s-1}$ eliminated from $X_i$. 
The following lemma shows that $\tilde{e}_{(X_i,\tilde{\bm{e}}_s)}$ is obtained by replacing the information of $\epsilon_1,\ldots,\epsilon_{s-1}$ with that of $e_1,\ldots,e_{s-1}$. 
\begin{lemma}
\label{lem: new statistic equation}
    Let $X_{1}, \dots, X_{s-1}$, and $X_{i}$ be the observed children with the highest causal order of $L_{1}, \dots, L_{s-1}$, and $L_{i}$, respectively.
    $\tilde{e}_{(X_{i}, \tilde{\bm{e}}_{s})}$ can be expressed as
    \begin{align*}
        \tilde{e}_{(X_{i}, \tilde{\bm{e}}_{s})} =
         \left\{
         \begin{array}{ll}
            \epsilon_{i} + U_{[i]} & i=s \\ 
            \epsilon_{i} + \sum_{h=s}^{i-1}\alpha^{ll}_{ih}\epsilon_{h}+ e_i & i > s \text{ and } s = 1\\
            \epsilon_{i} + \sum_{h=s}^{i-1}\alpha^{ll}_{ih}\epsilon_{h} + U_{[s-1]} + e_i & i > s \text{ and } s > 1,
        \end{array} \right.
    \end{align*}
    where $U_{[i]}$ and $U_{[s-1]}$ are linear combinations of $\{e_{1}, \dots, e_{i}\}$ and $\{e_{1}, \dots, e_{s-1}\}$, respectively.
\end{lemma}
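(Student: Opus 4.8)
The plan is to prove the identity by induction on $s$, using only the multilinearity of joint cumulants together with the mutual independence of the disturbances in $\bm{\epsilon}$ and $\bm{e}$. Two structural facts do all the work. First, since $X_i$ is the observed child of $L_i$ with the highest causal order, Assumptions A1 and A3 force $X_i$ to have no observed parent and exactly one latent parent $L_i$; after the scale normalization this reads $X_i=L_i+e_i$, and expanding $\bm{L}=(\bm{I}_q-\bm{A})^{-1}\bm{\epsilon}$ gives $X_i=\sum_{h}\alpha^{ll}_{ih}\epsilon_h+e_i$ with $\alpha^{ll}_{ii}=1$ and $\alpha^{ll}_{ih}=0$ unless $L_h\in\mathrm{Anc}(L_i)$. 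Second, fixing a topological order $L_1\prec\cdots\prec L_q$ of the latent DAG in which $L_1,\dots,L_{s-1}$ are the sources of the successive subgraphs $\mathcal{G},\mathcal{G}^{-}(\{L_1\}),\dots$, we have $\alpha^{ll}_{ih}\ne 0,\ h\ne i\ \Rightarrow\ h<i$. The base case $s=1$ is then immediate: $\tilde{e}_{(X_i,\tilde{\bm{e}}_1)}=X_i=\epsilon_i+\sum_{h=1}^{i-1}\alpha^{ll}_{ih}\epsilon_h+e_i$, which is the asserted form, with $U_{[1]}=e_1$ in the subcase $i=1=s$.

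For the inductive step, assume the lemma for all parameter values smaller than $s$. In particular, for each $h<s$ the diagonal case (the role of "$s$" played by $h$, which is legitimate because the source hypotheses for $s$ restrict to those for $h$) yields $\tilde{e}_{(X_h,\tilde{\bm{e}}_h)}=\epsilon_h+U_{[h]}$ with $U_{[h]}$ a linear combination of $e_1,\dots,e_h$; hence the only member of $\bm{\epsilon}$ occurring in $\tilde{e}_{(X_h,\tilde{\bm{e}}_h)}$ is $\epsilon_h$, and it occurs with coefficient $1$. The crucial computation is $\rho_{(X_i,\tilde{e}_{(X_h,\tilde{\bm{e}}_h)})}=\alpha^{ll}_{ih}$ for every $h<s$. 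Since $i\ge s>h$, the independent atoms appearing in $X_i$ and those appearing in $\tilde{e}_h:=\tilde{e}_{(X_h,\tilde{\bm{e}}_h)}$ overlap only in $\epsilon_h$: the $e$-atoms of $\tilde{e}_h$ lie among $e_1,\dots,e_h$, none of which is $e_i$, and $\tilde{e}_h$ contains no $\epsilon_k$ with $k\ne h$. Expanding the fourth-order cumulants in Definition~\ref{def: new statistic} by multilinearity and using that a joint cumulant of independent atoms vanishes unless all atoms coincide, every term drops except the one supported on $\epsilon_h$, giving
\[
\mathrm{cum}(X_i,X_i,\tilde{e}_h,\tilde{e}_h)=(\alpha^{ll}_{ih})^2\,\mathrm{cum}^{(4)}(\epsilon_h),\qquad \mathrm{cum}(X_i,\tilde{e}_h,\tilde{e}_h,\tilde{e}_h)=\alpha^{ll}_{ih}\,\mathrm{cum}^{(4)}(\epsilon_h).
\]
By Assumption A5 the denominator is nonzero precisely when $\alpha^{ll}_{ih}\ne 0$, which by the same atom analysis is equivalent to $X_i\notindep\tilde{e}_h$ (cf.\ Lemma~\ref{lem: non-Gaussian no-zero}); when $\alpha^{ll}_{ih}=0$ both quantities vanish and $\rho=0$ by definition, so in all cases $\rho_{(X_i,\tilde{e}_h)}=\alpha^{ll}_{ih}$.

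Substituting into Definition~\ref{def: tilde e general} now gives
\[
\tilde{e}_{(X_i,\tilde{\bm{e}}_s)}
= X_i-\sum_{h=1}^{s-1}\alpha^{ll}_{ih}\bigl(\epsilon_h+U_{[h]}\bigr)
= \sum_{k}\alpha^{ll}_{ik}\epsilon_k+e_i-\sum_{h=1}^{s-1}\alpha^{ll}_{ih}\epsilon_h-\sum_{h=1}^{s-1}\alpha^{ll}_{ih}U_{[h]}.
\]
The $\bm{\epsilon}$-part collapses to $\sum_{k\ge s}\alpha^{ll}_{ik}\epsilon_k$, whose nonzero terms are $k=i$ (coefficient $1$) and $k\in\{s,\dots,i-1\}$ by the topological indexing, while $-\sum_{h=1}^{s-1}\alpha^{ll}_{ih}U_{[h]}$ is a linear combination of $e_1,\dots,e_{s-1}$. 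Distinguishing $i=s$ (empty $\epsilon$-sum; absorb $e_s$ and the residual $e_1,\dots,e_{s-1}$ terms into $U_{[s]}$), $i>s$ with $s=1$ (no residual $e$-term), and $i>s$ with $s>1$ (residual $e$-term is $U_{[s-1]}$, and $e_i$ stays explicit since $i>s-1$) reproduces exactly the three cases of the lemma, completing the induction.

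I expect the bookkeeping inside the $\rho$ computation to be the only delicate point: one must argue, across the nested recursion, that $X_i$ and each previously constructed statistic $\tilde{e}_{(X_h,\tilde{\bm{e}}_h)}$ share the single atom $\epsilon_h$ and nothing else, which is where the inductive form $\epsilon_h+U_{[h]}$ and the inequality $i>h$ are both used. Everything after that is linear algebra in the disturbance coordinates, plus the topological-order bookkeeping needed to rewrite the surviving $\epsilon$-sum as $\sum_{h=s}^{i-1}\alpha^{ll}_{ih}\epsilon_h$.
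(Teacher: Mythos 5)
Your proof is correct and follows essentially the same route as the paper's: a (strong) induction establishing the diagonal form $\tilde{e}_{(X_h,\tilde{\bm{e}}_h)}=\epsilon_h+U_{[h]}$, the key identity $\rho_{(X_i,\tilde{e}_{(X_h,\tilde{\bm{e}}_h)})}=\alpha^{ll}_{ih}$, and then direct substitution into Definition~\ref{def: tilde e general}. The only substantive difference is that you explicitly carry out the fourth-order cumulant computation (and the independence/degeneracy check when $\alpha^{ll}_{ih}=0$) justifying $\rho=\alpha^{ll}_{ih}$, which the paper merely asserts; you also correctly take $U_{[1]}=e_1$ where the paper's base case has an apparent typo.
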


By using $\tilde{e}_{(X_i,\tilde{\bm{e}}_s)}$ in Definition \ref{def: tilde e general}, we obtain Theorems \ref{thm: independent e tilde generalized} and \ref{thm: proposed cluster root generalized}, which generalize Theorems~\ref{thm: independent e tilde} and \ref{thm: proposed cluster root next}, respectively. 
\begin{theorem}
\label{thm: independent e tilde generalized}
For $X_{i}, X_{j} \in \bm{X}_{\mathrm{oc}} \setminus \{X_{1}, \dots, X_{s-1}\}$ 
and their respective latent parent $L_{i}$ and $L_{j}$, $L_{i} \indep L_{j} \mid \{L_{1}, \dots, L_{s-1}\}$ if and only if $\tilde{e}_{(X_i,\tilde{\bm{e}}_s)} \indep X_{j}$. 
\end{theorem}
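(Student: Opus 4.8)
The plan is to reduce the general statement to the base case $s=2$ (i.e., Theorem \ref{thm: independent e tilde}) by exploiting the explicit linear forms supplied by Lemma \ref{lem: new statistic equation}, together with Salehkaleybar et al.'s criterion (their Lemma 5) that characterizes dependence/ancestrality of two linear combinations of the disturbances in terms of the supports (the sets of disturbances with nonzero coefficient) of those combinations. First I would fix notation: write $\bm{u}=(\bm{\epsilon}^\top,\bm{e}^\top)^\top$ for the full vector of mutually independent non-Gaussian disturbances, and for any linear statistic $Y=\bm{w}^\top\bm{u}$ let $\mathrm{supp}(Y)\subseteq\bm{u}$ denote the set of disturbances appearing with nonzero coefficient. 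Two such statistics $Y,Z$ are independent if and only if $\mathrm{supp}(Y)\cap\mathrm{supp}(Z)=\emptyset$ (this is where non-Gaussianity, Assumption A5, and faithfulness A4 enter, via the Darmois--Skitovich theorem and the genericity of coefficients).

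The heart of the argument is to compute the supports of both sides. On the statistic side, Lemma \ref{lem: new statistic equation} gives, for $i>s$ (the case $i=s$ being degenerate since then $L_i$ is trivially not conditionally independent of its own descendants' latent parents and $\tilde e_{(X_i,\tilde{\bm e}_s)}\notindep X_j$ when $L_i\in\mathrm{Anc}(L_j)$),
\[
\tilde{e}_{(X_i,\tilde{\bm{e}}_s)} = \epsilon_i + \sum_{h=s}^{i-1}\alpha^{ll}_{ih}\epsilon_h + U + e_i,
\]
where $U$ is a linear combination of $\{e_1,\dots,e_{s-1}\}$ (empty when $s=1$). Thus $\mathrm{supp}(\tilde{e}_{(X_i,\tilde{\bm{e}}_s)})\subseteq\{\epsilon_h : s\le h\le i,\ L_h\in\mathrm{Anc}(L_i)\cup\{L_i\}\}\cup\{e_i\}\cup(\text{some }e\text{'s among }e_1,\dots,e_{s-1})$; crucially it contains \emph{no} $\epsilon_h$ with $h<s$. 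On the other side, $X_j$ is a linear combination of $\epsilon$'s indexed by $\{L_h : L_h\in\mathrm{Anc}(L_j)\cup\{L_j\}\}$ together with the $e$'s of $X_j$ and its observed ancestors in the same cluster; in particular $\mathrm{supp}(X_j)$ contains $\epsilon_j$ and the $\epsilon$'s of all latent ancestors of $L_j$, but no $e_i$ (since $X_i$ and $X_j$ lie in different clusters by Assumption A3, so $X_i\notin\mathrm{Anc}(X_j)$) and no $e_h$ for $h<s$ belonging to the removed clusters' children. Then I would show the two supports are disjoint if and only if no $\epsilon_h$ with $h\ge s$ is shared, i.e. if and only if $(\mathrm{Anc}(L_i)\cup\{L_i\})\cap(\mathrm{Anc}(L_j)\cup\{L_j\})\subseteq\{L_1,\dots,L_{s-1}\}$, which is precisely the d-separation statement $L_i\indep L_j\mid\{L_1,\dots,L_{s-1}\}$ in a DAG where all $L_h$ for $h<s$ are sources preceding $L_i,L_j$ (using that there are no edges among clusters, A3, so the only active paths between $L_i$ and $L_j$ pass through common ancestors, and all such common ancestors lie in $\{L_1,\dots,L_{s-1}\}$ exactly when the intersection condition holds); faithfulness A4 upgrades this d-separation equivalence to genuine conditional independence, and the genericity of the $\alpha^{ll}$ coefficients guarantees no accidental cancellations collapse the supports.

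The main obstacle I anticipate is the careful bookkeeping of the $e$-disturbances in $U$ and $U_{[s-1]}$: one must verify that these "leftover" error terms $e_1,\dots,e_{s-1}$ appearing in $\tilde{e}_{(X_i,\tilde{\bm{e}}_s)}$ never coincide with error terms appearing in $X_j$. This holds because $X_1,\dots,X_{s-1}$ are the chosen representatives of \emph{already-removed} clusters, distinct from the cluster of $X_j$, so by Assumption A3 none of them is an ancestor of $X_j$ and hence none of $e_1,\dots,e_{s-1}$ enters $X_j$; but making this fully rigorous requires tracking which clusters have been removed at stage $s$ and invoking A3 to rule out cross-cluster directed paths. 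A secondary subtlety is the ``only if'' direction when $L_i$ and $L_j$ share a common ancestor $L_m$ with $m\ge s$ that is nonetheless \emph{not} an ancestor of either in the induced subgraph after conditioning — but since conditioning is on $\{L_1,\dots,L_{s-1}\}$ and $m\ge s$, such $L_m$ remains a genuine common ancestor, so the corresponding $\epsilon_m$ genuinely appears in both supports (by genericity), forcing dependence; this is exactly where faithfulness is indispensable to exclude path-cancellation.
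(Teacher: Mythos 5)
Your proposal is correct and follows essentially the same route as the paper: the paper's proof of Theorem \ref{thm: independent e tilde generalized} simply defers to the proof of Theorem \ref{thm: independent e tilde}, which is exactly your argument of writing the statistics as explicit linear combinations of the independent disturbances (via Lemma \ref{lem: new statistic equation} in the general case), observing that disjoint supports give independence, and invoking the Darmois--Skitovitch theorem with faithfulness/genericity to show that a shared non-Gaussian $\epsilon_h$ with $h\ge s$ forces dependence. Your write-up in fact supplies the bookkeeping of the leftover $e_1,\dots,e_{s-1}$ terms that the paper leaves implicit in its ``follows similarly'' remark.
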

\begin{theorem}
\label{thm: proposed cluster root generalized}
Let $L_{i}$ be the latent parent of $X_{i} \in \bm{X}_{\mathrm{oc}} \setminus \{X_{1}, \dots, X_{s-1}\}$. 
If $|\hat{C}_i| \ge 2$, let $X_{i'}$ be an element of $\hat{C}_i \setminus \{X_i\}$. $\mathcal{X}_i$ is defined in the same manner as (\ref{eq: calX}). 

Then, $L_{i}$ is generically a latent source in $\mathcal{G}^{-}(\{L_{1}, \dots, L_{s-1}\})$ if and only if the following two conditions hold: 
\begin{enumerate}
    \setlength{\itemsep}{0pt}
    \setlength{\parsep}{0pt}
    \setlength{\parskip}{0pt}
    \item $\mathrm{Conf}(\tilde{e}_{(X_{i}, \tilde{\bm{e}}_s)}, X_{j})$ are identical for all $X_{j} \in \bm{X}_{\mathrm{oc}} \setminus \{X_{1}, \ldots, X_{s-1}, X_{i}\}$ such that $\tilde{e}_{(X_{i}, \tilde{\bm{e}}_s)} \notindep X_{j}$, with their common value being $\{\epsilon_i\}$. 
    \item When $\mathcal{X}_i \ne \emptyset$, $\mathrm{Conf}(\tilde{e}_{(X_{i}, \tilde{\bm{e}}_s)}, X_{i'}) \cap \mathrm{Conf}(\tilde{e}_{(X_{i}, \tilde{\bm{e}}_s)}, X_{j})$ are identical for all $X_{j} \in \bm{X}_{\mathrm{oc}} \setminus \{X_{1}, \ldots, X_{s-1}, X_{i}\}$ such that $\tilde{e}_{(X_{i}, \tilde{\bm{e}}_s)} \notindep X_{j}$, with their common value being $\{\epsilon_i\}$.  
\end{enumerate}
\end{theorem}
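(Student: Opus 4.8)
The plan is to follow the template of the proofs of Theorem~\ref{thm: proposed cluster root} (the case $\mathcal{G}$, i.e. $s=1$) and Theorem~\ref{thm: proposed cluster root next} (the case $\mathcal{G}^{-}(\{L_1\})$, i.e. $s=2$), but with the statistic $\tilde{e}_{(X_i,\tilde{\bm{e}}_s)}$ in place of the observed variable $X_i$; the whole burden is then to control $\tilde{e}_{(X_i,\tilde{\bm{e}}_s)}$, which is exactly what Lemma~\ref{lem: new statistic equation} provides. First I would observe that, by Theorem~\ref{thm: independent e tilde generalized}, the qualifier ``$\tilde{e}_{(X_i,\tilde{\bm{e}}_s)}\notindep X_j$'' restricts attention to those $X_j$ whose latent parent lies in the same connected component of $\mathcal{G}^{-}(\{L_1,\dots,L_{s-1}\})$ as $L_i$, and that on that component ``$L_i$ is a latent source'' is equivalent to ``$L_i$ has no proper latent ancestor in $\mathcal{G}^{-}(\{L_1,\dots,L_{s-1}\})$''. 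So it suffices to translate Conditions~1 and~2 into this purely structural statement.

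The core step is to show that, generically, the pair $(\tilde{e}_{(X_i,\tilde{\bm{e}}_s)},X_j)$ is described by a two-variable canonical model of the form~(\ref{eq: two variables}) whose confounder set $\mathrm{Conf}(\tilde{e}_{(X_i,\tilde{\bm{e}}_s)},X_j)$ is in bijection with the latent factors genuinely shared by $\tilde{e}_{(X_i,\tilde{\bm{e}}_s)}$ and $X_j$. By Lemma~\ref{lem: new statistic equation}, $\tilde{e}_{(X_i,\tilde{\bm{e}}_s)}=\epsilon_i+\sum_{h=s}^{i-1}\alpha^{ll}_{ih}\epsilon_h+U_{[s-1]}+e_i$ (with the obvious simplifications when $i=s$ or $s=1$), so the only $\bm{\epsilon}$-components of index $\ge s$ occurring in $\tilde{e}_{(X_i,\tilde{\bm{e}}_s)}$ are $\epsilon_i$ and the $\epsilon_h$ with $L_h\in\mathrm{Anc}(L_i)$ in the subgraph, while $U_{[s-1]}+e_i$ is a linear combination of $e_1,\dots,e_i$ independent of every $\epsilon$. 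Consequently $\tilde{e}_{(X_i,\tilde{\bm{e}}_s)}$ and $X_j\in\bm{X}_{\mathrm{oc}}$ share a latent factor iff $L_i$ and $L_j$ have a common latent ancestor in $\mathcal{G}^{-}(\{L_1,\dots,L_{s-1}\})$; here Assumption~\ref{A3} is used to guarantee that the buried noises $e_1,\dots,e_{s-1}$ in $U_{[s-1]}$ cannot leak into $X_j$ (there is no causal edge between distinct clusters), so they never contribute a spurious confounder. The word ``generically'' enters precisely in checking that the mixing matrix of this two-variable model is irreducible, so that $\ell$ in~(\ref{eq: two variables}) really counts the shared latent factors and no accidental rank drop occurs; this is handled along the lines of Example~\ref{ex: source}, invoking the canonical-form result of Hoyer et al.~\cite{hoyer2008estimation} together with Lemma~5 of Salehkaleybar et al.~\cite{salehkaleybar2020learning}.

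Given this dictionary, the equivalence is routine. $(\Rightarrow)$ If $L_i$ is a source in $\mathcal{G}^{-}(\{L_1,\dots,L_{s-1}\})$, then its ancestor set there is $\{L_i\}$, so for any $X_j$ with $\tilde{e}_{(X_i,\tilde{\bm{e}}_s)}\notindep X_j$ the only shared latent factor is $L_i$, giving $\mathrm{Conf}(\tilde{e}_{(X_i,\tilde{\bm{e}}_s)},X_j)=\{\epsilon_i\}$ and hence Condition~1; for Condition~2, the latent parent of $X_{i'}\in\hat{C}_i$ is again $L_i$, and the only additional factor $\tilde{e}_{(X_i,\tilde{\bm{e}}_s)}$ can share with $X_{i'}$ is $e_i$ (present exactly when $X_i\in\mathrm{Anc}(X_{i'})$), which is discarded on intersecting with $\mathrm{Conf}(\tilde{e}_{(X_i,\tilde{\bm{e}}_s)},X_j)=\{\epsilon_i\}$. $(\Leftarrow)$ Contrapositively, if $L_i$ is not a source in the subgraph, choose a latent variable $L_h$ that is a source of $\mathcal{G}^{-}(\{L_1,\dots,L_{s-1}\})$ and an ancestor of $L_i$ there; then $\alpha^{ll}_{ih}\ne0$ generically, the highest-order child $X_h$ lies in $\bm{X}_{\mathrm{oc}}\setminus\{X_1,\dots,X_{s-1},X_i\}$ and is dependent on $\tilde{e}_{(X_i,\tilde{\bm{e}}_s)}$, and since $L_i$ is a strict descendant of $L_h$ the variable $X_h$ does not involve $\epsilon_i$; hence $\mathrm{Conf}(\tilde{e}_{(X_i,\tilde{\bm{e}}_s)},X_h)\ne\{\epsilon_i\}$, so Condition~1 fails (and if instead it is Condition~2 that must fail, a symmetric argument within $\hat{C}_i$ applies).

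The step I expect to be the main obstacle is the ``dictionary'' of the second paragraph: making rigorous the claim that $\mathrm{Conf}(\tilde{e}_{(X_i,\tilde{\bm{e}}_s)},X_j)$ faithfully records the shared latent ancestors. Although Lemma~\ref{lem: new statistic equation} makes the algebra of $\tilde{e}_{(X_i,\tilde{\bm{e}}_s)}$ completely explicit, $\tilde{e}_{(X_i,\tilde{\bm{e}}_s)}$ is a non-canonical mixture of both $\bm{\epsilon}$- and $\bm{e}$-noises, so one must still (i) establish irreducibility of the induced two-variable mixing matrix, i.e. the generic, measure-one part; (ii) control the $e$-noise terms $U_{[s-1]}$ and $e_i$ using Assumption~\ref{A3} and the fact that $X_1,\dots,X_{s-1}$ are not ancestors of any observed variable outside their clusters; and (iii) in Condition~2, pin down exactly which extra confounder the within-cluster edge $X_i\to X_{i'}$ introduces and verify that intersecting with $\mathrm{Conf}(\tilde{e}_{(X_i,\tilde{\bm{e}}_s)},X_j)$ removes it. Once these non-degeneracy facts are secured the remaining combinatorial equivalence is straightforward, and the base cases $s=1,2$ are recovered verbatim from Theorem~\ref{thm: proposed cluster root} and Theorem~\ref{thm: proposed cluster root next}.
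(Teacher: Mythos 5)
Your proposal is correct and follows essentially the same route as the paper, which proves this theorem by direct generalization of the argument for Theorem~\ref{thm: proposed cluster root next} (the $s=2$ case), using Lemma~\ref{lem: new statistic equation} to make $\tilde{e}_{(X_i,\tilde{\bm{e}}_s)}$ explicit, establishing sufficiency by computing $\mathrm{Conf}(\tilde{e}_{(X_i,\tilde{\bm{e}}_s)},X_j)=\{\epsilon_i\}$ (with the extra $e_i$ handled by the intersection in Condition~2), and necessity by contrapositive via the true latent source $L_s$ of the subgraph. The paper's own proof is in fact only a one-line deferral to that earlier argument, and your expansion matches the detailed computations it carries out for Theorem~\ref{thm: proposed cluster root next} and Corollary~\ref{col: proposed cluster root generalized}.
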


As in Theorem \ref{thm: independent e tilde}, by applying Theorem \ref{thm: independent e tilde generalized}, we can identify the family of maximal dependent subsets of $\bm{X}_{\mathrm{oc}} \setminus \{X_1,\ldots,X_{s-1}\}$ in the conditional distribution given $\{L_1,\ldots,L_{s-1}\}$. For each maximal dependent subset, we can apply Theorem \ref{thm: proposed cluster root generalized} to identify the next latent source. In the implementation, we verify whether the conditions of Theorem \ref{thm: proposed cluster root generalized} are satisfied using Corollary \ref{col: proposed cluster root generalized}, which generalizes Corollary \ref{col: proposed cluster root}. 
\begin{corollary}
\label{col: proposed cluster root generalized}
Assume $k \ge 3$. $L_{i}$ is generically a latent source in \\ $\mathcal{G}^{-}(\{L_{1}, \dots, L_{s-1}\})$ if and only if one of the following two cases holds: 
\begin{enumerate}
    \setlength{\itemsep}{0pt}
    \setlength{\parsep}{0pt}
    \setlength{\parskip}{0pt}
    \item $\mathcal{X}_i = \emptyset$ and $\vert \bm{X}_{\mathrm{oc}} \setminus \{X_{1}, \dots, X_{s-1}, X_{i}\} \vert = 1$.
    \item $\vert (\bm{X}_{\mathrm{oc}} \cup \mathcal{X}_{i}) \setminus \{X_{1}, \dots, X_{s-1}, X_{i}\} \vert \ge 2$, and the following all hold: 
        \begin{enumerate}
        \item In the canonical model over $\tilde{e}_{(X_{i}, \tilde{\bm{e}}_{s})}$ and $X_{j}$, $\vert \mathrm{Conf}(\tilde{e}_{(X_{i}, \tilde{\bm{e}}_{s})}, X_{j}) \vert = 1$ for all $X_{j} \in \bm{X}_{\mathrm{oc}} \setminus \{X_1,\ldots,X_{s-1}, X_{i}\}$ such that $\tilde{e}_{(X_{i}, \tilde{\bm{e}}_{s})} \notindep X_{j}$. 
        \item 
        $c^{(k)}_{(\tilde{e}_{(X_{i}, \tilde{\bm{e}}_{s})} \to X_{j})}(L^{(i, j)})$ are identical for all $X_{j} \in \bm{X}_{\mathrm{oc}} \setminus \{X_1,\ldots, X_{s-1}, X_{i}\}$ such that $\tilde{e}_{(X_{i}, \tilde{\bm{e}}_{s})} \notindep X_{j}$, where $L^{(i, j)}$ is the unique latent confounder in the canonical model over $\tilde{e}_{(X_{i}, \tilde{\bm{e}}_{s})}$ and $X_{j}$. 
        \item $\tilde{e}_{(X_{i}, \tilde{\bm{e}}_{s})}$ and $X_{i'}$ has a latent confounder $L^{(i,i')}$ in the canonical model over them that satisfies 
        $c^{(k)}_{(\tilde{e}_{(X_{i}, \tilde{\bm{e}}_{s})} \to X_{i'})}(L^{(i, i')})=c^{(k)}_{(\tilde{e}_{(X_{i}, \tilde{\bm{e}}_{s})} \to X_{j})}(L^{(i, j)})$ for all $X_{j} \in \bm{X}_{\mathrm{oc}} \setminus \{X_1,\ldots,X_{s-1},X_{i}\}$ such that $\tilde{e}_{(X_{i}, \tilde{\bm{e}}_{s})} \notindep X_{j}$, when $\mathcal{X}_{i} \ne \emptyset$.
    \end{enumerate}
\end{enumerate}
\end{corollary}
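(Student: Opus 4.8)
The plan is to obtain Corollary~\ref{col: proposed cluster root generalized} from Theorem~\ref{thm: proposed cluster root generalized} by rewriting the two ``$\mathrm{Conf}$''-conditions of that theorem into the cumulant quantities that are actually computable from data, in exactly the same way that Corollary~\ref{col: proposed cluster root} is derived from Theorem~\ref{thm: proposed cluster root}. All assertions are understood generically (off a measure-zero subset of the coefficient space), so under Assumptions A1--A5 we may use faithfulness (A4) and the non-vanishing of the disturbance cumulants (A5) throughout. Since Theorem~\ref{thm: proposed cluster root generalized} is already available, it suffices to show that, in the regime $\lvert(\bm{X}_{\mathrm{oc}}\cup\mathcal{X}_i)\setminus\{X_1,\dots,X_{s-1},X_i\}\rvert\ge 2$, conditions (a)--(c) of Case~2 are equivalent to conditions~1--2 of that theorem, and to treat the remaining combinatorial situation separately as Case~1.

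For Case~1, I would argue by counting. Here $\mathcal{X}_i=\emptyset$ means $\lvert\hat{C}_i\rvert=1$, so $X_i$ is the only observed child of $L_i$; by Assumption A2, $L_i$ then has a latent child, and since the current maximal dependent subset contributes exactly one other observed representative $X_j$, Assumptions A1 and A3 force the entire latent part of the subset other than $L_i$ to collapse to a single variable $L_j$ with $L_i\to L_j$. Hence $L_i$ is the unique source of $\mathcal{G}^{-}(\{L_1,\dots,L_{s-1}\})$, which is the ``only if'' content. For the dichotomy, if $L_i$ is a source but this situation does not occur, then A2 again guarantees that $(\bm{X}_{\mathrm{oc}}\cup\mathcal{X}_i)\setminus\{X_1,\dots,X_{s-1},X_i\}$ has at least two elements, placing us in Case~2.

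For Case~2 I would translate the theorem's conditions one at a time. Condition~(a) --- ``$\lvert\mathrm{Conf}(\tilde{e}_{(X_i,\tilde{\bm{e}}_s)},X_j)\rvert=1$ for every $X_j$ with $\tilde{e}_{(X_i,\tilde{\bm{e}}_s)}\notindep X_j$'' --- is read off from Proposition~\ref{thm: latent number} by computing the generic rank of the matrix $A^{(k_1,k_2)}$ associated with the pair $(\tilde{e}_{(X_i,\tilde{\bm{e}}_s)},X_j)$ for suitable orders $k_1,k_2$ (e.g.\ $k_1=3$, $k_2=4$, giving $m=3$), using that for $X_j$ in a different cluster there is no direct edge, so the rank equals $\min(\ell+1,m)$ and thus pins down $\ell$. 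Granting $\ell=1$, Propositions~\ref{thm: estimate b} and~\ref{lem: estimate e cumulant} make $c^{(k)}_{(\tilde{e}_{(X_i,\tilde{\bm{e}}_s)}\to X_j)}(L^{(i,j)})$ well defined for $k\ge 3$. The key step is Lemma~\ref{lem: new statistic equation}: it shows $\tilde{e}_{(X_i,\tilde{\bm{e}}_s)}=\epsilon_i+U$ with $U$ a linear combination of disturbances $e_\bullet$ \emph{exactly} when $L_i$ is a source of $\mathcal{G}^{-}(\{L_1,\dots,L_{s-1}\})$, while otherwise $\tilde{e}_{(X_i,\tilde{\bm{e}}_s)}$ also carries $\epsilon_h$ for every latent strictly between $L_{s-1}$ and $L_i$. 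In the source case the only latent component of $\tilde{e}_{(X_i,\tilde{\bm{e}}_s)}$ is $\epsilon_i$ with coefficient $1$, and by A1 and A3 none of the disturbances in $U$ appears in an $X_j$ lying in a different cluster; hence $\mathrm{Conf}(\tilde{e}_{(X_i,\tilde{\bm{e}}_s)},X_j)=\{\epsilon_i\}$ and $c^{(k)}_{(\tilde{e}_{(X_i,\tilde{\bm{e}}_s)}\to X_j)}(L^{(i,j)})=\mathrm{cum}^{(k)}(\epsilon_i)$ for all relevant $X_j$, which is (a) together with (b). When $\mathcal{X}_i\neq\emptyset$, $\tilde{e}_{(X_i,\tilde{\bm{e}}_s)}$ and $X_{i'}$ additionally share the cluster-internal disturbance $e_i$ when $X_i\in\mathrm{Anc}(X_{i'})$, so (\ref{eq: system}) returns two candidate solutions, $\mathrm{cum}^{(k)}(\epsilon_i)$ and $c^{(k)}(e_i)$, identifiable only up to a swap; requiring that one of them coincide with the common value of (b) is exactly (c). The converse within Case~2 runs the same argument backwards: if (a)--(c) hold, then by Lemma~\ref{lem: new statistic equation} and faithfulness the only structural form of $\tilde{e}_{(X_i,\tilde{\bm{e}}_s)}$ consistent with all confounders being singletons seen with the same cumulant (and matching an $X_{i'}$-solution) is $\epsilon_i+U$, i.e.\ $L_i$ is a source.

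I expect the main obstacle to be the genericity bookkeeping. Concretely, when $L_i$ is not a source --- so $\tilde{e}_{(X_i,\tilde{\bm{e}}_s)}$ carries at least one extra $\epsilon_h$ --- one must rule out, off a measure-zero set, that all the sets $\mathrm{Conf}(\tilde{e}_{(X_i,\tilde{\bm{e}}_s)},X_j)$ are nonetheless simultaneously singletons with equal solved cumulants (and, when $\mathcal{X}_i\neq\emptyset$, matching one $X_{i'}$-solution). This reduces to checking that the relevant rank-deficiency and cumulant-equality events are cut out by non-trivial polynomials in the model parameters; A4 is what ties the confounder structure to the observed dependence pattern among the $X_j$, and A5 is what keeps the solved $c^{(k)}(\cdot)$ from degenerating. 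The delicate part is tracking precisely, via Definition~\ref{def: tilde e general}, Lemma~\ref{lem: new statistic equation}, and Assumptions A1 and A3, which $\epsilon_h$ and which $e_h$ occur in $\tilde{e}_{(X_i,\tilde{\bm{e}}_s)}$ and in each $X_j$.
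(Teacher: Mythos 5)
Your overall architecture matches the paper's: the direction "source $\Rightarrow$ Case~1 or Case~2" is read off from Theorem~\ref{thm: proposed cluster root generalized} together with Lemma~\ref{lem: new statistic equation} (so that $\tilde{e}_{(X_i,\tilde{\bm{e}}_s)}=\epsilon_i+U$ with $U$ a combination of observed disturbances, whence every confounder set is $\{\epsilon_i\}$, every solved cumulant equals $\mathrm{cum}^{(k)}(\epsilon_i)$, and the two-solution ambiguity for $X_{i'}$ is absorbed into (c)); your Case~1 counting argument via Assumption A2 is also the paper's. The gap is in the converse, which you reduce to "genericity bookkeeping" without supplying the witness that makes the bad event a non-trivial polynomial condition. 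Note first that condition (a) \emph{can} hold when $L_i$ is not a source: in the canonical model over $\tilde{e}_{(X_i,\tilde{\bm{e}}_s)}$ and $X_j$ the latent ancestors $\epsilon_s,\dots,\epsilon_i$ may merge into a single confounder (this is exactly the situation of Lemma~\ref{lem: X has one parent}), so your claim that "the only structural form consistent with all confounders being singletons ... is $\epsilon_i+U$" is not correct as stated. The contradiction must therefore come from (b) or (c), and one has to say between which pair of variables the cumulant mismatch occurs.

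The paper's necessity proof supplies exactly this. Assuming (a) holds and $L_i$ is not a source, let $L_s$ be the true source of $\mathcal{G}^{-}(\{L_1,\dots,L_{s-1}\})$ and $X_s$ its representative; since $L_s$ is necessarily an ancestor of $L_i$, $X_s$ is among the tested $X_j$, and Lemma~\ref{lem: X has one parent} gives $c^{(k)}_{(\tilde{e}_{(X_i,\tilde{\bm{e}}_s)}\to X_s)}(L^{(i,s)})=\mathrm{cum}^{(k)}(\alpha^{ll}_{is}\epsilon_s)$. If $L_i$ has a latent child $L_j$, the corresponding value for $X_j$ is $\mathrm{cum}^{(k)}\bigl(\epsilon_i+\sum_{h=s}^{i-1}\alpha^{ll}_{ih}\epsilon_h\bigr)$; these are distinct polynomials in the model parameters (one contains $\mathrm{cum}^{(k)}(\epsilon_i)$, the other does not), so (b) fails generically. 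If $L_i$ has no latent child, A2 forces $\mathcal{X}_i\ne\emptyset$, and (b) may then hold vacuously (possibly only $X_s$ is dependent on $\tilde{e}_{(X_i,\tilde{\bm{e}}_s)}$), so the failure must be located in (c), where the same comparison between $X_{i'}$ and $X_s$ applies. Your plan neither identifies $X_s$ as the witness nor notices that the two sub-cases land on different conditions, so as written it does not close the argument.
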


To determine whether $L_i$ is a latent source of $\mathcal{G}^{-}(\{L_1,\ldots,L_{s-1}\})$, we first examine, using Condition 1 of Proposition \ref{thm: latent number}, whether $\lvert \mathrm{Conf}(\tilde{e}_{(X_i,\tilde{\bm{e}}_s)},X_j)\rvert = 1$, as in Section \ref{subsec: latent source 1}. If $c^{(k)}_{(\tilde{e}_{(X_i,\tilde{\bm{e}}_s)} \to X_j)}(L^{(i, j)})$ are identical for $X_j \in (\bm{X}_{\mathrm{oc}} \cup \mathcal{X}_i) \setminus \{X_1,\ldots, X_{s-1}, X_i\}$, $L_i$ is identified as a latent source. As in the previous case, when $\mathcal{X}_{i} \ne \emptyset$ and $X_{i} \in \mathrm{Anc}(X_{i^{\prime}})$, the equation (\ref{eq: system}) yields two distinct solutions for the higher-order cumulants of latent confounders. Here, we determine that $L_i$ is a latent source in $\mathcal{G}^{-}(\{L_1,\ldots,L_{s-1}\})$ if either of two solutions of (\ref{eq: system}) equals to $c^{(k)}_{(\tilde{e}_{(X_{i}, \tilde{\bm{e}}_{s})} \to X_{j})}(L^{(i, j)})$ for $X_j \in \bm{X}_{\mathrm{oc}} \setminus \{X_1, \ldots, X_{s-1}, X_i\}$.

\begin{algorithm}[t]
	\caption{Finding subsequent latent sources} 
	\label{alg: proposed causal order second}
    \footnotesize
	\begin{algorithmic}[1]
		\Require $\bm{X}_{\mathrm{oc}}$, $\hat{\mathcal{C}}$, and $\mathcal{A}_{L}$
		\Ensure $\hat{\mathcal{C}}$ and $\mathcal{A}_{L}$
            \State{Apply Corollary \ref{col: proposed cluster root generalized} to find the set of latent sources $\bm{L}_{0}$ in $\mathcal{G}^-(\bm{L}_s)$}
            \If{$\bm{L}_{0} = \emptyset$}
                \State{\textbf{return} $\hat{\mathcal{C}}$ and $\mathcal{A}_{L}$}
            \EndIf
            \If{$\vert \bm{L}_{0} \vert \ge 2$}
            \ForAll{pairs $X_{i}, X_{j} \in \left(\bigcup_{k: L_{k} \in \bm{L}_{0}} \hat{C}_k \right) \cap \bm{X}_{\mathrm{oc}}$}
                \If{$\tilde{e}_{(X_{i}, \tilde{\bm{e}}_{s})} \notindep X_{j}$}
                \State{Merge $\hat{C}_{j}$ into $\hat{C}_{i}$}
                \State{$\hat{\mathcal{C}} \gets \hat{\mathcal{C}} \setminus \{\hat{C}_j\}$, $\hat{\bm{L}} \gets \hat{\bm{L}} \setminus \{L_j\}$, $\bm{X}_{\mathrm{oc}} \gets \bm{X}_{\mathrm{oc}} \setminus \{X_j\}$, 
                $\mathcal{A}_L \gets \mathcal{A}_L \setminus \{\mathrm{Anc}(L_j)\}$}
                \EndIf
            \EndFor
            \EndIf
            \ForAll{$X_{i} \in \left(\bigcup_{k: L_{k} \in \bm{L}_{0}} \hat{C}_k \right) \cap \bm{X}_{\mathrm{oc}}$}
                \State{$\bm{X}_{\mathrm{oc}}^{(i)} \gets \emptyset$}
                \ForAll{$X_j \in \bm{X}_{\mathrm{oc}} \setminus \{X_i\}$}
                    \If{$X_j \notindep \tilde{e}_{(X_{i}, \tilde{\bm{e}}_{s})}$}
                        \State{$\mathrm{Anc}(L_j) \gets \mathrm{Anc}(L_j) \cup \{L_i\}$, $\bm{X}_{\mathrm{oc}}^{(i)} \gets \bm{X}_{\mathrm{oc}}^{(i)} \cup \{X_j\}$}
                    \EndIf
                \EndFor
                \State{$\hat{\mathcal{C}}, \mathcal{A}_{L} \gets \text{ Algorithm \ref{alg: proposed causal order second} }(\bm{X}^{(i)}_{\mathrm{oc}}, \hat{\mathcal{C}}, \mathcal{A}_{L})$}
            \EndFor
            \\ \Return $\hat{\mathcal{C}}$ and $\mathcal{A}_{L}$
	\end{algorithmic} 
\end{algorithm}

\begin{algorithm}[t]
	\caption{Finding the ancestral relationships between latent variables} 
	\label{alg: proposed causal order overall}
    \footnotesize
	\begin{algorithmic}[1]
		\Require $\bm{X}$, $\mathcal{A}_O$, and ${\hat{\mathcal{C}}}$
		\Ensure $\hat{\mathcal{C}}$ and $\mathcal{A}_{L}$
        \State{$\mathcal{A}_{L}\to \emptyset$}
        \ForAll{mutually dependent $\bm{X}_{\mathrm{oc}}$}
            \State{$\bm{X}_{\mathrm{oc}},\hat{\mathcal{C}}, \mathcal{A}_{L} \gets \text{ Algorithm \ref{alg: proposed causal order first} }(\bm{X}_{\mathrm{oc}}, \hat{\mathcal{C}}, \mathcal{A}_{L})$}
            \State{$\hat{\mathcal{C}}, \mathcal{A}_{L} \gets \text{ Algorithm \ref{alg: proposed causal order second} }(\bm{X}_{\mathrm{oc}}, \hat{\mathcal{C}}, \mathcal{A}_{L})$}
        \EndFor
        \\ \Return $\hat{\mathcal{C}}$ and $\mathcal{A}_{L}$
	\end{algorithmic} 
\end{algorithm}


If multiple latent sources are identified for any element in a mutually dependent maximal subset of $\bm{X}_{\mathrm{oc}} \setminus \{X_1,\ldots, X_{s-1}\}$, the corresponding clusters must be merged.
 As latent sources are successively identified, the correct set of latent variables $\bm{L}$, the ancestral relationships among $\bm{L}$, and the correct clusters are also successively identified. 

The procedure of Section \ref{subsec: latent source 2} is presented in Algorithm~\ref{alg: proposed causal order second}. Algorithm \ref{alg: proposed causal order overall} combines Algorithms~\ref{alg: proposed causal order first} and~\ref{alg: proposed causal order second} to provide the complete procedure for Stage II. 
\begin{example}
For the model in Figure~\ref{fig: impure children} (a), the estimated clusters obtained in Stage I are $\{X_1\}$, $\{X_2\}$, $\{X_3, X_5\}$, and $\{X_4\}$, with their corresponding latent parents denoted as $L_1$, $L_2$, $L_3$, and $L_4$, respectively. Set $\bm{X}_{\mathrm{oc}}=\{X_1,X_2,X_3,X_4\}$. 

Only $X_1$ satisfies Corollary~\ref{col: proposed cluster root}, and thus $L_1$ is identified as the initial latent source. Then, we remove $X_{1}$ from $\bm{X}_{\mathrm{oc}}$ and update it to $\bm{X}_{\mathrm{oc}} = \{X_2, X_3, X_4\}$. Next, since it can be shown that only $L_2$ satisfies Corollary~\ref{col: proposed cluster root generalized}, i.e.,
\begin{align*}
c^{(3)}_{(\tilde{e}_{(X_{2}, X_{1})}\to X_{3})}(L^{(2, 3)}) &= c^{(3)}_{(\tilde{e}_{(X_{2}, X_{1})}\to X_{4})}(L^{(2, 4)}),
\end{align*}
it follows that $L_{2}$ is the latent source of $\mathcal{G}^{-}(\{L_{1}\})$. 
Similarly, we remove $X_{2}$ from the current $\bm{X}_{\mathrm{oc}}$ and update it to $\bm{X}_{\mathrm{oc}} = \{X_{3}, X_{4}\}$.

Let $X_{3'} = X_{5}$. In $\mathcal{G}^{-}(\{L_{1}, L_{2}\})$, we compute $\tilde{e}_{(X_{3}, \tilde{\bm{e}}_{3})}$ and $\tilde{e}_{(X_{4}, \tilde{\bm{e}}_{3})}$, and find that
\begin{align*}
    c^{(3)}_{(\tilde{e}_{(X_{3}, \tilde{\bm{e}}_{3})} \to X_{4})}(L^{(3, 4)}) = c^{(3)}_{(\tilde{e}_{(X_{3}, \tilde{\bm{e}}_{3})} \to X_{5})}(L^{(3, 5)}), \quad
    \vert \bm{X}_{\mathrm{oc}} \cup \emptyset \setminus \{X_{4}\}\vert = 1, 
\end{align*}
indicating both $L_{3}$ and $L_{4}$ are latent sources by Corollary \ref{col: proposed cluster root generalized}. 
Furhtermore, we conclude that $\{X_3, X_5\}$ and $\{X_4\}$ should be merged into one cluster confounded by $L_3$. 
\end{example}
\subsection{Stage III: Identifying Causal Structure among Latent Variables}
\label{sec: redundant edges}
By the end of Stage II, the clusters of observed variables have been identified, as well as the ancestral relationships among latent variables and among observed variables. The ancestral relationships among $\bm{L}$ alone do not uniquely determine the complete causal structure of $\bm{L}$. Here, we propose a bottom-up algorithm to estimate the causal structure of the latent variables. Note that if the ancestral relationships among $\bm{L}$ are known, a causal order of $\bm{L}$ can also be obtained. Theorem \ref{thm: proposed cluster root independence 2} provides an estimator of the causal coefficients between latent variables. 
\begin{theorem}
\label{thm: proposed cluster root independence 2}
Assume that $\mathrm{Anc}(L_{i}) = \{L_1, \dots, L_{i-1}\}$ with the causal order $L_1 \prec \dots \prec L_{i-1}$. Let $X_{1}, \dots, X_{i}$ be the observed children of $L_1, \dots, L_{i}$ with the highest causal order, respectively. Define $\tilde{r}_{i,k-1}$ as
\begin{align*}
    \tilde{r}_{i,k-1} = \left\{
    \begin{array}{ll}
        X_i, &  k=1\\
        X_{i} - \sum_{h= i-(k-1)}^{i-1}{a}_{ih}X_{h}, & k \ge 2
    \end{array}
    \right.
\end{align*}
When we set $\lambda_{11} = \dots = \lambda_{ii} = 1$, $a_{i, i-k} = \rho_{(\tilde{r}_{i, k-1}, \tilde{e}_{(X_{i-k}, \tilde{\bm{e}}_{i-k})})}$ generically holds. In addition, under Assumption A4, it holds generically that $a_{i,i-k}=0$ if and only if $\tilde{r}_{i, k-1} \indep  \tilde{e}_{(X_{i-k}, \tilde{\bm{e}}_{i-k})}$. 
\end{theorem}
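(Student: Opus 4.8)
The plan is to argue by induction on $k=1,2,\dots$, establishing at the $k$-th step both an explicit linear representation of $\tilde{r}_{i,k-1}$ in the mutually independent disturbances $\{\epsilon_h\}\cup\{e_h\}$ and the identity $a_{i,i-k}=\rho_{(\tilde{r}_{i,k-1},\tilde{e}_{(X_{i-k},\tilde{\bm{e}}_{i-k})})}$. As preparation, note that under the normalization $\lambda_{11}=\dots=\lambda_{ii}=1$ and Assumptions A1 and A3 each representative obeys $X_h=L_h+e_h$ for $h=1,\dots,i$, because $X_h$ has no observed parents and its unique latent parent is $L_h$. Since $\mathrm{Anc}(L_i)=\{L_1,\dots,L_{i-1}\}$, write $L_i=\epsilon_i+\sum_{h=1}^{i-1}a_{ih}L_h$ (with $a_{ih}=0$ when $L_h\notin\mathrm{Pa}(L_i)$) and $L_h=\epsilon_h+\sum_{h'<h}\alpha^{ll}_{hh'}\epsilon_{h'}$; and recall from Lemma~\ref{lem: new statistic equation} (case $i=s$) that $\tilde{e}_{(X_{i-k},\tilde{\bm{e}}_{i-k})}=\epsilon_{i-k}+U_{[i-k]}$ where $U_{[i-k]}$ is a linear combination of $\{e_1,\dots,e_{i-k}\}$ with unit coefficient on $\epsilon_{i-k}$.

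For the inductive step, suppose $a_{i,i-1},\dots,a_{i,i-(k-1)}$ already coincide with the true structural coefficients (the base case $k=1$ being $\tilde{r}_{i,0}=X_i$). Substituting $X_h=L_h+e_h$ into $\tilde{r}_{i,k-1}=X_i-\sum_{h=i-(k-1)}^{i-1}a_{ih}X_h$ and inserting the structural equation of $L_i$, the latent sub-sum $\sum_{h=i-k+1}^{i-1}a_{ih}L_h$ cancels in full, so that
\[
\tilde{r}_{i,k-1}=\epsilon_i+\sum_{h=1}^{i-k}a_{ih}L_h+e_i-\sum_{h=i-k+1}^{i-1}a_{ih}e_h.
\]
Expanding $L_h$ for $h\le i-k$ in terms of the $\epsilon$'s shows $\sum_{h=1}^{i-k}a_{ih}L_h$ is a linear combination of $\epsilon_1,\dots,\epsilon_{i-k}$ whose coefficient on $\epsilon_{i-k}$ is exactly $a_{i,i-k}$ (no $L_h$ with $h<i-k$ loads on $\epsilon_{i-k}$). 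Thus $\tilde{r}_{i,k-1}$ equals $\epsilon_i+a_{i,i-k}\epsilon_{i-k}$ plus a linear combination of $\epsilon_1,\dots,\epsilon_{i-k-1}$ plus $e_i-\sum_{h=i-k+1}^{i-1}a_{ih}e_h$; its $e$-support lies in $\{i\}\cup\{i-k+1,\dots,i-1\}$, disjoint from $\{1,\dots,i-k\}$. Hence $\epsilon_{i-k}$ is the unique disturbance common to $\tilde{r}_{i,k-1}$ and $\tilde{e}:=\tilde{e}_{(X_{i-k},\tilde{\bm{e}}_{i-k})}$, with respective loadings $a_{i,i-k}$ and $1$.

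By multilinearity of the fourth-order cumulant and the fact that joint cumulants of a tuple of distinct mutually independent variables vanish,
\[
\mathrm{cum}(\tilde{r}_{i,k-1},\tilde{r}_{i,k-1},\tilde{e},\tilde{e})=a_{i,i-k}^2\,\mathrm{cum}^{(4)}(\epsilon_{i-k}),\qquad \mathrm{cum}(\tilde{r}_{i,k-1},\tilde{e},\tilde{e},\tilde{e})=a_{i,i-k}\,\mathrm{cum}^{(4)}(\epsilon_{i-k}).
\]
Assumption A5 gives $\mathrm{cum}^{(4)}(\epsilon_{i-k})\ne 0$, and for a generic model $a_{i,i-k}\ne 0$ whenever the edge $L_{i-k}\to L_i$ is present, so the denominator of $\rho$ is nonzero and the ratio equals $a_{i,i-k}$, which closes the induction. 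For the final assertion: when $a_{i,i-k}\ne 0$ the two statistics share the non-Gaussian disturbance $\epsilon_{i-k}$ and indeed $\mathrm{Cov}(\tilde{r}_{i,k-1},\tilde{e})=a_{i,i-k}\mathrm{Var}(\epsilon_{i-k})\ne 0$, so they are dependent; when $a_{i,i-k}=0$ the representation above shows $\tilde{r}_{i,k-1}$ carries no $\epsilon_{i-k}$ and shares no source with $\tilde{e}$, so they are independent and $\rho=0=a_{i,i-k}$. Assumption A4 is invoked to exclude the non-generic parameter choices under which a structurally connected pair could be independent, giving $a_{i,i-k}=0\iff\tilde{r}_{i,k-1}\indep\tilde{e}_{(X_{i-k},\tilde{\bm{e}}_{i-k})}$.

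The step I expect to be the main obstacle is the cancellation in the inductive step together with the support disjointness: one has to verify that subtracting $\sum_{h>i-k}a_{ih}X_h$ removes not merely the direct contributions $a_{ih}L_h$ to $L_i$ but also all the indirect $\epsilon$-mass carried inside those $L_h$ (because $X_h$ contains the whole of $L_h$), so that precisely $\epsilon_i+\sum_{h\le i-k}a_{ih}L_h$ remains, and that the residual $e$-terms of $\tilde{r}_{i,k-1}$ and $\tilde{e}_{(X_{i-k},\tilde{\bm{e}}_{i-k})}$ have disjoint index sets; only then does $\epsilon_{i-k}$ emerge as the unique shared source and the cumulant ratio collapse to $a_{i,i-k}$.
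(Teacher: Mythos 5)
Your proposal is correct and follows essentially the same route as the paper's proof: both reduce $\tilde{r}_{i,k-1}$ to $\epsilon_i + a_{i,i-k}\epsilon_{i-k}$ plus a linear combination of $\epsilon_1,\dots,\epsilon_{i-k-1}$ and of $e_{i-k+1},\dots,e_i$, invoke Lemma~\ref{lem: new statistic equation} to write $\tilde{e}_{(X_{i-k},\tilde{\bm{e}}_{i-k})}=\epsilon_{i-k}+U_{[i-k]}$, observe that $\epsilon_{i-k}$ is the unique shared disturbance, and conclude via the fourth-cumulant ratio and Darmois--Skitovich. The inductive framing over $k$ is a presentational difference only; the computation at each step coincides with the paper's direct calculation.
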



If the only information available is the ancestral relationships among $\{L_1, \dots, L_i\}$, we cannot determine whether there is an edge $L_{i-k} \to L_i$ in $\mathcal{G}$. 
However, according to Theorem~\ref{thm: proposed cluster root independence 2}, if $\tilde{r}_{i, k-1} \indep \tilde{e}_{(X_{i-k}, \tilde{\bm{e}}_{i-k})}$, then $a_{i,i-k}=0$, and thus it follows that $L_{i-k} \to L_i$ does not exist.

Algorithm \ref{alg: proposed cut edge} describes how Theorem \ref{thm: proposed cluster root independence 2} is applied to estimate the causal structure among $\bm{L}$. 
\begin{example}
For the model in Figure \ref{fig: impure children} (a), the estimated causal order of latent variables is $L_{1} \prec L_{2} \prec L_{3}$ with $\bm{X}_{\mathrm{oc}} = \{X_1, X_2, X_3\}$. Assume initially that $L_{1}$, $L_{2}$, and $L_{3}$ form a complete graph. Then $X_1$, $X_2$, $X_3$, and $\tilde{e}_{(X_{2},\tilde{\bm{e}}_{2})}$ are 
\begin{align*}
    X_{1} &= \epsilon_{1} + e_{1}, \quad
    X_{2} = a_{21}\epsilon_{1} + \epsilon_{2} + e_{2}, \\
    X_{3} &= (a_{21}a_{32} + a_{31})\epsilon_{1} + a_{32}\epsilon_{2} + \epsilon_{3}+ e_{3}, \\
    \tilde{e}_{(X_{2}, \tilde{\bm{e}}_{2})} &= \tilde{e}_{(X_{2}, X_{1})} = \epsilon_{2} + e_2 - a_{21}e_{1}.
\end{align*}
We estimate $a_{32}$ and $a_{31}$ using Theorem \ref{thm: proposed cluster root independence 2} as follows: 
\begin{align*}
    {a}_{32} &= \rho_{(X_{3}, \tilde{e}_{(X_{2}, \tilde{\bm{e}}_{2})})}, \\
    \tilde{r}_{31} &= X_{3} - {a}_{32}X_{2} = a_{31}\epsilon_{1} + \epsilon_{3} -a_{32}e_2 + e_{3}, \\
    {a}_{31} &= \rho_{(\tilde{r}_{31}, X_{1})}. 
\end{align*}
Thus, if $\tilde{r}_{31} \indep X_1$, then $a_{31}=0$. In this case, we can conclude that $L_{1}\to L_{3}$ does not exist. 
\end{example}

\begin{algorithm}[H]
\caption{Finding causal structure among latent variables}
\label{alg: proposed cut edge}
\footnotesize
\begin{algorithmic}[1]
\Require $\bm{X}_{\mathrm{oc}}$, $\bm{L}$, $\mathcal{A}_{L}$
\Ensure An adjacency matrix $\bm{A}_\mathrm{adj}$ of $\bm{L}$
\Function{Adjacency}{$\bm{X}_{\mathrm{oc}}$, $L_i$, $\bm{L}_\mathrm{open}$, $\bm{A}_\mathrm{adj}$, $\tilde{r}_i$}
  \If{$\vert \bm{L}_\mathrm{open}\vert = 0$}
    \State{\textbf{return} $\bm{A}_\mathrm{adj}$}
  \EndIf
  \State{Initialize $\bm{L}_{\mathrm{next}} \gets \emptyset$}
  \ForAll{$L_{j} \in \bm{L}_\mathrm{open}$}
    \State{$\hat{a}_{ij} \gets 0$, $\bm{L}_\mathrm{next} \gets \bm{L}_\mathrm{next} \cup \mathrm{Pa}(L_{j})$}
    \If{$\exists \{L_{k}, L_{h}\} \subset \bm{L}_\mathrm{next}$ s.t. $L_k \in \mathrm{Anc}(L_h)$}
        \State{$\bm{L}_\mathrm{next} \gets \bm{L}_\mathrm{next} \setminus \{L_k\}$}
    \EndIf
    \If{$\tilde{r}_{i} \notindep  \tilde{e}_{(X_{j}, \tilde{\bm{e}}_{j})}$}
        \State{$\hat{a}_{ij} \gets \text{ an empirical counterpart of } a_{ij}$}
    \EndIf
    \State{$\tilde{r}_i \gets \tilde{r}_i - \hat{a}_{ij}X_j$} 
    \If{$\hat{a}_{ji} \ne 0$}
        \State{$\bm{A}_\mathrm{adj}[i,j] \gets 1$}
    \EndIf
  \EndFor
  \State{$\bm{L}_\mathrm{open}\gets \bm{L}_\mathrm{next}$}
  \State $\bm{A}_\mathrm{adj} \gets$ {\Call{Adjacency}{$\bm{X}_{\mathrm{oc}}$, $L_i$, $\bm{L}_\mathrm{open}$, $\bm{A}_\mathrm{adj}$, $\tilde{r}_i$}}
\EndFunction
\Statex
\Function{Main}{$\bm{X}_{\mathrm{oc}}$, $\bm{L}$, $\mathcal{A}_{L}$}
\State{Initialize $\bm{A}_\mathrm{adj} \gets \{0\}_{\vert \bm{L} \vert \times \vert \bm{L} \vert}$}
\ForAll{$L_i \in \bm{L}$}
    \State{$\tilde{r}_i \gets X_i$, $\bm{L}_\mathrm{open} \gets \mathrm{Pa}(L_{i})$}
    \If{$\exists \{L_{k}, L_{h}\} \subset \bm{L}_\mathrm{open}$ s.t. $L_k \in \mathrm{Anc}(L_h)$}
        \State{$\bm{L}_\mathrm{open} \gets \bm{L}_\mathrm{open} \setminus \{L_k\}$}
    \EndIf
    \State $\bm{A}_\mathrm{adj} \gets$ \Call{Adjacency}{$\bm{X}_{\mathrm{oc}}$, $L_i$, $\bm{L}_\mathrm{open}$, $\bm{A}_\mathrm{adj}$, $\tilde{r}_i$}
\EndFor
\State \Return $\bm{A}_\mathrm{adj}$
\EndFunction
\end{algorithmic}
\end{algorithm}

\subsection{Summary}
This section integrates Algorithms \ref{alg: proposed cluster}, \ref{alg: proposed causal order overall}, and \ref{alg: proposed cut edge} into Algorithm~\ref{alg: proposed overall}, which identifies the clusters of observed variables, the causal structure between latent variables, and the ancestral relationships between observed variables under the assumptions A1-A5. Since the causal clusters $\hat{\mathcal{C}}$ have been correctly identified, the directed edges from $\bm{L}$ to $\bm{X}$ are also identified. Although the ancestral relationships among observed variables can be identified, their exact causal structure remains undetermined. 
In conclusion, we obtain the following result: 
\begin{theorem}
\label{thm: final}
    Given observed data generated from an LvLiNGAM $\mathcal{M}_{\mathcal{G}}$ in (\ref{model: lvLiNGAM General}) that satisfies the assumptions A1-A5, the proposed method can identify the latent causal structure among $\bm{L}$, causal edges from $\bm{L}$ to $\bm{X}$, and ancestral relationships among $\bm{X}$. 
\end{theorem}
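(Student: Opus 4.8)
The statement synthesizes the correctness of the three stages, so the plan is to verify each stage and combine the outputs. First I would show that Algorithm~\ref{alg: proposed cluster} returns a collection $\hat{\mathcal{C}}$ in which every estimated cluster is contained in a true causal cluster, together with the exact ancestral relations among the observed variables. Containment follows from Theorem~\ref{thm: clusters in proposed method} (a pair passing every Triad test lies in one true cluster) and Proposition~\ref{thm: merging cluster} (overlapping estimated clusters merge into a subset of a true cluster); Proposition~\ref{thm: latent number} detects every ancestral pair $X_i\in\mathrm{Anc}(X_j)$ through the rank of $A^{(k_1,k_2)}_{(X_i\to X_j)}$, and Assumption~\ref{A3} ensures such a pair shares a true cluster, so merging along detected ancestral edges is sound. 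The output may still be over-segmented, which is repaired in Stage~II.

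\textbf{Stage II.} Next I would prove, by induction on the recursion depth $s$, that the composition of Algorithms~\ref{alg: proposed causal order first}--\ref{alg: proposed causal order overall} (i) identifies the $s$-th latent source of the chain $\mathcal{G},\mathcal{G}^-(\{L_1\}),\mathcal{G}^-(\{L_1,L_2\}),\dots$, (ii) merges exactly those estimated clusters that correspond to one true cluster, and (iii) records the correct ancestral relations $\mathcal{A}_L$ among $\bm{L}$. The base case $s=1$ is Theorem~\ref{thm: proposed cluster root}, implemented through Corollary~\ref{col: proposed cluster root} (with Assumption~\ref{A2} covering the trivial sub-case); when two source-labelled latents turn out to be dependent their clusters are merged, which is precisely the over-segmentation fix. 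For the inductive step I would use Lemma~\ref{lem: new statistic equation} to certify that $\tilde{e}_{(X_i,\tilde{\bm{e}}_s)}$ equals $X_i$ with the information of $\epsilon_1,\dots,\epsilon_{s-1}$ replaced by that of $e_1,\dots,e_{s-1}$, so it behaves as $X_i$ conditioned on $\{L_1,\dots,L_{s-1}\}$ (here Assumption~\ref{A5} guarantees that each coefficient $\rho_{(\cdot,\cdot)}$ in the construction is finite and nonzero); Theorem~\ref{thm: independent e tilde generalized} then recovers the conditionally dependent blocks, and Theorem~\ref{thm: proposed cluster root generalized}, implemented via Corollary~\ref{col: proposed cluster root generalized} together with Proposition~\ref{lem: estimate e cumulant} for the values $c^{(k)}_{(\tilde{e}_{(X_i,\tilde{\bm{e}}_s)}\to X_j)}(L^{(i,j)})$, identifies the next source in each block; the impure sub-case $\mathcal{X}_i\neq\emptyset$, where (\ref{eq: system}) returns two candidate cumulants, is handled exactly as in the discussion following Corollary~\ref{col: proposed cluster root}. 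Termination holds because every recursive call strictly shrinks $\bm{X}_{\mathrm{oc}}$. Hence, at the end of Stage~II, $\hat{\mathcal{C}}$ is the true clustering, and $\bm{L}$ together with all its ancestral relations---and therefore a causal order of $\bm{L}$---is known.

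\textbf{Stage III and conclusion.} Then I would check that Algorithm~\ref{alg: proposed cut edge} recovers each latent edge. Fixing $L_i$ with $\mathrm{Anc}(L_i)=\{L_1,\dots,L_{i-1}\}$ in causal order, Theorem~\ref{thm: proposed cluster root independence 2} gives $a_{i,i-k}=\rho_{(\tilde{r}_{i,k-1},\tilde{e}_{(X_{i-k},\tilde{\bm{e}}_{i-k})})}$ and, under Assumption~\ref{A4}, $a_{i,i-k}=0$ iff $\tilde{r}_{i,k-1}\indep\tilde{e}_{(X_{i-k},\tilde{\bm{e}}_{i-k})}$. I would then argue that the bottom-up sweep over ancestor layers in the routine \textsc{Adjacency} visits each candidate edge $L_{i-k}\to L_i$ once with the correctly updated residual $\tilde{r}_i$, so $\bm{A}_\mathrm{adj}$ is exactly the latent adjacency matrix. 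Finally, since Assumption~\ref{A1} assigns a single latent parent to each observed variable and $\hat{\mathcal{C}}$ is the true clustering, the edges $\bm{L}\to\bm{X}$ are pinned down, while the ancestral relations among $\bm{X}$ were produced in Stage~I; combining the three outputs proves the theorem.

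\textbf{Main obstacle.} The hard part is the inductive step of Stage~II. One must show that the measure-zero exceptional sets on which Propositions~\ref{thm: latent number}--\ref{lem: estimate e cumulant}, Lemma~\ref{lem: new statistic equation}, and Theorems~\ref{thm: independent e tilde generalized} and~\ref{thm: proposed cluster root generalized} may fail do not accumulate along the recursion tree---so that outside a single null set of model parameters every test on every branch behaves as stated---and, simultaneously, that the merge rule fires exactly when two source-labelled latents lie in one true cluster, neither prematurely (under-segmentation) nor too late (residual over-segmentation). Propagating these ``generic'' guarantees through the interaction between the $\tilde{e}$-residual construction and impure children ($\mathcal{X}_i\neq\emptyset$) is where the bulk of the argument lies.
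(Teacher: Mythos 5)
Your plan is correct and follows essentially the same route as the paper, which presents Theorem~\ref{thm: final} as the direct synthesis of the stage-wise guarantees (Theorem~\ref{thm: clusters in proposed method} and Propositions~\ref{thm: latent number}--\ref{thm: merging cluster} for Stage~I, Theorems~\ref{thm: proposed cluster root}--\ref{thm: proposed cluster root generalized} with Corollaries~\ref{col: proposed cluster root} and~\ref{col: proposed cluster root generalized} for Stage~II, and Theorem~\ref{thm: proposed cluster root independence 2} for Stage~III) rather than giving a separate standalone argument. The genericity-propagation issue you flag as the main obstacle is real but is handled in the paper only implicitly, by taking the union of the finitely many measure-zero exceptional sets arising along the recursion.
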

\begin{algorithm}[H]
	\caption{Identify the Causal Structure among Latent Variables} 
	\label{alg: proposed overall} 
    \footnotesize
	\begin{algorithmic}[1]
		\Require $\bm{X}=(X_1,\ldots,X_p)^\top$
		\Ensure $\mathcal{A}_{O}$, $\hat{\mathcal{C}}$, and $\bm{A}_\mathrm{adj}$
        \State{$\hat{\mathcal{C}}, \mathcal{A}_{O} \gets \text{ Algorithm \ref{alg: proposed cluster} }(\bm{X})$} \Comment{Estimate over-segmented clusters}
        \State{$\mathcal{A}_{L},\hat{\mathcal{C}} \gets \text{ Algorithm \ref{alg: proposed causal order overall} }(\bm{X}, \mathcal{A}_{O},\hat{\mathcal{C}})$} \Comment{Identify the causal order among latent variables}
        \State{$\bm{A}_{\mathrm{adj}} \gets \text{ Algorithm \ref{alg: proposed cut edge} }(\bm{X}_{\mathrm{oc}}, \bm{L},\mathcal{A}_{L})$} 
        \Comment{Find the causal structure among latent variables}
            \\ \Return $\mathcal{A}_{O}$, $\hat{\mathcal{C}}$, and $\bm{A}_\mathrm{adj}$
	\end{algorithmic} 
\end{algorithm}

\section{Simulations}
\label{sec: simulations}
In this section, we assess the effectiveness of the proposed method by comparing it with the algorithms proposed by Xie et al.~\cite{xie2020generalized} for estimating LiNGLaM and by Xie et al.~\cite{Xie2024GIN} for estimating LiNGLaH, as well as with ReLVLiNGAM~\cite{schkoda2024causal}, which serves as the estimation method for the canonical model with generic parameters. For convenience, we hereafter refer to both the model class introduced by Xie et al.~\cite{xie2020generalized} and its estimation algorithm as LiNGLaM, and likewise use LiNGLaH to denote both the model class and the estimation algorithm proposed by Xie et al.~\cite{Xie2024GIN}.

\subsection{Settings}
In the simulation, the true models are set to six LvLiNGAMs defined by the DAGs shown in Figures~\ref{fig: case studies} (a)-(f). We refer to these models as Models (a)-(f), respectively. All these models satisfy Assumptions A1-A3. 

All disturbances are assumed to follow a log-normal distribution, $u_{i} \sim \mathrm{Lognormal}(-1.1, 0.8)$, shifted to have zero mean by subtracting its expected value. The coefficient $\lambda_{ii}$ from $L_i$ to $X_i$ is fixed at 1. Other coefficients in $\bm{\Lambda}$ and $\bm{A}$ are drawn from $\mathrm{Uniform}(1.1, 1.5)$, while those in $\bm{B}$ are drawn from $\mathrm{Uniform}(0.5, 0.9)$. When all causal coefficients are positive, the faithfulness condition is satisfied. The higher-order cumulant of a log-normal distribution is non-zero. 

None of the models (a)-(f) is LiNGLaM or LiNGLaH. The models (a) and (b) are generic canonical models, whereas the canonical models derived from Figures 4.1 (c)-(f) do not satisfy the genericity assumption of Schkoda et al.~\cite{schkoda2024causal}. 

The sample sizes $N$ are set to $1000$, $2000$, $4000$, $8000$, and $16000$.
The number of iterations is set to $100$. We evaluate the performance of the proposed method and other methods using the following metrics.
\begin{itemize}
\setlength{\itemsep}{0pt}
\setlength{\parsep}{0pt}
\setlength{\parskip}{0pt}
    \item $N_{\mathrm{cl}}$, $N_{\mathrm{ls}}$, $N_{\mathrm{os}}$,  and $N_{\mathrm{cs}}$: The counts of iterations in which the resulting clusters, the latent structures, the ancestral relationships among $\bm{X}$, and the latent structure and the ancestral relationships among $\bm{X}$ are correctly estimated, respectively.
    \item $\mathrm{PRE}_{ll}$, $\mathrm{REC}_{ll}$, and $\mathrm{F1}_{ll}$: Averages of Precision, Recall, and F1-score of the estimated edges among latent variables, respectively, when clusters are correctly estimated.
    \item $\mathrm{PRE}_{oo}$, $\mathrm{REC}_{oo}$, and $\mathrm{F1}_{oo}$: Averages of Precision, Recall, and F1-score of the estimated causal ancestral relationships among observed variables, respectively, when clusters are correctly estimated.
\end{itemize}

\begin{figure}[t]
    \centering
    \begin{subfigure}[t]{0.31\textwidth}
        \centering
        \begin{tikzpicture}[scale=0.7]
            \node[draw, rectangle] (L_1) at (0, 2) {$L_{1}$};
            \node[draw, circle] (X_1) at (-2, 0) {$X_{1}$};
            \node[draw, circle] (X_2) at (0, 0) {$X_{2}$};
            \node[draw, circle] (X_3) at (2, 0) {$X_{3}$};
            \draw[thick, ->] (L_1) -- (X_1);
            \draw[thick, ->] (L_1) -- (X_2);
            \draw[thick, ->] (L_1) -- (X_3);
            \draw[thick, ->] (X_2) -- (X_3);
        \end{tikzpicture}
        \caption{}
    \end{subfigure}
    \hfill
    \begin{subfigure}[t]{0.31\textwidth}
        \centering
        \begin{tikzpicture}[scale=0.7]
            \node[draw, rectangle] (L_1) at (0, 2) {$L_{1}$};
            \node[draw, circle] (X_1) at (-2, 0) {$X_{1}$};
            \node[draw, circle] (X_2) at (0, 0) {$X_{2}$};
            \node[draw, circle] (X_3) at (2, 0) {$X_{3}$};
            \draw[thick, ->] (L_1) -- (X_1);
            \draw[thick, ->] (L_1) -- (X_2);
            \draw[thick, ->] (L_1) -- (X_3);
            \draw[thick, ->] (X_1) -- (X_2);
            \draw[thick, ->] (X_2) -- (X_3);
        \end{tikzpicture}
        \caption{}
    \end{subfigure}
    \hfill
    \begin{subfigure}[t]{0.31\textwidth}
        \centering
        \begin{tikzpicture}[scale=0.7]
            \node[draw, rectangle] (L_1) at (-1, 2) {$L_{1}$};
            \node[draw, rectangle] (L_2) at (1, 2) {$L_{2}$};
            \node[draw, circle] (X_1) at (-2, 0) {$X_{1}$};
            \node[draw, circle] (X_2) at (0, 0) {$X_{2}$};
            \node[draw, circle] (X_3) at (2, 0) {$X_{3}$};
            \draw[thick, ->] (L_1) -- (L_2);
            \draw[thick, ->] (L_1) -- (X_1);
            \draw[thick, ->] (L_2) -- (X_2);
            \draw[thick, ->] (L_2) -- (X_3);
            \draw[thick, ->] (X_2) -- (X_3);
        \end{tikzpicture}
        \caption{}
    \end{subfigure}


    \begin{subfigure}[t]{0.31\textwidth}
        \centering
        \begin{tikzpicture}[scale=0.7]
            \node[draw, rectangle] (L_1) at (-2, 2) {$L_{1}$};
            \node[draw, rectangle] (L_2) at (0, 2) {$L_{2}$};
            \node[draw, circle] (X_1) at (-2, 0) {$X_{1}$};
            \node[draw, circle] (X_2) at (0, 0) {$X_{2}$};
            \node[draw, circle] (X_3) at (2, 0) {$X_{3}$};
            \node[draw, circle] (X_4) at (2, 4) {$X_{4}$};
            \draw[thick, ->] (L_1) -- (L_2);
            \draw[thick, ->] (L_1) -- (X_1);
            \draw[thick, ->] (L_2) -- (X_2);
            \draw[thick, ->] (L_2) -- (X_3);
            \draw[thick, ->] (L_2) -- (X_4);
            \draw[thick, ->] (X_3) -- (X_4);
        \end{tikzpicture}
        \caption{}
    \end{subfigure}
    \hfill
    \begin{subfigure}[t]{0.31\textwidth}
        \centering
        \begin{tikzpicture}[scale=0.7]
            \node[draw, rectangle] (L_1) at (-2, 2) {$L_{1}$};
            \node[draw, rectangle] (L_2) at (0, 2) {$L_{2}$};
            \node[draw, rectangle] (L_3) at (2, 2) {$L_{3}$};
            \node[draw, circle] (X_1) at (-2, 0) {$X_{1}$};
            \node[draw, circle] (X_2) at (0, 0) {$X_{2}$};
            \node[draw, circle] (X_3) at (2, 0) {$X_{3}$};
            \node[draw, circle] (X_4) at (2, 4) {$X_{4}$};
            \draw[thick, ->] (L_1) -- (L_2);
            \draw[thick, ->] (L_2) -- (L_3);
            \draw[thick, ->] (L_1) -- (X_1);
            \draw[thick, ->] (L_2) -- (X_2);
            \draw[thick, ->] (L_3) -- (X_3);
            \draw[thick, ->] (L_3) -- (X_4);
            \draw[thick, ->] (X_3) to [in=-45, out=45](X_4);
        \end{tikzpicture}
        \caption{}
    \end{subfigure}
    \hfill
    \begin{subfigure}[t]{0.31\textwidth}
        \centering
        \begin{tikzpicture}[scale=0.7]
            \node[draw, rectangle] (L_1) at (-2, 2) {$L_{1}$};
            \node[draw, rectangle] (L_2) at (0, 2) {$L_{2}$};
            \node[draw, rectangle] (L_3) at (2, 2) {$L_{3}$};
            \node[draw, circle] (X_1) at (-2, 0) {$X_{1}$};
            \node[draw, circle] (X_2) at (0, 0) {$X_{2}$};
            \node[draw, circle] (X_3) at (2, 0) {$X_{3}$};
            \node[draw, circle] (X_4) at (2, 4) {$X_{4}$};
            \draw[thick, ->] (L_1) -- (L_2);
            \draw[thick, ->] (L_2) -- (L_3);
            \draw[thick, ->] (L_1) -- (X_1);
            \draw[thick, ->] (L_2) -- (X_2);
            \draw[thick, ->] (L_3) -- (X_3);
            \draw[thick, ->] (L_3) -- (X_4);
            \draw[thick, ->] (X_3) to [in=-45, out=45](X_4);
            \draw[thick, ->] (L_1) to [in=135, out=45](L_3);
        \end{tikzpicture}
        \caption{}
    \end{subfigure}
    \caption{Six models for simulations}
    \label{fig: case studies}
\end{figure}
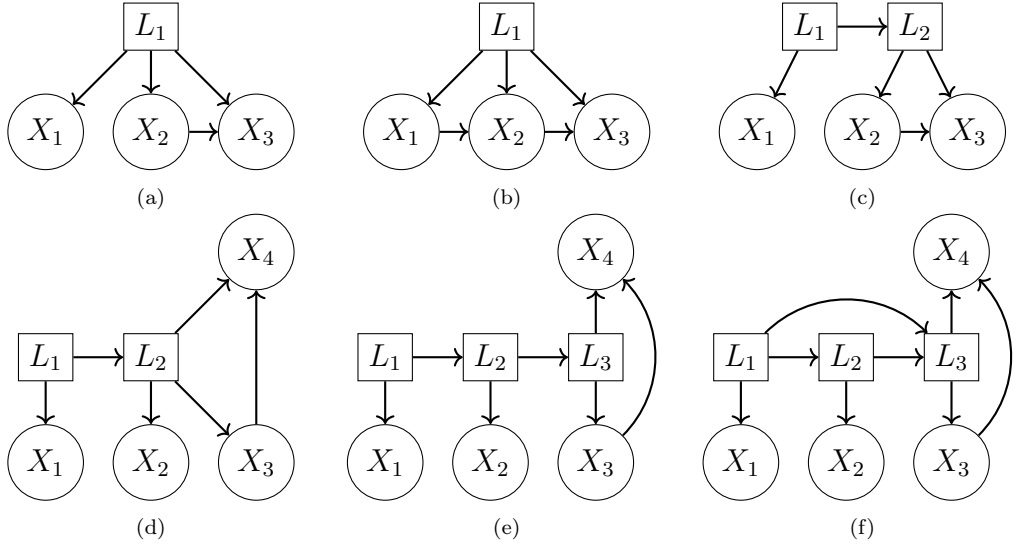

LiNGLaM and LiNGLaH assume that each cluster contains at least two observed variables. When a cluster includes only a single observed variable, these methods may fail to assign it to any cluster, resulting in it being left without an associated latent parent. Here, we treat such variables as individual clusters and assign each a latent parent. 

\subsection{Implementation}
Hilbert–Schmidt independence criterion (HSIC) \cite{Gretton2007} is employed for the independence tests in the proposed method. As HSIC becomes computationally expensive for large sample sizes, we randomly select 2,000 samples for HSIC when $N \ge 2000$. The significance level of HSIC is set to $\alpha_{\mathrm{ind}}=0.05$. 

When estimating the number of latent variables and the ancestral relationships among the observed variables, we apply Proposition \ref{thm: latent number}. Following Schkoda et al.~\cite{schkoda2024causal}, the rank of $A^{(k_{1}, k_{2})}_{(X_i \to X_j)}$ is determined from its singular values. Let $\sigma_{r}$ denote the $r$-th largest singular value of $A^{(k_{1}, k_{2})}_{(X_i \to X_j)}$ and let $\tau_{\mathrm{s}}$ be a predefined threshold. If $\sigma_{r} / \sigma_{1} \le \tau_{s}$, we set $\sigma_{r}$ to zero. To ensure termination in the estimation of the number of confounders between two observed variables, we impose an upper bound on the number of latent variables, following Schkoda et al.~\cite{schkoda2024causal}. In this experiment, we set the upper bound on the number of latent variables to two in both our proposed method and ReLVLiNGAM. 

When estimating latent sources, we use Corollaries \ref{col: proposed cluster root} and \ref{col: proposed cluster root generalized}. To check whether $\vert \mathrm{Conf}(X_i, X_j) \vert = 1$ in the canonical model over $X_i$ and $X_j$, one possible approach is to apply Proposition~\ref{thm: latent number}. Theorem~\ref{thm: one latent coef estimate} in the Appendix shows that $\vert \mathrm{Conf}(X_i, X_j) \vert = 1$ is equivalent to
\[ 
(c^{(6)}_{i, i, i, j, j, j})^{2} = c^{(6)}_{i, i, i, i, j, j}c^{(6)}_{i, i, j, j, j, j}.
\]
Based on this fact, one can alternatively check whether $\mathrm{Conf}(X_i, X_j) = 1$ by using the criterion
\begin{equation} 
\label{criterion: single confounder}
\frac{\vert (c^{(6)}_{i, i, i, j, j, j})^{2} - c^{(6)}_{i, i, i, i, j, j}c^{(6)}_{i, i, j, j, j, j} \vert}{\mathrm{max}\big((c^{(6)}_{i, i, i, j, j, j})^{2}, \vert c^{(6)}_{i, i, i, i, j, j}c^{(6)}_{i, i, j, j, j, j}\vert\big)} < \tau_{\mathrm{o}},
\end{equation}
where $\tau_{\mathrm{o}}$ is a predefined threshold. In this experiment, we compared these two approaches.

To check condition (b) of Corollary~\ref{col: proposed cluster root} and conditions (b) and (c) of Corollary~\ref{col: proposed cluster root generalized}, we use the empirical counterpart of $c^{(k)}_{(X_i \to X_{j})}(L^{(i, j)})$. In this experiment, we set $k=3$. We consider the situation of estimating the first latent source using Corollary \ref{col: proposed cluster root}. Let $\bm{c}^{(3)}_{X_{i}}$ be the set of $c^{(3)}_{(X_i \to X_{j})}(L^{(i, j)})$ for $X_{j} \in (\bm{X}_{\mathrm{oc}} \cup \mathcal{X}_{i}) \setminus \{X_{i}\}$. To show that $L_{i}$ is a latent source, it is necessary to demonstrate that all $c^{(3)}_{(X_i \to X_{j})}(L^{(i, j)}) \in \bm{c}^{(3)}_{X_{i}}$ are identical. 
Let $\bar{c}_i$ be 
\[
\bar{c}_i := \frac{1}{\vert \bm{c}^{(3)}_{X_i} \vert }\sum_{c \in \bm{c}^{(3)}_{X_{i}}}c
\]
and $s^2_i$ be the empirical counterpart of 
\[
\frac{1}{\vert \bm{c}^{(3)}_{X_i} \vert }\sum_{c \in \bm{c}^{(3)}_{X_{i}}}(c - \bar{c}_i)^{2}.
\]
Then, we regard $L_i$ as a latent source if $s^2_i$ is smaller than a given threshold $\tau_{m1}$. As mentioned previously, when $\mathcal{X}_i \ne \emptyset$, $c^{(3)}_{(X_i \to X_{i^{\prime}})}(L^{(i, i^{\prime})})$ cannot be determined, since (\ref{eq: system}) yields two distinct solutions. In this case, we compute $s_i^2$ for the two solutions, and if the smaller one is less than $\tau_{m1}$, we regard $L_i$ as a latent source. 

The estimation of the second and subsequent latent sources using Corollary~\ref{col: proposed cluster root generalized} proceeds analogously, provided that $\bm{c}_{X_i}^{(3)}$ is defined as the set of $c^{(3)}_{(\tilde{e}_{(X_i,\tilde{\bm{e}}_i)} \to X_{j})}(L^{(i, j)})$ for $X_{j} \in (\bm{X}_{\mathrm{oc}} \cup \mathcal{X}_{i}) \setminus \{X_{i}\}$. However, for the threshold applied to $s_i^2$, we use $\tau_{m2}$, which is larger than $\tau_{m1}$. This is because, as the iterations proceed, $|\bm{c}^{(3)}_{X_i}|$ decreases, and hence the variance of $s_i^2$ tends to increase. It would be desirable to increase the threshold gradually as the iterations proceed. However, in this experiment, we used the same $\tau_{m2}$ from the second iteration onward. 

In this experiment,  $(\tau_o, \tau_{m1}, \tau_{m2})=(0.001, 0.001, 0.01)$. For the model in Figure \ref{fig: case studies} (a)-(c), $\tau_{s}$ was set to $0.001$, and for the models (d)-(f) $\tau_{s}$ was set to $0.005$. 

All experiments were conducted on a workstation with a 3.0 GHz Core i9 processor and 256 GB memory.
\subsection{Results and Discussions} 
Table~\ref{tab: N} reports $N_{\mathrm{cl}}$, $N_{\mathrm{ls}}$, $N_{\mathrm{os}}$, and $N_{\mathrm{cs}}$, and Table~\ref{tab: performance edge} reports $\mathrm{PRE}_{ll}$, $\mathrm{REC}_{ll}$, $\mathrm{F1}_{ll}$, $\mathrm{PRE}_{oo}$, $\mathrm{REC}_{oo}$, and $\mathrm{F1}_{oo}$ for both the proposed and existing methods. 

Since Models (a)-(f) do not satisfy the assumptions of LiNGLaM and LiNGLaH, the results of them in Table~\ref{tab: performance edge} are omitted. The canonical models derived from Models (c)–(f) are measure-zero exceptions of the generic canonical models addressed by ReLVLiNGAM and thus cannot be identified, so the results of ReLVLiNGAM for Models (c)–(f) are not reported. Models (a) and (b) each involve only a single latent variable without latent–latent edges, so $\mathrm{PRE}_{ll}$, $\mathrm{REC}_{ll}$, and $\mathrm{F1}_{ll}$ are not reported.

Overall, the proposed method achieves superior accuracy in estimating clusters, causal relationships among latent variables, and ancestral relationships among observed variables, with the accuracy improving as the sample size increases. Only the proposed method correctly estimates both the structure of latent variables and the causal relationships among observed variables for all models. Moreover, it can be confirmed that the proposed method also correctly distinguishes the difference in latent structures between Models (e) and (f). While Models (a) and (b) are identifiable by ReLVLiNGAM, the proposed method achieves higher accuracy in both estimations for clusters. While the proposed method shows lower performance than ReLVLiNGAM in estimating ancestral relationships among observed variables for Model (b), its performance gradually approaches that of ReLVLiNGAM as the sample size increases.

In addition, when comparing the proposed method with and without Theorem~\ref{thm: one latent coef estimate}, the version incorporating Theorem~\ref{thm: one latent coef estimate} outperforms the one without it in most cases.

Although Models (a) and (b) do not satisfy the assumptions of LiNGLaM and LiNGLaH, and thus, in theory, these methods cannot identify the models, Table~\ref{tab: N} shows that they occasionally recover the single-cluster structure when the sample size is relatively small. It can also be seen from Table~\ref{tab: N} that the ancestral relationships among the observed variables are not estimated correctly at all.

As mentioned above, in the original LiNGLaM and LiNGLaH, clusters consisting of a single observed variable are not output and are instead treated as ungrouped variables. In this experiment, by regarding such ungrouped variables as clusters, higher clustering accuracy is achieved in Models (c), (e), and (f). Theoretically, it can also be shown that LiNGLaM is able to identify the clusters in Models (c), (e), and (f), while LiNGLaH can identify the clusters in Model (c). However, Table~\ref{tab: N} clearly shows that neither LiNGLaM nor LiNGLaH can correctly estimate the causal structure among latent variables or the ancestral relationships among observed variables. On the other hand, Table~\ref{tab: N} also shows that LiNGLaM and LiNGLaH fail to correctly estimate the clusters in Models (a), (b), and (d). 
This result suggests that the clustering algorithms of LiNGLaM and LiNGLaH are not applicable to all models in this paper.

\begin{table}[t]
\centering
\caption{The performance in terms of $N_\mathrm{cl}$, $N_\mathrm{ls}$, $N_\mathrm{os}$, and $N_\mathrm{cs}$}
\label{tab: N}
\resizebox{\textwidth}{!}{%
\begin{tabular}{c|c|ccccc|ccccc|ccccc|ccccc}
\hline
\multirow{2}{*}{Model} & \multirow{2}{*}{Method} & \multicolumn{5}{c|}{$N_\mathrm{cl}$} & \multicolumn{5}{c|}{$N_\mathrm{ls}$} & \multicolumn{5}{c|}{$N_\mathrm{os}$} & \multicolumn{5}{c}{$N_\mathrm{cs}$} \\
\cline{3-22}
 &  & 1K & 2K & 4K & 8K & 16K & 1K & 2K & 4K & 8K & 16K & 1K & 2K & 4K & 8K & 16K & 1K & 2K & 4K & 8K & 16K\\
\hline\hline
\multirow{5}{*}{(a)} 
& Proposed (A.7)        & \textbf{60} & \textbf{56} & \textbf{73} & \textbf{74} & \textbf{78} & \textbf{60} & \textbf{56} & \textbf{73} & \textbf{74} & \textbf{78} & \textbf{45} & \textbf{53} & \textbf{70} & \textbf{68} & \textbf{73} & \textbf{45} & \textbf{53} & \textbf{70} & \textbf{68} & \textbf{73}   \\
& Proposed              & \textbf{60} & \textbf{56} & \textbf{73} & \textbf{74} & \textbf{78} & \textbf{60} & \textbf{56} & \textbf{73} & \textbf{74} & \textbf{78} & 39 & 45 & 67 & 64 & 72 & 39 & 45 & 67 & 64 & 72   \\
 & LiNGLaM              & 10 & 1 & 0 & 0 & 0 & 10 & 1 & 0 & 0 & 0 & 0 & 0 & 0 & 0 & 0 & 0 & 0 & 0 & 0 & 0  \\
 & LiNGLaH              & 59 & 29 & 5 & 8 & 7 & 59 & 29 & 5 & 8 & 7 & 0 & 0 & 0 & 0 & 0 & 0 & 0 & 0 & 0 & 0  \\
 & ReLVLiNGAM           & 47 & 50 & 49 & 55 & 64 & 47 & 50 & 49 & 55 & 64 & 0 & 0 & 0 & 0 & 0 & 0 & 0 & 0 & 0 & 0  \\
\hline
\multirow{5}{*}{(b)} 
& Proposed (A.7)     & \textbf{62} & \textbf{75} & \textbf{86} & \textbf{92} & \textbf{93} & \textbf{62} & \textbf{75} & \textbf{86} & \textbf{92} & \textbf{93} & 11 & 23 & 34 & 53 & 60 & 11 & 23 & 34 & 53 & 60 \\
& Proposed      & 61 & \textbf{75} & \textbf{86} & \textbf{92} & \textbf{93} & 61 & \textbf{75} & \textbf{86} & \textbf{92} & \textbf{93} & 11 & 23 & 34 & 53 & 60 & 11 & 23 & 34 & 53 & 60 \\
 & LiNGLaM      & 0 & 0 & 0 & 0 & 0 & 0 & 0 & 0 & 0 & 0 & 0 & 0 & 0 & 0 & 0 & 0 & 0 & 0 & 0 & 0  \\
 & LiNGLaH      & 1 & 0 & 0 & 0 & 0 & 1 & 0 & 0 & 0 & 0 & 0 & 0 & 0 & 0 & 0 & 0 & 0 & 0 & 0 & 0  \\
& ReLVLiNGAM    & 54 & 60 & 78 & 79 & 74 & 54 & 60 & 78 & 79 & 74 & \textbf{32} & \textbf{41} & \textbf{55} & \textbf{65} & \textbf{68} & \textbf{32} & \textbf{41} & \textbf{55} & \textbf{65} & \textbf{68}  \\
\hline
\multirow{4}{*}{(c)} 
& Proposed (A.7) & 76 & 78 & 79 & 88 & 93 & \textbf{76} & \textbf{78} & \textbf{79} & \textbf{88} & 93 & \textbf{53} & \textbf{69} & \textbf{77} & \textbf{87} & \textbf{93} & \textbf{53} & \textbf{69} & \textbf{77} & \textbf{87} & \textbf{93}   \\
& Proposed & 76 & 78 & 79 & 88 & 94 & \textbf{76} & \textbf{78} & \textbf{79} & \textbf{88} & \textbf{94} & 47 & 55 & 63 & 79 & 78 & 47 & 55 & 63 & 79 & 78   \\
 & LiNGLaM & 87 & 90 & 90 & 93 & 90 & 0 & 0 & 0 & 0 & 0 & 0 & 0 & 0 & 0 & 0 & 0 & 0 & 0 & 0 & 0   \\
 & LiNGLaH & \textbf{98} & \textbf{99} & \textbf{97} & \textbf{99} & \textbf{99} & 0 & 0 & 0 & 0 & 0 & 0 & 0 & 0 & 0 & 0 & 0 & 0 & 0 & 0 & 0   \\
\hline
\multirow{4}{*}{(d)} 
& Proposed (A.7) & 44 & 24 & 38 & 32 & 63 & 44 & 24 & 38 & 32 & 63 & \textbf{10} & \textbf{22} & \textbf{30} & \textbf{24} & \textbf{58} & \textbf{10} & \textbf{22} & \textbf{30} & \textbf{24} & \textbf{58}   \\
& Proposed & \textbf{48} & \textbf{26} & \textbf{49} & \textbf{55} & \textbf{71} & \textbf{48} & \textbf{26} & \textbf{49} & \textbf{55} & \textbf{71} & 8 & 8 & 20 & 19 & 21 & 8 & 8 & 20 & 19 & 21   \\
 & LiNGLaM & 38 & 14 & 9 & 8 & 8 & 0 & 0 & 0 & 0 & 0 & 0 & 0 & 0 & 0 & 0 & 0 & 0 & 0 & 0 & 0   \\
 & LiNGLaH & 0 & 0 & 0 & 0 & 0 & 0 & 0 & 0 & 0 & 0 & 0 & 0 & 0 & 0 & 0 & 0 & 0 & 0 & 0 & 0   \\
\hline
\multirow{4}{*}{(e)} 
& Proposed (A.7) & 37 & 44 & 68 & 75 & \textbf{88} & \textbf{27} & \textbf{39} & \textbf{57} & 72 & \textbf{83} & \textbf{36} & \textbf{42} & \textbf{62} & \textbf{69} & \textbf{80} & \textbf{26} & \textbf{37} & \textbf{52} & \textbf{66} & \textbf{75} \\
& Proposed & 37 & 33 & 51 & 84 & 86 & 21 & 23 & 49 & \textbf{73} & \textbf{83} & 21 & 17 & 27 & 30 & 32 & 12 & 11 & 26 & 25 & 30 \\
 & LiNGLaM & \textbf{96} & \textbf{90} & \textbf{91} & \textbf{94} & 87 & 0 & 0 & 0 & 0 & 0 & 0 & 0 & 0 & 0 & 0 & 0 & 0 & 0 & 0 & 0 \\
 & LiNGLaH & 0 & 0 & 0 & 0 & 0 & 0 & 0 & 0 & 0 & 0 & 0 & 0 & 0 & 0 & 0 & 0 & 0 & 0 & 0 & 0 \\
\hline
\multirow{4}{*}{(f)} 
& Proposed (A.7) & 30 & 47 & 52 & 71 & 76 & \textbf{12} & 34 & 38 & \textbf{70} & \textbf{76} & \textbf{30} & \textbf{47} & \textbf{52} & \textbf{67} & \textbf{74} & \textbf{12} & \textbf{34} & \textbf{38} & \textbf{66} & \textbf{74} \\
& Proposed & 18 & 46 & 45 & 57 & 72 & 5 & \textbf{35} & \textbf{41} & 54 & 72 & 17 & 34 & 39 & 46 & 67 & 4 & 27 & 35 & 44 & 67 \\
 & LiNGLaM & \textbf{92} & \textbf{88} & \textbf{93} & \textbf{87} & \textbf{92} & 0 & 0 & 0 & 0 & 0 & 0 & 0 & 0 & 0 & 0 & 0 & 0 & 0 & 0 & 0 \\
 & LiNGLaH & 0 & 0 & 0 & 0 & 0 & 0 & 0 & 0 & 0 & 0 & 0 & 0 & 0 & 0 & 0 & 0 & 0 & 0 & 0 & 0 \\
\hline
\end{tabular}%
}
\end{table}

\begin{table}[t]
\centering
\caption{The performances in terms of $\mathrm{PRE}_{ll}$, $\mathrm{REC}_{ll}$, $\mathrm{F1}_{ll}$, $\mathrm{PRE}_{oo}$, $\mathrm{REC}_{oo}$, and $\mathrm{F1}_{oo}$}
\label{tab: performance edge}
\resizebox{\textwidth}{!}{%
\begin{tabular}{c|c|ccccc|ccccc|ccccc}
\hline
\multirow{2}{*}{Model} & \multirow{2}{*}{Method} & \multicolumn{5}{c|}{$\mathrm{PRE}_{ll}$} & \multicolumn{5}{c|}{$\mathrm{REC}_{ll}$} & \multicolumn{5}{c}{$\mathrm{F1}_{ll}$}\\
\cline{3-17}
 &  & 1K & 2K & 4K & 8K & 16K & 1K & 2K & 4K & 8K & 16K & 1K & 2K & 4K & 8K & 16K\\
\hline\hline
\multirow{2}{*}{(c)} 
& Proposed (A.7) & \textbf{1.000} & \textbf{1.000} & \textbf{1.000} & \textbf{1.000} & \textbf{1.000} & \textbf{1.000} & \textbf{1.000} & \textbf{1.000} & \textbf{1.000} & \textbf{1.000} & \textbf{1.000} & \textbf{1.000} & \textbf{1.000} & \textbf{1.000} & \textbf{1.000} \\
& Proposed & \textbf{1.000} & \textbf{1.000} & \textbf{1.000} & \textbf{1.000} & \textbf{1.000} & \textbf{1.000} & \textbf{1.000} & \textbf{1.000} & \textbf{1.000} & \textbf{1.000} & \textbf{1.000} & \textbf{1.000} & \textbf{1.000} & \textbf{1.000} & \textbf{1.000} \\
\hline
\multirow{2}{*}{(d)} 
& Proposed (A.7) & \textbf{1.000} & \textbf{1.000} & \textbf{1.000} & \textbf{1.000} & \textbf{1.000} & \textbf{1.000} & \textbf{1.000} & \textbf{1.000} & \textbf{1.000} & \textbf{1.000} & \textbf{1.000} & \textbf{1.000} & \textbf{1.000} & \textbf{1.000} & \textbf{1.000} \\
& Proposed & \textbf{1.000} & \textbf{1.000} & \textbf{1.000} & \textbf{1.000} & \textbf{1.000} & \textbf{1.000} & \textbf{1.000} & \textbf{1.000} & \textbf{1.000} & \textbf{1.000} & \textbf{1.000} & \textbf{1.000} & \textbf{1.000} & \textbf{1.000} & \textbf{1.000}\\
\hline
\multirow{2}{*}{(e)} 
& Proposed (A.7) & \textbf{0.869} & \textbf{0.962} & 0.946 & \textbf{0.987} & 0.981 & \textbf{0.905} & \textbf{1.000} & \textbf{1.000} & \textbf{1.000} & \textbf{1.000} & \textbf{0.884} & \textbf{0.977} & 0.968 & \textbf{0.992} & \textbf{0.989} \\
& Proposed & 0.824 & 0.884 & \textbf{0.987} & 0.956 & \textbf{0.988} & 0.865 & 0.955 & 1.000 & 1.000 & 1.000 & 0.837 & 0.912 & \textbf{0.992} & 0.974 & 0.993\\
\hline
\multirow{2}{*}{(f)} 
& Proposed (A.7) & \textbf{1.000} & \textbf{1.000} & \textbf{1.000} & \textbf{1.000} & \textbf{1.000} & \textbf{0.800} & 0.908 & 0.910 & \textbf{0.995} & \textbf{1.000} & \textbf{0.880} & 0.945 & 0.946 & \textbf{0.997} & \textbf{1.000} \\
& Proposed & \textbf{1.000} & \textbf{1.000} & \textbf{1.000} & \textbf{1.000} & \textbf{1.000} & 0.759 & \textbf{0.920} & \textbf{0.970} & 0.982 & \textbf{1.000} & 0.856 & \textbf{0.952} & \textbf{0.982} & 0.989 & \textbf{1.000}\\
\hline
\multirow{2}{*}{Model} & \multirow{2}{*}{Method} & \multicolumn{5}{c|}{$\mathrm{PRE}_{oo}$} & \multicolumn{5}{c|}{$\mathrm{REC}_{oo}$} & \multicolumn{5}{c}{$\mathrm{F1}_{oo}$}\\
\cline{3-17}
 &  & 1K & 2K & 4K & 8K & 16K & 1K & 2K & 4K & 8K & 16K & 1K & 2K & 4K & 8K & 16K\\
\hline\hline
\multirow{3}{*}{(a)} 
& Proposed (A.7)  &  \textbf{0.825} & \textbf{0.964} & \textbf{0.970} & \textbf{0.957} & \textbf{0.962} & \textbf{0.900} & \textbf{0.982} & \textbf{0.986} & \textbf{1.000} & \textbf{1.000} & \textbf{0.850} & \textbf{0.970} & \textbf{0.975} & \textbf{0.971} & \textbf{0.972}\\
& Proposed    & 0.733 & 0.821 & 0.929 & 0.903 & 0.949 & 0.817 & 0.839 & 0.945 & 0.946 & 0.987 & 0.761 & 0.827 & 0.934 & 0.917 & 0.959\\
 & ReLVLiNGAM &  0.262 & 0.273 & 0.320 & 0.321 & 0.323 & 0.787 & 0.820 & 0.959 & 0.964 & 0.969 & 0.394 & 0.410 & 0.480 & 0.482 & 0.484\\
\hline
\multirow{3}{*}{(b)} 
& Proposed (A.7) &    0.895 & \textbf{0.951} & \textbf{0.984} & 1.000 & \textbf{1.000} & 0.586 & 0.702 & 0.756 & 0.855 & 0.878 & 0.687 & 0.790 & 0.837 & 0.912 & 0.926\\
& Proposed & \textbf{0.902} & \textbf{0.951} & 0.984 & \textbf{1.000} & \textbf{1.000} & 0.590 & 0.702 & 0.756 & 0.855 & 0.878 & 0.692 & 0.790 & 0.837 & 0.912 & 0.926\\
 & ReLVLiNGAM & 0.827 & 0.872 & 0.880 & 0.941 & 0.973 & \textbf{0.827} & \textbf{0.872} & \textbf{0.880} & \textbf{0.941} & \textbf{0.973} & \textbf{0.827} & \textbf{0.872} & \textbf{0.880} & \textbf{0.941} & \textbf{0.973}\\
\hline
\multirow{2}{*}{(c)} 
& Proposed (A.7) & \textbf{0.697} & \textbf{0.885} & \textbf{0.975} & \textbf{0.989} & \textbf{1.000} & \textbf{0.697} & \textbf{0.885} & \textbf{0.975} & \textbf{0.989} & \textbf{1.000} & \textbf{0.697} & \textbf{0.885} & \textbf{0.975} & \textbf{0.989} & \textbf{1.000} \\
& Proposed & 0.618 & 0.705 & 0.797 & 0.898 & 0.830 & 0.618 & 0.705 & 0.797 & 0.898 & 0.830 & 0.618 & 0.705 & 0.797 & 0.898 & 0.830\\
\hline
\multirow{2}{*}{(d)} 
& Proposed (A.7) & \textbf{0.392} & \textbf{0.931} & \textbf{0.816} & \textbf{0.818} & \textbf{0.944} & \textbf{0.614} & \textbf{0.958} & \textbf{0.868} & \textbf{0.906} & \textbf{0.968} & \textbf{0.456} & \textbf{0.938} & \textbf{0.829} & \textbf{0.844} & \textbf{0.952} \\
& Proposed & 0.167 & 0.308 & 0.408 & 0.345 & 0.296 & 0.167 & 0.308 & 0.408 & 0.345 & 0.296 & 0.167 & 0.308 & 0.408 & 0.345 & 0.296\\
\hline
\multirow{2}{*}{(e)} 
& Proposed (A.7) & \textbf{0.973} & \textbf{0.955} & \textbf{0.912} & \textbf{0.920} & \textbf{0.909} & \textbf{0.973} & \textbf{0.955} & \textbf{0.912} & \textbf{0.920} & 0\textbf{.909} & \textbf{0.973} & \textbf{0.955} & \textbf{0.912} & \textbf{0.920} & \textbf{0.909} \\
& Proposed & 0.568 & 0.515 & 0.529 & 0.357 & 0.372 & 0.568 & 0.515 & 0.529 & 0.357 & 0.372 & 0.568 & 0.515 & 0.529 & 0.357 & 0.372\\
\hline
\multirow{2}{*}{(f)} 
& Proposed (A.7) & \textbf{1.000} & \textbf{1.000} & \textbf{1.000} & \textbf{0.944} & \textbf{0.974} & \textbf{1.000} & \textbf{1.000} & \textbf{1.000} & \textbf{0.944} & \textbf{0.974} & \textbf{1.000} & \textbf{1.000} & \textbf{1.000} & \textbf{0.944} & \textbf{0.974} \\
& Proposed & 0.944 & 0.739 & 0.867 & 0.807 & 0.931 & 0.944 & 0.739 & 0.867 & 0.807 & 0.931 & 0.944 & 0.739 & 0.867 & 0.807 & 0.931\\
\hline
\end{tabular}%
}
\end{table}

\begin{table}
\centering
\caption{The performances of the proposed method in $N_{cs}$ with small sample sizes}
\label{tab: small sample}
\resizebox{\textwidth}{!}{%
\begin{tabular}{c|ccccc|ccccc|ccccc|ccccc} \hline
$N$              & \multicolumn{5}{|c}{50}        &  \multicolumn{5}{|c}{100} &  \multicolumn{5}{|c}{200} &  \multicolumn{5}{|c}{400}\\ \hline \hline
\diagbox{$\tau_{o}$}{$\alpha_{\mathrm{ind}}$} & 0.01 & 0.05 & 0.1 & 0.2 & 0.3  & 0.01 & 0.05 & 0.1 & 0.2 & 0.3 & 0.01 & 0.05 & 0.1 & 0.2 & 0.3  & 0.01 & 0.05 & 0.1 & 0.2 & 0.3 \\  \hline
0.001          & 0    & 0    & 1   & 2   & \textbf{\textit{6}}  & 0 & \textit{3} & \textit{4} &  \textbf{\textit{10}} & \textit{9} & 0    & 6    & \textit{9}   & \textit{13}  & \textbf{\textit{18}} & \textit{5} & \textit{11} & \textbf{16} & 12 & 11\\ 
0.01           & 0    & \textit{1}    & 1   & \textbf{4}   & \textbf{4}  & 0 & 1 & 2 &  3 & \textbf{6} & \textit{1}    & \textit{7}    & 7   & \textbf{11}  & \textbf{11} & 2 & 7  & \textit{19} & 16 & \textit{15} \\ 
0.1            & 0    & \textit{1}    & \textit{2}   & \textbf{\textit{5}}   & 3  & 0 & 0 & \textit{4} &  \textbf{9} & 7 & \textit{1}    & 4    & 6   & 11  & \textbf{13} & 2 & \textit{15} & 17 & \textbf{\textit{18}} & 10\\\hline
\end{tabular}
}
\end{table}
\subsection{Additional Experiments with Small Sample Sizes}
\label{sec: small samples}
In the preceding experiments, the primary objective was to examine the identifiability of the proposed method, and hence the sample size was set to be sufficiently large. However, in practical applications, it is also crucial to evaluate the estimation accuracy when the sample size is limited. When the sample size is not large, the Type II error rate of HSIC increases, which in turn raises the risk of misclassifying clusters. Moreover, with small samples, the variability of the left-hand side of (\ref{criterion: single confounder}) becomes larger, thereby affecting the accuracy of Corollaries~\ref{col: proposed cluster root} and~\ref{col: proposed cluster root generalized}. To address this, we investigate whether the estimation accuracy of the model can be improved in small-sample settings by employing relatively larger values of the significance level $\alpha_{\mathrm{ind}}$ for HSIC and the threshold $\tau_o$ than those used in the previous experiments.

We conduct additional experiments under small-sample settings using Model (f) in Figure~\ref{fig: case studies}. 
The sample sizes $N$ are set to 50, 100, 200, and 400. In these experiments, only $N_{\mathrm{cs}}$ is used as the evaluation metric.
The parameters $(\tau_{s}, \tau_{m1}, \tau_{m2})$ are set to $(0.005, 0.001, 0.01)$, while the significance level of HSIC is chosen from $\alpha_{\mathrm{ind}} \in \{0.01, 0.05, 0.1, 0.2, 0.3\}$, and $\tau_{o} \in \{0.001, 0.01, 0.1\}$. 

Table~\ref{tab: small sample} reports the values of $N_{\mathrm{cs}}$ for each combination of $\alpha_{\mathrm{ind}}$ and $\tau_o$. 
The values in bold represent the best performances with fixed $N$ and $\tau_{o}$, and those in italic represent the best performances with fixed $N$ and $\alpha_{\mathrm{ind}}$.
Although the estimation accuracy is not satisfactory when the sample size is small, the results in Table~\ref{tab: small sample} suggest that relatively larger settings of $\alpha_{\mathrm{ind}}$ and $\tau_o$ tend to yield higher accuracy. The determination of appropriate threshold values for practical applications remains an important issue for future work. 

\section{Real-World Example}
\label{sec: real world}
We applied the proposed method to the Political Democracy dataset \cite{bollen1989structural}, a widely used benchmark in structural equation modeling (SEM). Originally introduced by Bollen \cite{bollen1989structural}, this dataset was designed to examine the relation between the level of industrialization and the level of political democracy across 75 countries in 1960 and 1965. It includes indicators for both industrialization and political democracy in each year, and is typically modeled using confirmatory factor analysis (CFA) as part of a structural equation model. In the standard SEM formulation, the model consists of three latent variables: {\bf ind60}, representing the level of industrialization in 1960; and {\bf dem60} and {\bf dem65}, representing the level of political democracy in 1960 and 1965, respectively. {\bf ind60} is measured by per capita GNP ($X_1$), per capita energy consumption ($X_2$), and the percentage of the labor force in nonagricultural sectors ($X_3$). {\bf dem60} and {\bf dem65} are each measured by four indicators: press freedom ($Y_1$, $Y_5$), freedom of political opposition ($Y_2$, $Y_6$), fairness of elections ($Y_3$, $Y_7$), and effectiveness of the elected legislatures ($Y_4$, $Y_8$). The SEM in Bollen \cite{bollen1989structural} specifies paths from {\bf ind60} to both {\bf dem60} and {\bf dem65}, and from {\bf dem60} to {\bf dem65}.

The marginal model for $X_{1},X_{2}$ and $Y_{3},\ldots,Y_{6}$ in the model in Bollen \cite{bollen1989structural} is as shown in Figure \ref{fig: real-world}~(a). This marginal model satisfies the assumptions A1-A3, as well as those of LiNGLaM \cite{xie2020generalized} and LiNGLaH \cite{Xie2024GIN}. We examined whether the proposed method, LiNGLaM, and LiNGLaH can recover the model in Figure \ref{fig: real-world}~(a) from observational data $X_{1},X_{2}$ and $Y_{3},\ldots,Y_{6}$. We set $(\tau_{\mathrm{s}}, \tau_{\mathrm{m1}}, \tau_{\mathrm{m2}}) = (0.005, 0.001, 0.01)$. Since the sample size is as small as $N=75$, we set $(\tau_{\mathrm{o}}, \alpha_{\mathrm{ind}}) = (0.1, 0.2)$, which are relatively large values, following the discussion in Section~\ref{sec: small samples}. The upper bound on the number of latent variables is set to $2$.

The resulting DAGs obtained by each method are shown in Figure~\ref{fig: real-world}~(b)–(d). Among them, the proposed method estimates the same DAG as in Bollen~\cite{bollen1989structural}. LiNGLaM fails to estimate the correct clusters and the causal structure among the latent variables. LiNGLaH incorrectly clusters all observed variables into two clusters. This result indicates that the proposed method not only outperforms existing methods such as LiNGLaM and LiNGLaH for models to which those methods are applicable, but is also effective even when the sample size is not large. 

\begin{figure}[t]
    \centering
    \begin{subfigure}[t]{1.0\textwidth}
        \centering
        \begin{tikzpicture}[scale=0.7]
            \node[draw, rectangle] (L_1) at (-2.5, 0) {\small \textbf{ind60}};
            \node[draw, rectangle] (L_2) at (0, 1) {\small \textbf{dem60}};
            \node[draw, rectangle] (L_3) at (0, -1) {\small \textbf{dem65}};
            \node[draw, circle] (X_1) at (-2.5, 2) {$X_{1}$};
            \node[draw, circle] (X_2) at (-2.5, -2) {$X_{2}$};
            \node[draw, circle] (Y_3) at (-1, 3) {$Y_{3}$};
            \node[draw, circle] (Y_4) at (1, 3) {$Y_{4}$};
            \node[draw, circle] (Y_5) at (-1, -3) {$Y_{5}$};
            \node[draw, circle] (Y_6) at (1, -3) {$Y_{6}$};
            \draw[thick, ->] (L_1) -- (L_2);
            \draw[thick, ->] (L_2) -- (L_3);
            \draw[thick, ->] (L_1) -- (L_3);
            \draw[thick, ->] (L_1) -- (X_1);
            \draw[thick, ->] (L_1) -- (X_2);
            \draw[thick, ->] (L_2) -- (Y_3);
            \draw[thick, ->] (L_2) -- (Y_4);
            \draw[thick, ->] (L_3) -- (Y_5);
            \draw[thick, ->] (L_3) -- (Y_6);
        \end{tikzpicture}
        \caption{Model in Bollen \cite{bollen1989structural}}
    \end{subfigure}
    \hfill
    \begin{subfigure}[t]{0.3\textwidth}
        \centering
        \begin{tikzpicture}[scale=0.7]
            \node[draw, rectangle] (L_1) at (-2, 0) {$L_{1}$};
            \node[draw, rectangle] (L_2) at (0, 1) {$L_{2}$};
            \node[draw, rectangle] (L_3) at (0, -1) {$L_{3}$};
            \node[draw, circle] (X_1) at (-2.5, 2) {$X_{1}$};
            \node[draw, circle] (X_2) at (-2.5, -2) {$X_{2}$};
            \node[draw, circle] (Y_3) at (-1, 3) {$Y_{3}$};
            \node[draw, circle] (Y_4) at (1, 3) {$Y_{4}$};
            \node[draw, circle] (Y_5) at (-1, -3) {$Y_{5}$};
            \node[draw, circle] (Y_6) at (1, -3) {$Y_{6}$};
            \draw[thick, ->] (L_1) -- (L_2);
            \draw[thick, ->] (L_2) -- (L_3);
            \draw[thick, ->] (L_1) -- (L_3);
            \draw[thick, ->] (L_1) -- (X_1);
            \draw[thick, ->] (L_1) -- (X_2);
            \draw[thick, ->] (L_2) -- (Y_3);
            \draw[thick, ->] (L_2) -- (Y_4);
            \draw[thick, ->] (L_3) -- (Y_5);
            \draw[thick, ->] (L_3) -- (Y_6);
        \end{tikzpicture}
        \caption{The proposed method}
    \end{subfigure}
    \hfill
    \begin{subfigure}[t]{0.3\textwidth}
        \centering
        \begin{tikzpicture}[scale=0.7]
            \node[draw, rectangle] (L_1) at (-2, 0) {$L_{1}$};
            \node[draw, rectangle] (L_2) at (0, 1) {$L_{2}$};
            \node[draw, rectangle] (L_3) at (0, -1) {$L_{3}$};
            \node[draw, circle] (Y_6) at (-2.5, 2) {$Y_{6}$};
            \node[draw, circle] (X_1) at (-1, 3) {$X_{1}$};
            \node[draw, circle] (X_2) at (1, 3) {$X_{2}$};
            \node[draw, circle] (Y_3) at (-2, -3) {$Y_{3}$};
            \node[draw, circle] (Y_4) at (-0, -3) {$Y_{4}$};
            \node[draw, circle] (Y_5) at (2, -3) {$Y_{5}$};

            \draw[thick, ->] (L_1) -- (Y_6);
            \draw[thick, ->] (L_2) -- (X_1);
            \draw[thick, ->] (L_2) -- (X_2);
            \draw[thick, ->] (L_3) -- (Y_3);
            \draw[thick, ->] (L_3) -- (Y_4);
            \draw[thick, ->] (L_3) -- (Y_5);
            \draw[thick, ->] (L_2) -- (L_3);

        \end{tikzpicture}
        \caption{LiNGLaM}
    \end{subfigure}
    \hfill
    \begin{subfigure}[t]{0.3\textwidth}
        \centering
        \begin{tikzpicture}[scale=0.7]
            \node[draw, rectangle] (L_1) at (0, 1) {$L_{1}$};
            \node[draw, circle] (X_1) at (-1, 3) {$X_{1}$};
            \node[draw, circle] (X_2) at (1, 3) {$X_{2}$};
            \node[draw, circle] (Y_3) at (-2, -2) {$Y_{3}$};
            \node[draw, circle] (Y_4) at (-1, -3) {$Y_{4}$};
            \node[draw, circle] (Y_5) at (1, -3) {$Y_{5}$};
            \node[draw, circle] (Y_6) at (2, -2) {$Y_{6}$};

            \node[draw, rectangle] (L_2) at (0, -1) {$L_{2}$};
            \draw[thick, ->] (L_1) -- (X_1);
            
            \draw[thick, ->] (L_1) -- (X_2);
            \draw[thick, ->] (L_2) -- (Y_3);
            \draw[thick, ->] (L_2) -- (Y_4);
            \draw[thick, ->] (L_2) -- (Y_5);
            \draw[thick, ->] (L_2) -- (Y_6);
        \end{tikzpicture}
        \caption{LiNGLaH}
    \end{subfigure}
    \caption{The application on the political democracy dataset.}
    \label{fig: real-world}
\end{figure}
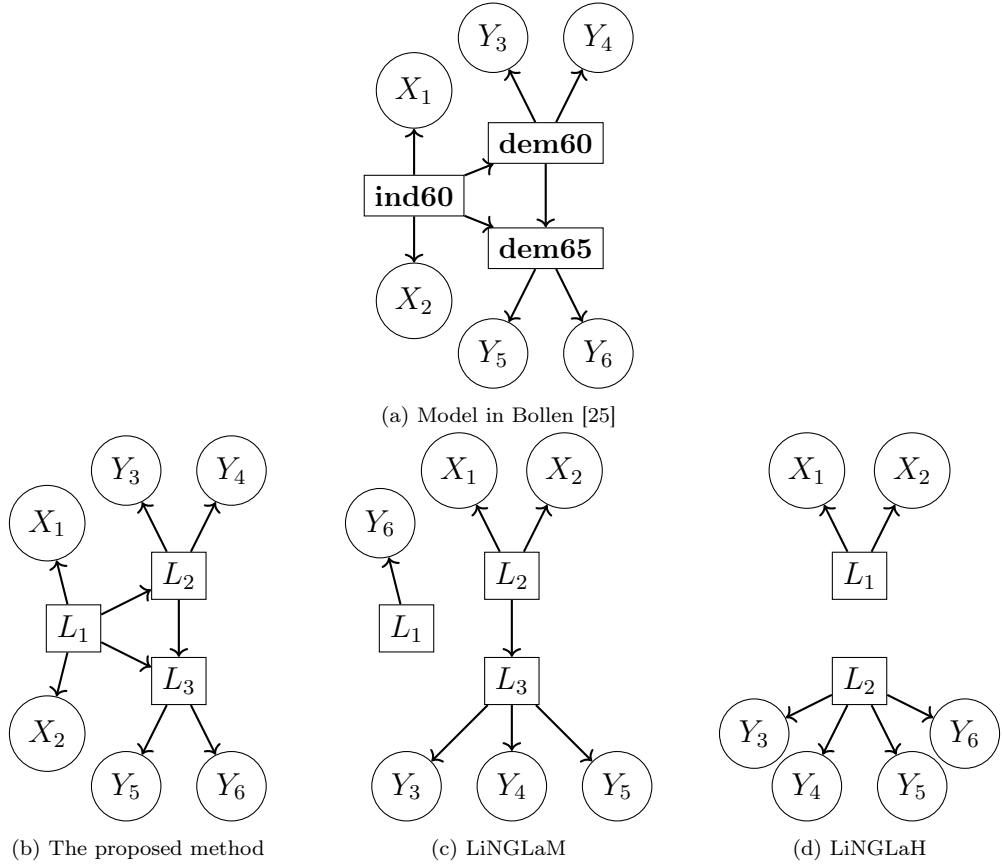

\section{Conclusion}
\label{sec: conclusion}
In this paper, we propose a novel algorithm for estimating LvLiNGAM models in which causal structures exist both among latent variables and among observed variables. Causal discovery for such a class of LvLiNGAM has not been completely addressed in any previous studies. 

Through numerical experiments, we also confirmed the consistency of the proposed method with the theoretical results on its identifiability. 
Furthermore, by applying the proposed method to the Political Democracy dataset~\cite{bollen1989structural}, a standard benchmark in structural equation modeling, we confirmed its practical usefulness. 

However, the class of models to which our proposed method can be applied remains limited. In particular, the assumptions that each observed variable has at most one latent parent and that there are no edges between clusters are restrictive. As mentioned in Section~\ref{sec: LvLiNGAM}, there exist classes of models that can be identified by the proposed method even when some variables have no latent parents. For further details, see Appendix~E. However, even so, the proposed method cannot be applied to many generic canonical models. Developing a more generalized framework that relaxes these constraints remains an important direction for future research.

\section*{Acknowledgement}
This work was supported by JST SPRING under Grant Number JPMJSP2110 and JSPS KAKENHI under Grant Numbers 21K11797 and 25K15017.
\bibliographystyle{elsarticle-num} 
\bibliography{main}

\begin{thebibliography}{10}
\expandafter\ifx\csname url\endcsname\relax
  \def\url#1{\texttt{#1}}\fi
\expandafter\ifx\csname urlprefix\endcsname\relax\def\urlprefix{URL }\fi
\expandafter\ifx\csname href\endcsname\relax
  \def\href#1#2{#2} \def\path#1{#1}\fi

\bibitem{Spirtes}
P.~Spirtes, C.~Glymour, R.~Scheines, Causation, prediction, and search, MIT press, 2001.

\bibitem{Chickering2003}
D.~M. Chickering, Optimal structure identification with greedy search, Journal of Machine Learning Research 3 (2003) 507–554.

\bibitem{Shimizu2006}
S.~Shimizu, P.~O. Hoyer, A.~Hyv{\"a}rinen, A.~Kerminen, M.~Jordan, A linear non-{G}aussian acyclic model for causal discovery., Journal of Machine Learning Research 7 (2006) 2003--2030.

\bibitem{Shimizu2011}
S.~Shimizu, T.~Inazumi, Y.~Sogawa, A.~Hyv{\"a}rinen, Y.~Kawahara, T.~Washio, P.~O. Hoyer, K.~Bollen, Directlingam: A direct method for learning a linear non-{G}aussian structural equation model, Journal of Machine Learning Research 12 (2011) 1225--1248.

\bibitem{colombo2012learning}
D.~Colombo, M.~H. Maathuis, M.~Kalisch, T.~S. Richardson, Learning high-dimensional directed acyclic graphs with latent and selection variables, The Annals of Statistics 40~(1) (2012) 294--321.

\bibitem{Ogarrio2016ahybrid}
J.~M. Ogarrio, P.~Spirtes, J.~Ramsey, A hybrid causal search algorithm for latent variable models, in: Proceedings of the Eighth International Conference on Probabilistic Graphical Models, Vol.~52 of Proceedings of Machine Learning Research, PMLR, Lugano, Switzerland, 2016, pp. 368--379.

\bibitem{hoyer2008estimation}
P.~O. Hoyer, S.~Shimizu, A.~J. Kerminen, M.~Palviainen, Estimation of causal effects using linear non-{G}aussian causal models with hidden variables, International Journal of Approximate Reasoning 49~(2) (2008) 362--378, special Section on Probabilistic Rough Sets and Special Section on PGM’06.

\bibitem{lewicki1998learning}
M.~Lewicki, T.~J. Sejnowski, Learning nonlinear overcomplete representations for efficient coding, Advances in neural information processing systems 10 (1998) 815--821.

\bibitem{salehkaleybar2020learning}
S.~Salehkaleybar, A.~Ghassami, N.~Kiyavash, K.~Zhang, Learning linear non-{G}aussian causal models in the presence of latent variables, Journal of Machine Learning Research 21~(39) (2020) 1--24.

\bibitem{Entner2011}
D.~Entner, P.~O. Hoyer, Discovering unconfounded causal relationships using linear non-{G}aussian models, in: T.~Onada, D.~Bekki, E.~McCready (Eds.), New Frontiers in Artificial Intelligence, Springer Berlin Heidelberg, Berlin, Heidelberg, 2011, pp. 181--195.

\bibitem{tashiro2014parcelingam}
T.~Tashiro, S.~Shimizu, A.~Hyv{\"a}rinen, T.~Washio, Parce{L}i{NGAM}: A causal ordering method robust against latent confounders, Neural Computation 26~(1) (2014) 57--83.

\bibitem{Maeda2020}
T.~N. Maeda, S.~Shimizu, {RCD}: Repetitive causal discovery of linear non-{G}aussian acyclic models with latent confounders, in: International Conference on Artificial Intelligence and Statistics, PMLR, 2020, pp. 735--745.

\bibitem{maeda2022rcd}
T.~N. Maeda, I-{RCD}: an improved algorithm of repetitive causal discovery from data with latent confounders, Behaviormetrika 49~(2) (2022) 329--341.

\bibitem{Chen2022Causal}
W.~Chen, R.~Cai, K.~Zhang, Z.~Hao, Causal discovery in linear non-{G}aussian acyclic model with multiple latent confounders, IEEE Transactions on Neural Networks and Learning Systems 33~(7) (2022) 2816--2827.

\bibitem{chen2021fritl}
W.~Chen, K.~Zhang, R.~Cai, B.~Huang, J.~Ramsey, Z.~Hao, C.~Glymour, Fritl: A hybrid method for causal discovery in the presence of latent confounders, arXiv preprint arXiv:2103.14238 (2021).

\bibitem{cai2023causal}
R.~Cai, Z.~Huang, W.~Chen, Z.~Hao, K.~Zhang, Causal discovery with latent confounders based on higher-order cumulants, in: International conference on machine learning, PMLR, 2023, pp. 3380--3407.

\bibitem{chen2024identification}
W.~Chen, Z.~Huang, R.~Cai, Z.~Hao, K.~Zhang, Identification of causal structure with latent variables based on higher order cumulants, in: Proceedings of the AAAI Conference on Artificial Intelligence, Vol. 38(18), 2024, pp. 20353--20361.

\bibitem{schkoda2024causal}
D.~Schkoda, E.~Robeva, M.~Drton, Causal discovery of linear non-{G}aussian causal models with unobserved confounding, arXiv preprint arXiv:2408.04907 (2024).

\bibitem{Silva2006learning}
R.~Silva, R.~Scheines, C.~Glymour, P.~Spirtes, Learning the structure of linear latent variable models, Journal of Machine Learning Research 7~(8) (2006) 191--246.

\bibitem{cai2019triad}
R.~Cai, F.~Xie, C.~Glymour, Z.~Hao, K.~Zhang, Triad constraints for learning causal structure of latent variables, Advances in neural information processing systems 32 (2019).

\bibitem{xie2020generalized}
F.~Xie, R.~Cai, B.~Huang, C.~Glymour, Z.~Hao, K.~Zhang, Generalized independent noise condition for estimating latent variable causal graphs, Advances in neural information processing systems 33 (2020) 14891--14902.

\bibitem{Xie2023Causal}
F.~Xie, Y.~Zeng, Z.~Chen, Y.~He, Z.~Geng, K.~Zhang, Causal discovery of 1-factor measurement models in linear latent variable models with arbitrary noise distributions, Neurocomputing 526 (2023) 48--61.

\bibitem{Xie2024GIN}
F.~Xie, B.~Huang, Z.~Chen, R.~Cai, C.~Glymour, Z.~Geng, K.~Zhang, Generalized independent noise condition for estimating causal structure with latent variables, Journal of Machine Learning Research 25~(191) (2024) 1--61.

\bibitem{jin2024structural}
S.~Jin, F.~Xie, G.~Chen, B.~Huang, Z.~Chen, X.~Dong, K.~Zhang, Structural estimation of partially observed linear non-{G}aussian acyclic model: A practical approach with identifiability, in: The Twelfth International Conference on Learning Representations, 2024, pp. 1--27.

\bibitem{bollen1989structural}
K.~A. Bollen, Structural equations with latent variables, John Wiley \& Sons, 1989.

\bibitem{eriksson2004identifiability}
J.~Eriksson, V.~Koivunen, Identifiability, separability, and uniqueness of linear {ICA} models, IEEE signal processing letters 11~(7) (2004) 601--604.

\bibitem{Brillinger}
D.~R. Brillinger, Time series: data analysis and theory, Society for Industrial and Applied Mathematics, 2001.

\bibitem{Gretton2007}
A.~Gretton, K.~Fukumizu, C.~Teo, L.~Song, B.~Sch\"{o}lkopf, A.~Smola, A kernel statistical test of independence, in: J.~Platt, D.~Koller, Y.~Singer, S.~Roweis (Eds.), Advances in Neural Information Processing Systems, Vol.~20, Curran Associates, Inc., 2007, pp. 1--8.

\bibitem{Darmois1953}
G.~Darmois, Analyse générale des liaisons stochastiques: etude particulière de l'analyse factorielle linéaire, Revue de l'Institut International de Statistique / Review of the International Statistical Institute 21~(1/2) (1953) 2--8.

\bibitem{Skitovich1953}
V.~P. Skitovich, On a property of the normal distribution, Doklady Akademii Nauk 89 (1953) 217--219.

\bibitem{cai2024learninglinearacycliccausal}
M.~Cai, P.~Gao, H.~Hara, Learning linear acyclic causal model including gaussian noise using ancestral relationships (2024).

\end{thebibliography}
\newpage
\appendix
\renewcommand{\thesection}{\Alph{section}}
\section{Some Theorems and Lemmas for Proving Theorems in the Main Text}
In this section, we present several theorems and lemmas that are required for the proofs of the theorems in the main text.
In the following sections, we assume that the coefficient from each latent variable $L_i$ to its observed child $X_i$ with the highest causal order is normalized to $\lambda_{ii}=1$. 

\begin{theorem}[Darmois-Skitovitch theorem \cite{Darmois1953, Skitovich1953}]
    \label{thm: DS}
    Define two random variables $X_1$ and $X_2$ as linear combinations of independent random variables $u_1,\ldots,u_m$:
    \[
    X_1 = \sum_{i=1}^m \alpha_{1i} u_i, \quad 
    X_2 = \sum_{i=1}^m \alpha_{2i} u_i
    \]
    Then, if $X_1$ and $X_2$ are independent, all variables $u_i$ for which $\alpha_{1i}\alpha_{2i} \ne 0$ are Gaussian.
\end{theorem}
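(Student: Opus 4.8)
The plan is to prove Theorem~\ref{thm: DS} by the classical characteristic-function method: reduce independence to a functional equation for the local logarithms of the characteristic functions, and then extract polynomial structure through directional differentiation.

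First I would translate the independence hypothesis into an analytic identity. Writing $\phi_j(t)=E[e^{it u_j}]$, the independence of $u_1,\dots,u_m$ gives the joint characteristic function $\psi(s,t)=E[e^{i(sX_1+tX_2)}]=\prod_{j=1}^m \phi_j(s\alpha_{1j}+t\alpha_{2j})$. Independence of $X_1$ and $X_2$ means $\psi(s,t)=\psi(s,0)\psi(0,t)$, so
\[
\prod_{j=1}^m \phi_j(s\alpha_{1j}+t\alpha_{2j})=\prod_{j=1}^m \phi_j(s\alpha_{1j})\cdot\prod_{j=1}^m \phi_j(t\alpha_{2j}).
\]
Since each $\phi_j$ is continuous with $\phi_j(0)=1$, it is nonzero on a neighborhood of the origin, so I may take principal logarithms $\psi_j=\log\phi_j$ there and obtain the additive functional equation $\sum_j \psi_j(s\alpha_{1j}+t\alpha_{2j})=\sum_j \psi_j(s\alpha_{1j})+\sum_j \psi_j(t\alpha_{2j})$ on a neighborhood of $(0,0)$.

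Next I would eliminate the separable terms. The two sums on the right depend on $s$ alone and on $t$ alone, respectively, so applying the mixed operator $\partial_s\partial_t$ (made rigorous below via finite differences) annihilates them and leaves $\sum_j \alpha_{1j}\alpha_{2j}\,\psi_j''(s\alpha_{1j}+t\alpha_{2j})=0$. Only the indices with $\alpha_{1j}\alpha_{2j}\neq0$ survive; I would group them by the direction of the coefficient vector $(\alpha_{1j},\alpha_{2j})$ so that the surviving linear forms $\ell_g(s,t)=s\beta_{1g}+t\beta_{2g}$ are pairwise non-proportional, with $H_g$ collecting the corresponding second-derivative terms. To isolate a single group I would apply the first-order directional operators $D_g=\beta_{2g}\partial_s-\beta_{1g}\partial_t$, each of which annihilates functions of $\ell_g$ but multiplies a function of $\ell_{g'}$ ($g'\neq g$) by the nonzero determinant $\beta_{2g}\beta_{1g'}-\beta_{1g}\beta_{2g'}$. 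Applying all the operators except one, I would conclude that a sufficiently high derivative of each surviving $H_g$ vanishes identically, so each group's combined log-characteristic function is a polynomial on a neighborhood of the origin. Invoking Marcinkiewicz's theorem (a characteristic function whose logarithm is a polynomial is Gaussian, the polynomial having degree at most $2$) forces that combined log-cf to be quadratic, and Cram\'er's decomposition theorem then transfers Gaussianity to each individual $u_j$ contributing to the group; in particular every $u_j$ with $\alpha_{1j}\alpha_{2j}\neq0$ is Gaussian, which is the claim.

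The main obstacle is analytic rigor rather than algebra: characteristic functions are only known a priori to be continuous, so the repeated differentiation above is not literally justified. I would circumvent this as in the Kagan--Linnik--Rao treatment, replacing each derivative $\partial$ by a finite-difference operator, obtaining that the local log-characteristic functions satisfy a homogeneous finite-difference (generalized polynomial) equation, and then applying the rigidity lemma that a locally bounded solution of such an equation is a polynomial. A secondary bookkeeping obstacle is the case of proportional coefficient vectors $(\alpha_{1j},\alpha_{2j})$, which is precisely why I group indices before peeling and why Cram\'er's theorem is needed at the end to descend from the grouped sum to the individual summands. The degenerate single-group case yields a linear log-cf, i.e.\ a constant (zero-variance Gaussian) component, consistent with the statement.
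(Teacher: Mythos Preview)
The paper does not prove Theorem~\ref{thm: DS}; it is stated as a classical result with citations to Darmois and Skitovich and is used as a black box throughout the appendices. So there is no paper proof to compare against.

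That said, your sketch is the standard characteristic-function argument (as in Kagan--Linnik--Rao): pass to the multiplicative identity for characteristic functions, take local logarithms, eliminate the separable terms via finite differences, group proportional coefficient vectors, peel off groups with directional difference operators, invoke Marcinkiewicz to force quadratic log-characteristic functions, and then use Cram\'er's theorem to descend from the grouped sum to each summand. The main points you flag---replacing derivatives by finite differences to avoid unjustified smoothness assumptions, and the need to group proportional directions before peeling---are exactly the subtleties that a rigorous proof must address, and you handle them correctly. The argument is sound.
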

\begin{lemma}
\label{lem: non-Gaussian no-zero}
    Let $X_{1}$ and $X_{2}$ be mutually dependent observed variables in LvLiNGAM in (\ref{model: mixing matrix}) with mutually independent and non-Gaussian disturbances $\bm{u}$. Under Assumptions A4 and A5, $\mathrm{cum}^{(4)}(X_{1}, X_{1}, X_{2}, X_{2})$ and \\ $\mathrm{cum}^{(4)}(X_{1}, X_{1}, X_{1}, X_{2})$ are generically non-zero. 
\end{lemma}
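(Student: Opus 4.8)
The plan is to work in the mixing representation $\bm X=\bm M\bm u$ of (\ref{model: mixing matrix}) and to expand the two fourth-order cumulants using multilinearity together with the mutual independence of the components $u_1,\dots,u_{p+q}$ of $\bm u$. A joint cumulant of independent random variables vanishes whenever its arguments split into two nonempty mutually independent groups, so the only surviving terms are those in which all four source indices coincide, and one obtains
\begin{align*}
\mathrm{cum}^{(4)}(X_1,X_1,X_2,X_2) &= \sum_{j} M_{1j}^{2}M_{2j}^{2}\,\kappa_j, \\
\mathrm{cum}^{(4)}(X_1,X_1,X_1,X_2) &= \sum_{j} M_{1j}^{3}M_{2j}\,\kappa_j,
\end{align*}
where $M_{ij}$ is the $(i,j)$ entry of $\bm M$ and $\kappa_j:=\mathrm{cum}^{(4)}(u_j)\ne 0$ for every $j$ by Assumption~A5. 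The strategy is then to show that, as functions of the model parameters (the structural coefficients in $\bm A,\bm B,\bm\Lambda$ together with the disturbance cumulants), each of these two expressions is a polynomial that does not vanish identically; the conclusion follows because the zero set of a nonidentically-zero real-analytic function, in particular a polynomial, has Lebesgue measure zero.

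First I would observe that, since $X_1\notindep X_2$, there is an index $j^\ast$ with $M_{1j^\ast}\ne 0$ and $M_{2j^\ast}\ne 0$: otherwise $X_1$ and $X_2$ would be measurable functions of disjoint subvectors of the independent vector $\bm u$, hence independent (equivalently one may invoke the Darmois--Skitovitch theorem, Theorem~\ref{thm: DS}; under A4 such a shared source is exactly a d-connecting structure between $X_1$ and $X_2$ in $\mathcal{G}$). Next I would record that every entry $M_{ij}$ is a polynomial in the off-diagonal coefficients of $\bm A,\bm B,\bm\Lambda$ (the inverses of the unipotent triangular matrices $\bm I-\bm A$ and $\bm I-\bm B$ being polynomial in their entries), and that this polynomial is not identically zero precisely when the source carrying $u_j$ is an ancestor of $X_i$ in $\mathcal{G}$ (or coincides with $X_i$). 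In particular $M_{1j^\ast}$ and $M_{2j^\ast}$ are nonzero polynomials, hence so are $M_{1j^\ast}^{2}M_{2j^\ast}^{2}$ and $M_{1j^\ast}^{3}M_{2j^\ast}$, the polynomial ring being an integral domain. Viewing in addition $\kappa_{j^\ast}$ as a free parameter --- the disturbance distributions being unconstrained beyond A5 --- the coefficient of $\kappa_{j^\ast}$ in each of the two displayed sums is a nonzero polynomial in the structural coefficients, so each sum is a nonidentically-zero polynomial in the joint parameter vector. By the measure-zero property this gives that both cumulants are nonzero off a measure-zero set, i.e.\ generically; as a consequence $\rho_{(X_1,X_2)}$ is generically finite (nonzero denominator) and nonzero (nonzero numerator).

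The main obstacle is the nonvanishing-as-a-polynomial step: one must rule out that the graph topology forces either cumulant expression to cancel identically. The key that unlocks this is that a common source $u_{j^\ast}$ of $X_1$ and $X_2$ contributes to both cumulants a term proportional to its own fourth cumulant $\kappa_{j^\ast}$, and $\kappa_{j^\ast}$ can be varied independently of all other parameters, so no cancellation among the remaining terms can be sustained under such a variation. (For the first cumulant one can even bypass the polynomial argument: if all $\kappa_j$ are taken of the same sign, it is a sum of nonnegative terms with the $j^\ast$-term strictly positive for generic coefficients, hence never generically zero.) Everything else --- verifying the cumulant expansion, the path-sum description of the entries of $\bm M$, and the measure-zero property of algebraic sets --- is routine.
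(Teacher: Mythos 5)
Your proposal is correct and follows essentially the same route as the paper's proof: expand both fourth-order cumulants via multilinearity and independence into $\sum_j \alpha_{1j}^2\alpha_{2j}^2\,\mathrm{cum}^{(4)}(u_j)$ and $\sum_j \alpha_{1j}^3\alpha_{2j}\,\mathrm{cum}^{(4)}(u_j)$, use the dependence of $X_1$ and $X_2$ to produce a shared source index with nonzero coefficients, and conclude genericity. You are in fact more explicit than the paper on the final step (treating the expressions as nonidentically-zero polynomials in the structural coefficients and the free disturbance cumulants, whose zero sets have measure zero), whereas the paper asserts the generic nonvanishing directly; both arguments are sound.
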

\begin{proof}
    Let $X_{1}$ and $X_{2}$ be linear combinations of $u_1,\ldots,u_{p+q}$: 
    \begin{align*}
        X_1 = \sum_{i=1}^{p+q} \alpha_{1i} u_i, \quad 
        X_2 = \sum_{i=1}^{p+q} \alpha_{2i} u_i. 
    \end{align*}
    When $X_{1} \notindep X_{2}$, there must be $u_{j}$ with $\alpha_{1j}\alpha_{2j} \neq 0$ by Theorem~\ref{thm: DS}.  
    Therefore, generically
    \begin{align*}
        \mathrm{cum}^{(4)}(X_{1}, X_{1}, X_{2}, X_{2}) = \sum_{i=1}^{p+q}\alpha_{1i}^{2}\alpha_{2i}^{2}\mathrm{cum}^{(4)}(u_{i}, u_{i}, u_{i}, u_{i}) 
        \ne 0
    \end{align*}

    A similar proof shows that generically
    \begin{align*}
        \mathrm{cum}^{(4)}(X_{1}, X_{1}, X_{1}, X_{2}) \ne 0. 
    \end{align*}
\end{proof}
\begin{lemma}
    \label{lem: noBCA}
    For $V_i, V_j \in \bm{V}$, let $\alpha_{ji}$ be the total effect from $V_i$ to $V_j$. 
    Assume that $V_{i} \in \mathrm{Anc}(V_{j})$. 
    Then, it holds generically that $V_i$ and $V_j$ are not confounded if and only if $\alpha_{jk} = \alpha_{ji} \cdot \alpha_{ik}$ generically holds for all $V_k \in \mathrm{Anc}(V_{i})$. 
\end{lemma}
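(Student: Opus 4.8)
The plan is to recast both sides in the reduced (mixing) form and reduce the stated equivalence to the independence of a single residual, then close the loop with the Darmois--Skitovitch theorem. Writing $V_i$ and $V_j$ as linear combinations of the mutually independent, non-Gaussian disturbances $\bm{u}$, we have $V_i = u_i + \sum_{V_k\in\mathrm{Anc}(V_i)}\alpha_{ik}u_k$ and $V_j = u_j + \sum_{V_m\in\mathrm{Anc}(V_j)}\alpha_{jm}u_m$; since $V_i\in\mathrm{Anc}(V_j)$ we have $\{V_i\}\cup\mathrm{Anc}(V_i)\subseteq\mathrm{Anc}(V_j)$, while $V_i,V_j\notin\mathrm{Anc}(V_i)$ because $\mathcal{G}$ is acyclic. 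I would introduce the residual $W:=V_j-\alpha_{ji}V_i$ and tabulate its coefficients on the disturbances: the coefficient of $u_i$ is $\alpha_{ji}-\alpha_{ji}=0$; the coefficient of $u_k$ for $V_k\in\mathrm{Anc}(V_i)$ is $\alpha_{jk}-\alpha_{ji}\alpha_{ik}$; and every remaining disturbance occurring in $W$ is one that does not occur in $V_i$ at all.

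The first step is to establish that $V_i$ and $V_j$ are not confounded if and only if $W\indep V_i$. For the only-if direction, if $\mathrm{Conf}(V_i,V_j)=\emptyset$ then the representation (\ref{eq: two variables}) of the canonical model over $V_i$ and $V_j$, specialized to $\ell=0$, reads $V_i=v_i$ and $V_j=\alpha_{ji}v_i+v_j$ with $v_i\indep v_j$, whence $W=v_j\indep v_i=V_i$; here one uses that the mixing coefficient of $v_i$ in $V_j$ is precisely the total effect $\alpha_{ji}$. For the if direction, if $W\indep V_i$ then $(V_i,\,\alpha_{ji}V_i+W)$ is itself a decomposition of $(V_i,V_j)$ of the form (\ref{eq: two variables}) with no latent confounder (take $v_i:=V_i$, $v_j:=W$), so $V_i$ and $V_j$ are not confounded.

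The main step is then to prove that $W\indep V_i$ if and only if $\alpha_{jk}=\alpha_{ji}\alpha_{ik}$ for all $V_k\in\mathrm{Anc}(V_i)$. If all these equalities hold, the coefficient tabulation shows that $W$ depends only on disturbances disjoint from those appearing in $V_i$, so $W\indep V_i$ by mutual independence of $\bm{u}$. For the converse I would argue contrapositively: if $\alpha_{jk}\ne\alpha_{ji}\alpha_{ik}$ for some $V_k\in\mathrm{Anc}(V_i)$, then $u_k$ appears in $W$ with nonzero coefficient and in $V_i$ with coefficient $\alpha_{ik}$, which is generically nonzero (and nonzero under Assumption A4, since $V_k\in\mathrm{Anc}(V_i)$); as $u_k$ is non-Gaussian, Theorem~\ref{thm: DS} forces $W\notindep V_i$. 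Chaining the two steps yields the lemma.

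I expect the only genuinely delicate point to be the first step, namely the bridge between the canonical-model notion of not confounded and the algebraic condition $W\indep V_i$: this is where one must appeal to the structure (\ref{eq: two variables}) of the canonical model over two variables and to the identification of its coefficient on $v_i$ with the total effect $\alpha_{ji}$. Everything else is routine bookkeeping of total effects plus a single application of Darmois--Skitovitch, with the word ``generically'' absorbing the measure-zero parameter configurations in which a total effect along an existing directed path happens to vanish.
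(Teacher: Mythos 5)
Your proposal is essentially correct, but it proves a subtly different statement from the one the paper's proof targets, and the difference sits exactly where the paper does its real work. Your route --- introduce the residual $W=V_j-\alpha_{ji}V_i$, show by coefficient bookkeeping plus Darmois--Skitovitch that the product rule is equivalent to $W\indep V_i$, and identify $W\indep V_i$ with $\mathrm{Conf}(V_i,V_j)=\emptyset$ via the two-variable canonical form (\ref{eq: two variables}) --- establishes an \emph{exact} equivalence between the product rule and the absence of confounders in the canonical-model sense. For the direction ``not confounded $\Rightarrow$ product rule'' this is a genuine improvement: the paper simply cites Lemma A.2 of Cai et al., whereas your argument is self-contained. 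The paper, however, reads ``confounded'' graphically: its necessity proof begins by choosing a \emph{backdoor common ancestor} $V_k$, and the entire cofactor computation on $(\bm{I}-\bm{B})^{-1}$ exists to show that the polynomial $\alpha_{jk}-\alpha_{ji}\alpha_{ik}$ is not identically zero when such a $V_k$ exists, so that it vanishes only on a measure-zero subset of the parameter space. Your Darmois--Skitovitch step gives ``$\alpha_{jk}\ne\alpha_{ji}\alpha_{ik}$ for some ancestor $\Rightarrow$ confounded,'' but nowhere do you show that the existence of a backdoor common ancestor generically \emph{produces} such an inequality. On the exceptional parameter set where a backdoor ancestor's effect happens to factor as $\alpha_{jk}=\alpha_{ji}\alpha_{ik}$, your equivalence still holds and simply classifies the pair as unconfounded --- which is not what the lemma asserts under the graphical reading that the paper both proves and later uses (e.g.\ in the proof of Lemma~\ref{lem: X has one parent}, where the single equality $\alpha_{jc}^{ll}=\alpha_{ji}^{ll}\alpha_{ic}^{ll}$ is used to conclude that all directed paths from $L_c$ to $L_j$ pass through $L_i$).

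So: if ``confounded'' is read as $\mathrm{Conf}(V_i,V_j)\ne\emptyset$ in the canonical model, your proof is complete and takes a genuinely different, arguably cleaner route (residual independence plus Darmois--Skitovitch versus a minor expansion of the structural-coefficient matrix). If it is read graphically, you are missing the genericity argument. The fix is short and avoids the paper's determinant computation entirely: for a backdoor common ancestor $V_k$, note that $\alpha_{jk}-\alpha_{ji}\alpha_{ik}$ is a polynomial in the edge coefficients which is not identically zero --- set all coefficients to zero except those on one directed path from $V_k$ to $V_j$ avoiding $V_i$, which makes $\alpha_{ji}=0$ and the expression equal to the nonzero product of that path's coefficients --- hence its zero set has Lebesgue measure zero. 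With that one addition your two-step argument subsumes the paper's proof.
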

\begin{proof}
Please refer to Lemma A.2 in Cai et al. \cite{cai2024learninglinearacycliccausal} for the proof of sufficiency.

We prove the necessity by contrapositive. 
Suppose that $V_i$ and $V_j$ are confounded. We can arbitrarily choose a $V_k$ as their backdoor common ancestor and assume $\alpha_{jk} = \alpha_{ji} \cdot \alpha_{ik}$. From the faithfulness condition, it follows that $\alpha_{ji} \ne 0, \alpha_{ik} \ne 0$, and $\alpha_{jk} \ne 0$.  
Let $V_k \prec V_{k+1} \prec \cdots \prec V_{i-1} \prec V_i \prec V_{i+1} \prec \cdots \prec V_{j-1} \prec V_j$ be one possible causal order consistent with the model. 
Define 
\[
\bm{P}_{jk} :=
\begin{bmatrix}
  -b_{k+1,k}  &1  &\cdots  &0  &0  &0  &\cdots  &0\\
  \vdots  &\vdots  &\ddots  &\vdots  &\vdots  &\vdots  &\cdots  &0\\
  -b_{i-1,k}  &-b_{i-1,k+1}  &\cdots  &1  &0  &0  &\cdots &0\\
  -b_{i,k}  &-b_{i,k+1}  &\cdots  &-b_{i,i-1}  &1  &0  &\cdots  &0\\
  -b_{i+1,k}  &-b_{i+1,k+1}  &\cdots  &-b_{i+1,i-1}  &-b_{i+1,i}  &1 & \cdots  &0\\
  \vdots  &\vdots  &  &\vdots  &\vdots  &\vdots  &\ddots &\vdots\\
  -b_{j-1,k}  &-b_{j-1,k+1}  &\cdots  &-b_{j-1,i-1}  &-b_{j-1,i}  &-b_{j-1,i+1}  &\cdots  &1\\
  -b_{j,k}   &-b_{j,k+1}  &\cdots  &-b_{j,i-1}  &-b_{j,i}  &-b_{j,i+1}  &\cdots &-b_{j,j-1}\\
\end{bmatrix}. 
\]
$\bm{P}_{ji}$ and $\bm{P}_{ik}$ are defined in the same way. 
Then, 
\[
\alpha_{jk} = \left( -1\right)^{k+j} \cdot \vert \bm{P}_{jk} \vert, \quad 
\alpha_{ji} = \left( -1\right)^{i+j} \cdot \vert \bm{P}_{ji} \vert, \quad 
\alpha_{ik} = \left( -1\right)^{k+i} \cdot \vert \bm{P}_{ik} \vert
\]
Therefore, 
$\alpha_{jk}=\alpha_{ji} \cdot \alpha_{ik}$ implies
\begin{equation}
\label{detP}
\vert\bm{P}_{ji}\vert\cdot \vert\bm{P}_{ik}\vert = \vert\bm{P}_{jk} \vert. 
\end{equation}
The left-hand side of (\ref{detP}) equals the determinant of \( \bm{P}_{jk} \), in which the \( (i, i) \)-entry replaced by \( 0 \), 
which implies that $(i,i)$ minor of $\bm{P}_{jk}$ vanishes, that is, 
\[
\begin{vmatrix}
  -b_{k+1,k}  &1  &\cdots  &0  &0  &\cdots  &0\\
  \vdots  &\vdots  &\ddots  &\vdots  &\vdots  &\cdots  &0\\
    -b_{i-1,k}  &-b_{i-1,k+1}  &\cdots  &1  &0  &\cdots &0\\
  -b_{i+1,k}  &-b_{i+1,k+1}  &\cdots  &-b_{i+1,i-1}  &1  &\cdots  &0\\
  \vdots  &\vdots  &  &\vdots  &\vdots  &\ddots &\vdots\\
  -b_{j-1,k}  &-b_{j-1,k+1}  &\cdots  &-b_{j-1,i-1}  &-b_{j-1,i+1}  &\cdots  &1\\
    -b_{j,k}   &-b_{j,k+1}  &\cdots  &-b_{j,i-1}  &-b_{j,i+1}  &\cdots &-b_{j,j-1}\\
\end{vmatrix}=0.
\]
The space of $b_{rc}, r=k+1,\ldots,j, c=k,\ldots,j-1$ satisfying the above equation is a real algebraic set and constitutes a measure-zero subset of the parameter space. 
Hence, generically $\vert\bm{P}_{ji}\vert\cdot \vert\bm{P}_{ik}\vert \ne \vert\bm{P}_{jk} \vert$. 
\end{proof}
\begin{lemma}
\label{lem: X independent L independent}
    Let $L_{i}$ and $L_{j}$ be the latent parents of $X_i$ and $X_j$, respectively. 
    Under Assumptions \ref{A1} and \ref{A4}, 
    \begin{align*}
        X_{i} \indep X_{j} \Leftrightarrow L_{i} \indep L_{j}.
    \end{align*}
\end{lemma}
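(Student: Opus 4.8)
The plan is to pass to the reduced form $\bm X=\bm M\bm u$ with $\bm u=(\bm\epsilon^\top,\bm e^\top)^\top$, so that $X_i=\sum_k\alpha^{ol}_{ik}\epsilon_k+\sum_k\alpha^{oo}_{ik}e_k$ and $L_i=\sum_k\alpha^{ll}_{ik}\epsilon_k$ (and likewise for $X_j$, $L_j$), and then to reduce the statement to a graphical claim about which independent noise terms actually occur in each expression. Under the faithfulness assumption~\ref{A4} no trek cancels, so: $\epsilon_k$ occurs in $X_i$ iff $L_k\in\mathrm{Anc}(X_i)$; $e_k$ occurs in $X_i$ iff $X_k\in\mathrm{Anc}(X_i)\cup\{X_i\}$; and $\epsilon_k$ occurs in $L_i$ iff $L_k\in\mathrm{Anc}(L_i)\cup\{L_i\}$. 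Since the components of $\bm u$ are independent and non-Gaussian, the Darmois--Skitovitch theorem (Theorem~\ref{thm: DS}) gives that $X_i\indep X_j$ holds iff $X_i$ and $X_j$ involve disjoint sub-collections of $\bm u$, and similarly $L_i\indep L_j$ holds iff $L_i$ and $L_j$ involve disjoint sub-collections of $\bm\epsilon$; the ``if'' directions are immediate (functions of disjoint independent families are independent) and the ``only if'' directions are exactly Theorem~\ref{thm: DS}. It therefore suffices to prove that $X_i$ and $X_j$ share some noise term iff $L_i$ and $L_j$ share some noise term.

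The bridge between the two conditions is the following consequence of Assumptions~\ref{A1} and~\ref{A3}, which I would establish first: (i) if $L_k\in\mathrm{Anc}(X_i)$ then $L_k\in\mathrm{Anc}(L_i)\cup\{L_i\}$; and (ii) if $X_k\in\mathrm{Anc}(X_i)\cup\{X_i\}$ then $X_k$ and $X_i$ lie in the same causal cluster, so $L_k=L_i$, where $L_k$ denotes the (unique, by~\ref{A1}) latent parent of $X_k$. For (i), take a directed path from $L_k$ to $X_i$ and let $V_t$ be its first observed vertex; since observed variables are never ancestors of latent variables, $V_t,V_{t+1},\dots,X_i$ are all observed, by~\ref{A3} this observed suffix stays within one cluster, and by~\ref{A1} the latent predecessor $V_{t-1}$ is the unique latent parent of $V_t$ and hence must equal $L_i$; the prefix of the path then exhibits $L_k$ as an ancestor of $L_i$ (or as $L_i$ itself). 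Claim (ii) follows from the same argument applied to the all-observed directed path from $X_k$ to $X_i$.

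Granting the bridge, the equivalence is short. If $X_i\indep X_j$ but $\epsilon_k$ occurred in both $L_i$ and $L_j$, then $L_k\in(\mathrm{Anc}(L_i)\cup\{L_i\})\cap(\mathrm{Anc}(L_j)\cup\{L_j\})$, hence $L_k\in\mathrm{Anc}(X_i)\cap\mathrm{Anc}(X_j)$ via the edges $L_i\to X_i$ and $L_j\to X_j$, so $\epsilon_k$ occurs in both $X_i$ and $X_j$, a contradiction. Conversely, if $L_i\indep L_j$ but some $u_m$ occurred in both $X_i$ and $X_j$: when $u_m=\epsilon_k$, part (i) of the bridge gives $L_k$ in the ancestor set of each of $L_i,L_j$, so $\epsilon_k$ occurs in both $L_i$ and $L_j$, contradicting $L_i\indep L_j$; when $u_m=e_k$, part (ii) places $X_k$, and hence $X_i$ and $X_j$, in a single cluster, so $L_i=L_j$, which again contradicts $L_i\indep L_j$ since a non-degenerate variable is never independent of itself. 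Either contradiction closes the argument.

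The main obstacle, and the place requiring care, is the direction $L_i\indep L_j\Rightarrow X_i\indep X_j$ in the subcase $u_m=e_k$: dependence between $X_i$ and $X_j$ routed through observed variables (for instance an observed directed path $X_i\to\cdots\to X_j$) has, a priori, no reflection in the latent subgraph, and would break the equivalence. This is precisely what Assumption~\ref{A3} rules out, since it forces every observed-to-observed directed path to remain inside one cluster, collapsing any such routing to the case $L_i=L_j$. The secondary point to handle carefully is the systematic use of~\ref{A4}: both the ``occurs iff ancestor'' dictionary and the ``shared noise term implies dependence'' implication rely on the absence of accidental trek cancellations, which faithfulness supplies.
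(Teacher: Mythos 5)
Your proposal is correct. The paper itself offers no argument here (its proof reads only ``The proof is trivial''), so there is nothing to compare against line by line; your write-up supplies the missing details and does so soundly: the reduction to disjointness of noise supports via Darmois--Skitovitch, the faithfulness-based ``coefficient nonzero iff ancestor'' dictionary, and the two bridge claims all check out. One substantive observation worth keeping: the lemma as stated invokes only Assumptions~\ref{A1} and~\ref{A4}, but your bridge claims (i) and (ii) --- and in particular the direction $L_i\indep L_j\Rightarrow X_i\indep X_j$ when the shared disturbance is some $e_k$ --- genuinely require Assumption~\ref{A3}. Without it, an inter-cluster observed path such as $X_i\to X_j$ with $L_i\indep L_j$ would make $X_i\notindep X_j$ while $L_i\indep L_j$, falsifying the equivalence. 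Since~\ref{A3} is a standing assumption of the model class this does not affect the paper's use of the lemma, but your proof correctly surfaces a dependence that the stated hypotheses omit. The only cosmetic caveat is that the ``occurs iff ancestor'' dictionary is a generic/faithfulness statement rather than an algebraic identity, so strictly speaking the equivalence holds generically (or under~\ref{A4}), which you do acknowledge; no gap remains.
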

\begin{proof}
The proof is trivial.
\end{proof}
\begin{lemma}
\label{lem: X has multiple parent}   
    Let $X_i$ and $X_j$ be the observed variables with the highest causal order within the clusters formed by the observed children of their latent parents, $L_i$ and $L_j$, respectively. 
    Under Assumptions \ref{A1} and \ref{A3}, if $\mathrm{Conf}(L_{i}, L_{j}) = \emptyset$, 
    \begin{enumerate}
        \setlength{\itemsep}{0pt}
        \setlength{\parsep}{0pt}
        \setlength{\parskip}{0pt}
        \item $L_i \indep L_j \Rightarrow \mathrm{Conf}(X_{i}, X_{j})= \emptyset$. 
        \item $L_i \in \mathrm{Anc}(L_j) \Rightarrow \mathrm{Conf}(X_{i}, X_{j})= \{L_i\}$. 
    \end{enumerate}
\end{lemma}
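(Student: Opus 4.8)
The plan is to reduce the statement to a computation with the reduced (mixing) form of the model and then read off the number of confounders of the canonical model over $X_i$ and $X_j$. First note that $\mathrm{Conf}(L_i,L_j)=\emptyset$ together with either $L_i\indep L_j$ or $L_i\in\mathrm{Anc}(L_j)$ forces $L_i\neq L_j$, so $X_i$ and $X_j$ lie in distinct clusters $C_i$ and $C_j$. By Assumption \ref{A3} there is no directed path between a variable of $C_i$ and a variable of $C_j$; hence $X_i\notin\mathrm{Anc}(X_j)$, $X_j\notin\mathrm{Anc}(X_i)$, and there is no edge between them, so the canonical model over $X_i,X_j$ has the form (\ref{eq: two variables}) with $\alpha^{oo}_{ji}=0$, and the claim is exactly a statement about $\ell=\lvert\mathrm{Conf}(X_i,X_j)\rvert$. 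The key structural fact, which I would prove by induction on the causal order using Assumptions \ref{A1} and \ref{A3}, is that $\mathrm{Anc}(X_i)\subseteq\{L_i\}\cup C_i\cup\mathrm{Anc}(L_i)$ and that every latent variable influences $X_i$ only through $L_i$; thus in the reduced form $X_i=\alpha^{ol}_{X_iL_i}L_i+V_i$, where $V_i$ is a linear combination of the disturbances $\{e_{X'}:X'\in C_i\}$, and similarly for $X_j$ with $L_j$.

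\textbf{Case 1 ($L_i\indep L_j$).} I would apply the Darmois--Skitovitch theorem (Theorem \ref{thm: DS}): since $L_i$ and $L_j$ are independent linear combinations of mutually independent, non-Gaussian disturbances, they share no disturbance with nonzero coefficient, so in particular neither is an ancestor of the other. Combined with $C_i\cap C_j=\emptyset$, this makes $X_i$ and $X_j$ functions of disjoint families of independent disturbances, hence $X_i\indep X_j$. An independent pair admits the trivial canonical representation with no latent variable, so by the uniqueness (up to observational equivalence) of the canonical model and the fact that irreducibility fixes the number of latents, $\mathrm{Conf}(X_i,X_j)=\emptyset$.

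\textbf{Case 2 ($L_i\in\mathrm{Anc}(L_j)$).} Here I would invoke Lemma \ref{lem: noBCA}: because $L_i$ and $L_j$ are not confounded, the total effect satisfies $\alpha^{ll}_{L_jL_k}=\alpha^{ll}_{L_jL_i}\,\alpha^{ll}_{L_iL_k}$ for every $L_k\in\mathrm{Anc}(L_i)$. Writing $L_j$ in reduced form and substituting this factorization, the part of $L_j$ carried by the disturbances of $\{L_i\}\cup\mathrm{Anc}(L_i)$ collapses exactly to $\alpha^{ll}_{L_jL_i}L_i$, while the remaining part $W_j$ is supported only on $\epsilon_{L_j}$ and on the disturbances of the latent variables strictly between $L_i$ and $L_j$, none of which appear in $X_i$. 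Consequently $X_i=\beta_iL_i+V_i$ and $X_j=\beta_jL_i+V_j$, where $\beta_i=\alpha^{ol}_{X_iL_i}$ and $\beta_j=\alpha^{ol}_{X_jL_j}\alpha^{ll}_{L_jL_i}$ are generically nonzero, $V_i$ depends only on the $e$'s of $C_i$, $V_j$ depends only on $W_j$ and the $e$'s of $C_j$, and $L_i,V_i,V_j$ are built from pairwise disjoint families of independent disturbances, hence mutually independent and each non-degenerate (each residual carries its own $e$-disturbance with coefficient $1$). This is precisely a canonical model over $X_i$ and $X_j$ with the single confounder $L_i$: its $2\times3$ mixing matrix with columns $(\beta_i,\beta_j)^\top,(1,0)^\top,(0,1)^\top$ is irreducible since $\beta_i\beta_j\neq0$, so $\ell=1$ and $\mathrm{Conf}(X_i,X_j)=\{L_i\}$.

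The routine part is the support bookkeeping of Case~1; the substantive step is Case~2. The difficulty is twofold: first, showing that all latent input to $X_i$, and to $X_j$, truly factors through its unique latent parent, which is exactly where Assumptions \ref{A1} and \ref{A3} enter, since a latent ancestor of $L_i$ could a priori reach $C_i$ by some other route; second, verifying that after the collapse forced by Lemma \ref{lem: noBCA} the residuals $V_i,V_j$ are supported on disturbances disjoint from those of $L_i$ and from each other, so that the canonical representation has exactly one latent factor — not more (which would violate minimality/irreducibility) and not fewer (ruled out by $\beta_i\beta_j\neq0$ generically). A minor caveat is that Lemma \ref{lem: noBCA} supplies the factorization only generically, so the equality $\mathrm{Conf}(X_i,X_j)=\{L_i\}$ is to be read in the same generic sense used elsewhere in the paper.
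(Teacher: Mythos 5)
Your proof is correct, but it takes a different route from the paper's. The paper disposes of Case~1 by citing Lemma~\ref{lem: X independent L independent} ($X_i\indep X_j \Leftrightarrow L_i\indep L_j$), and proves Case~2 purely graph-theoretically: any confounder of $X_i$ and $X_j$ must lie in $\mathrm{Anc}(X_i)\cap\mathrm{Anc}(X_j)=\mathrm{Anc}(L_i)\cup\{L_i\}$, and since $\mathrm{Conf}(L_i,L_j)=\emptyset$ every directed path from a member of $\mathrm{Anc}(L_i)$ to $X_j$ must pass through $L_i$ (otherwise that member would confound $L_i$ and $L_j$), so $L_i$ is the only possible confounder. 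Your argument instead works algebraically in the reduced form, invoking Lemma~\ref{lem: noBCA} to collapse the contribution of $\mathrm{Anc}(L_i)\cup\{L_i\}$ to $X_j$ into a single multiple of $L_i$ and then reading off irreducibility of the resulting $2\times 3$ mixing matrix. This is in fact exactly the technique the paper deploys later, in the proof of Lemma~\ref{lem: X has one parent}, to obtain the quantitative refinement of the present lemma (the explicit cumulants of the confounder); so your route proves slightly more than is needed here, at the cost of more bookkeeping, while the paper's path-blocking argument is shorter and needs no genericity beyond faithfulness. Two small imprecisions in your write-up, neither of which is a gap: the residual $W_j$ is supported on $\mathrm{Anc}(L_j)\setminus(\mathrm{Anc}(L_i)\cup\{L_i\})$, which need not consist only of latents ``strictly between'' $L_i$ and $L_j$ (what matters, and what you correctly use, is that none of these disturbances reach $X_i$); and the direction of Lemma~\ref{lem: noBCA} you use (no confounding implies factorization of total effects) is an exact identity rather than merely generic, so your closing caveat is unnecessary.
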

\begin{proof}
    When $L_{i} \indep L_{j}$, $X_{i} \indep X_{j}$ according to Lemma \ref{lem: X independent L independent}. 
    Therefore, no latent confounder exists between $X_{i}$ and $X_{j}$. 

    Suppose $L_i \in \mathrm{Anc}(L_j)$. Under Assumptions \ref{A1} and \ref{A3}, $\mathrm{Conf}(X_{i}, X_{j}) \subset  \mathrm{Anc}(X_{i}) \cap \mathrm{Anc}(X_{j}) = \mathrm{Anc}(L_{i}) \cup \{L_{i}\}$. Every path originating from the variables in $\mathrm{Anc}(L_{i})$ to $X_{i}$ and $X_{j}$ passes through $L_{i}$. Therefore, the only possible latent confounder of $X_{i}$ and $X_{j}$ is $L_{i}$.
\end{proof}
\begin{lemma}
\label{lem: L does not have multiple parent}   
Let $X_i$ and $X_j$ be the observed variables with the highest causal order within the clusters formed by the observed children of their latent parents, $L_i$ and $L_j$, respectively. Under Assumption \ref{A1}, if $X_{i}$ and $X_{j}$ have only one latent confounder, $L_{i}$ and $L_{j}$ do not have multiple confounders.
\end{lemma}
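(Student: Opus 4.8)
The plan is to establish the contrapositive: I will assume that $L_i$ and $L_j$ have at least two latent confounders and show that then, generically, $X_i$ and $X_j$ have at least two latent confounders as well, contradicting the hypothesis. With the standing normalization $\lambda_{ii}=\lambda_{jj}=1$, write
\[
X_i = L_i + R_i, \qquad X_j = L_j + R_j,
\]
where $R_i$ (resp.\ $R_j$) collects $e_i$ (resp.\ $e_j$) together with the effects of the observed parents of $X_i$ (resp.\ $X_j$). I then pass to the canonical model over the pair $(L_i,L_j)$, treating all other variables as latent; by the construction of Hoyer et al.\ \cite{hoyer2008estimation} this has a form analogous to~(\ref{eq: two variables}):
\[
L_i = \sum_{h=1}^{m}\beta_{ih} M_h + v_i,
\qquad
L_j = \sum_{h=1}^{m}\beta_{jh} M_h + c\,v_i + v_j,
\]
where $M_1,\dots,M_m$ are the mutually independent latent confounders of $L_i$ and $L_j$ --- non-Gaussian by Cram\'er's theorem --- so that $m=\lvert\mathrm{Conf}(L_i,L_j)\rvert\ge 2$ by assumption; $v_i,v_j$ are the private disturbances; and $c$ is the direct total effect between $L_i$ and $L_j$, with $c=0$ when neither is an ancestor of the other. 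By the definition of a latent confounder $\beta_{ih}\beta_{jh}\ne 0$ for every $h$, and by irreducibility of the canonical model's mixing matrix the vectors $(\beta_{ih},\beta_{jh})$ are pairwise non-proportional and none is proportional to $(1,c)$.

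Substituting the first display into the second gives $X_i=\sum_{h}\beta_{ih}M_h+(v_i+R_i)$ and $X_j=\sum_h\beta_{jh}M_h+(c\,v_i+v_j+R_j)$, so each $M_h$ is a non-Gaussian factor influencing both $X_i$ and $X_j$. In the canonical model over $(X_i,X_j)$, two of these inherited factors $M_h,M_{h'}$ would be merged into a single confounder only if their loadings across $(X_i,X_j)$ became proportional, and some $M_h$ would be absorbed into the $X_i\to X_j$ direct-effect direction only under a further proportionality relation; each of these finitely many relations carves out a proper algebraic subset --- hence a measure-zero subset --- of the parameter space. Therefore at least two of the $M_h$ persist as distinct confounders of $X_i$ and $X_j$, i.e.\ $\lvert\mathrm{Conf}(X_i,X_j)\rvert\ge 2$. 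An equivalent route is to argue at the level of disturbances: if $L_a,L_b$ are two distinct confounders of $(L_i,L_j)$, then by the Darmois--Skitovitch theorem (Theorem~\ref{thm: DS}) their source disturbances $\epsilon_a,\epsilon_b$ each reach $X_i$ and also reach $X_j$ along a directed path that avoids $X_i$, and generically their loading ratios on $(X_i,X_j)$ differ, so they contribute two distinct columns to the canonical mixing matrix of $(X_i,X_j)$.

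The delicate point, and the step I expect to require the most care, is the ``no-collapse'' genericity claim. Since the lemma invokes only Assumption~\ref{A1} (not~\ref{A3}), an observed parent of $X_i$ or $X_j$ may lie outside the corresponding cluster, so $R_i$ and $R_j$ need not be independent of $L_j$ or of one another, and one must pin down exactly which disturbances can appear simultaneously in a remainder term and in some $M_h$ before concluding that the merging/absorption relations are non-generic. I would handle this by observing that the disturbances occurring in $M_h$ are precisely those ancestral to both $L_i$ and $L_j$, so that any such disturbance reaching $R_i$ through an observed parent does so with a coefficient assembled from path coefficients that enter the relevant determinants independently of the $\beta$'s; hence the required non-proportionalities fail only on a measure-zero set, consistent with the genericity conventions already in force for Propositions~\ref{thm: latent number}--\ref{lem: estimate e cumulant} and Lemma~\ref{lem: non-Gaussian no-zero}.
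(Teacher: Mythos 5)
Your proposal is correct in substance and proves the same contrapositive, but the mechanism you use for the key step differs from the paper's. The paper's proof is a two-line graphical argument: if $\lvert\mathrm{Conf}(L_i,L_j)\rvert\ge 2$, each confounder $L_k$ has directed paths to $L_i$ and $L_j$ sharing only $L_k$; since latent-to-latent paths contain no observed nodes, appending $L_i\to X_i$ and $L_j\to X_j$ yields paths to $X_i$ and $X_j$ still sharing only $L_k$, so $\mathrm{Conf}(L_i,L_j)\subset\mathrm{Conf}(X_i,X_j)$ and the count is at least two --- with no genericity needed at the graph level (the identification of the graph-level count with the canonical-model $\ell$ having already been asserted after~(\ref{eq: two variables})). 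You instead work algebraically through the canonical model over $(L_i,L_j)$ and must then argue that the inherited factors $M_h$ do not merge or get absorbed when passing to the canonical model over $(X_i,X_j)$; this forces you to invoke genericity and to confront the complication (which you correctly flag) that, absent Assumption~A3, the remainders $R_i,R_j$ may themselves carry the $M_h$. Your handling of that "no-collapse" step is plausible but only sketched --- you assert that the proportionality relations are non-trivial polynomial conditions without exhibiting a parameter choice violating them --- whereas the paper's path-disjointness argument avoids the issue entirely. Your closing Darmois--Skitovitch route is essentially the paper's argument restated at the disturbance level; had you led with it and noted that the two extended paths are automatically node-disjoint except at the source (the path to $X_j$ through $L_j$ consists of latent nodes plus $X_j$, so it cannot pass through $X_i$), you would have recovered the paper's cleaner proof.
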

\begin{proof}
    We will prove this lemma by contrapositive. 
    
    Assume that $\lvert \mathrm{Conf}(L_{i}, L_{j}) \rvert \ge 2$. There exist two distinct nodes $L_k, L_{k'} \in \mathrm{Conf}(L_i, L_j)$ such that there are directed paths from $L_k$ and $L_{k'}$ to $L_i$ and $L_j$, respectively, and the two paths share no node other than their starting points. Therefore, by Assumption \ref{A1}, two directed paths also exist from $L_{k}$ to $X_{i}$ and $X_{j}$, sharing no node other than $L_{k}$.  
    Hence, $\mathrm{Conf}(L_{i}, L_{j}) \subset \mathrm{Conf}(X_{i}, X_{j})$, which implies that $\lvert \mathrm{Conf}(X_{i}, X_{j}) \rvert \ge 2$. 
\end{proof}
\begin{theorem}
\label{thm: one latent coef estimate}
    $V_{i}, V_{j} \in \bm{V}$ are two confounded observed variables.
    Assume that all sixth cross-cumulants of $\bm{u}$ are non-zero.
    Then 
    \begin{align*}
        (c^{(6)}_{i, i, i, j, j, j})^{2} = c^{(6)}_{i, i, i, i, j, j}c^{(6)}_{i, i, j, j, j, j}
    \end{align*}
    if and only if the following two conditions hold simultaneously: 
    \begin{enumerate}
    \setlength{\itemsep}{0pt}
    \setlength{\parsep}{0pt}
    \setlength{\parskip}{0pt}
        \item There exists no direct path between $V_i$ and $V_j$.
        \item $V_i$ and $V_j$ share only one (latent or observed) confounder in the canonical model over $V_i$ and $V_j$. 
    \end{enumerate}
\end{theorem}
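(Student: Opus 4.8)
The plan is to compute the three sixth-order cumulants directly in the canonical model over $V_i$ and $V_j$ and to distill a determinant-type identity that exposes the role of the number of shared sources. Assume without loss of generality that $V_j\notin\mathrm{Anc}(V_i)$ and write the canonical model over $V_i$ and $V_j$ as in (\ref{eq: two variables}):
\[
X_i=\sum_{h=1}^{\ell}\alpha'_{ih}L'_h+v_i,\qquad
X_j=\sum_{h=1}^{\ell}\alpha'_{jh}L'_h+\alpha^{oo}_{ji}v_i+v_j,
\]
where $L'_1,\dots,L'_\ell,v_i,v_j$ are mutually independent, $\ell=|\mathrm{Conf}(V_i,V_j)|\ge1$ because the pair is confounded, and (by irreducibility of the canonical mixing matrix) $\alpha'_{ih},\alpha'_{jh}\ne0$ for every $h$. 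Using the faithfulness assumption, Condition~1 (no directed path between $V_i$ and $V_j$) is equivalent to $\alpha^{oo}_{ji}=0$, while Condition~2 is $\ell=1$. I will call one of the $\ell+2$ sources \emph{relevant} if its coefficients in both $X_i$ and $X_j$ are nonzero; the relevant sources are precisely $L'_1,\dots,L'_\ell$, together with $v_i$ when $\alpha^{oo}_{ji}\ne0$, whereas $v_j$ is never relevant since its coefficient in $X_i$ is $0$.

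Writing $(p_t,q_t)$ for the coefficient pair of a relevant source in $(X_i,X_j)$ and $\kappa_t$ for its sixth cumulant, multilinearity of cumulants together with independence of the sources gives
\[
c^{(6)}_{i,i,i,i,j,j}=\sum_t p_t^{4}q_t^{2}\kappa_t,\qquad
c^{(6)}_{i,i,i,j,j,j}=\sum_t p_t^{3}q_t^{3}\kappa_t,\qquad
c^{(6)}_{i,i,j,j,j,j}=\sum_t p_t^{2}q_t^{4}\kappa_t,
\]
the irrelevant sources contributing nothing. Denoting these three quantities by $A$, $B$, $C$, a short algebraic manipulation in which the diagonal contributions $p_t^{6}q_t^{6}\kappa_t^{2}$ cancel yields the key identity
\[
AC-B^{2}=\sum_{t<s}\kappa_t\kappa_s\,p_t^{2}q_t^{2}\,p_s^{2}q_s^{2}\,(p_tq_s-p_sq_t)^{2}.
\]

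The ``if'' direction is immediate: Conditions~1 and 2 force $\alpha^{oo}_{ji}=0$ and $\ell=1$, so there is exactly one relevant source, the sum above is empty, and $AC=B^{2}$. For the ``only if'' direction, recall that the canonical model over $V_i$ and $V_j$ has an irreducible mixing matrix, so any two of its columns $(p_t,q_t)$ and $(p_s,q_s)$ are linearly independent; hence $p_tq_s-p_sq_t\ne0$ for distinct relevant sources, and $p_tq_t\ne0$ for every relevant source. Each summand is therefore $\kappa_t\kappa_s$ times a strictly positive number. Using that the sixth-order cumulants of $\bm{u}$ are nonzero and (as is standard in this setting) of a common sign, so that each $\kappa_t$ — a combination of the $\mathrm{cum}^{(6)}(u_k)$ with nonnegative weights — inherits that sign, the sum vanishes only if it has fewer than two terms, i.e. fewer than two relevant sources. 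Combined with $\ell\ge1$, this forces $\ell=1$ and $\alpha^{oo}_{ji}=0$, which are Conditions~2 and 1.

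I expect the main obstacle to be the sign control in the last step: a signed sum can vanish without all its terms vanishing, so the argument genuinely requires the cumulant hypothesis (interpreted as a common-sign or genericity condition), and this is exactly where irreducibility of the canonical mixing matrix is indispensable, since it rules out the degenerate case of proportional columns $(p_t,q_t)\parallel(p_s,q_s)$. A secondary point to pin down is the translation between the graphical Conditions~1--2 and the parametric conditions $\alpha^{oo}_{ji}=0$ and $\ell=1$, which rests on the faithfulness assumption and on the definition of the canonical model over $V_i$ and $V_j$ from (\ref{eq: two variables}).
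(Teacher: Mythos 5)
Your proof is correct in substance and reaches the same two parametric conclusions ($\alpha^{oo}_{ji}=0$ and a single confounder) via the same underlying algebraic tool, but it organizes the computation differently from the paper. The paper works in the original parametrization: it splits the ancestors into common and private index sets, absorbs the private ancestors of $V_i$ into a single disturbance $\tilde u_i$ via its Lemma on unconfounded total effects, keeps the direct-effect term $\alpha_{ji}\tilde u_i$ explicit, and then runs a two-stage argument --- first extracting $A_{33}^2=A_{42}A_{24}$ and using Lagrange's identity to force proportionality of the confounder columns, then solving a quadratic $\alpha_{ji}(\alpha_{ji}-c)^2=0$ to kill the direct effect. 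You instead pass immediately to the canonical model over the pair, treat the confounders and $v_i$ on an equal footing as ``relevant sources,'' and collapse everything into the single identity $AC-B^2=\sum_{t<s}\kappa_t\kappa_s\,p_t^2q_t^2p_s^2q_s^2(p_tq_s-p_sq_t)^2$ (which I checked; it is the same Lagrange identity the paper invokes, applied to the full set of sources). Your version buys a cleaner, unified endgame --- both conditions follow at once from ``at most one relevant source,'' with irreducibility of the canonical mixing matrix supplying $p_tq_s-p_sq_t\neq 0$ --- whereas the paper's version makes the role of the direct effect $\alpha_{ji}$ more explicit at the cost of a longer case analysis.

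The one point that is not fully covered by the stated hypotheses is exactly the one you flag: with three or more relevant sources, the signed sum $\sum_{t<s}\kappa_t\kappa_s(\cdots)$ could in principle vanish without being empty, and a $\kappa_t=\sum_k\beta_{tk}^6\mathrm{cum}^{(6)}(u_k)$ could itself vanish if the $\mathrm{cum}^{(6)}(u_k)$ have mixed signs. Your common-sign assumption is one way to close this; note, however, that it is not among the theorem's hypotheses, so strictly you are proving a slightly different statement. The paper has the identical gap, hidden in two places: the step where a single polynomial identity in $\mathrm{cum}^{(6)}(\tilde u_i)$ is split into two separately vanishing pieces, and the Cauchy--Schwarz/Lagrange equality step, both of which require either a common sign or a genericity qualifier. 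The cleanest repair for both proofs is the genericity route: $AC-B^2$ is a polynomial in the coefficients and cumulants that is not identically zero on the stratum with at least two relevant sources (it is positive when all cumulants are positive), so it vanishes only on a measure-zero subset. With that caveat made explicit, your argument is complete.
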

\begin{proof}
    Without loss of generality, assume that $V_{j} \notin \mathrm{Anc}(V_{i})$.
    Define $\mathcal{I}_A$, $\mathcal{I}_B$, and $\mathcal{I}_C$ by 
    \begin{align*}
        \mathcal{I}_{A} =& \{k \mid V_{k} \in \mathrm{Anc}(V_{i}) \cap \mathrm{Anc}(V_{j})\}, \\
        \mathcal{I}_{B} =& \{k \mid V_{k} \in \mathrm{Anc}(V_{i}) \setminus \mathrm{Anc}(V_{j})\}, \\
        \mathcal{I}_{C} =& \{k \mid V_{k} \in \mathrm{Anc}(V_{j}) \setminus (\mathrm{Anc}(V_{i})  \cup \{V_{i}\})\}.
    \end{align*}
    Then, $V_i$ and $V_j$ are expressed as
    \begin{align*}
        V_{i} =& 
        \sum_{{k} \in \mathcal{I}_{A}}^{} \alpha_{ik}u_{k} + \sum_{{k} \in \mathcal{I}_{B}}^{} \alpha_{ik}u_{k} + u_{i}, \\
        V_{j} 
        =& \sum_{{k} \in \mathcal{I}_{A}}^{}(\alpha_{jk}+ \alpha_{ji}\alpha_{ik})u_{k} + \sum_{{k} \in \mathcal{I}_{B}}^{} \alpha_{jk}u_{k} + \sum_{{k} \in \mathcal{I}_{C}}^{} \alpha_{jk}u_{k} + \alpha_{ji}u_{i} + u_{j}.
    \end{align*}
    Since $V_k \notin \mathrm{Conf}(V_i,V_j)$ for all $k \in \mathcal{I}_B$,  
    we have
    \[
    \sum_{{k} \in \mathcal{I}_{B}}^{} \alpha_{jk}u_{k} = \alpha_{ji}\sum_{{k} \in \mathcal{I}_{B}}^{} \alpha_{ik}u_{k},
    \]
    by Lemma \ref{lem: noBCA}. 
    Let $\tilde{u}_i$ and $\tilde{u}_j$ be 
    \[
    \tilde{u}_i = \sum_{{k} \in \mathcal{I}_{B}} \alpha_{ik} u_{k} + u_{i}, \quad 
    \tilde{u}_j = \sum_{{k} \in \mathcal{I}_{C}} \alpha_{jk} u_{k} + u_{j}.
    \]
    Then, 
    \begin{equation}
    \label{eq: appendix: one confounder}
    \begin{aligned}
        V_{i} = \sum_{{k} \in \mathcal{I}_{A}}^{} \alpha_{ik}u_{k} + \tilde{u}_i, \quad 
        V_{j} = \sum_{{k} \in \mathcal{I}_{A}}^{}(\alpha_{jk}+ \alpha_{ji}\alpha_{ik})u_{k} + \alpha_{ji} \tilde{u}_i + \tilde{u}_j. 
    \end{aligned}
    \end{equation}

    \noindent\textbf{Necessity:}
    We assume that $\alpha_{ji} = 0$ and $\vert \mathrm{Conf}(V_{i}, V_{j}) \vert=1$.
    Denote the disturbance of the unique confounder of $V_{i}$ and $V_{j}$ by $u_{c}$.
    Then $V_i$ and $V_j$ are expressed as 
    \begin{align*}
        V_{i} = \tilde{u}_{i} + \alpha_{ic}u_{c}, \quad
        V_{j} = \tilde{u}_{j} + \alpha_{jc}u_{c}.
    \end{align*}
    The sixth cross-cumulants of $V_{i}$ and $V_{j}$ is obtained by direct computation as follows: 
    \begin{align*}
        c^{(6)}_{i, i, i, j, j, j} = \alpha^{3}_{ic}\alpha^{3}_{jc}\mathrm{cum}^{(6)}(u_{c}), \\
        c^{(6)}_{i, i, j, j, j, j} = \alpha^{2}_{ic}\alpha^{4}_{jc}\mathrm{cum}^{(6)}(u_{c}), \\
        c^{(6)}_{i, i, i, i, j, j} = \alpha^{4}_{ic}\alpha^{2}_{jc}\mathrm{cum}^{(6)}(u_{c}).
    \end{align*}
    Therefore we have 
    \begin{align*}
        {\left(c^{(6)}_{i, i, i, j, j, j}\right)}^{2} =c^{(6)}_{i, i, j, j, j, j}c^{(6)}_{i, i, i, i, j, j}.
    \end{align*}
    
    \noindent\textbf{Sufficiency:} 
    According to Hoyer et al. \cite{hoyer2008estimation}, $u_k$, $k \in \mathcal{I}_A$ can be merged as one confounder, that is, $\vert \mathrm{Conf}(V_{i}, V_{j})\vert = 1$ in the canonical model over $V_i$ and $V_j$. 
    
    From (\ref{eq: appendix: one confounder}), we have
    \begin{align*}
        c^{(6)}_{i, i, i, j, j, j} =& \sum_{{k} \in \mathcal{I}_{A}}^{}\alpha_{ik}^{3}(\alpha_{jk} + \alpha_{ji}\alpha_{ik})^{3} \mathrm{cum}^{(6)}(u_{k}) + (\alpha_{ji})^{3} \mathrm{cum}^{(6)}(\tilde{u}_{i}), \\
        c^{(6)}_{i, i, i, i, j, j} =& \sum_{{k} \in \mathcal{I}_{A}}^{}\alpha_{ik}^{4}(\alpha_{jk} + \alpha_{ji}\alpha_{ik})^{2} \mathrm{cum}^{(6)}(u_{k}) + (\alpha_{ji})^{2} \mathrm{cum}^{(6)}(\tilde{u}_{i}), \\
        c^{(6)}_{i, i, j, j, j, j} =& \sum_{{k} \in \mathcal{I}_{A}}^{}\alpha_{ik}^{2}(\alpha_{jk} + \alpha_{ji}\alpha_{ik})^{4} \mathrm{cum}^{(6)}(u_{k}) + (\alpha_{ji})^{4} \mathrm{cum}^{(6)}(\tilde{u}_{i}).
    \end{align*}
    For notational simplicity, we denote the first terms in the right-hand side of the three equations by $A_{33}$, $A_{42}$, and $A_{24}$, respectively.  
    When 
    \begin{align*}
        (c^{(6)}_{i, i, i, j, j, j})^{2} =c^{(6)}_{i, i, j, j, j, j}c^{(6)}_{i, i, i, i, j, j}, 
    \end{align*}
    we have
    \begin{align*}
        &A^{2}_{33} + 2A_{33}(\alpha_{ji})^{3}\mathrm{cum}^{(6)}(\tilde{u}_{i}) + (\alpha_{ji})^{6}\mathrm{cum}^{(6)}(\tilde{u}_{i})^{2} \\
        &\quad = A_{42}A_{24} + (\alpha_{ji})^{2}A_{24}\mathrm{cum}^{(6)}(\tilde{u}_{i}) + A_{42}(\alpha_{ji})^{4}\mathrm{cum}^{(6)}(\tilde{u}_{i}) + (\alpha_{ji})^{6}\mathrm{cum}^{(6)}(\tilde{u}_{i})^{2}, 
    \end{align*}
    which is equivalent to 
    \begin{align*}
        (2(\alpha_{ji})^{3}A_{33} - (\alpha_{ji})^{2}A_{24} - (\alpha_{ji})^{4}A_{42})\mathrm{cum}^{(6)}(\tilde{u}_{i})+ (A^{2}_{33} - A_{42}A_{24}) = 0.
    \end{align*}
    This implies
    \begin{equation}
    \label{eq: two terms}
        \begin{split}
        2(\alpha_{ji})^{2}A_{33} - (\alpha_{ji})^{}A_{24} - (\alpha_{ji})^{3}A_{42} = 0, 
        \quad 
        A^{2}_{33} - A_{42}A_{24} = 0.
        \end{split}
    \end{equation}
    We note that 
    \begin{align*}
        &A^{2}_{33} = A_{42}A_{24} \Leftrightarrow \\
        &\left( \sum_{{k} \in \mathcal{I}_{A}}^{}\alpha_{ik}^{3}(\alpha_{jk} + \alpha_{ji}\alpha_{ik})^{3} \mathrm{cum}^{(6)}(u_{k}) \right)^{2} \\ 
        &\quad =
        \left( \sum_{{k} \in \mathcal{I}_{A}}^{}\alpha_{ik}^{4}(\alpha_{jk} + \alpha_{ji}\alpha_{ik})^{2} \mathrm{cum}^{(6)}(u_{k}) \right)\left(\sum_{{k} \in \mathcal{I}_{A}}^{}\alpha_{ik}^{2}(\alpha_{jk} + \alpha_{ji}\alpha_{ik})^{4} \mathrm{cum}^{(6)}(u_{k})\right).
    \end{align*}   
    By Lagrange's identity
    \[
    \forall k \in \mathcal{I}_{A}, \quad 
    \frac{(\alpha_{jk} + \alpha_{ji}\alpha_{ik})}{\alpha_{ik}} = c \; \Leftrightarrow \;
    \frac{\alpha_{jk}}{\alpha_{ik}} = c - \alpha_{ji}
    \] 
    for a constant $c$.

    For the first equation in (\ref{eq: two terms}), we have 
    \begin{align*}
        &2(\alpha_{ji})^{2}A_{33} - \alpha_{ji}A_{24} - (\alpha_{ji})^{3}A_{42}\\
        &\quad = \alpha_{ji}\left[\left(\alpha_{ji} - \frac{A_{33}}{A_{42}}\right)^{2}
         - \frac{A^{2}_{33}}{A^{2}_{42}} + \frac{A_{24}}{A_{42}}\right] = 0. 
    \end{align*}
    
    Since $A^{2}_{33} = A_{42}A_{24}$ and $(\alpha_{jk} + \alpha_{ji}\alpha_{ik}) = c \cdot \alpha_{ik}$, 
    \begin{align*}
        &\alpha_{ji}\left[
        \left(
        \alpha_{ji} - \frac{A_{33}}{A_{42}}\right)^{2}
         - \frac{A^{2}_{33}}{A^{2}_{42}} + \frac{A_{24}}{A_{42}}
         \right]\\
         &\quad = 
         \alpha_{ji}\left[\alpha_{ji} - \frac{\sum_{{k} \in \mathcal{I}_{A}}^{}\alpha_{ik}^{3}(\alpha_{jk} + \alpha_{ji}\alpha_{ik})^{3} \mathrm{cum}^{(6)}(u_{k})}{\sum_{{k} \in \mathcal{I}_{A}}^{}\alpha_{ik}^{4}(\alpha_{jk} + \alpha_{ji}\alpha_{ik})^{2} \mathrm{cum}^{(6)}(u_{k})}\right]^{2}\\
         &\quad =
         \alpha_{ji}\left[\alpha_{ji} - \frac{c^{3}\sum_{{k} \in \mathcal{I}_{A}}^{}\alpha_{ik}^{6} \mathrm{cum}^{(6)}(u_{k})}{c^2\sum_{{k} \in \mathcal{I}_{A}}^{}\alpha_{ik}^{6} \mathrm{cum}^{(6)}(u_{k})}\right]^{2}  =
         \alpha_{ji}(\alpha_{ji} - c)^{2} = 0.
    \end{align*}
    Thus, $\alpha_{ji} = 0 \text{ or } c$. 
    $\alpha_{ji}=c$ implies that $\alpha_{jk}=0$, which contradicts the faithfulness assumption. 
    Therefore, we conclude that $\alpha_{ji} = 0$, which implies that there is no directed path from $V_{i}$ to $V_{j}$.
\end{proof}
\begin{lemma}
    \label{lem: X has one parent}   
    Assume that $X_{i}$ and $X_{j}$ belong to distinct clusters, that they are the children with the highest causal order of $L_{i}$ and $L_{j}$, respectively, and that $L_{j} \notin \mathrm{Anc}(L_{i})$. Under Assumptions \ref{A1} and \ref{A3}, if $X_{i}$ and $X_{j}$ have only one latent confounder $L_{c}$ in the canonical model over them, one of the following conditions generically holds:
    \begin{enumerate}
    \setlength{\itemsep}{0pt}
    \setlength{\parsep}{0pt}
    \setlength{\parskip}{0pt}
        \item $\mathrm{Conf}(L_{i}, L_{j}) = \emptyset$.
            Then, $L_{i}$ and $L_{c}$ are identical, 
            and
            \begin{align*}
                c^{(k)}_{(X_i \to X_j)}(L_{c}) &= c^{(k)}_{(X_i \to X_j)}(L_{i}) = \mathrm{cum}^{(k)}(L_{i}),\\ 
            c^{(k)}_{(X_j \to X_i)}(L_{c}) &= c^{(k)}_{(X_j \to X_i)}(L_{i}) = \mathrm{cum}^{(k)}(\alpha^{ll}_{ji}\cdot L_{i}).
            \end{align*}
        \item 
        $\mathrm{Conf}(L_{i}, L_{j}) = \{L_c\}$. 
            Then, 
            \begin{align*}
                c^{(k)}_{(X_i \to X_j)}(L_{c}) = \mathrm{cum}^{(k)}(\alpha^{ll}_{ic} \cdot L_{c}), \quad  c^{(k)}_{(X_j \to X_i)}(L_{c}) = \mathrm{cum}^{(k)}(\alpha^{ll}_{jc} \cdot L_{c}).
            \end{align*}
    \end{enumerate}
\end{lemma}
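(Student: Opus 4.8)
## Proof Proposal for Lemma A.11 (the final statement)

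\textbf{Setup and overall strategy.} The plan is to work directly with the explicit linear representations of $X_i$ and $X_j$ in terms of the independent disturbances $\bm{u}$, exactly as in the proof of Theorem~\ref{thm: one latent coef estimate}, and then feed the resulting one-confounder representation into the linear system~(\ref{eq: system}) of Proposition~\ref{lem: estimate e cumulant}. Concretely, since $X_i$ and $X_j$ lie in distinct clusters with $L_j \notin \mathrm{Anc}(L_i)$, Assumption~\ref{A3} forbids direct causal paths between the two clusters, so $\alpha^{oo}_{ji}=0$ and there is no directed path between $X_i$ and $X_j$. Hence the canonical model over $X_i$ and $X_j$ has the form of~(\ref{eq: two variables}) with $\alpha^{oo}_{ji}=0$ and a single confounder $L_c$: $X_i = \alpha'_{ic}L_c + v_i$, $X_j = \alpha'_{jc}L_c + v_j$. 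The hypothesis $|\mathrm{Conf}(X_i,X_j)|=1$ together with Lemma~\ref{lem: L does not have multiple parent} forces $|\mathrm{Conf}(L_i,L_j)| \le 1$, which gives exactly the two cases in the statement; I would treat them separately.

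\textbf{Case 1: $\mathrm{Conf}(L_i,L_j)=\emptyset$.} Here I would invoke Lemma~\ref{lem: X has multiple parent}: since $X_i,X_j$ are dependent (they share the confounder $L_c$) but $\mathrm{Conf}(L_i,L_j)=\emptyset$, the dependence cannot come from a common latent ancestor of $L_i$ and $L_j$, so it must be that $L_i \in \mathrm{Anc}(L_j)$ (we cannot have $L_j \in \mathrm{Anc}(L_i)$ by hypothesis, and independence is excluded). Then part~2 of Lemma~\ref{lem: X has multiple parent} gives $\mathrm{Conf}(X_i,X_j)=\{L_i\}$, so $L_c = L_i$ after the canonicalization. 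Writing $X_i = L_i + v_i$ (using $\lambda_{ii}=1$) and $X_j = \alpha^{ll}_{ji} L_i + (\text{terms independent of } L_i)$, the canonical representation over $X_i$ and $X_j$ identifies $v_i$ with the part of $X_i$ independent of $L_i$, i.e.\ with $e_i$, and the confounder column of the system~(\ref{eq: system}) (the Vandermonde system in the roots $\alpha^{ol}_{j1},\dots$, here a single root) is solved by $\mathrm{cum}^{(k)}(L_i)$ from the $X_i \to X_j$ side and by $\mathrm{cum}^{(k)}(\alpha^{ll}_{ji} L_i)$ from the $X_j \to X_i$ side. This is the content of the claimed formulas; the verification is a direct substitution into~(\ref{eq: system}) using generic unique solvability for $k\ge \ell+1=2$.

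\textbf{Case 2: $\mathrm{Conf}(L_i,L_j)=\{L_c\}$.} Now $L_c$ is a genuine common latent ancestor of $L_i$ and $L_j$, not equal to either. Using the reduced-form total-effect notation, $X_i = \alpha^{ll}_{ic} L_c + (\text{rest}_i)$ and $X_j = \alpha^{ll}_{jc} L_c + (\text{rest}_j)$, where $\mathrm{rest}_i$ collects $\epsilon_i$ together with the noise terms $U_{[i]}$ of $X_i$ and, crucially, $\mathrm{rest}_i$ is \emph{not} shared with $X_j$ (and symmetrically) — this is where Assumptions~\ref{A1} and~\ref{A3} enter, guaranteeing that no other latent variable is a common source once $L_c$ is accounted for, so the canonical model over $X_i,X_j$ really has $L_c$ (suitably scaled) as its unique confounder. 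Plugging $\alpha'_{ic} = \alpha^{ll}_{ic}$, $\alpha'_{jc}=\alpha^{ll}_{jc}$ into~(\ref{eq: two variables}) and then into~(\ref{eq: system}), the confounder-cumulant solution from the $X_i\to X_j$ direction is $\mathrm{cum}^{(k)}(\alpha^{ll}_{ic} L_c)$ and from the $X_j \to X_i$ direction is $\mathrm{cum}^{(k)}(\alpha^{ll}_{jc} L_c)$, as claimed. The generic uniqueness again follows from Proposition~\ref{lem: estimate e cumulant}.

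\textbf{Main obstacle.} The routine parts are the linear-algebra substitutions into~(\ref{eq: system}). The delicate step is the dichotomy itself and the reduction to a single confounder: one must argue carefully that under Assumptions~\ref{A1} and~\ref{A3}, with $X_i,X_j$ the highest-causal-order children of $L_i,L_j$ in distinct clusters and $L_j\notin\mathrm{Anc}(L_i)$, the hypothesis $|\mathrm{Conf}(X_i,X_j)|=1$ really does partition into exactly these two mutually exclusive cases, and that in each case the unique canonical confounder $L_c$ coincides (up to scaling) with $L_i$ respectively with the latent confounder of $L_i$ and $L_j$. This requires combining Lemma~\ref{lem: L does not have multiple parent} (to rule out $|\mathrm{Conf}(L_i,L_j)|\ge 2$), Lemma~\ref{lem: X has multiple parent} (to pin down the confounder in the $\mathrm{Conf}(L_i,L_j)=\emptyset$ case), and a path-tracing argument analogous to the one in Theorem~\ref{thm: one latent coef estimate} (collapsing $\mathcal{I}_A$ into one confounder via Hoyer et al.~\cite{hoyer2008estimation}) for the $\mathrm{Conf}(L_i,L_j)=\{L_c\}$ case. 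Once the representation is fixed, the cumulant identities drop out.
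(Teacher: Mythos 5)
Your overall architecture matches the paper's: the dichotomy via Lemma~\ref{lem: L does not have multiple parent}, the use of Lemma~\ref{lem: X has multiple parent} to identify $L_c$ with $L_i$ when $\mathrm{Conf}(L_i,L_j)=\emptyset$, and the substitution of the resulting one-confounder representation into~(\ref{eq: system}) are all exactly what the paper does. Case~1 of your proposal is essentially complete (the paper additionally verifies via Proposition~\ref{thm: estimate b} that the relevant polynomial root is $\alpha^{ll}_{ji}$ and that $\alpha^{oo}_{ji}=0$ picks out the other root, but this is the routine substitution you describe).

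There is, however, a genuine gap in Case~2. You write $X_j=\alpha^{ll}_{jc}L_c+\mathrm{rest}_j$ and assert that $\mathrm{rest}_i$ is not shared with $X_j$, attributing this to Assumptions~\ref{A1} and~\ref{A3}. That attribution is wrong: Assumption~\ref{A3} forbids direct causal paths between clusters of \emph{observed} variables, so it gives $\alpha^{oo}_{ji}=0$, but it does not forbid a directed path $L_i\to\cdots\to L_j$ between the latent parents. If such a path coexisted with the confounder $L_c$, then $\epsilon_i$ would enter both $X_i$ and $X_j$, your disjointness claim would fail, and the canonical model over $X_i,X_j$ would generically have two confounders rather than one. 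The real content of Case~2 is therefore to prove that the single-confounder hypothesis forces $\alpha^{ll}_{ji}=0$. The paper does this by computing the sixth cross-cumulants of $X_i$ and $X_j$, invoking $(c^{(6)}_{i,i,i,j,j,j})^2=c^{(6)}_{i,i,i,i,j,j}\,c^{(6)}_{i,i,j,j,j,j}$ from Theorem~\ref{thm: one latent coef estimate}, reducing this to $(\alpha^{ll}_{ji})^2\bigl(\alpha^{ll}_{ji}-\alpha^{ll}_{jc}/\alpha^{ll}_{ic}\bigr)^2=0$, and excluding the root $\alpha^{ll}_{ji}=\alpha^{ll}_{jc}/\alpha^{ll}_{ic}$ because, by Lemma~\ref{lem: noBCA}, it would imply that every directed path from $L_c$ to $L_j$ passes through $L_i$, contradicting $L_c\in\mathrm{Conf}(L_i,L_j)$. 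Your ``main obstacle'' paragraph points at Theorem~\ref{thm: one latent coef estimate}, but only at its Hoyer-et-al.\ merging direction (collapsing $\mathcal{I}_A$ into one canonical confounder), not at this derivation of $\alpha^{ll}_{ji}=0$, which is the step your proof is missing and without which the claimed formulas $c^{(k)}_{(X_i\to X_j)}(L_c)=\mathrm{cum}^{(k)}(\alpha^{ll}_{ic}L_c)$ and $c^{(k)}_{(X_j\to X_i)}(L_c)=\mathrm{cum}^{(k)}(\alpha^{ll}_{jc}L_c)$ do not follow.
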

\begin{proof}
    According to Lemma \ref{lem: L does not have multiple parent}, $\vert \mathrm{Conf}(L_{i}, L_{j})\vert = 0 \text{ or } 1$.
    Since $X_{i}$ and $X_{j}$ are confounded by $L_{c}$, $X_{i} \notindep X_{j}$, which implies $L_{i} \notindep L_{j}$ by Lemma \ref{lem: X independent L independent}. 
    
    First, consider the case where $\mathrm{Conf}(L_{i}, L_{j}) = \emptyset$. According to Lemma \ref{lem: X has multiple parent}, when $L_{i} \notindep L_{j}$, the only possible latent confounder of $X_{i}$ and $X_{j}$ is $L_{i}$.
    Furthermore, there is at least one causal path from $L_{i}$ to $L_{j}$.
    
    Define $\mathcal{I}_A$ and $\mathcal{I}_B$ by 
    \begin{align*}
        \mathcal{I}_{A} = \{k \mid L_{k} \in \mathrm{Anc}(L_{i})\}, \quad
        \mathcal{I}_{B} = \{k \mid L_{k} \in \mathrm{Anc}(L_{j}) \setminus (\mathrm{Anc}(L_{i}) \cup \{L_{i}\})\}.
    \end{align*}
    Then, $X_i$ and $X_j$ are written as 
    \begin{align}
        \label{eq: Xi}
        X_{i} &= \left(
        \sum_{k \in \mathcal{I}_{A}}\alpha_{ik}^{ll}\epsilon_{k} + \epsilon_{i}
        \right) + e_{i}, \\
        X_{j} &= \left(
        \sum_{k \in \mathcal{I}_{A}}\alpha_{jk}^{ll}\epsilon_{k} + \sum_{k \in \mathcal{I}_{B}}\alpha_{jk}^{ll}\epsilon_{k}
        + \alpha_{ji}^{ll}\epsilon_{i} 
        + \epsilon_{j}
        \right) + e_{j}. \notag
    \end{align}
    From Lemma \ref{lem: noBCA}, we have 
    \[
    \sum_{k \in \mathcal{I}_{A}}\alpha_{jk}^{ll}\epsilon_{k} = \alpha_{ji}^{ll} \cdot \sum_{k \in \mathcal{I}_{A}}\alpha_{ik}^{ll}\epsilon_{k}.
    \]
    Letting $v_{j} = \sum_{k \in \mathcal{I}_{B}}\alpha_{jk}^{ll}\epsilon_{k} + \epsilon_{j} + e_{j}$, $X_{j}$ is rewritten as 
    \begin{align}
        \label{eq: Xj}
        X_{j} = \alpha_{ji}^{ll}(\sum_{k \in \mathcal{I}_{A}}\alpha_{ik}^{ll}\epsilon_{k} + \epsilon_{i}) + v_{j}. 
    \end{align}
    Note that $L_{i}$, $e_{i}$, and $v_{j}$ are mutually independent. 
    From Proposition \ref{thm: estimate b} with $\ell=1$, the roots of the polynomial on $\alpha$
    \begin{align*}
        \left\vert
        \begin{array}{ccc}
           1  & {\alpha} & {\alpha}^{2} \\
           c^{(3)}_{i, i, i}  & c^{(3)}_{i, i, j} & c^{(3)}_{i, j, j} \\
           c^{(4)}_{i, i, i, i} & c^{(4)}_{i, i, i, j} & c^{(4)}_{i, i, j, j}
        \end{array}\right\vert  = 0
    \end{align*}
    are $\alpha^{oo}_{ji}$ and $\alpha^{ol}_{ji}$. From (\ref{eq: Xi}) and (\ref{eq: Xj}), we have 
    \begin{align*}
        &\left\vert
        \begin{array}{ccc}
           1  & {\alpha} & {\alpha}^{2} \\
           c^{(3)}_{i, i, i}  & c^{(3)}_{i, i, j} & c^{(3)}_{i, j, j} \\
           c^{(4)}_{i, i, i, i} & c^{(4)}_{i, i, i, j} & c^{(4)}_{i, i, j, j}
        \end{array}\right\vert\notag\\ 
        &\quad = \left(
        (\alpha_{ji}^{ll})^3 \mathrm{cum}^{(3)}(L_i) \mathrm{cum}^{(4)}(L_i)
        - (\alpha_{ji}^{ll})^3 \mathrm{cum}^{(3)}(L_i) \mathrm{cum}^{(4)}(L_i)
        \right)\notag\\
        &\qquad 
        - \alpha \left(
        (\alpha_{ji}^{ll})^2 \cdot 
        (\mathrm{cum}^{(3)}(e_i) \mathrm{cum}^{(4)}(L_i) - \mathrm{cum}^{(3)}(L_i) \mathrm{cum}^{(4)}(e_i))
        \right)\notag\\
        &\qquad
        + \alpha^2 \left(
        (\alpha_{ji}^{ll}) \cdot 
        (\mathrm{cum}^{(3)}(e_i) \mathrm{cum}^{(4)}(L_i) - \mathrm{cum}^{(3)}(L_i) \mathrm{cum}^{(4)}(e_i))
        \right)=0, \notag
    \end{align*}
    which is generically equivalent to 
    \begin{equation}
        \label{eq: appendix: polynomial 1}
        -\alpha\cdot (\alpha_{ji}^{ll})^{2} + \alpha^{2} \cdot \alpha_{ji}^{ll}=0. 
    \end{equation}
    The roots of (\ref{eq: appendix: polynomial 1}) are $\alpha = 0, \alpha_{ji}^{ll}$.
    Since $X_{i}$ and $X_{j}$ belong to different clusters, $\alpha^{oo}_{ji}=0$ and hence $\alpha_{ji}^{ol} = \lambda_{jj}\alpha_{ji}^{ll} = \alpha_{ji}^{ll}$.
        
    From Proposition \ref{lem: estimate e cumulant}, 
    \begin{align*}
        \left[
        \begin{array}{cc}
            1 & 1 \\
            0 & \alpha_{ji}^{ll} 
        \end{array}
        \right]
        \left[
        \begin{array}{c}
            c^{(k)}_{(X_i \to X_j)}(e_{i}) \\
            c^{(k)}_{(X_i \to X_j)}(L_{c})  \\
        \end{array}
        \right]
        = 
        \left[
        \begin{array}{c}
            c^{(k)}_{i, \dots, i, i} \\
            c^{(k)}_{i, \dots, i, j}  
        \end{array}
        \right]
        = 
        \left[
        \begin{array}{c}
            \mathrm{cum}^{(k)}(e_{i}) + \mathrm{cum}^{(k)}(L_{i}) \\
            \mathrm{cum}^{(k)}(\alpha_{ji}^{ll} \cdot L_{i})  
        \end{array}
        \right]. 
    \end{align*}
    Then, we have $c_{(X_{i} \to X_{j})}^{(k)}(e_{i}) = \mathrm{cum}^{(k)}(e_{i})$ and $c_{(X_i \to X_j)}^{(k)}(L_{c}) = \mathrm{cum}^{(k)}(L_{i})$.
    In the same way, we can obtain $c_{(X_j \to X_i)}^{(k)}(L_{c}) = \mathrm{cum}^{(k)}(\alpha_{ji}^{ll} \cdot L_{i})$. 
    
    Next, we consider the case where $\mathrm{Conf}(L_{i}, L_{j}) = \{L_c\}$. Then, only $L_c$ has outgoing directed paths to $X_i$ and $X_j$ that share no latent variable other than $L_c$. 
    
    Define $\mathcal{I}_A$ and $\mathcal{I}_B$ by
    \begin{align*}
        \mathcal{I}_{A} &= \{{k} \mid L_{k} \in \mathrm{Anc}(L_{i}) \setminus (\mathrm{Anc}(L_{c})\cup \{L_{c}\})\}, \\
        \mathcal{I}_{B} &= \{{k} \mid L_{k} \in \mathrm{Anc}(L_{j}) \setminus (\mathrm{Anc}(L_{c})\cup \mathrm{Anc}(L_{i})\cup \{L_{c}, L_{i}\})\}. 
    \end{align*}
    Following Salehkaleybar et al.~\cite{salehkaleybar2020learning}, $X_i$ and $X_j$ are expressed as
    \begin{align*}
        X_{i} &= (\sum_{k \in \mathcal{I}_{A}}\alpha_{ik}^{ll}\epsilon_{k} + \alpha^{ll}_{ic}L_{c} + \epsilon_{i}) +e_{i}, \\
        X_{j} &= (\sum_{k \in \mathcal{I}_{A}}\alpha_{jk}^{ll}\epsilon_{k} + \sum_{k \in \mathcal{I}_{B}}\alpha_{jk}^{ll}\epsilon_{k} + \alpha^{ll}_{jc}L_{c} + \alpha_{ji}^{ll}\epsilon_{i} + \epsilon_{j}) + e_{j}.
    \end{align*}
    Since 
    \[
    \sum_{k \in \mathcal{I}_{A}}\alpha_{jk}^{ll}\epsilon_{k} = \alpha_{ji}^{ll} \cdot \sum_{k \in \mathcal{I}_{A}}\alpha_{ik}^{ll}\epsilon_{k}, 
    \]
    by Lemma \ref{lem: noBCA}, $X_{j}$ is rewritten as 
    \begin{align*}
        X_{j} 
        &= \alpha^{ll}_{ji}(\sum_{k \in \mathcal{I}_{A}}\alpha_{ik}^{ll}\epsilon_{k} +  \epsilon_{i}) + \sum_{k \in \mathcal{I}_{B}}\alpha_{jk}^{ll}\epsilon_{k} + \alpha^{ll}_{jc}L_{c} + \epsilon_{j} + e_{j}.
    \end{align*}
    The cumulants $c^{(6)}_{i, i, i, j, j, j}$, $c^{(6)}_{i, i, i, i, j, j}$, and $c^{(6)}_{i, i, j, j, j, j}$ are written as follows: 
    \begin{align*}
        c^{(6)}_{i, i, i, j, j, j} 
        &= (\alpha^{ll}_{ji})^{3}\sum^{}_{k \in \mathcal{I}_{A}} (\alpha_{ik}^{ll})^{6}\mathrm{cum}^{(6)}(\epsilon_{k}) + (\alpha^{ll}_{ji})^{3}\mathrm{cum}^{(6)}(\epsilon_{i}) \\ &\quad+ (\alpha_{ic}^{ll})^{3}(\alpha_{jc}^{ll})^{3}\mathrm{cum}^{(6)}(L_{c}), \\
        c^{(6)}_{i, i, i, i, j, j} 
        &= (\alpha^{ll}_{ji})^{2}\sum^{}_{k \in \mathcal{I}_{A}} (\alpha_{ik}^{ll})^{6}\mathrm{cum}^{(6)}(\epsilon_{k}) + (\alpha^{ll}_{ji})^{2}\mathrm{cum}^{(6)}(\epsilon_{i}) \\ &\quad+ (\alpha_{ic}^{ll})^{4}(\alpha_{jc}^{ll})^{2}\mathrm{cum}^{(6)}(L_{c}), \\
        c^{(6)}_{i, i, j, j, j, j} 
        &= (\alpha^{ll}_{ji})^{4}\sum^{}_{k \in \mathcal{I}_{A}} (\alpha_{ik}^{ll})^{6}\mathrm{cum}^{(6)}(\epsilon_{k}) + (\alpha_{ji}^{ll})^{4}\mathrm{cum}^{(6)}(\epsilon_{i}) \\ &\quad+ (\alpha_{ic}^{ll})^{2}(\alpha_{jc}^{ll})^{4}\mathrm{cum}^{(6)}(L_{c}). 
    \end{align*}
    Since $X_{i}$ and $X_{j}$ have only one confounder, $(c^{(6)}_{i, i, i, j, j, j})^{2} = c^{(6)}_{i, i, i, i, j, j}c^{(6)}_{i, i, j, j, j, j}$ holds from Theorem \ref{thm: one latent coef estimate}, which implies
    \begin{align*}
        & 2(\alpha^{ll}_{ji})^{3}(\alpha_{ic}^{ll})^{3}(\alpha_{jc}^{ll})^{3}=(\alpha_{ji}^{ll})^{2}(\alpha_{ic}^{ll})^{2}(\alpha_{jc}^{ll})^{4} + (\alpha^{ll}_{ji})^{4}(\alpha_{ic}^{ll})^{4}(\alpha_{jc}^{ll})^{2}\\
        &\quad\Leftrightarrow 
        (\alpha^{ll}_{ji})^{2}
        \left(
        \alpha^{ll}_{ji} - \frac{\alpha_{jc}^{ll}}{\alpha_{ic}^{ll}}
        \right)^{2}=0. 
    \end{align*}
    When $\alpha^{ll}_{ji} = \alpha_{jc}^{ll}/\alpha_{ic}^{ll}$, all directed paths from $L_{c}$ to $L_{j}$ pass through $L_{i}$ by Lemma \ref{lem: noBCA}, and then $L_{c}$ is not a confounder between $L_{i}$ and $L_{j}$, which leads to a contradiction. 
    Therefore, $\alpha^{ll}_{ji}=0$. 
    Letting $v_{i} = \sum_{k \in \mathcal{I}_{A}}\alpha_{ik}^{ll}\epsilon_{k} +  \epsilon_{i} + e_{i}$ and $v_{j} = \sum_{k \in \mathcal{I}_{B}}\alpha_{jk}^{ll}\epsilon_{k} + \epsilon_{j} + e_{j}$, 
    $X_i$ and $X_j$ are rewritten as
    \begin{align*}
        X_{i} = \alpha^{ll}_{ic}L_{c} + v_{i}, \quad X_{j} = \alpha^{ll}_{jc}L_{c} + v_{j}.
    \end{align*}
    Therefore, we find that the unique confounder of $X_{i}$ and $X_{j}$ is $L_{c}$. 
    
    In the same way as (\ref{eq: appendix: polynomial 1}), we have generically
    \begin{equation}
    \label{eq: appendix: polynomial 2}
        \begin{aligned}
        \left\vert
        \begin{array}{ccc}
           1  & {\alpha} & {\alpha}^{2} \\
           c^{(3)}_{i, i, i}  & c^{(3)}_{i, i, j} & c^{(3)}_{i, j, j} \\
           c^{(4)}_{i, i, i, i} & c^{(4)}_{i, i, i, j} & c^{(4)}_{i, i, j, j}
        \end{array}\right\vert 
        = 0 \;\Leftrightarrow \;
        -{\alpha}\cdot \alpha^{ll}_{jc} + {\alpha}^{2} \cdot \alpha^{ll}_{ic}=0
        \end{aligned}
    \end{equation}
    Then, ${\alpha} = 0, \alpha^{ll}_{jc}/\alpha^{ll}_{ic}$. 
    Due to the assumption \ref{A3}, 
    $\alpha^{oo}_{ji}=0$. Therefore, $\alpha^{ol}_{jc}=\alpha^{ll}_{jc}/\alpha^{ll}_{ic}$ from Proposition 
    \ref{thm: estimate b}.
    According to Proposition \ref{lem: estimate e cumulant} 
    \begin{align*}
        \left[
        \begin{array}{cc}
            1 & 1 \\
            0 & \frac{\alpha^{ll}_{jc}}{\alpha^{ll}_{ic}}
        \end{array}
        \right]
        \left[
        \begin{array}{c}
            c_{(X_i \to X_j)}^{(k)}(e_{i}) \\
            c_{(X_i \to X_j)}^{(k)}(L_{c})  \\
        \end{array}
        \right]
        &=
        \left[
        \begin{array}{c}
            c^{(k)}_{i, \dots, i, i} \\
            c^{(k)}_{i, \dots, i, j}  
        \end{array}
        \right]\\
        &=
        \left[
        \begin{array}{c}
            \mathrm{cum}^{(k)}(v_{i}) + \mathrm{cum}^{(k)}(\alpha_{ic}^{ll} L_{c}) \\
            \frac{\alpha_{jc}^{ll}}{\alpha_{ic}^{ll}} \cdot \mathrm{cum}^{(k)}(\alpha_{ic}^{ll}L_{c})  
        \end{array}
        \right]
    \end{align*}
    Solving this equation yields $c_{(X_i \to X_j)}^{(k)}(e_{i}) = \mathrm{cum}^{(k)}(v_{i})$ and $c^{(k)}_{(X_i \to X_j)}(L_{c}) = \mathrm{cum}^{(k)}(\alpha^{ll}_{ic}\cdot L_{c})$.
    In the same way, we can obtain $c_{(X_j \to X_i)}^{(k)}(L_{c}) = \mathrm{cum}^{(k)}(\alpha_{jc}^{ll} L_{c})$. 
\end{proof}
\begin{lemma}
\label{lem: new statistic}
    Let $X_{i}$ and $X_{j}$ be two dependent observed variables. 
    Assume that $\mathrm{Conf}(X_i,X_j)=\{L_{c}\}$
    and that there is no directed path between $X_{i}$ and $X_{j}$ in the canonical model over them.
    Then, \(\tilde{e}_{(X_i, X_j)} \indep L_{c}\). 
\end{lemma}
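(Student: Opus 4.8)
The plan is to exploit the explicit form of the canonical model over $X_i$ and $X_j$ given by equation~(\ref{eq: two variables}), together with the multilinearity of cumulants. Since $\mathrm{Conf}(X_i,X_j)=\{L_c\}$ (so $\ell=1$ and $L'_1=L_c$) and there is no directed path between $X_i$ and $X_j$ (so, under the normalization $X_j\notin\mathrm{Anc}(X_i)$ of~(\ref{eq: two variables}), $\alpha^{oo}_{ji}=0$), equation~(\ref{eq: two variables}) specializes to
\[
X_i = \alpha'_{ic} L_c + v_i, \qquad X_j = \alpha'_{jc} L_c + v_j ,
\]
where $L_c$, $v_i$, $v_j$ are linear combinations of pairwise disjoint subsets of the mutually independent disturbances $\bm u$, hence mutually and jointly independent. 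By the faithfulness Assumption~\ref{A4}, $\alpha'_{ic}\ne 0$ and $\alpha'_{jc}\ne 0$, since otherwise $X_i\indep X_j$, contradicting the hypothesis that $X_i$ and $X_j$ are dependent.

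Next I would compute $\rho_{(X_i,X_j)}$ from Definition~\ref{def: new statistic}. Expanding the fourth-order cumulants by multilinearity and using that a joint cumulant vanishes whenever its arguments split into two independent groups, every mixed term drops out and only the all-$L_c$ term survives, giving
\[
\mathrm{cum}^{(4)}(X_i,X_i,X_j,X_j) = (\alpha'_{ic})^2(\alpha'_{jc})^2\,\mathrm{cum}^{(4)}(L_c), \qquad
\mathrm{cum}^{(4)}(X_i,X_j,X_j,X_j) = \alpha'_{ic}(\alpha'_{jc})^3\,\mathrm{cum}^{(4)}(L_c).
\]
By Lemma~\ref{lem: non-Gaussian no-zero} these two cumulants are generically nonzero; in particular $\mathrm{cum}^{(4)}(L_c)\ne 0$ and the denominator in the definition of $\rho_{(X_i,X_j)}$ does not vanish, so
\[
\rho_{(X_i,X_j)} = \frac{(\alpha'_{ic})^2(\alpha'_{jc})^2}{\alpha'_{ic}(\alpha'_{jc})^3} = \frac{\alpha'_{ic}}{\alpha'_{jc}} .
\]

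Substituting back yields
\[
\tilde{e}_{(X_i,X_j)} = X_i - \frac{\alpha'_{ic}}{\alpha'_{jc}}\,X_j = v_i - \frac{\alpha'_{ic}}{\alpha'_{jc}}\,v_j ,
\]
which is a linear combination of disturbances none of which enters $L_c$; hence $\tilde{e}_{(X_i,X_j)}\indep L_c$, which is the claim. I do not expect a serious obstacle: the steps requiring care are (i) justifying that ``no directed path between $X_i$ and $X_j$'' forces $\alpha^{oo}_{ji}=0$, so that the decomposition above has exactly the stated clean form with $(v_i,v_j)$ jointly independent of $L_c$, and (ii) making explicit the genericity qualifier when invoking Lemma~\ref{lem: non-Gaussian no-zero} to guarantee both $\mathrm{cum}^{(4)}(L_c)\ne 0$ and the well-definedness (finiteness) of $\rho_{(X_i,X_j)}$. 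An alternative would be to cancel $\mathrm{cum}^{(4)}(L_c)$ using any higher-order analogue of the ratio, but the fourth-order version matches Definition~\ref{def: new statistic} verbatim and keeps the computation shortest.
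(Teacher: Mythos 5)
Your proof is correct and follows essentially the same route as the paper's: both write the canonical model over $X_i$ and $X_j$ as $X_i=\alpha'_{ic}L_c+v_i$, $X_j=\alpha'_{jc}L_c+v_j$, compute $\rho_{(X_i,X_j)}=\alpha'_{ic}/\alpha'_{jc}$ from the fourth-order cumulants, and observe that $\tilde e_{(X_i,X_j)}=v_i-(\alpha'_{ic}/\alpha'_{jc})v_j$ contains no $L_c$ term. Your additional care about the genericity of the nonvanishing denominator and the reason $\alpha^{oo}_{ji}=0$ is welcome but not a departure from the paper's argument.
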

\begin{proof}
    Let $v_{i}$ and $v_{j}$ be the disturbances of $X_{i}$ and $X_{j}$, respectively, in the canonical model over $X_{i}$ and $X_{j}$. 
    $X_{i}$ and $X_{j}$ are expressed as 
    \begin{align*}
        X_{i} = \alpha_{ic}^{ol}L_{c} + v_{i}, \quad
        X_{j} = \alpha_{jc}^{ol}L_{c} + v_{j}.
    \end{align*}
    
    Then, $\tilde{e}_{(X_{i}, X_{j})}$ is given by 
    \begin{align*}
        \tilde{e}_{\left(X_{i}, X_{j}\right)} = \alpha_{ic}^{ol}L_{c} + v_{i} - \frac{(\alpha^{ol}_{ic})^{2}(\alpha^{ol}_{jc})^{2}\mathrm{cum}^{(4)}(L_{c})}{(\alpha^{ol}_{ic})(\alpha^{ol}_{jc})^{3}\mathrm{cum}^{(4)}(L_{c})}(\alpha_{jc}^{ol}L_{c} + v_{j})=
        v_{i} - \frac{\alpha^{ol}_{ic}}{\alpha^{ol}_{jc}}v_{j},
    \end{align*}
    which shows that $\tilde{e}_{(X_{i}, X_{j})} \indep L_{c}$. 
\end{proof}

\section{Proofs of Theorems in Section \ref{sec: partial cluster}}
\subsection{The proof of Theorem \ref{thm: clusters in proposed method}}

\begin{proof}
We prove this theorem by contrapositive. Let $L_i$ and $L_j$ be the respective latent parents of $X_i$ and $X_j$, assuming that $L_{j} \notin \mathrm{Anc}(L_{i})$.

We divide the proof into four cases. 
\begin{enumerate}
    \setlength{\itemsep}{0pt}
    \setlength{\parsep}{0pt}
    \setlength{\parskip}{0pt}
    \item $X_{i}$ and $X_{j}$ are pure: 
    \begin{enumerate}
    \setlength{\itemsep}{0pt}
            \setlength{\parsep}{0pt}
            \setlength{\parskip}{0pt}
        \item[1-1.] The number of observed children of $L_{j}$ is greater than one: \\
            There exists another observed child $X_k$ of $L_j$ such that $\mathrm{Pa}(X_{k}) = \{L_{j}\}$.
            Then $X_i$, $X_j$ and $X_k$ are expressed as 
            \begin{align*}
                X_{i} = L_{i} + e_{i}, \quad
                X_{j} = L_{j} + e_{j}, \quad
                X_{k} = \lambda_{kj}L_{j} + e_{k},
            \end{align*}
            and ${e}_{(X_{i}, X_{j} \mid X_{k})}$ is given by 
            \begin{align*}
                {e}_{(X_{i}, X_{j} \mid X_{k})} 
                &=(L_{i} + e_{i}) - \frac{\mathrm{Cov}(L_{i}, \lambda_{kj}L_{j})}{\mathrm{Cov}(L_{j}, \lambda_{kj}L_{j})}(L_{j} + e_{j}).
            \end{align*}
            Since $\lambda_{kj} \ne 0$ and ${\mathrm{Cov}(L_{i}, L_{j})} \ne 0$ generically, both $e_{(X_{i}, X_{j} \mid X_{k})}$ and $X_{k}$ contain terms of $\epsilon_{j}$, implying that $X_{k} \notindep {e}_{(X_{i}, X_{j} \mid X_{k})}$ from Theorem \ref{thm: DS}. 
        \item[1-2.] The number of observed children of $L_{j}$ is one: \\
            According to Assumption \ref{A2}, $L_j$ must have a latent child, denoted by $L_k$, and let $X_k$ be the child of $L_k$ with the highest causal order. 
            Then
            \begin{align*}
                X_{i} &= L_{i} + e_{i}, \quad
                X_{j} = L_{j} + e_{j}, \\
                X_{k} &= L_{k} + e_{k} = a_{kj}^{}L_{j} + \sum_{h: L_{h} \in \mathrm{Pa}(L_{k}) \setminus \{L_{j}\}} a_{kh}^{}L_{h} + \epsilon_{k} + e_{k}.
            \end{align*}
            and ${e}_{(X_{i}, X_{j} \mid X_{k})}$ is given by 
            \begin{align*}
                {e}_{(X_{i}, X_{j} \mid X_{k})} &= X_{i} - \frac{\mathrm{Cov}(X_{i}, X_{k})}{\mathrm{Cov}(X_{j}, X_{k})}(L_{j} + e_{j}).
            \end{align*}
            Since $a^{}_{kj} \ne 0$ and $\frac{\mathrm{Cov}(X_{i}, X_{j})}{\mathrm{Cov}(X_{j}, X_{k})} \ne 0$ generically, ${e}_{(X_{i}, X_{j} \mid X_{k})} \notindep X_{k}$ from Theorem \ref{thm: DS}.
    \end{enumerate}
    \item At least one of $X_i, X_j$ is impure: 
        Assume that $X_i$ is impure and that a directed edge exists between $X_i$ and $X_k$. 
        The proof proceeds analogously when $X_j$ is impure. 
        \begin{enumerate}
            \setlength{\itemsep}{0pt}
            \setlength{\parsep}{0pt}
            \setlength{\parskip}{0pt}
            \item[2-1.] $X_{i} \in \mathrm{Pa}(X_{k})$:\\
                $X_i, X_j$ and $X_k$ are expressed as
                \begin{align*}
                        X_{i} = L_{i} + e_{i}, \quad
                        X_{j} = L_{j} + e_{j}, \quad
                        X_{k} = (\lambda_{ki} + b_{ki})L_{i} + b_{ki}e_{i} + e_{k}, 
                    \end{align*}
                    respectively, and $e_{(X_{i}, X_{j} \mid X_{k})}$ is given by
                    \begin{align*}
                        e_{(X_{i}, X_{j} \mid X_{k})} 
                        &= (L_{i} + e_{i}) - \frac{(\lambda_{ki} + b_{ki})\mathrm{Var}(L_{i}) + b_{ki}\mathrm{Var}(e_{i})}{(\lambda_{ki}+b_{ki})\mathrm{Cov}(L_{i}, L_{j})}(L_{j}+e_{j}).
                    \end{align*}
                    Since both $e_{(X_{i}, X_{j} \mid X_{k})}$ and $X_{k}$ contain $e_{i}$, $e_{(X_i,X_j\mid X_k)} \notindep X_k$ from Theorem \ref{thm: DS}.
                    \item[2-2.] $X_{k} \in \mathrm{Pa}(X_{i})$:\\
                    $X_i, X_j$ and $X_k$ are expressed as
                    \begin{align*}
                        X_{i} = (\lambda_{ii} + b_{ik})L_{i} + b_{ik}e_{k} + e_{i}, \quad
                        X_{j} = L_{j} + e_{j}, \quad
                        X_{k} = L_{i} + e_{k}.
                    \end{align*}
                    respectively, and $e_{(X_{i}, X_{j} \mid X_{k})}$ is given by
                    \begin{align*}
                        e_{(X_{i}, X_{j} \mid X_{k})} 
                        &=(\lambda_{ii} + b_{ik})L_{i} + b_{ik}e_{k} + e_{i} \\ &- \frac{(b_{ik}+\lambda_{ii})\mathrm{Var}(L_{i}) + b_{ik}\mathrm{Var}(e_{k})}{\mathrm{Cov}(L_{i}, L_{j})}(L_{j}+e_{j}). 
                    \end{align*}
                    Since $b_{ik} \ne 0$, and both $e_{(X_{i}, X_{j} \mid X_{k})}$ and $X_{k}$ contain terms about $e_{k}$, $e_{(X_{i}, X_{j} \mid X_{k})}\notindep X_{k}$ is shown from Theorem \ref{thm: DS}.
                \end{enumerate}
        \end{enumerate}
\end{proof}
\section{Proofs of Theorems and Lemmas in Section \ref{sec: causal order}}
\subsection{The proof of Theorem \ref{thm: proposed cluster root}}
\begin{proof}
    \noindent\textbf{Sufficiency:}
    If $L_i$ is a latent source in $\mathcal{G}$, then no confounder exists between $L_i$ and any other latent variable $L_j$. 
    By Lemma~\ref{lem: X has multiple parent}, when $X_i$ and $X_j$ belong to distinct clusters, we have $\mathrm{Conf}(X_i,X_j)=\{L_i\}$, since $L_i \notindep L_j$. 
    If $X_i$ and $X_j$ belong to the same cluster confounded by $L_i$, then again $\mathrm{Conf}(X_i,X_j)=\{L_i\}$.
    
    \noindent\textbf{Necessity:} 
    Note that $\bm{X}_{\mathrm{oc}} \subset \bm{X} \setminus \{X_{i}\}$.
    We will prove the necessity by showing that if $L_i$ is not a latent source, there exists some $X_j \in \bm{X}_{\mathrm{oc}} \setminus \{X_i\}$ such that $X_i \notindep X_j$ and $\mathrm{Conf}(X_i, X_j) \ne \{L_i\}$ in the canonical model over $\{X_i, X_j\}$.
    
    Let $L_s$ be a latent source and let $X_s$ be the child of $L_s$ with the highest causal order. 
    By Lemma~\ref{lem: X has multiple parent} and the fact that $L_i \notindep L_s$, we have $\mathrm{Conf}(X_i,X_s)=\{L_s\} \ne \{L_{i}\}$. 
\end{proof}
\subsection{The proof of Corollary \ref{col: proposed cluster root}}
\begin{proof}
    According to Theorem~\ref{thm: proposed cluster root}, sufficiency is immediate.
    We therefore prove only necessity by showing that if $L_i$ is not a latent source, then neither case~1 nor case~2 holds. 
    
    If $L_i$ is not a latent source, then $\mathcal{X}_i \ne \emptyset$ or $\lvert \bm{X}_{\mathrm{oc}} \setminus \{X_{i}\}\rvert \ge 2$, and therefore case~1 is not satisfied.
    We now consider case~2 and show that either condition (a) or (b) is not satisfied.
     First, note that condition~(a) does not hold whenever there exists $X_j \in \bm{X}_{\mathrm{oc}}\setminus\{X_i\}$ with $\lvert \mathrm{Conf}(X_i,X_j)\rvert \ne 1$.
    Hence, assume that condition~(a) holds. 
    
    Let $L_s$ be the latent source in $\mathcal{G}$ and $X_s$ be its observed child with the highest causal order among $\hat{C}_s$. Let $X_j \in \bm{X}_{\mathrm{oc}}\setminus\{X_i\}$ have a latent parent $L_j$, and let $L_c$ be the unique confounder between $X_i$ and $X_j$, and $X_c$ be its observed child with the highest causal order. We have $X_c,X_s \in \bm{X}_{\mathrm{oc}}\setminus\{X_i\}$, and $\lvert \mathrm{Conf}(X_i,X_s) \rvert = \lvert \mathrm{Conf}(X_i,X_j)\rvert=1$. Next, we divide the following discussion into two cases depending on whether $L_i$ has a latent child:
    
    \begin{enumerate}
        \setlength{\itemsep}{0pt}
        \setlength{\parsep}{0pt}
        \setlength{\parskip}{0pt}
        \item If $L_i \in \mathrm{Pa}(L_j)$, then by Lemma~\ref{lem: X has one parent},
        \begin{align*}
            c^{(k)}_{(X_{i} \to X_{j})}(L^{(i, j)}) = \mathrm{cum}^{(k)}(L_{i}), \quad c^{(k)}_{(X_{i} \to X_{s})}(L^{(i, s)}) = \mathrm{cum}^{(k)}(\alpha_{is}^{ll}\cdot L_{s}).
        \end{align*}
        Thus, $c^{(k)}_{(X_{i} \to X_{j})}(L^{(i, j)}) \ne c^{(k)}_{(X_{i} \to X_{s})}(L^{(i, s)})$ generically, and condition (b) is not satisfied.
        \item If $L_i$ has no latent children, then $\mathcal{X}_i \ne \emptyset$, and
        \begin{align*}
            c^{(k)}_{(X_{i}\to X_{i'})}(L^{(i, i')}) = \mathrm{cum}^{(k)}(L_{i}).
        \end{align*}
        Also $c^{(k)}_{(X_i\to X_{i'})}(L^{(i, i')}) \ne c^{(k)}_{(X_i\to X_s)}(L^{(i, s)})$ generically, so condition~(b) is not satisfied. 
    \end{enumerate}
\end{proof}
\subsection{The proof of Theorem \ref{thm: independent e tilde}}
\begin{proof}
    We define sets
    \begin{align*}
        \mathcal{I}_{A} &= \{h \mid L_{h} \in \mathrm{Anc}(L_{i}) \cap \mathrm{Anc}(L_{j}) \setminus \{L_{1}\}\}, \\
        \mathcal{I}_{B} &= \{h \mid L_{h} \in \mathrm{Anc}(L_{i}) \setminus (\mathcal{I}_{A}\cup \{L_{1}\})\}, \\
        \mathcal{I}_{C} &= \{h \mid L_{h} \in \mathrm{Anc}(L_{j}) \setminus (\mathcal{I}_{A}\cup \{L_{1}\})\}.
    \end{align*}
    Then,
    \begin{align*}
        X_{1} &= \epsilon_{1} + e_{1},\\
        L_{i} &= \alpha_{i1}^{ll}\epsilon_{1}+\sum_{h\in \mathcal{I}_{A}} \alpha_{ih}^{ll}\epsilon_{h} + \sum_{h\in \mathcal{I}_{B}} \alpha_{ih}^{ll}\epsilon_{h} + \epsilon_{i}, \quad 
        X_{i} = L_{i} + e_{i}, \\
        L_{j} &= \alpha_{j1}^{ll}\epsilon_{1}+\sum_{h\in \mathcal{I}_{A}} \alpha_{jh}^{ll}\epsilon_{h} + \sum_{h\in \mathcal{I}_{C}} \alpha_{jh}^{ll}\epsilon_{h} + \epsilon_{j},  \quad X_{j} = L_{j} + e_{j},
    \end{align*}
    We can easily show that 
    \[
    \frac{\mathrm{cum}(X_i,X_i,X_1,X_1)}{\mathrm{cum}(X_i,X_1,X_1,X_1)} = \alpha^{ll}_{i1}, 
    \]
    and hence, we have
    \begin{align*}
        \tilde{e}_{(X_{i}, X_{1})} &= \sum_{h\in \mathcal{I}_{A}} \alpha_{ih}^{ll}\epsilon_{h} + \sum_{h\in \mathcal{I}_{B}} \alpha_{ih}^{ll}\epsilon_{h} + \epsilon_{i} + e_{i} - \alpha_{i1}^{ll}e_{1}.
    \end{align*}
    
    \noindent\textbf{Sufficiency:}
    Assume $L_{i} \indep L_{j}$ in the submodel induced by $\mathcal{G}^{-}(\{L_{1}\})$, which implies that $\mathcal{I}_{A} = \emptyset$.
    Therefore, $\tilde{e}_{(X_{i}, X_{1})}$ and $X_{j}$ can be written as
    \begin{align*}
         X_{j} &= \alpha_{j1}^{ll}\epsilon_{1} + \sum_{h\in \mathcal{I}_{C}} \alpha_{jh}^{ll}\epsilon_{h} + \epsilon_{j} + e_{j}, \\
        \tilde{e}_{(X_{i}, X_{1})} &= \sum_{h\in \mathcal{I}_{B}} \alpha_{ih}^{ll}\epsilon_{h} + \epsilon_{i} + e_{i} - \alpha_{i1}^{ll}e_{1}.
    \end{align*}
    Thus, we conclude that $X_{j} \indep \tilde{e}_{(X_{i}, X_{1})}$.
    Similarly, we can also show that $X_{i} \indep \tilde{e}_{(X_{j}, X_{1})}$.
    
    \noindent\textbf{Necessity:}
    Assume $L_{i} \notindep L_{j}$ in the submodel induced by $\mathcal{G}^{-}(\{L_{1}\})$, which implies that $\mathcal{I}_{A} \ne \emptyset$.
    Since neither $\alpha_{ih}^{ll}$ nor $\alpha_{jh}^{ll}$ for $h \in \mathcal{I}_{A}$ equals zero, $X_j \notindep \tilde{e}_{(X_i,X_1)}$ by the contrapositive of Theorem~\ref{thm: DS}.
    Similarly, we can also show that $X_i \notindep \tilde{e}_{(X_j,X_1)}$.
\end{proof}
\subsection{The proof of Theorem \ref{thm: proposed cluster root next}}
According to Lemma~\ref{lem: new statistic}, $\tilde{e}_{(X_{i}, X_{1})}$ can be regarded as a statistic obtained by removing the influence of $L_{1}$ from $X_{i}$. Based on this observation, we now provide the proof of Theorem~\ref{thm: proposed cluster root next}. 
\begin{proof}
    Let $L_{1}$ and $L_{i}$ be two latent variables, and define the set
    \begin{align*}
        \mathcal{I}_{A} = \{{h} \mid L_{h} \in \mathrm{Anc}(L_{i}) \setminus \{L_{1}\}\}.
    \end{align*}
    Then, $X_1$, $X_i$, and $\tilde{e}_{(X_{i}, X_{1})}$ are represented as
    \begin{align*}
        X_{1} &= \epsilon_{1} + e_{1}, \quad
        X_{i} = \alpha^{ll}_{i1}\epsilon_{1} + \sum_{{h} \in \mathcal{I}_{A}} \alpha^{ll}_{ih} \epsilon_{h} + \epsilon_{i} + e_{i}, \\
        \tilde{e}_{(X_{i}, X_{1})} &= \sum_{{h} \in \mathcal{I}_{A}} \alpha^{ll}_{ih} \epsilon_{h} + \epsilon_{i} + e_{i} - \alpha^{ll}_{i1}e_{1}, 
    \end{align*}
    respectively. 
    
    \noindent\textbf{Sufficiency:}
    If $L_{i}$ is the latent source in $\mathcal{G}^{-}(\{L_{1}\})$, $\mathcal{I}_{A} = \emptyset$. Hence, we have
    \begin{align*}
        X_{i} &= \alpha^{ll}_{i1}\epsilon_{1}+\epsilon_{i} + e_{i},\quad
        \tilde{e}_{(X_{i}, X_{1})} = \epsilon_{i} + e_{i} - \alpha^{ll}_{i1}e_{1}. 
    \end{align*}
    Assume that $X_{j} \in \bm{X}_{\mathrm{oc}} \setminus \{X_{1}, X_{i}\}$. 
    Define $\mathcal{I}_B$ by 
    \begin{align*}
        \mathcal{I}_{B} = \{{h} \mid L_{h}\in \mathrm{Anc}(L_{j}) \setminus \{L_{1}, L_{i}\} \}.
    \end{align*}
    Then, $\tilde{e}_{(X_{i}, X_{1})}$ and $X_j$ are written as 
    \begin{align*}
        \tilde{e}_{(X_{i}, X_{1})} 
        = {\epsilon}_{i} + (e_{i} - \alpha^{ll}_{i1}e_{1}),\quad
        X_{j} 
        = \alpha^{ll}_{ji}{\epsilon}_{i} + \alpha^{ll}_{j1}\epsilon_{1} + \sum_{{h} \in \mathcal{I}_{B}}\alpha^{ll}_{jh}\epsilon_{h} + \epsilon_{j} + e_{j},
    \end{align*}
    which shows that $\mathrm{Conf}(\tilde{e}_{(X_{i}, X_{1})}, X_{j}) = \{\epsilon_{i}\}$.
    
    Next, assume that $\mathcal{X}_{i} \ne \emptyset$ and $X_{i'} \in \mathcal{X}_{i}$.
    We divide the discussion into the following two cases.
    \begin{enumerate}
        \setlength{\itemsep}{0pt}
        \setlength{\parsep}{0pt}
        \setlength{\parskip}{0pt}
        \item $X_{i} \notin \mathrm{Anc}(X_{i'})$.
            Define the set 
            \begin{align*}
                \mathcal{I}_{C} = \{k \mid X_{k} \in \mathrm{Anc}(X_{i'}) \cap \hat{C}_{i}\}.
            \end{align*}
            We note that $i \notin \mathcal{I}_{A}$, and rewrite $X_{i'}$ as
            \begin{align*}
                \tilde{e}_{(X_{i}, X_{1})}
                &= {\epsilon}_{i} + (e_{i} - \alpha^{ll}_{i1}e_{1}), \\
                X_{i'} &= \alpha_{i'i}^{ol}L_{i} + \sum_{k \in \mathcal{I}_{C}}\alpha_{i'k}^{oo}e_{k} + e_{i'} = \alpha_{i'i}^{ol}(\alpha_{i1}^{ll}\epsilon_{1}+ \epsilon_{i}) + \sum_{k \in \mathcal{I}_{C}}\alpha_{i'k}^{oo}e_{k} + e_{i'},
            \end{align*}
            hence, $\mathrm{Conf}(\tilde{e}_{(X_{i}, X_{1})}, X_{i'}) = \{\epsilon_{i}\}$.
        \item $X_{i} \in \mathrm{Anc}(X_{i'})$.
            Define the set 
            \begin{align*}
                \mathcal{I}_{C} = \{k \mid X_{k} \in (\mathrm{Anc}(X_{i'}) \cap \hat{C}_{i})\setminus \{X_{i}\}\}.
            \end{align*}
            $\tilde{e}_{(X_{i}, X_{1})}$ and $X_{i'}$ are written as
            \begin{align*}
            \tilde{e}_{(X_{i}, X_{1})}
            &= {\epsilon}_{i} + (e_{i} - \alpha^{ll}_{i1}e_{1}), \\
            X_{i'} &= \alpha_{i'i}^{ol}(\alpha_{i1}^{ll}\epsilon_{1}+ \epsilon_{i}) + \alpha_{i'i}^{ol}e_{i} +  \sum_{k \in \mathcal{I}_{C}}\alpha_{i'k}^{oo}e_{k} + e_{i'}.
            \end{align*}
            Both $e_i$ and $\epsilon_i$ appear in $\tilde{e}_{(X_i,X_1)}$ and $X_{i'}$. 
            Since only $\tilde{e}_{(X_i, X_1)}$ contains $e_1$ while only $X_{i'}$ contains $e_{i'}$, there is no ancestral relation between them in their canonical model, according to Lemma~5 of Salehkaleybar et al.~\cite{salehkaleybar2020learning}. 
            Hence, $\mathrm{Conf}(\tilde{e}_{(X_i,X_1)},X_{i'})=\{\epsilon_i,e_i\}$, and we have $$\mathrm{Conf}(\tilde{e}_{(X_i,X_1)},X_{i'}) \cap \mathrm{Conf}(\tilde{e}_{(X_i,X_1)},X_{j}) = \{\epsilon_{i}\}.$$
    \end{enumerate}
    \noindent\textbf{Necessity:}
    By contrapositive, we aim to show that if $L_{i}$ is not a latent source, then either condition 1 or 2 does not hold. Assume that $L_{s}$ is the latent source in $\mathcal{G}^{-}(\{L_{1}\})$, and that $X_{s}$ is its observed child with the highest causal order. Then, we have 
    \begin{align*}
        \tilde{e}_{(X_{i}, X_{1})} &= \alpha_{is}^{ll}\epsilon_{s} + \sum_{{h} \in \mathcal{I}_{A}\setminus\{s\}} \alpha^{ll}_{ih} \epsilon_{h} + \epsilon_{i} + e_{i} - \alpha^{ll}_{i1}e_{1} \\
        X_{s} &= a_{s1}\epsilon_{1} + \epsilon_{s} + e_{s},
    \end{align*}
    implying that $\mathrm{Conf}(\tilde{e}_{(X_{i}, X_{1})}, X_{s}) = \{\epsilon_{s}\} \ne \{\epsilon_{i}\}$.
    Thus, condition 1 is not satisfied.
    
\end{proof}
\subsection{The Proof of Lemma~\ref{lem: new statistic equation}}
\begin{proof}
    Since it is trivial that
    \[
        \tilde{e}_{(X_{i}, \tilde{\bm{e}}_{s})} = \epsilon_{i} + \sum_{k=s}^{i-1} \alpha^{ll}_{ih}\epsilon_{h} + e_{i},
    \]
    when $i > s$ and $s = 1$, we only discuss the remaining two cases.

    We first prove the case where $s = i$ by induction on $i$.
    When $i=1$, 
    \begin{align*}
        \tilde{e}_{(X_{1}, \tilde{\bm{e}}_{1})} = X_1 = \epsilon_{1} + e_{1}, 
    \end{align*}
    where $U_{[1]} = \epsilon_1$. 

    Assume that the inductive assumption holds up to $i$. Then, 
    \[
    \tilde{e}_{(X_{h}, \tilde{\bm{e}}_{h})} = \epsilon_{h} + U_{[h]}, \quad 1 \le h \le i.
    \]
    Since $X_{i+1}$ is expressed as 
    \[
    X_{i+1} = \epsilon_{i+1} + e_{i+1} + \sum_{h=1}^{i}\alpha^{ll}_{i+1,h}\epsilon_{h}, 
    \]
    we have 
    \[
    \rho_{(X_{i+1}, \tilde{e}_{(X_{h}, \tilde{\bm{e}}_{h})})}=\alpha^{ll}_{i+1,h}
    \] 
    for $h=1,\ldots,i$, according to Definition \ref{def: tilde e general}. Hence, we have
    \begin{align*}
        \tilde{e}_{(X_{i+1}, \tilde{\bm{e}}_{i+1})} &= X_{i+1} - \sum_{h=1}^{i}\rho_{(X_{i+1}, \tilde{e}_{(X_{h}, \tilde{\bm{e}}_{h})})}\tilde{e}_{(X_{h}, \tilde{\bm{e}}_{h})}\\
        &= \epsilon_{i+1} + e_{i+1} + \sum_{h=1}^{i} \alpha^{ll}_{i+1, h} \epsilon_{h} - \sum_{h=1}^{i}\alpha^{ll}_{i+1, h} (\epsilon_{h} + U_{[h]}) \\
        &= \epsilon_{i+1} + e_{i+1} - \sum_{h=1}^{i}\alpha^{ll}_{i+1, h} U_{[h]} = \epsilon_{i+1} + U_{[i+1]},
    \end{align*}
    where $U_{[i+1]} = e_{i+1} - \sum_{h=1}^{i}\alpha^{ll}_{i+1, h} U_{[h]}$. 
    Thus, the claim holds for all $i$ by induction. 

    Next, we discuss the case where $i > s$ and $s > 1$.
    According to Definition \ref{def: tilde e general},
    \[
        \tilde{e}_{(X_{i}, \tilde{e}_{s})} = X_{i} - \sum_{h=1}^{s-1} \rho_{(X_{i}, \tilde{e}_{(X_{h}, \tilde{\bm{e}}_{h})})}\tilde{e}_{(X_{h}, \tilde{\bm{e}}_{h})}, \quad \rho_{(X_{i}, \tilde{e}_{(X_{h}, \tilde{\bm{e}}_{h})})} = \alpha^{ll}_{ih}.
    \]
    Using the conclusion of the case where $i = s$, we obtain
    \begin{align*}
        \tilde{e}_{(X_{i}, \tilde{e}_{s})} &= \epsilon_{i} + \sum_{h=1}^{i-1}\alpha^{ll}_{ih}\epsilon_{h}+ e_i - \sum_{h=1}^{s-1}\alpha^{ll}_{ih}\left(\epsilon_{h} + U_{[h]}\right) \\
        &= \epsilon_{i} + \sum_{h=s}^{i-1}\alpha^{ll}_{ih}\epsilon_{h} + U_{[s-1]} + e_{i}.
    \end{align*}
\end{proof}

\subsection{The proof of Theorem \ref{thm: independent e tilde generalized}}
\begin{proof}
    The proof of this theorem follows similarly to that of Theorem \ref{thm: independent e tilde}.
\end{proof}
\subsection{The proof of Theorem \ref{thm: proposed cluster root generalized}}
\begin{proof}
    The proof of this theorem follows similarly to that of Theorem \ref{thm: proposed cluster root next}.
\end{proof}
\subsection{The proof of Corollary \ref{col: proposed cluster root generalized}}
According to Theorem~\ref{thm: proposed cluster root generalized}, sufficiency is immediate. We therefore prove only necessity by showing that if $L_i$ is not a latent source in 
$\mathcal{G}^{-}(\{L_1,\ldots,L_{s-1}\})$, then neither case~1 nor case~2 holds. 

If $L_i$ is not a latent source, $\mathcal{X}_i \ne \emptyset$ or $\lvert \bm{X}_{\mathrm{oc}} \setminus \{X_{1}, \dots, X_{s-1}, X_{i}\}\rvert \ge 2$, and therefore case~1 is not satisfied. We will show that one of the conditions (a), (b), and (c) is not satisfied. First, note that condition~(a) does not hold whenever there exists $X_j \in \bm{X}_{\mathrm{oc}}\setminus\{X_{1}, \dots, X_{s-1}, X_{i}\}$ with $\lvert \mathrm{Conf}(\tilde{e}_{(X_i, \tilde{\bm{e}}_{s})},X_j)\rvert \ne 1$. 
Hence, assume that condition~(a) holds.

Assume that $L_s$ is the latent source of $\mathcal{G}^{-}(\{L_1,\ldots,L_{s-1}\})$, 
and that $X_s$ is its observed child with the highest causal order. Let $L_j$ be the latent parent of $X_j \in \bm{X}_{\mathrm{oc}}\setminus\{X_{1}, \dots, X_{s-1}, X_{i}\}$, respectively. 
Since the condition (a) holds, $\lvert \mathrm{Conf}(\tilde{e}_{(X_i, \tilde{\bm{e}}_{s})},X_s)\rvert=\lvert \mathrm{Conf}(\tilde{e}_{(X_i, \tilde{\bm{e}}_{s})},X_j)\rvert=1$, where $\tilde{\bm{e}}_{s} = (\tilde{e}_{(X_1, \tilde{\bm{e}}_{1})}, \dots, \tilde{e}_{(X_{s-1}, \tilde{\bm{e}}_{s-1})})$.  

Then, $X_{s}$ and $\tilde{e}_{(X_{i}, \tilde{\bm{e}}_{s})}$ are written as
\begin{align*}
    X_{s} &= \sum_{h = 1}^{s-1}\alpha^{ll}_{sh}\epsilon_{h} + \epsilon_{s} + e_{s},
    \\
    \tilde{e}_{(X_{i}, \tilde{\bm{e}}_{s})} &= \epsilon_{i} + \sum_{h = s}^{i-1}\alpha_{ih}^{ll}\epsilon_{h}+ U_{[s-1]} + e_{i},
\end{align*}
according to Lemma \ref{lem: new statistic equation}.
Hence, we have $\mathrm{Conf}(\tilde{e}_{(X_i, \tilde{\bm{e}}_{s})},X_s) = \{\epsilon_{s}\}$. 

Assume that $L_i$ has a latent child $L_j$ and that none of the descendants of $L_{i}$ are parents of $L_{j}$. $X_j$ is expressed by 
\[
X_j = \sum_{h=1}^{j-1} \alpha^{ll}_{jh} \epsilon_h +\epsilon_{j}+ e_j. 
\]
Both $\tilde{e}_{(X_{i}, \tilde{\bm{e}}_{s})}$ and $X_j$ involve linear combinations of $\epsilon_{s}, \ldots, \epsilon_{i}$. 
Since $\{\epsilon_{s}, \dots, \epsilon_{i}\}$ are mutually independent and $\lvert \mathrm{Conf}(\tilde{e}_{(X_i, \tilde{\bm{e}}_{s})},X_j)\rvert =1$, $\alpha^{ll}_{jh}=\alpha^{ll}_{ji}\alpha^{ll}_{ih}$ according to Hoyer et al. \cite{hoyer2008estimation}, and then $X_{j}$ can be rewritten as
\[
X_j = 
\left\{
\begin{array}{ll}
    \displaystyle{
    \sum_{i=1}^{s-1} \alpha^{ll}_{jh} \epsilon_h + 
    \alpha^{ll}_{ji} \left(
    \epsilon_{i} + \sum_{h = s}^{i-1}\alpha_{ih}^{ll}\epsilon_{h}
    \right)  + \epsilon_{j} + 
    e_j}, & j=i+1,  \\
    \displaystyle{
    \sum_{i=1}^{s-1} \alpha^{ll}_{jh} \epsilon_h +
    \alpha^{ll}_{ji} 
    \left(
    \epsilon_{i} + \sum_{h = s}^{i-1}\alpha_{ih}^{ll}\epsilon_{h}
    \right) + \sum_{h=i+1}^{j-1} \alpha^{ll}_{jh} \epsilon_h +\epsilon_{j} + e_j,} & j>i+1.
\end{array}
\right.
\]
Therefore,
\begin{align*}
    c^{(k)}_{(\tilde{e}_{(X_i, \tilde{\bm{e}}_{s})} \to X_{j})}(L^{(i, j)}) &= \mathrm{cum}^{(k)} \left(\epsilon_{i} + \sum_{h=s}^{i-1}\alpha^{ll}_{ih}\epsilon_{h}\right), \\ 
    c^{(k)}_{(\tilde{e}_{(X_i, \tilde{\bm{e}}_{s})} \to X_{s})}(L^{(i, s)}) &= \mathrm{cum}^{(k)}(\alpha^{ll}_{is}\epsilon_{s}),
\end{align*}
according to Lemma \ref{lem: X has one parent}.
Therefore, we conclude that $c^{(k)}_{(\tilde{e}_{(X_i, \tilde{\bm{e}}_{s})} \to X_{j})}(L^{(i, j)}) \ne c^{(k)}_{(\tilde{e}_{(X_i, \tilde{\bm{e}}_{s})} \to X_{s})}(L^{(i, s)})$ generically, and condition (b) is not satisfied.
Next, we assume that $L_{i}$ does not have latent children, so that $\mathcal{X}_{i} \ne \emptyset$. 
Assume that $X_{i'} \in \mathcal{X}_{i}$ and it can be
expressed as
\begin{align*}
    X_{i'} = \sum_{h=1}^{i-1} \alpha^{ll}_{ih}\epsilon_{h} + \epsilon_{i} + b_{i'i}e_{i} + e_{i'}.
\end{align*}
Then,
\begin{align*}
    \mathrm{Conf}(\tilde{e}_{(X_i, \tilde{\bm{e}}_{s})}, X_{i'}) = 
    \begin{cases}
       \left\{\epsilon_{i} + \sum_{h=s}^{i-1}\alpha^{ll}_{ih}\epsilon_{h}\right\}, & b_{i'i} = 0, \\[6pt]
       \left\{\epsilon_{i} + \sum_{h=s}^{i-1}\alpha^{ll}_{ih}\epsilon_{h}, \; e_{i}\right\}, & b_{i'i} \ne 0.
    \end{cases}
\end{align*}
In either case, there is one latent $L^{(i, i')}$ satisfies that
\begin{align*}
    c^{(k)}_{(\tilde{e}_{(X_i, \tilde{\bm{e}}_{s})} \to X_{i'})}(L^{(i,i')}) 
    = \mathrm{cum}^{(k)}\!\left(\epsilon_{i} + \sum_{h=s}^{i-1}\alpha^{ll}_{ih}\epsilon_{h}\right),
\end{align*}
which generically implies
\[
    c^{(k)}_{(\tilde{e}_{(X_i, \tilde{\bm{e}}_{s})} \to X_{i'})}(L^{(i,i')}) 
    \ne c^{(k)}_{(\tilde{e}_{(X_i, \tilde{\bm{e}}_{s})} \to X_{s})}(L^{(i,s)}).
\]
Thus, the condition (c) is not satisfied.
\section{Proofs of Theorems in Section \ref{sec: redundant edges}}
\subsection{Theorem \ref{thm: proposed cluster root independence 2}}
\begin{proof}
    From Lemma \ref{lem: new statistic equation}, 
    \begin{align}
        \label{eq: tilde e thm 3.19}
        \tilde{e}_{(X_{i-k}, \tilde{\bm{e}}_{i-k})} 
        &= \epsilon_{i-k} + U_{[i-k]}. 
    \end{align}
    By definition, $\tilde{r}_{i,k-1}$ is written as
    \begin{align}
        \label{eq: tilde_r}
        \tilde{r}_{i,k-1} &= X_i - \sum_{h=i-(k-1)}^{i-1} a_{ih}X_h \notag\\
        &= \sum_{h=1}^{i-1} a_{ih} L_h + \epsilon_i + e_i - \sum_{h=i-(k-1)}^{i-1} a_{ih}(L_h + e_h) \notag\\
        &= \sum_{h=1}^{i-(k+1)} a_{ih} L_h + a_{i,i-k}\epsilon_{i-k} + \epsilon_i + e_i - \sum_{h=i-(k-1)}^{i-1} a_{ih}e_h \notag\\
        &= V_{[i-(k+1)]} + a_{i,i-k}\epsilon_{i-k} + \epsilon_i + U_{[i] \setminus [i-k]}, 
    \end{align}
    where $V_{[i-(k+1)]}$ is a linear combination of $\{\epsilon_1,\ldots,\epsilon_{i-(k+1)}\}$ and 
    $U_{[i] \setminus [i-k]}$ is a linear combination of $\{e_{i-(k-1)},\ldots,e_i\}$. 
    From (\ref{eq: tilde e thm 3.19}) and (\ref{eq: tilde_r}), we can show that
    \[
    \tilde{r}_{i,k-1} \indep \tilde{e}_{(X_{i-k}, \tilde{\bm{e}}_{i-k})}
    \;\Leftrightarrow\; a_{i,i-k}=0,
    \]
    and otherwise
    \begin{align*}
    \rho_{(X_{i},\tilde{e}_{(X_{i-1}, \tilde{\bm{e}}_{i-1})})}
    = \frac{a_{i, i-1}^{2}\,\mathrm{cum}^{(4)}(\epsilon_{i-1})}{a_{i, i-1}\,\mathrm{cum}^{(4)}(\epsilon_{i-1})}
    = a_{i, i-1}
    \end{align*}
    generically holds. 
\end{proof}

\section{Reducing an LvLiNGAM}
\label{sec: reduce models}
In this paper, we have discussed the identifiability of LvLiNGAM under the assumption that each observed variable has exactly one latent parent. However, even when some observed variables do not have latent parents, by iteratively marginalizing out sink nodes and conditioning on source nodes to remove such variables one by one, the model can be progressively reduced to one in which each observed variable has a single latent parent. This can be achieved by first estimating the causal structure involving the observed variables without latent parents. ParceLiNGAM \cite{tashiro2014parcelingam} or RCD \cite{Maeda2020, maeda2022rcd} can identify the ancestral relationship between two observed variables if at least one of them does not have a latent parent, and remove the influence of the observed variable without a latent parent. 

Models 1–3 in Figure~\ref{fig: reduced models} contain observed variables that do not have a latent parent.
According to Definition \ref{def: cluster}, $X_{1}$ and $X_{2}$ in Models 1 and 3 belong to distinct clusters whose latent parents are $L_{1}$ and $L_{2}$, respectively, whereas in Model 2, $X_{1}$ and $X_{2}$ share the same latent parent $L_{1}$. Model 3 contains a directed path between clusters, whereas Models 1 and 2 do not. We consider the model reduction procedure for Models 1–3 individually. 

\begin{example}[Model 1]
By using ParceLiNGAM or RCD, we can identify $X_{4} \to X_{1}$, $X_{4} \to X_{2}$, $X_{1} \to X_{5}$, and $X_{3} \to X_{5}$. 
Since $X_{5}$ is a sink node, the induced subgraph obtained by removing $X_{5}$ represents the marginal model over the remaining variables. Since $X_{4}$ is a source node, if we replace $X_{1}$ and $X_{2}$ with the residuals $r_{1}^{(4)}$ and $r_{2}^{(4)}$ obtained by regressing them on $X_{4}$, then the induced subgraph obtained by removing $X_{4}$ represents the conditional distribution given $X_{4}$. As a result, Model 1 is reduced to the model shown in Figure~\ref{fig: reduced models}~(d). This model satisfies Assumptions A1–A3.
\end{example}

\begin{example}[Model 2]
$X_{1}$ and $X_{2}$ are confounded by $L_{1}$, and they are mediated through $X_{3}$. By using ParceLiNGAM or RCD, the ancestral relationship among $X_1, X_2, X_3$ can be identified. Let $r^{(1)}_{3}$ be the residual obtained by regressing $X_3$ on $X_1$. Let $\tilde{r}^{(3)}_{2}$ be the residual obtained by regressing $X_2$ on $r^{(1)}_3$. According to \cite{tashiro2014parcelingam} and \cite{Maeda2020, maeda2022rcd}, the model for $L_1$, $X_1$, and $r^{(3)}_{2}$ corresponds to the one shown in Figure~\ref{fig: reduced models}~(e). This model satisfies Assumptions A1–A3.
\end{example}

\begin{example}[Model 3]
In Model 3, $X_{1}\in \mathrm{Anc}(X_{3})$, and they are mediated by $X_{5}$. By using ParceLiNGAM or RCD, the ancestral relationship among $X_1, X_3, X_5$ can be identified. Let $r^{(1)}_{5}$ be the residual obtained by regressing $X_5$ on $X_1$. Let $\tilde{r}^{(5)}_{3}$ be the residual obtained by regressing $X_3$ on $r^{(1)}_5$. According to \cite{tashiro2014parcelingam} and \cite{Maeda2020, maeda2022rcd}, by reasoning in the same way as for Models 1 and 2, Model 3 is reduced to the model shown in Figure~\ref{fig: reduced models}~(f). This model doesn't satisfy Assumptions A1–A3. 
\end{example}

\begin{figure}[t]
    \centering
    \begin{subfigure}[t]{0.3\textwidth}
        \centering
        \begin{tikzpicture}[scale=0.7]
            \node[draw, rectangle] (L_1) at (0, 1.5) {$L_1$};
            \node[draw, circle] (x_1) at (-0.5, 0) {$X_1$};
            \node[draw, rectangle] (L_2) at (2, 1.5) {$L_2$};
            \node[draw, circle] (x_2) at (2.5, 3) {$X_2$};
            \node[draw, circle] (x_3) at (2.5, 0) {$X_3$};
            \node[draw, circle] (x_4) at (-0.5, 3) {$X_4$};
            \node[draw, circle] (x_5) at (1, -1.5) {$X_5$};
            
            \draw[thick, ->] (L_1) -- (x_1);
            \draw[thick, ->] (L_1) -- (L_2);
            \draw[thick, ->] (L_2) -- (x_2);
            \draw[thick, ->] (L_2) -- (x_3);
            \draw[thick, ->] (x_4) to [in=120, out=-120](x_1);
            \draw[thick, ->] (x_4) -- (x_2);
            \draw[thick, ->] (x_1) -- (x_5);
            \draw[thick, ->] (x_3) -- (x_5);
        \end{tikzpicture}
        \caption{Model 1}
    \end{subfigure}
    \hfill
    \begin{subfigure}[t]{0.3\textwidth}
        \centering
        \begin{tikzpicture}[scale=0.7]
            \node[draw, rectangle] (L_1) at (0, 3) {$L_1$};
            \node[draw, circle] (x_1) at (-2, 0) {$X_1$};
            \node[draw, circle] (x_2) at (2, 0) {$X_2$};
            \node[draw, circle] (x_3) at (0, 0) {$X_3$};
            
            \draw[thick, ->] (L_1) -- (x_1);
            \draw[thick, ->] (L_1) -- (x_2);
            \draw[thick, ->] (x_1) -- (x_3);
            \draw[thick, ->] (x_3) -- (x_2);
        \end{tikzpicture}
        \caption{Model 2}
    \end{subfigure}
    \hfill
    \begin{subfigure}[t]{0.3\textwidth}
        \centering
        \begin{tikzpicture}[scale=0.7]
            \node[draw, rectangle] (L_1) at (0, 1.5) {$L_1$};
            \node[draw, circle] (x_1) at (-0.5, 0) {$X_1$};
            \node[draw, rectangle] (L_2) at (2, 1.5) {$L_2$};
            \node[draw, circle] (x_2) at (2.5, 3) {$X_2$};
            \node[draw, circle] (x_3) at (2.5, 0) {$X_3$};
            \node[draw, circle] (x_4) at (-0.5, 3) {$X_4$};
            \node[draw, circle] (x_5) at (1, -1.5) {$X_5$};
            
            \draw[thick, ->] (L_1) -- (x_1);
            \draw[thick, ->] (L_1) -- (L_2);
            \draw[thick, ->] (L_2) -- (x_2);
            \draw[thick, ->] (L_2) -- (x_3);
            \draw[thick, ->] (x_4) to [in=120, out=-120](x_1);
            \draw[thick, ->] (x_4) -- (x_2);
            \draw[thick, ->] (x_1) -- (x_5);
            \draw[thick, ->] (x_5) -- (x_3);
        \end{tikzpicture}
        \caption{Model 3}
    \end{subfigure}

    \begin{subfigure}[t]{0.3\textwidth}
        \centering
        \begin{tikzpicture}[scale=0.7]
            \node[draw, rectangle] (L_1) at (0, 2) {$L_1$};
            \node[draw, circle] (x_1) at (0, 0) {$r^{(4)}_1$};
            \node[draw, rectangle] (L_2) at (2, 2) {$L_2$};
            \node[draw, circle] (x_3) at (3, 0) {$X_3$};
            \node[draw, circle] (x_2) at (4, 2) {$r^{(4)}_2$};
            
            \draw[thick, ->] (L_1) -- (x_1);
            \draw[thick, ->] (L_1) -- (L_2);
            \draw[thick, ->] (L_2) -- (x_2);
            \draw[thick, ->] (L_2) -- (x_3);
        \end{tikzpicture}
        \caption{Reduced model of Model 1}
    \end{subfigure}
    \hfill
    \begin{subfigure}[t]{0.3\textwidth}
        \centering
        \begin{tikzpicture}[scale=0.7]
            \node[draw, rectangle] (L_1) at (0, 2) {$L_1$};
            \node[draw, circle] (x_1) at (-1.25, 0) {$X_1$};
            \node[draw, circle] (x_2) at (1.25, 0) {$\tilde{r}^{(3)}_2$};
            
            \draw[thick, ->] (L_1) -- (x_1);
            \draw[thick, ->] (L_1) -- (x_2);
            \draw[thick, ->] (x_1) -- (x_2);
        \end{tikzpicture}
        \caption{Reduced model of Model 2}
    \end{subfigure}
    \hfill
    \begin{subfigure}[t]{0.3\textwidth}
        \centering
        \begin{tikzpicture}[scale=0.7]
            \node[draw, rectangle] (L_1) at (0, 2) {$L_1$};
            \node[draw, circle] (x_1) at (0, 0) {$r^{(4)}_1$};
            \node[draw, rectangle] (L_2) at (2, 2) {$L_2$};
            \node[draw, circle] (x_2) at (4, 2) {$r^{(4)}_2$};
            \node[draw, circle] (x_3) at (3, 0) {$\tilde{r}^{(5)}_3$};
            
            \draw[thick, ->] (L_1) -- (x_1);
            \draw[thick, ->] (L_1) -- (L_2);
            \draw[thick, ->] (L_2) -- (x_2);
            \draw[thick, ->] (L_2) -- (x_3);
            \draw[thick, ->] (x_1) -- (x_3);
        \end{tikzpicture}
        \caption{Reduced model of Model 3}
    \end{subfigure}

    \caption{Three models that can be reduced.}
    \label{fig: reduced models}
\end{figure}
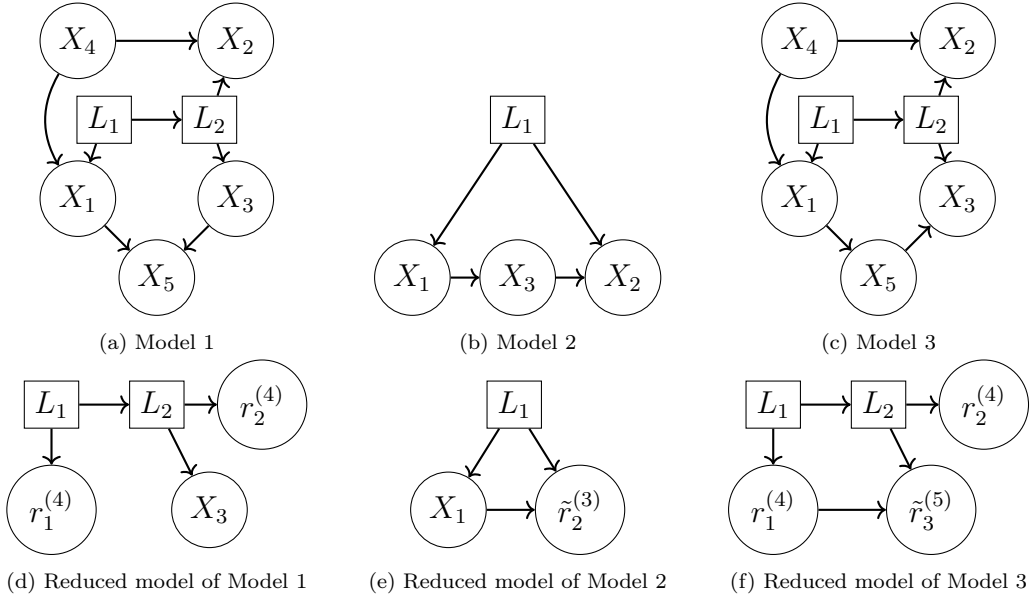

Using \cite{tashiro2014parcelingam} and \cite{Maeda2020, maeda2022rcd}, ancestral relations between pairs of observed variables that include at least one variable without a latent parent can be identified. The graph obtained by the model reduction procedure is constructed by iteratively applying the following steps:
\begin{enumerate}
\setlength{\itemsep}{0pt}
\setlength{\parsep}{0pt}
\setlength{\parskip}{0pt}
    \item[(i)] iteratively remove observed variables without latent parents that appear as source or sink nodes, updating the induced subgraph at each step so that any new source or sink nodes are subsequently removed;
    \item[(ii)] when an observed variable without a latent parent serves as a mediator, remove the variable and connect its parent and child with a directed edge.
\end{enumerate}
If no directed path exists between any two observed variables with distinct latent parents, the model obtained through the model reduction procedure satisfies Assumptions A1–A3. Conversely, if there exist two observed variables with distinct latent parents that are connected by a directed path, the model obtained through the model reduction procedure does not satisfy Assumption A3. In summary, Assumption A1 can be generalized to 
\begin{enumerate}
\setlength{\itemsep}{0pt}
\setlength{\parsep}{0pt}
\setlength{\parskip}{0pt}
\item[\textbf{A1}$\bm{'}$\textbf{.}] Each observed variable has at most one latent parent.
\end{enumerate}

\begin{proposition}
    Given observed data generated from an LvLiNGAM $\mathcal{M}_{\mathcal{G}}$ that satisfies the assumptions A1$^\prime$ and A2–A5, the latent causal structure among the latent variables, the directed edges from the latent variables to the observed variables, and the ancestral relationships among the observed variables can be identified by using the proposed method in combination with ParceLiNGAM and RCD. 
\end{proposition}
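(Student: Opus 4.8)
The plan is to reduce any LvLiNGAM satisfying A1$'$ and A2--A5 to one satisfying A1--A5 by repeatedly eliminating the observed variables that lack a latent parent, and then to invoke Theorem~\ref{thm: final} for the reduced model. First I would observe that, because (\ref{model: lvLiNGAM General}) forbids observed variables from being ancestors of latent variables, any observed variable $X_k$ without a latent parent has only observed parents and only observed children; hence within the current induced subgraph $X_k$ is a source, a sink, or a mediator. By ParceLiNGAM~\cite{tashiro2014parcelingam} or RCD~\cite{Maeda2020, maeda2022rcd}, the ancestral relations between $X_k$ and every other observed variable, together with the regression coefficients needed below, are identifiable whenever at least one endpoint lacks a latent parent. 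I would then remove $X_k$ by: deleting it (sink); regressing its descendants on it and keeping the residuals, which amounts to conditioning (source); or forming the residual $r_k$ of $X_k$ on its parents and then replacing each child by its residual on $r_k$ before linking each parent of $X_k$ to each child (mediator). Each operation leaves a linear acyclic model with mutually independent disturbances: the residual-on-parents of $X_k$ equals its own disturbance $e_k$, so regressing each child on it strips $e_k$ out of all downstream equations and no new latent cause is created; conditioning on a latent-parentless source is legitimate because such a source is independent of all latent variables; and marginalizing a sink is immediate. Since there are finitely many latent-parentless observed variables and each step removes exactly one, the procedure terminates, and the surviving observed variables are exactly those that had a latent parent, each of which (by A1$'$) has precisely one --- so the reduced model satisfies A1.

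Next I would check that the remaining assumptions survive. The latent set $\bm{L}$ is never altered and no latent loses an observed child (the removed variables are not observed children of any latent), so A2 holds; faithfulness A4 and the nonvanishing-cumulant condition A5 are preserved for generic parameters. For A3 I would argue by contradiction: a directed path between two distinct latent-headed clusters in the reduced model, after re-inserting the eliminated mediators, produces a directed path among observed variables of the original joining those same clusters, contradicting A3 of the original; thus the reduced model also satisfies A3. With A1--A5 established for the reduced model, and noting that its ``observed'' variables --- the residuals --- are explicit linear functions of the original observations through coefficients already identified by ParceLiNGAM/RCD, Theorem~\ref{thm: final} applies to the reduced data and returns the causal structure among $\bm{L}$, the edges from $\bm{L}$ to the reduced observed variables, and the ancestral relations among those variables.

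It then remains to transfer this output back to $\mathcal{M}_{\mathcal{G}}$. The latent-to-latent structure carries over verbatim. An edge $L_i \to X_j$ to an observed variable with a latent parent is unaffected by the three operations, since each acts only on observed coordinates that are either independent of $L_i$ or downstream of but unconfounded with $L_i$; observed variables without a latent parent trivially have no incoming latent edge. The ancestral relations among all observed variables are obtained by combining the relations involving latent-parentless variables (supplied directly by ParceLiNGAM/RCD) with those among the surviving variables (supplied by Theorem~\ref{thm: final}), using that a mediator bypass preserves ``$X_i \in \mathrm{Anc}(X_j)$'' and that neither marginalizing a sink nor conditioning on a source creates or destroys ancestral relations among the other observed variables. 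The main obstacle is precisely this reduction-invariance bookkeeping: proving that at each step the current latent-parentless variable is genuinely a source, sink, or mediator, that the residual construction handles mediators with several parents and/or children without introducing spurious confounding (a naive substitution would make $e_k$ a shared latent cause of all children of $X_k$), and that ancestral relations and latent-to-observed edges are invariant under each operation; the mediator case, which the residual construction is designed to control, is the delicate point.
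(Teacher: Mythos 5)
Your proposal is correct and follows essentially the same route as the paper: Appendix~\ref{sec: reduce models} justifies this proposition by exactly the reduction you describe --- using ParceLiNGAM/RCD to identify ancestral relations involving latent-parentless observed variables, then eliminating each such variable as a sink (marginalization), a source (conditioning via residuals), or a mediator (residual bypass that strips its disturbance from all children) until A1 holds, after which Theorem~\ref{thm: final} applies to the reduced model and the output is transferred back. The paper itself only sketches this argument through the two-step procedure and Models 1--3 rather than giving a formal proof, so your more explicit handling of A3-preservation and of the multi-parent/multi-child mediator case is a faithful elaboration of the same approach rather than a different one.
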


\end{document}